\theoremstyle{plain}
\theoremstyle{definition}
\theoremstyle{remark}
  \crefname{equation}{}{}
  \crefname{figure}{Fig.}{Figs.}
  \crefname{section}{Sec.}{Secs.}
  \crefname{appendix}{App.}{Apps.}
  \crefname{table}{Tab.}{Tabs.}
  \crefname{theorem}{Thm.}{Thms.}
  \crefname{thm}{Thm.}{Thms.}
  \crefname{lemma}{Lem.}{Lems.}
  \crefname{proposition}{Prop.}{Props.}
  \crefname{prop}{Prop.}{Props.}
  \crefname{corollary}{Cor.}{Cors.}
  \crefname{cor}{Cor.}{Cors.}
  \crefname{definition}{Def.}{Defs.}
  \crefname{defn}{Def.}{Defs.}
  \crefname{assumption}{Assump.}{Assumps.}
  \crefname{ass}{Assump.}{Assumps.}
  \crefname{condition}{Cond.}{Conds.}
  \crefname{con}{Cond.}{Conds.}
  \crefname{remark}{Rmk.}{Rmks.}
  \crefname{rmk}{Rmk.}{Rmks.}
  \crefname{algorithm}{Alg.}{Algs.}
\providecommand{\EE}[2]{\mathbb{E}_{#1}[#2]}
\DeclareMathOperator*{\argmax}{arg\,max}
\DeclareMathOperator*{\argmin}{arg\,min}
\providecommand{\sR}{\mathbb{R}}
\providecommand{\cmark}{\textcolor{green}{\ding{51}}}
\providecommand{\xmark}{\textcolor{red}{\ding{55}}}
\icmltitlerunning{Distributional Biases in Post-Training: A Markovian Analysis of Reasoning Trajectories}
\begin{document}

\noindent{\small ICML 2026 Workshop on Foundations of Deep Generative Models:
Understanding Memorization, Generalization, and Reasoning}
\vskip 0.35in

\icmltitle{Distributional Biases in Post-Training:\\ A Markovian Analysis of Reasoning Trajectories}

\begin{center}
\begin{tabular}{c}
Dake Bu$^{1,2,3}$, Wei Huang$^{2,4}$, Andi Han$^{5}$, Bo Xue$^{1}$, Hau-San Wong$^{1}$, \\
Qingfu Zhang$^{1}$, Atsushi Nitanda$^{3,6}$, Taiji Suzuki$^{7,2}$ \\
\small $^{1}$City University of Hong Kong \quad
$^{2}$Center for Advanced Intelligence Project, RIKEN \\
\small $^{4}$The Institute of Statistical Mathematics \quad
$^{5}$University of Sydney \\
\small $^{3}$CFAR and IHPC, Agency for Science, Technology and Research (A*STAR) \\
\small $^{6}$Nanyang Technological University \quad
$^{7}$The University of Tokyo \\
\small Correspondence: \texttt{atsushi\_nitanda@a-star.edu.sg}, \texttt{cshswong@cityu.edu.hk}
\end{tabular}
\end{center}

\icmlkeywords{Machine Learning, ICML}
\vskip 0.3in


\printAffiliationsAndNotice{}  

\begin{abstract}
Foundation models exhibit broad knowledge but limited task-specific reasoning, motivating post-training strategies such as RL with verifiable rewards (RLVR) and test-time scaling (TTS). While recent work highlights the role of \emph{exploration} in improving pass@K, empirical evidence points to a paradox: RLVR and ORM/PRM typically reinforce existing paths rather than expanding the reasoning scope, raising the question of why exploration helps if no new patterns emerge.  
To reconcile this paradox, we adopt the perspective of \citet{kim2025metastabledynamicschainofthoughtreasoning}, viewing easy (e.g., simplifying a fraction) versus hard (e.g., discovering the some symmetry) reasoning steps as low versus high probability Markov transitions. In this tractable model, pretraining corresponds to tree-graph discovering, while post-training corresponds to CoT reweighting. We provably show that, both RLVR and ORM/PRM would favor heavily to several high-probability paths, and thereby forget rare-but-crucial CoTs. Building on this, we further prove that exploration strategies such as rejecting easy instances and KL regularization help preserve rare CoTs. Empirical simulations corroborate our theoretical results.  Code is available at: \href{https://github.com/DakeBU/Distribution-Bias-of-Post-training}{\nolinkurl{https://github.com/DakeBU/Distribution-Bias-of-Post-training}}.
\end{abstract}

\section{Introduction}


Foundation models provide broad knowledge and versatile capabilities across tasks, yet their task-specific reasoning remains constrained. For reasoning datasets with only $0/1$ verifiers, many studies explore post-training strategies, including Reinforcement Learning with Verifiable Reward (RLVR) finetuning~\citep{xin2024deepseek, shao2024deepseekmathpushinglimitsmathematical, guo2025deepseek, yu2025dapo}, as well as inference scaling with Outcome Reward Models (ORM) or Process Reward Models (PRM)~\citep{lightman2023letsverifystepstep, snell2024scalingllmtesttimecompute}, both aiming to obtain task-specific experts.  

Recently, a line of work has emphasized maintaining \emph{exploration} and \emph{entropy stability} to prevent entropy collapse, and observed that suitable entropy preservation during post-training yields \emph{systematic performance gains}, such as improved pass@K on math benchmarks~\citep{xiong2025minimalistapproachllmreasoning, li2025preserving, ren2025learningdynamicsllmfinetuning, wang2025reinforcementlearningreasoninglarge, cui2025entropymechanismreinforcementlearning,zuo2025strategicscalingtesttimecompute}.  

However, seemingly \textbf{\textit{paradoxical}} findings emerge: RLVR typically aligns models with target objectives by reinforcing existing reasoning paths, \emph{rather than} expanding their tree-like reasoning scope~\citep{snell2024scalingllmtesttimecompute, yue2025doesreinforcementlearningreally, ai2025rethinkingreflectionpretraining, gandhi2025cognitivebehaviorsenableselfimproving}. Similarly, ORM/PRM-guided inference scaling biases models toward pre-existing Chain-of-Thought (CoT) patterns instead of incentivizing genuinely new branches. This raises a natural question:  

\emph{Why can exploration help, given post-training cannot explore beyond the base model’s tree scope?}  

Our work takes a first step toward reconciling this tension, motivated the key phenomena below.  

\textbf{Phenomenon 1: Squeezing Effect of RLVR}. RLVR implicitly reduces CoT entropy~\citep{li2025tracing, wu2025invisibleleashrlvrescape, deng2025effectnegativegradientgroup}, shrinking confidence over CoTs with high frequency to be correct, and sometime at the cost of forgetting certain correct CoTs within the base model’s scope~\citep{wu2025invisibleleashrlvrescape, shao2024deepseekmathpushinglimitsmathematical, wen2025reinforcementlearningverifiablerewards}.  

\textbf{Phenomenon 2: Neural Verifier Checks Consistency, Not Accuracy}. 
Inference-scaling with ORM/PRM may avoid the learner’s squeezing effect, yet empirical evidence shows that neural scorers are prone to reward consistency rather than true accuracy~\citep{Xu2025RMsConsistencyNotCausality, Guo2025PitfallsOutcomeSupervision}. As a result, they would favor CoTs that follow \textit{common, high-frequency} reasoning patterns.

\textbf{Phenomenon 3: Merits of Rare CoTs.} A widely-utilized difficulty measure for reasoning dataset (e.g., GSM8K \citep{cobbe2021training} and AQuA \citep{ling2017program}) is the \emph{pass rate} (i.e., the frequency with which a base model correctly solves an instance under parallel attempts)~\citep{tong2024dartmathdifficultyawarerejectiontuning, parashar2025curriculum}. This implies that hard instances correspond to rare-but-correct CoTs with low model confidence, whereas common CoTs typically reflect frequent patterns to easier instances.


Taken together, these observations suggest a resolution: \par 

\emph{Exploration, \textbf{even when confined within the existing tree scope}, helps prevent the model from entirely forgetting rare CoTs that may be crucial for hard instances, and preserved broad-capability.}  

\textbf{Our Contributions.} In this work, we rigorously formalize and prove these phenomena within a tractable theoretical framework. Motivated by the view that discrete graphs naturally abstract the sequential structure of complex reasoning~\citep{xu2019can, sanford2024understanding, abbe2024far, besta2024graph}, we model each reasoning step as a Markov \emph{state transition} following~\citet{kim2025metastabledynamicschainofthoughtreasoning}. Pretraining is framed as a \emph{tree-graph} discovering process over child states across tasks, while post-training CoT generalization is modeled by a Multi-task Tree-structured Markov Chain (TMC). We prove that our toy model captures key \textbf{Phenomena~1–3} with population $0/1$ reward (expected accuracy), and then provide theoretical justification for exploration techniques such as rejecting easy questions~\citep{yu2025dapo, xiong2025minimalistapproachllmreasoning, zhang2025speed} and KL regularization. Our paper is organized as below.

\begin{itemize}
    \item Sec.~\ref{sec:TDP} introduces a multi-task Tree-structured Markov chain model to capture diverse CoT reasoning patterns across tasks, explicitly linking instance difficulty with pass rate.  
    \item Sec.~\ref{sec:bias_rl} analyzes a simple softmax model and shows that RLVRs inherit a \emph{simplicity bias}, over-favoring easier CoTs due to the \textit{advantage-driven squeezing effect}. We further provide theoretical justification for rejecting easy instances and applying KL regularization, which both promote valid hard CoT learning.  
    \item Sec.~\ref{sec:PRM} demonstrates that inference-scaling with ORM/PRM assigns credit to CoTs by their accuracy likelihood, leading to overemphasis on easier CoTs. We further show that PRMs with BoN can be interpreted as special cases of a more general \emph{Doob h's Transformed-induced PRM} (DPRM). In principle, DPRM is equivalent to soft-BoN~\citep{verdun2025softbestofnsamplingmodel} asymptotically, enabling adjustable preservation of base-model capabilities and better alignment with hard-to-reason and cross-task patterns.  
\end{itemize}

Discussions of additional related work are in App.~\ref{app:related_work}. 
All proofs are deferred to the appendix.


\textbf{Humble Remark}. While we prove \textbf{empirically-observed Phenomena~1--3} and the benefits of those existing techniques in our theory-friendly setting that captures partial but crucial rationales, we do not overclaim their \textit{direct} applicability to GPT or large-scale models, given the many unmodeled complexities, as discussed in App.~\ref{app:limitation} and~\ref{app:discussion_broader_rl}.


\section{Tree-structured Multi-task Reasoning}\label{sec:TDP}
\begin{figure}[t!]
    \centering
    \includegraphics[width=0.95\linewidth]{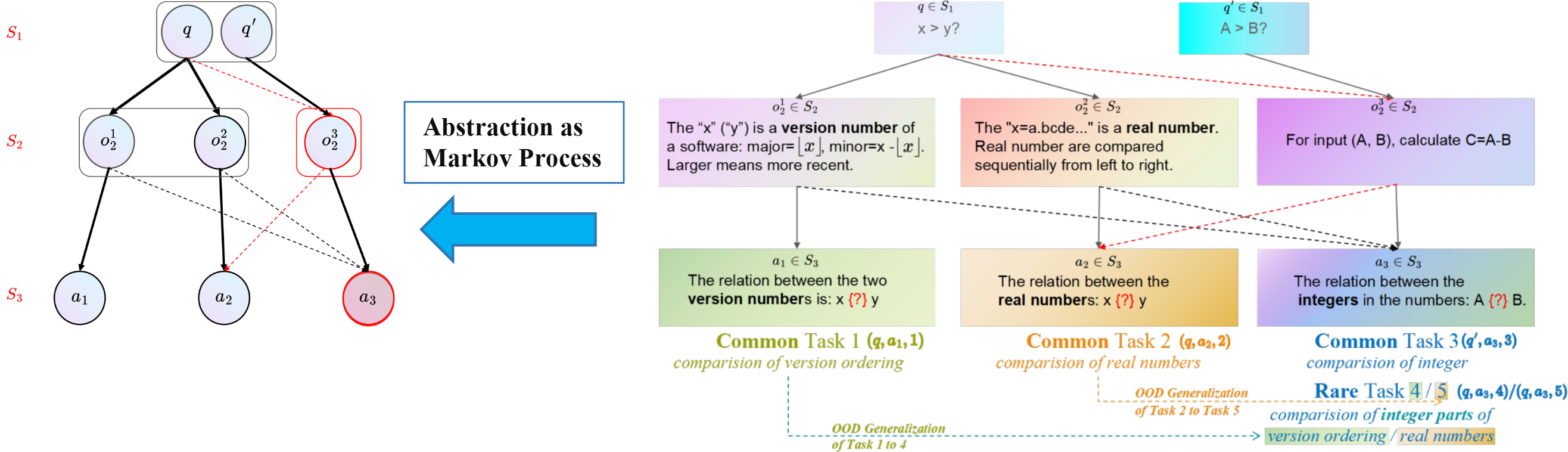}
    \caption{\textbf{\textit{Left}}: abstraction of a 3-layer TMC. Nodes are states grouped into layers \(S_1\)–\(S_3\); solid arrows denote high-prob (confident) transitions and dashed arrows denote low-prob (unsure) transitions. A task is specified by \((\mathbf{q},\mathbf{a},\mathbf{k})\), where \(\mathbf{q}\in\{q,q'\}\) is the question \textit{state}, \(\mathbf{a}\in\{a_1,a_2,a_3\}\) is the answer \textit{state}, and \(\mathbf{k}\in[5]\); \textbf{\textit{Right}}: a concrete illustration of a 5-task, 3-layer Multi-task TMC. $x,y$ in $q$ represent real numbers (with decimals), whereas $A,B$ in $q'$ represent integers. We here use two instances, namely $3.9 > 3.11?$ of $q$ and $3>3?$ of $q'$ to describe the five tasks: (1) decimal version ordering (\(3.9 < 3.11\)); (2) real-number comparison (\(3.9 > 3.11\)); (3) integer equality (\(3 - 3 = 0\)); (4) integer-part version ordering (e.g., \(3.9 = 3.11\)); and (5) integer-part real-number comparison (\(3.9 = 3.11\)). Tasks 1–3 are common and each admits $\ge 1$ easy-to-reason CoT, while Tasks 4–5 are rare and admit \textit{only} hard-to-reason CoTs. For Task 2, there are two \textcolor{blue}{\emph{valid}} CoTs: \(q \rightarrow o_2^2 \rightarrow a_2\) (where $o_2^2$ merely left-to-right compares number) and {\color{red} \(q_2 \rightarrow o_2^3 \rightarrow a_2\)} (where {\color{red} \(o_2^3\)} performs the arithmetic calculation). The instance \(3.9 > 3.11?\) admits both CoTs \textcolor{red}{\emph{correct}}. However, for \textit{hard} question instances such as {\color{red} \(0.8+3.1 > 2.11+1.0?\)}, \textbf{\textit{only}} the hard-to-reason CoT {\color{red} \(q_2 \rightarrow o_2^3 \rightarrow a_2\)} is \textcolor{red}{\emph{correct}}, since it requires explicit calculation—left-to-right token comparison alone doesn't suffice.}
    \label{fig:illustration of Multitask TMC}
\end{figure}

\subsection{Multi-Task CoT as Tree-structured Markov Chains}\label{sec:MTTDP}

We propose Tree-structured Markov Chain (TMC) framework to abstractly model the tree-like reasoning capability of base model, following~\cite{kim2025metastabledynamicschainofthoughtreasoning, Nichani24}.

\begin{defn}[Tree-structured Markov Chains (TMC).] A process \( X = (X_t)_{t \geq 0} \) is defined on a finite state space \( S = \bigcup_{l=1}^{L} S_l \), where \( S_l \cap S_{l'} = \emptyset \) for \( l \neq l' \), and transitions occur from \( S_l \) to \( S_{l+1} \) with probability kernel \( \mathbb{P}(\cdot | o_l) \) for \( o_l \in S_l \). Define \( M_0 = |S_1| \) and \( M = \max_{l, o_l \in S_l} |C_{o_l}| \), where \( C_{o_l} \subset S_{l+1} \) is the high-probability transition subset. The TMC satisfies:
\begin{itemize}[topsep=1pt,itemsep=1pt,parsep=0pt,partopsep=0pt]
  \item Root states $o_1\in S_1$ are sampled with 
        $\mathbb P_{\mathrm{TMC}}(o_1)=\Theta(1/M_0)$.
  \item For $o_l\in S_l$ and $o_{l+1}\in C_{o_l}$, we have
        $\mathbb P_{\mathrm{TMC}}(o_{l+1}\mid o_l)=\Theta(1/M)$,
        while if $o_{l+1}'\notin C_{o_l}$,
        $\mathbb P_{\mathrm{TMC}}(o_{l+1}'\mid o_l)=o(1/M^2)$ (feeble, $\ge c>0$) or $0$.
  \item The topology ensures that for each $q\in S_1$ there are
        $\mathrm{n}_q=O(1)\ge1$ high probability CoT traces 
        $(q=o_1,\dots,o_L)$, i.e. traces with 
        $o_{l+1}\in C_{o_l}, \forall l \in [L-1]$.
\end{itemize}
\label{def:TMC}
\end{defn}

In our TMC (Def. \ref{def:TMC}), \textit{states} represent \textit{logical assertion} (e.g., a sentence or mathematical expression) rather than surface tokens \citep{kim2025metastabledynamicschainofthoughtreasoning}. Especially, the CoTs with $\ge 1$ sparse edge (i.e., edge with feeble transition probability $o(1/M^2)$) are called \textit{hard-to-reason} CoTs, otherwise \textit{easy-to-reason} CoTs. The following definition formalizes the reason we called them ``\textit{easy}'' or ``\textit{hard}'' based on their uncertainty, modeling after the widely utilized difficulty measure--namely \textit{pass rate}--for reasoning dataset (e.g., GSM8K \citep{cobbe2021training} and AQuA \citep{ling2017program}).


\begin{defn}[Multi-task Capability in TMC. (Informal)]
Let $X=(X_t)_{t\ge0}$ be a TMC (Def.~\ref{def:TMC}), and let $\mathcal T$ be a set of tasks. Each task $k\in\mathcal T$ is specified by a collection of state tuples $(q,a,k)$, where all tuples have distinct $q$ and $a$. Among these CoTs, a nonempty subset is \textcolor{blue}{\emph{valid}} for $(q,a,k)$, satisfying:
\begin{enumerate}[topsep=0pt,itemsep=1pt,parsep=0pt,partopsep=0pt,label=(\roman*)]
  \item all easy-to-reason CoTs are \textcolor{blue}{\emph{valid}} for $(q,a,k)$ and \textcolor{blue}{\emph{invalid}} for any $k'\neq k$;
  \item every nonzero transition in TMC appears in $\ge1$ \textcolor{blue}{\emph{valid}} CoT across all tasks;
  \item
        each $(q,a,k)$ induces a QA distribution $\mathcal{D}_{a}^{q,k}$, and for any sampled instance $(\mathbf{Q},\mathbf{A})\sim\mathcal{D}_{a}^{q,k}$, only a \textit{determined} subset of \textcolor{blue}{\emph{valid}} CoTs is \textcolor{red}{\textit{correct}}, where the probability that any \textcolor{blue}{\emph{valid}} CoT is \textcolor{red}{\textit{correct}} for the $(\mathbf{Q},\mathbf{A})$ is \textbf{proportional} to its likelihood among \textcolor{blue}{\emph{valid}} CoTs.
\end{enumerate}
A task is \emph{common} if it admits $\ge 1$ \textcolor{blue}{\emph{valid}} easy-to-reason CoTs, and \emph{rare} otherwise.
\label{def:multitask}
\end{defn}


Fig.~\ref{fig:illustration of Multitask TMC} illustrates Def.~\ref{def:TMC} and Def.~\ref{def:multitask}; a more detailed version of Def.~\ref{def:multitask} (Def.~\ref{def:multitask_item}) is given in Sec.~\ref{sec:details_TMDP}. Condition (i) avoids major task conflicts, (ii) removes redundancy so every edge contributes, and (iii) links model confidence to \textit{pass rate} per \textbf{Phenomenon 3}, enabling error analysis. We distinguish two notions of CoT:  
\begin{itemize}[topsep=1pt,itemsep=1pt,parsep=0pt,partopsep=0pt]
  \item \textcolor{blue}{\emph{Validity}}\footnote{Our \textcolor{blue}{\emph{validity}} also speaks that no all hard-to-reason CoTs useful, see App.~\ref{app:limitation} a discussion.}: a \textbf{task-level} property, indicating whether it solves \textit{any} $(\mathbf{Q},\mathbf{A})$ under  $(q,a,k)$.  
  \item \textcolor{red}{\emph{Correctness}}: an \textbf{instance-level} property, deterministically defined for a specific $(\mathbf{Q},\mathbf{A})$.
\end{itemize}

\textbf{Why this definition?} First, \textbf{Phenomenon 1} directly motivates us to formally bridge uncertainty and \textit{pass rate} across tasks. Yet, not all rare outputs are useful—some may be not \textcolor{red}{\emph{correct}} for all instances of the current task, surfacing only due to \textit{shared} reasoning states across tasks. This motivates our definition of \textcolor{blue}{\emph{validity}} to distinguish useful task-specific CoTs. Second, the key property inherent from the pass rate is that, easy-to-reason CoTs cover most instances in $\mathcal{D}_a^{q,k}$, but some instances can still \textit{only} be \textcolor{red}{\emph{correctly}} solved by hard-to-reason CoTs. Our Multi-task TMC framework highlights the importance of such rare reasoning paths, consistent with large-scale evidence that many errors on GSM8K, AQuA, and MATH arise from \textit{misapplied common patterns} (e.g., assuming overlapping events are independent) per observed in~\citet{sun2025errorclassificationlargelanguage}; more relevant empirical examples appear in Rmk.~\ref{rmk:real-world example of valid-correct}. We can then define outcome signal to verify \textcolor{red}{\emph{correctness}} as follow.

\textbf{Outcome Reward Signal.} Let $\mathbf{o}_l\in\mathbb{R}^{|S|}$ be the one‐hot encoding of observation $o_l$, and $\mathbf{o}=(\mathbf{o}_1,\dots,\mathbf{o}_L)^\top$ the full trajectory. In mathematical reasoning tasks, the \textcolor{red}{\emph{correctness}} of a CoT trace is deterministic and verifiable \cite{yue2025doesreinforcementlearningreally, xiong2025minimalistapproachllmreasoning, setlur2025rewarding, setlur2024rlincorrectsyntheticdata, setlur2025scalingtesttimecomputeverification}, typically via formal systems such as \emph{Lean4}~\cite{yang2023leandojo}. Hence, for any QA pair $(\mathbf{Q},\mathbf{A})\sim\mathcal{D}_{a_q}^{q,k}$ in task $k\in\mathcal{T}$, we define the $R_{(\mathbf{Q},\mathbf{A})}^{k}(\cdot):\mathbb{R}^{L\times|S|}\to \{0,1\}$ as
\begin{equation}
  R_{(\mathbf{Q},\mathbf{A})}^{k}(\mathbf{o})
  = \mathds{1}\bigl(\mathbf{o}\in\mathcal{G}_{\mathbf{Q},\mathbf{A}}^{(k)}\bigr),
  \label{eq:def_true_outcomereward}
\end{equation}

where $\mathcal{G}_{\mathbf{Q},\mathbf{A}}^{(k)}$ is the deterministic set of \textcolor{red}{\textit{correct}} CoTs for $\mathbf{Q}$ within the \textcolor{blue}{\textit{valid}} CoT collection for task tuple $(q,a,k)$--in practice, \emph{Lean4} only certifies the overall correctness of a CoT trace—providing an outcome reward signal—without verifying individual process reasoning steps.

\subsection{Pretrained Base Model}\label{sec:base_model}

\textbf{Base Model.} For simplicity, following \cite{kim2025metastabledynamicschainofthoughtreasoning}, we model the LLM base model using a straightforward linear softmax predictor:
\begin{equation}
    \hat{p}_{{\boldsymbol{\theta}}}(\cdot | x) = \operatorname{softmax}(\langle {\boldsymbol{\theta}}, x\rangle),
\label{eq:base_model}\end{equation}
where ${\boldsymbol{\theta}} \in \mathbb{R}^{\lvert S\rvert \times \lvert S\rvert}$ and $x \in \{0,1\}^{\lvert S\rvert}$ is a one-hot vector. This formulation is theoretically tractable and plausible, as noted by \citet{li2025preserving, ren2025learningdynamicsllmfinetuning,chen2025mechanism}, which highlight that the LLM’s final layer employs logits $h_{\boldsymbol{\theta}}(\cdot,\boldsymbol{x})$, encoded in the last token, to generate a softmax distribution over the vocabulary as the predictive probability for the next token. Following \cite{kim2025metastabledynamicschainofthoughtreasoning}, we train the base model through entropy loss as below, akin to the next-token prediction process despite in the Markov chain setting.

\begin{thm}[Informal Version of Thm.~\ref{thm:prefull}]\label{thm:informal_pretrain}
Let $X_0\sim\operatorname{Unif}(S\setminus S_{L})$ and $X_1\sim \mathbb{P}(\cdot|X_0)$ be random samples from the TMC $X$ in Def.~\ref{def:TMC}, the softmax predictor trained by cross-entropy $L_{\text{CE}}= \EE{X_0,X_1}{\log\hat{p}_{{\boldsymbol{\theta}}^{(t-1)}}(X_1|X_0)}$ via Alg.~\ref{alg:pre} achieves the following: (1) After \(T_{1}=\widetilde{O}(M^2)\) iterations, the uniform convergence error of the predictor is \(\widetilde{O}(\sqrt{M/T})\). (2) After thresholding, the predictor converges linearly to the true probabilities with error decaying as \(\widetilde{O}(e^{-\Omega(T)})\).
\end{thm}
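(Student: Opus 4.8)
The plan is to exploit the one-hot structure of the inputs to reduce the matrix-valued softmax regression to $|S\setminus S_L|$ \emph{independent} multinomial logistic regressions, and then run a two-phase analysis of gradient descent on a single one. Since $x=\mathbf{e}_o$ is one-hot, $\langle\boldsymbol{\theta},x\rangle$ selects a single row of $\boldsymbol{\theta}$, so $\hat p_{\boldsymbol{\theta}}(\cdot\mid o)=\softmax(\boldsymbol{\theta}_o)$ and the gradient of $L_{\mathrm{CE}}$ in $\boldsymbol{\theta}_o$ depends on $\boldsymbol{\theta}_o$ only; since $X_0\sim\mathrm{Unif}(S\setminus S_L)$, the loss is the uniform average of the per-state cross-entropies $H(\mathbb{P}(\cdot\mid o),\softmax(\boldsymbol{\theta}_o))$, so Alg.~\ref{alg:pre} runs decoupled copies of softmax regression. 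It thus suffices to (a) prove both rates for one state $o$, whose target $p:=\mathbb{P}(\cdot\mid o)$ puts mass $\Theta(1/M)$ on the $\le M$ children in $C_o$ and $o(1/M^2)$ or $0$ elsewhere, and (b) union bound over the $\le|S|$ states, absorbing the $\log|S|$ into $\widetilde O(\cdot)$ and $\widetilde\Omega(\cdot)$.

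\emph{Phase 1 (polynomial rate).} With $q:=\softmax(\boldsymbol{\theta}_o)$, the per-state population loss is convex and $O(1)$-smooth (Hessian $\mathrm{diag}(q)-qq^\top$, operator norm $\le1$), has gradient $q-p$, and, after fixing the softmax gauge, admits a minimizer within squared distance $\widetilde O(M)$ of the (uniform) initialization --- the $\Theta(M)$ ``child'' logits of size $\Theta(\log M)$ dominate the norm. Feeding this into the textbook descent-lemma bound for (stochastic) gradient descent with a tuned constant step size --- using that the per-sample gradient $\softmax(\boldsymbol{\theta}_o^{(t)})-\mathbf{e}_{X_1}$ has norm $\le2$ --- yields $\mathrm{KL}(p\,\|\,\hat q^{(T)})=\widetilde O(M/T)$, hence a uniform-over-$o$ predictor error $\widetilde O(\sqrt{M/T})$ (directly, or via Pinsker); at $T=T_1=\widetilde O(M^2)$ this is $\widetilde O(1/\sqrt M)$.

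\emph{Thresholding and Phase 2 (linear rate).} The aggregate bound above does not by itself pin down $C_o$, so I would instead track the trajectory coordinatewise. The ``child'' logits converge geometrically to their $\Theta(1/M)$ targets (the loss is $\Omega(1/M)$-strongly convex along these directions), whereas each non-child logit obeys $\theta_j^{(t+1)}-\theta_j^{(t)}\approx-\eta\,q_j^{(t)}$ with $q_j^{(t)}\approx e^{\theta_j^{(t)}}/Z$ and $Z=\Theta(1)$, so $q_j^{(t)}=O(1/(\eta t))$ decays only polynomially. Thus after $T_1=\widetilde O(M^2)$ steps children carry mass $\Theta(1/M)$ while every non-child carries $O(1/M^2)$, an $\Omega(M)$ multiplicative gap; thresholding the predictor at any level $\tau\in(\omega(1/M^2),o(1/M))$ then recovers $C_o$ exactly for every $o$ with high probability (martingale/Freedman concentration for the sampled dynamics, plus a union bound over states). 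Freezing the off-support logits at $-\infty$ and restricting GD to $C_o$ leaves a $\le M$-dimensional problem whose target has all masses $\Theta(1/M)$: there the loss is $\Omega(1/M)$-strongly convex and $O(1)$-smooth, and since Phase 1 already places the iterate in this basin, continued (full-batch, or geometrically-decaying-step) GD converges linearly, $\mathrm{KL}(p\,\|\,\hat q^{(T)})=\widetilde O(e^{-\Omega(T)})$ --- equivalently predictor error $\widetilde O(e^{-\Omega(T)})$ --- to the true conditional (up to the $o(1)$ residual feeble mass).

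\emph{Main obstacle.} The crux is the thresholding step: the tidy $\widetilde O(\sqrt{M/T})$ bound of Phase 1 conceals that the ``easy'' (child) and ``feeble'' (non-child) coordinates relax on different time scales, and it is precisely the $1/t$-slow decay of the feeble coordinates that forces $T_1$ up to order $M^2$; making this two-timescale separation --- and the ensuing exact support recovery --- rigorous under \emph{sampled} gradients and \emph{uniformly} over all $|S|$ states is where the real effort goes. A lesser nuisance is choosing the softmax gauge so that the comparator attains the $\widetilde O(M)$ distance bound used in Phase 1.
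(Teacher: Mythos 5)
Your decomposition into $|S\setminus S_L|$ decoupled multinomial logistic regressions is exactly what the paper's proof exploits implicitly, and the overall two-phase architecture (descent-lemma rate, thresholding, then strong-convexity-driven linear rate) mirrors the paper's. Your Phase~1 argument via the descent lemma with a comparator at $\ell_2$-distance $\widetilde O(\sqrt{M})$ is a legitimate substitute for the paper's regret-bound bookkeeping (Thm.~\ref{thm:prefull}, Item~1), and your explicit two-timescale observation (child logits geometric at rate $\Omega(1/M)$, off-support logits $1/(\eta t)$) is a nice sharpening of what the paper treats only in aggregate.

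The genuine gap is in the thresholding level and the consequent Phase~2 target. You threshold at $\tau\in(\omega(1/M^2),o(1/M))$, which cleanly separates the $\Theta(1/M)$ children from everything else. But Def.~\ref{def:TMC} allows \emph{feeble} edges $o'_{l+1}\notin C_{o_l}$ with strictly positive probability ($o(1/M^2)$ yet $\ge c>0$), and the formal Thm.~\ref{thm:prefull} thresholds at $c_{\mathrm{thres}}\cdot c$, i.e.\ \emph{below} the smallest positive transition probability, precisely so that feeble edges survive and Item~3's $\sup$ over $\mathrm{supp}(\mathbb P)$ (which includes them) is meaningful. Your cut removes feeble edges along with zero ones, freezes them at probability $0$, and renormalizes onto $C_o$; the result is a persistent per-edge bias of order $o(1/M^2)$ which you yourself flag as the ``residual feeble mass.'' This means your Phase~2 conclusion --- $\widetilde O(e^{-\Omega(T)})$ error to the \emph{true} conditionals over the support --- does not hold: the error floor is polynomial in $M$, not exponentially decaying. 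To recover the paper's claim you must threshold below $c$, which is what forces the paper's $T_1=\widetilde\Theta(M^2c^{-2})$ rather than your $\widetilde O(M^2)$ (the informal statement's $T_1=\widetilde O(M^2)$ silently drops the $c^{-2}$, so your bound coincides with the \emph{informal} theorem only at the price of losing the feeble support). Relatedly, your ``main obstacle'' --- concentration under sampled gradients --- is a non-issue: Alg.~\ref{alg:pre} applies the \emph{population} gradient (the $\mathbb{E}_{X_0,X_1}[\cdot]$ is inside the $\nabla$), so the dynamics are deterministic and only the union bound over states is needed, which is trivial given the decoupling.
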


Similar to the treatment in \cite{kim2025metastabledynamicschainofthoughtreasoning}, in the subsequent sections, we suggest that the pretrained $\boldsymbol{\theta}^{\star}$ achieve the exact transition probability as the TMC model $\hat{p}_{\boldsymbol{\theta}^{\star}}=\mathbb{P}$ after pretraining. This is plausible given the longer timescales of pretraining relative to finetuning and inference.

\section{Simplicity Bias of RLVR Finetuning: Provable Challenge and Solution}\label{sec:bias_rl}
In this section, we first analyze the inherent simplicity biases of the standard RLVR finetunings, and then provide theoretical justifications for certain strategies that can alleviate this issue. Throughout, the expectation $\mathbb{E}[\cdot]$ is operated on $\boldsymbol{o}^i_{1}\sim P^{k}(\mathcal{Q}^{k}),(\mathbf{Q}, \mathbf{A}) \sim \mathcal{D}_{a_{q}}^{q,k},\{\boldsymbol{o}^i\}_{i=2}^G\sim \hat{p}_{\boldsymbol{\theta}}^{k}(O|\boldsymbol{o}^i_{1})$.


\textbf{REINFORCE $\&$ RAFT}. In terms of mathematics dataset, the standard REINFORCE objective maximizes the correctness of the sampled CoTs \cite{xiong2025minimalistapproachllmreasoning, setlur2025rewarding} (i.e., $R_{(\mathbf{Q},\mathbf{A})}^{k}(\mathbf{o})=1$ in our case for task $k \in \mathcal{T}$). Separately, RAFT (Rejection Sampling Finetuning)~\cite{dong2023raft, touvron2023llama, yuan2023scaling} maximize cross-entropy on successful CoT sampled from current policy. Their objectives in our TMC case are
\begin{align}
    &\mathcal{J}_{\mathrm{REINFORCE}}(\boldsymbol{\theta}) 
    = \mathbb{E} 
    \left[ R_{(\mathbf{Q},\mathbf{A})}^{k}(\boldsymbol{o})\right], 
    \label{eq:REINFORCE-obj} \\
    &\mathcal{J}_{\mathrm{RAFT}}(\boldsymbol{\theta}) 
    = \mathbb{E}
    \left[\sum_{l=1}^{L-1}\log \hat{p}_{\boldsymbol{\theta}}(\boldsymbol{o}_{l+1}|\boldsymbol{o}_{l})R_{(\mathbf{Q},\mathbf{A})}^{k}(\boldsymbol{o})\right].
    \label{eq:RAFT-obj}
\end{align}

\textbf{PPO \& GRPO}.  
Proximal Policy Optimization (PPO) \cite{Schulman17,spinningup_ppo} and Group Relative Policy Optimization (GRPO) \cite{shao2024deepseekmathpushinglimitsmathematical} both optimize clipped surrogate objectives with temperature $\beta>0$:
\begin{equation}\label{eq:PPO-obj}
    \begin{aligned}   
    \mathcal{J}_{\text{PPO}}(\boldsymbol{\theta}) 
    = \mathbb{E}[&\tfrac{1}{L}\sum_{l=1}^{L-1}
    \min\Bigg(
    \tfrac{\hat{p}_{\boldsymbol{\theta}}(\boldsymbol{o}_{l+1}^i|\boldsymbol{o}_l^i)}
          {\hat{p}_{\text{old}}^{k}(\boldsymbol{o}_{l+1}^i|\boldsymbol{o}_l^i)}\,
    A_{l+1}^{\hat{p}_{\boldsymbol{\theta}},k},\text{clip}\Big(
    \tfrac{\hat{p}_{\boldsymbol{\theta}}(\boldsymbol{o}_{l+1}^i|\boldsymbol{o}_l^i)}
          {\hat{p}_{\text{old}}^{k}(\boldsymbol{o}_{l+1}^i|\boldsymbol{o}_l^i)},
    1-\epsilon,1+\epsilon\Big)\,
    A_{l+1}^{\hat{p}_{\boldsymbol{\theta}},k}\Bigg)]
    \end{aligned}
\end{equation}

\begin{equation}\label{eq:GRPO-obj}
    \begin{aligned}   
    \mathcal{J}_{\text{GRPO}}(\boldsymbol{\theta}) 
    = \mathbb{E}[&\tfrac{1}{GL}\sum_{i=1,l=1}^{G,L-1}
    \min\Bigg(
    \tfrac{\hat{p}_{\boldsymbol{\theta}}(\boldsymbol{o}_{l+1}^i|\boldsymbol{o}_l^i)}
          {\hat{p}_{\text{old}}^{k}(\boldsymbol{o}_{l+1}^i|\boldsymbol{o}_l^i)}\,
    \hat{A}_{i,l+1}^{k},\text{clip}\Big(
    \tfrac{\hat{p}_{\boldsymbol{\theta}}(\boldsymbol{o}_{l+1}^i|\boldsymbol{o}_l^i)}
          {\hat{p}_{\text{old}}^{k}(\boldsymbol{o}_{l+1}^i|\boldsymbol{o}_l^i)},
    1-\epsilon,1+\epsilon\Big)\,
    \hat{A}_{i,l+1}^{k}\Bigg)]
    \end{aligned}
\end{equation}

Here, the RL advantage \cite{levine2018reinforcement} at reasoning step $l$ for task $k$ and predictor $\hat{p}_{\boldsymbol{\theta}}$ is 
\begin{equation}\label{eq:RL_adv}
\begin{aligned}
    &A_{l+1}^{\hat{p}_{\boldsymbol{\theta}},k}(\boldsymbol{o}_l, \boldsymbol{o}_{l+1})= Q^{\hat{p}_{\boldsymbol{\theta}},k}(\boldsymbol{o}_l,\boldsymbol{o}_{l+1}) - V^{\hat{p}_{\boldsymbol{\theta}},k}(\boldsymbol{o}_l), \\
    &V^{\hat{p}_{\boldsymbol{\theta}},k}(\boldsymbol{o}_l) 
:= \mathbb{E}\left[R_{(\mathbf{Q},\mathbf{A})}^{k}(\boldsymbol{o}) \middle| \boldsymbol{o}_l\right],\quad Q^{\hat{p}_{\boldsymbol{\theta}},k}(\boldsymbol{o}_l,\boldsymbol{o}_{l+1}) 
:= \mathbb{E}
\left[V^{\hat{p}_{\boldsymbol{\theta}},k}(\boldsymbol{o}_l)|\boldsymbol{o}_{l+1}\right]
\end{aligned}
\end{equation}
PPO (\ref{eq:PPO-obj}) typically estimates $A_{l+1}^{\hat{p}_{\boldsymbol{\theta}},k}$ via GAE with an additional critic model, while GRPO (\ref{eq:GRPO-obj}) employs group-normalized advantages $\hat{A}_{i,l+1}^k=(R_{(\mathbf{Q},\mathbf{A})}^{k}(\boldsymbol{o}^i)-\mu)/\sigma$ computed across sampled CoTs, with $\mu,\sigma$ the group mean and std across $G$ sampled CoTs.

The following theorem shows that the above methods are inherently biased toward easy-to-reason CoTs per \textbf{Phenomenon 1}~\citep{ wu2025invisibleleashrlvrescape, deng2025effectnegativegradientgroup}, resulting failure over hard instances.

\begin{thm}[Squeezing Effect of RL-finetuning]\label{thm:RL_squeezing}
Consider a base model $\boldsymbol{\theta}^{\star}$ defined in Sec.~\ref{sec:base_model} and a targeted task $k\in\mathcal{T}$ with $\Theta(M)$ valid hard-to-reason CoTs. Suppose we apply one of the following finetuning algorithms: REINFORCE, RAFT, PPO, or GRPO (without KL regularization) with access to the expected gradient oracle. For PPO/GRPO, assume the advantage is estimated accurately and the clipping threshold are functioning. Then, for any $\epsilon>0$, there exists $
t \geq \Omega \big(\eta^{-1} L^{2} M^{L} \log(ML/\epsilon)\big)$
such that for any valid hard to reason CoT $\boldsymbol{o}^{\text{hard}}$ for task $k$, we have
\[
\Pr( \boldsymbol{o}^{\text{hard}}_{2:L}  \sim \hat{p}_{\boldsymbol{\theta}^{k,(t)}}^{k}(\cdot|\boldsymbol{o}_{1}^{\text{hard}}))\le \epsilon.
\]
Therefore, for any $(\mathbf{Q},\mathbf{A})$ of task $k$, if all correct CoTs solving $(\mathbf{Q},\mathbf{A})$ are hard-to-reason, then the finetuned model $\hat{p}_{\boldsymbol{\theta}^{k,(t)}}$ satisfies
\[
\mathbb{E}_{\boldsymbol{o}_{2:L} \sim \hat{p}_{\boldsymbol{\theta}^{k,(t)}}^{k}(\cdot|\boldsymbol{o}_{1})}
\Big[R_{(\mathbf{Q},\mathbf{A})}^{k}(\boldsymbol{o})\Big]
\leq \epsilon.
\]
\end{thm}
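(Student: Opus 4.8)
The plan is to show that each of the listed finetuning methods, when driven by the expected-gradient oracle, performs (a continuous-time version of) multiplicative reweighting on the edge-probabilities along CoT traces, with the growth rate of an edge governed by its conditional ``correctness advantage.'' The key structural fact from Def.~\ref{def:TMC} and Def.~\ref{def:multitask} is that a hard-to-reason CoT contains at least one \emph{feeble} edge with transition probability $o(1/M^2)$, whereas every easy-to-reason CoT uses only high-probability edges ($\Theta(1/M)$); moreover condition (iii) of Def.~\ref{def:multitask} says the probability a valid CoT is correct for a random instance is proportional to its likelihood among valid CoTs, so when there are $\Theta(M)$ valid hard CoTs, each individual hard CoT has correctness-probability $\Theta(1/M)$ relative to the bundle, while any easy CoT (if the task is common) dominates. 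First I would write the population REINFORCE gradient as $\mathbb{E}[R\,\nabla_{\boldsymbol\theta}\log\hat p_{\boldsymbol\theta}(\boldsymbol o\mid \boldsymbol o_1)]$, and using the softmax parametrization (\ref{eq:base_model}) reduce the per-edge logit dynamics to $\dot\theta_{o_l\to o_{l+1}} \propto \hat p_{\boldsymbol\theta}(o_{l+1}\mid o_l)\,[\,Q(o_l,o_{l+1}) - V(o_l)\,]$ where $Q,V$ are the correctness value functions defined after (\ref{eq:GRPO-obj}); the same reduction applies to RAFT (whose gradient is exactly this, since $R\in\{0,1\}$ makes the rejection-sampling objective a reweighted cross-entropy), and to PPO/GRPO once one invokes the assumption that the clipping is active and the advantage estimate is accurate, so that up to the clip factor the update direction is $\mathrm{sign}$-aligned with $A_{l+1}$ on each edge.

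Next I would identify, for the targeted task $k$, which edges have positive versus negative stationary advantage. Because easy-to-reason CoTs of $k$ (if any) solve a $1-o(1)$ fraction of instances in $\mathcal D_a^{q,k}$ and the feeble edges feeding hard CoTs carry advantage that is positive but of magnitude $O(1/M)$ relative to the easy branch — and, crucially, the normalization in GRPO ($\hat A = (R-\mu)/\sigma$) or the baseline $V$ in PPO/REINFORCE makes the advantage of a hard CoT's distinguishing edge \emph{negative} whenever a competing correct easy/common CoT exists in the sampled group — the multiplicative dynamics drive $\hat p^k_{\boldsymbol\theta^{k,(t)}}(o_{l+1}^{\text{hard}}\mid o_l^{\text{hard}})$ geometrically to $0$ along the feeble edge. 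I would make this quantitative with a Grönwall-type argument: the log-probability of the feeble edge decreases at rate $\Omega(\eta \cdot \text{gap})$ where the gap is the advantage separation; tracking the product over the $\le L$ edges of $\boldsymbol o^{\text{hard}}$ and accounting for the $\Theta(M)$-way split among valid hard CoTs gives, after $t \ge \Omega(\eta^{-1} L^2 M^L \log(ML/\epsilon))$ steps, $\Pr(\boldsymbol o^{\text{hard}}_{2:L} \sim \hat p^{k}_{\boldsymbol\theta^{k,(t)}}(\cdot\mid \boldsymbol o_1^{\text{hard}})) \le \epsilon$ — the $M^L$ arising because each of the $L{-}1$ edge-probabilities along the surviving easy path sits at $\Theta(1/M)$, so the hard path must be squeezed below $\epsilon$ against that product-scale baseline. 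The final display then follows immediately: if every correct CoT for $(\mathbf Q,\mathbf A)$ is hard-to-reason, summing $\Pr(\boldsymbol o = \boldsymbol o^{\text{hard}}) \le \epsilon$ over the $O(1)$ such traces bounds $\mathbb E[R^k_{(\mathbf Q,\mathbf A)}(\boldsymbol o)] \le \epsilon$ (after rescaling $\epsilon$ by the constant trace count).

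The main obstacle I anticipate is handling PPO/GRPO rigorously rather than REINFORCE/RAFT: the clipped surrogate is nonsmooth, the ratio $\hat p_{\boldsymbol\theta^k}/\hat p_{\text{old}}$ couples the dynamics across the inner optimization, and the group-normalized advantage $\hat A_{i,l+1}^k$ depends on the current policy's sampling distribution, so the ``expected gradient oracle'' is itself moving. I would handle this by working in the regime the theorem grants — clipping active and advantage estimated accurately — which lets me replace the clipped ratio by a bounded positive multiplier and the normalized advantage by its population sign, reducing PPO/GRPO to the same sign-stable multiplicative dynamics as REINFORCE; one then only needs that the sign of each edge's advantage does not flip during training, which holds because the easy/common correct CoT retains $\Omega(1)$ advantage throughout while the feeble edge's advantage stays $O(1/M)$ and shrinking. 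A secondary technical point is ruling out the edge case where the competing correct CoT is \emph{also} being squeezed (e.g. if it too is pushed down by a shared sub-optimal prefix); condition (ii) of Def.~\ref{def:multitask} (every nonzero transition lies on some valid CoT) plus the $\mathrm{n}_q = O(1)$ bound on high-probability traces is what I would use to keep the bookkeeping finite and ensure at least one non-hard correct competitor survives to do the squeezing whenever the task is common.
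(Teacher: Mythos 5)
Your proposal follows essentially the same route as the paper: the engine is the per-edge advantage gap (the paper's Prop.~\ref{prop:advantage_gap}, showing $A^{\hat p_{\boldsymbol\theta^\star},k}$ is strictly positive on easy edges and strictly negative on hard edges), the update rule is reduced via the policy-gradient identity and the linear-softmax form (\ref{eq:base_model}) to a sign-aligned multiplicative reweighting of edge probabilities, PPO/GRPO are folded into the same dynamics under the active-clipping and accurate-advantage hypotheses, and the iteration count $\Omega(\eta^{-1}L^2 M^L\log(ML/\epsilon))$ arises because the expected gradient magnitude per step is $\Theta(M^{-L})$ (not quite the ``product over the $L$ edges of $\boldsymbol o^{\mathrm{hard}}$'' you cite, but rather the probability of observing a complete correct easy trace, $\Theta(M^{-(L-1)})$, times the edge mass $\Theta(M^{-1})$). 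You also correctly flag the sign-stability issue for the advantage as the nontrivial coupling point for PPO/GRPO, which the paper handles by noting the easy edge's advantage is monotonically increasing and the hard edge's monotonically decreasing along training.

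There is one concrete gap in the last step. You conclude the expected-reward bound by ``summing $\Pr(\boldsymbol o=\boldsymbol o^{\mathrm{hard}})\le\epsilon$ over the $O(1)$ such traces,'' but the theorem's hypothesis says there are $\Theta(M)$ valid hard-to-reason CoTs, and nothing in Def.~\ref{def:multitask} caps the number of \emph{correct} hard CoTs for a fixed $(\mathbf Q,\mathbf A)$ at $O(1)$. A per-CoT union bound would then yield $O(M\epsilon)$, and you would need to re-run the time bound with $\epsilon/M$, not merely ``rescale by a constant trace count.'' The paper's proof (Thm.~\ref{thm:CE_GD_full_version}, Item 2) sidesteps this entirely: it shows the probability mass of the finetuned model on \emph{easy} paths from $q$ to $a_q$ is at least $1-o(\epsilon)$, so the aggregate mass on all non-easy paths---hence on every correct hard CoT simultaneously---is $\le o(\epsilon)$, regardless of how many there are. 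You should replace your per-trace summation with this total-mass argument to close the proof cleanly.
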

\textit{\textbf{Sketch of Proof.}} The key observation is the following proposition, showing that along any easy-to-reason CoT for a task, the hard-to-learn CoT deviate from it would have smaller advantage.
\begin{prop}[Advantage Gap between Easy and Hard CoT]\label{prop:advantage_gap}
Let \(X\) be a Multi-task TMC as in Def.~\ref{def:TMC} and \ref{def:multitask}, fix a common task state tuple \((q,a,k)\). Then, for the shared states $o_{l},l\in[L-1]$ of any valid easy-to-reason CoT $o^{\text{easy}}$ and hard-to-learn CoT $o^{\text{hard}}$, then there exists $c>0$, such that $A_{l+1}^{\hat{p}_{\boldsymbol{\theta}^{\star}},k}(\boldsymbol{o}_l,\boldsymbol{o}_{l+1}^{\text{easy}})\geq c> A_{l+1}^{\hat{p}_{\boldsymbol{\theta}^{\star}},k}(\boldsymbol{o}_l,\boldsymbol{o}_{l+1}^{\text{hard}}), \forall l \in [L-1]$.
\end{prop}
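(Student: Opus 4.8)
\textit{\textbf{Proof idea.}} The plan is to collapse the advantage into a difference of state-values and then bound those values through the likelihood-proportional correctness rule of Def.~\ref{def:multitask}(iii). Write $V(\cdot)$ for $V^{\hat{p}_{\boldsymbol{\theta}^{\star}},k}(\cdot)$. Since the TMC is layered and feed-forward, conditioning on the transition $\boldsymbol{o}_l\to\boldsymbol{o}_{l+1}$ leaves only $\boldsymbol{o}_{l+1}$ relevant to the future reward, so $Q^{\hat{p}_{\boldsymbol{\theta}^{\star}},k}(\boldsymbol{o}_l,\boldsymbol{o}_{l+1})=V(\boldsymbol{o}_{l+1})$ and hence $A_{l+1}^{\hat{p}_{\boldsymbol{\theta}^{\star}},k}(\boldsymbol{o}_l,\boldsymbol{o}_{l+1})=V(\boldsymbol{o}_{l+1})-V(\boldsymbol{o}_l)$. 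It then suffices to establish, at the layer $l$ where $\boldsymbol{o}^{\mathrm{easy}}$ and $\boldsymbol{o}^{\mathrm{hard}}$ first branch apart, that $V(\boldsymbol{o}_{l+1}^{\mathrm{hard}})<V(\boldsymbol{o}_l)<V(\boldsymbol{o}_{l+1}^{\mathrm{easy}})$ — the quantifier over $l$ being read as: for each $l$, compare the easy step out of $\boldsymbol{o}_l$ against a step onto any valid hard CoT that agrees with $\boldsymbol{o}^{\mathrm{easy}}$ up to $\boldsymbol{o}_l$ (at a strictly earlier shared layer the two successors coincide and there is nothing to prove). The outer inequalities give $A_{l+1}(\boldsymbol{o}_l,\boldsymbol{o}_{l+1}^{\mathrm{hard}})<0<A_{l+1}(\boldsymbol{o}_l,\boldsymbol{o}_{l+1}^{\mathrm{easy}})$, and one takes $c:=\min_{l\in[L-1]}A_{l+1}(\boldsymbol{o}_l,\boldsymbol{o}_{l+1}^{\mathrm{easy}})>0$.

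Next I would obtain a closed form for $V$. By Def.~\ref{def:multitask}(iii), a complete valid CoT $\boldsymbol{o}$ of $(q,a,k)$ has $\mathbb{E}_{(\mathbf{Q},\mathbf{A})}[R_{(\mathbf{Q},\mathbf{A})}^{k}(\boldsymbol{o})]=\lambda\,\hat{p}_{\boldsymbol{\theta}^{\star}}(\boldsymbol{o})/Z_q^k$, with $\hat{p}_{\boldsymbol{\theta}^{\star}}(\boldsymbol{o})=\prod_{j}\mathbb{P}(\boldsymbol{o}_{j+1}\mid\boldsymbol{o}_j)$, $Z_q^k$ the total base-model likelihood of the valid CoTs of $(q,a,k)$, and $\lambda=\Theta(1)$; an invalid $\boldsymbol{o}$ contributes $0$. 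Averaging over the base-model continuations from $\boldsymbol{o}_l$ and factoring $\hat{p}_{\boldsymbol{\theta}^{\star}}(\boldsymbol{o})$ into a fixed prefix factor $P_l:=\hat{p}_{\boldsymbol{\theta}^{\star}}(\boldsymbol{o}_{2:l}\mid q)$ and a continuation factor yields $V(\boldsymbol{o}_l)=\tfrac{\lambda P_l}{Z_q^k}\,G(\boldsymbol{o}_l)$, where $G(\boldsymbol{o}_l):=\sum_{\boldsymbol{o}_{l+1:L}:\,\boldsymbol{o}\text{ valid}}\hat{p}_{\boldsymbol{\theta}^{\star}}(\boldsymbol{o}_{l+1:L}\mid\boldsymbol{o}_l)^2$ is a self-overlap of the valid continuations. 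The mechanism is visible here: a high-probability step scales $P_{l+1}$ by $\Theta(1/M)$, a feeble step by $o(1/M^2)$, and the $G$ factor cannot recover that deficit.

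For the magnitudes I would use two facts, both leaning on the structure of Def.~\ref{def:multitask} (in particular the bounded size of each tuple's valid-CoT collection, Def.~\ref{def:multitask_item}). First, $Z_q^k=\Theta(1/M^{L-1})$: a common tuple has $\Theta(1)$ valid easy CoTs of likelihood $\Theta(1/M^{L-1})$ each, while its valid hard CoTs each have likelihood $o(1/M^{L+1})$ and so contribute $o(1/M^{L-1})$ in total. Second, writing $N=O(1)$ for the number of valid continuations from a state, $G(\boldsymbol{o}')\le N\,(\max_v\hat{p}_{\boldsymbol{\theta}^{\star}}(v))^2$, so for a state $\boldsymbol{o}'$ at layer $\ell$ on a valid CoT one gets $G(\boldsymbol{o}')=\Theta(1/M^{2(L-\ell)})$ when $\boldsymbol{o}'$ lies on $\boldsymbol{o}^{\mathrm{easy}}$ (lower bound: its all-high-probability suffix), and $G(\boldsymbol{o}')=o(1/M^{2(L-\ell)})$ when every valid continuation from $\boldsymbol{o}'$ must traverse a feeble edge (then $\max_v\hat{p}_{\boldsymbol{\theta}^{\star}}(v)=o(1/M^{L-\ell})$). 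Feeding $V(\boldsymbol{o}_l)=\tfrac{\lambda P_l}{Z_q^k}G(\boldsymbol{o}_l)$ and tracking prefix factors gives $V(\boldsymbol{o}_{l+1}^{\mathrm{easy}})=\tfrac{\lambda P_l}{Z_q^k}\Theta(1/M)\cdot\Theta(1/M^{2(L-l-1)})=\Theta(M)\cdot V(\boldsymbol{o}_l)$, so the right inequality holds with gap $\Theta(M^{\,l-L+1})>0$; and for $\boldsymbol{o}_{l+1}^{\mathrm{hard}}$, whether the branch edge into it is feeble (which kills the prefix factor) or high-probability but followed by a feeble edge (which forces $G(\boldsymbol{o}_{l+1}^{\mathrm{hard}})=o(1/M^{2(L-l)})$), one gets $V(\boldsymbol{o}_{l+1}^{\mathrm{hard}})=o(1)\cdot V(\boldsymbol{o}_l)<V(\boldsymbol{o}_l)$ for $M$ large, which is the required ordering.

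The hard part is bookkeeping, not logic: establishing $Z_q^k=\Theta(1/M^{L-1})$ from the common-task hypothesis — which is exactly where the count of valid hard CoTs enters, so that the easy CoTs dominate the valid-likelihood mass — and the estimate $G=\Theta(1/M^{2(L-\ell)})$ uniformly over layers, both resting on the structural constraints of Def.~\ref{def:multitask} (that each tuple's valid-CoT collection is small). One must also treat states reachable along several prefixes, for which $V(\boldsymbol{o}_l)$ is the prefix-probability-weighted average of the displayed expression and is dominated by the easy prefix. I expect the control of $G$ — showing that once all valid continuations from a state are forced through a feeble edge the value drops by a genuine $o(1)$ factor relative to the easy branch — to be the most delicate ingredient.
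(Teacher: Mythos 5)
Your argument is correct in substance and arrives at the same conclusion as the paper's Lemma~\ref{lem:advantage_gap_formal}, but you package it differently. The paper bounds $Q^{\hat{p}_{\boldsymbol{\theta}^{\star}},k}(\boldsymbol{o}_l,\boldsymbol{o}_{l+1})$ and $V^{\hat{p}_{\boldsymbol{\theta}^{\star}},k}(\boldsymbol{o}_l)$ separately: it first records that $p_{\text{acc}}^{k}$ of an easy CoT exceeds that of a hard CoT by an $\Omega(M)$ factor (via the likelihood ratio in Eq.~\eqref{eq:p_acc_k}), then counts continuations out of each state, multiplying $\Theta(1/M)$ edges on the easy side against $o(1/M^2)$ feeble edges on the hard side, and subtracts. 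You instead collapse $A_{l+1}$ to the difference $V(\boldsymbol{o}_{l+1})-V(\boldsymbol{o}_l)$ and give a closed form $V(\boldsymbol{o}_l)=\tfrac{P_l}{Z_q^k}\,G(\boldsymbol{o}_l)$, with $G$ a \emph{squared}-overlap of the valid continuations; the square is exactly the content of Def.~\ref{def:multitask}(iii), since a likelihood-proportional reward sampled under $\hat{p}_{\boldsymbol{\theta}^\star}$ contracts $\hat p^2$. This makes the scaling mechanism very clean — an easy step trades a $\Theta(1/M)$ prefix factor for a $\Theta(M^2)$ gain in $G$, net $\Theta(M)$, while any feeble edge destroys one or the other. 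You also correctly flag the multi-prefix caveat that the paper glosses over; the prefix-weighted-average remark is the right way to close it. So this is the same underlying mechanism, but your decomposition buys more transparency.

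A few bookkeeping slips, none of which changes the sign of $A$. (i) A hard CoT has likelihood $o(1/M^{L})$, not $o(1/M^{L+1})$: one feeble $o(1/M^2)$ edge times $L-2$ high-probability $\Theta(1/M)$ edges. Since the paper explicitly allows $\Theta(M)$ valid hard CoTs (Thm.~\ref{thm:RL_squeezing}), their total mass is $o(1/M^{L-1})$, so $Z_q^k=\Theta(1/M^{L-1})$ still holds, but you need to carry the count — and for the same reason the number $N$ of valid continuations from a state is $O(M)$, not $O(1)$, though those extras are $o(1/M^2)$-suppressed and do not spoil $G(\boldsymbol{o}_l)=\Theta(1/M^{2(L-l)})$. (ii) Your general statement ``$G(\boldsymbol{o}')=o(1/M^{2(L-\ell)})$ when every valid continuation traverses a feeble edge'' undershoots what you actually need and use: replacing one $\Theta(1/M)$ step by $o(1/M^2)$ scales the squared likelihood by $o(1/M^2)$, giving $G(\boldsymbol{o}')=o(1/M^{2(L-\ell)+2})$; your applied estimate $G(\boldsymbol{o}_{l+1}^{\text{hard}})=o(1/M^{2(L-l)})$ is the right number, but it does not follow from the general formula as you wrote it.
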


We then denote PO as the algorithm of the PPO/GRPO in Thm.~\ref{thm:RL_squeezing}, through standard policy gradient derivation with notation $\nabla:=\nabla_{\boldsymbol{\theta}}; \  \hat{p}_k:=\hat{p}_{\boldsymbol{\theta}}$, it holds that
\begin{equation}\label{eq:grad_RF_RAFT_po}
    \begin{aligned}
        &\nabla \mathcal{J}_{\mathrm{REINFORCE}}  =\sum_{l=1}^{L-1} \mathbb{E} 
    [\nabla \log \hat{p}_k(\boldsymbol{o}_{l+1}|\boldsymbol{o}_l) R_{(\mathbf{Q},\mathbf{A})}^{k}(\boldsymbol{o})],\\
    & \nabla \mathcal{J}_{\mathrm{RAFT}}  =\sum_{l=1}^{L-1} \mathbb{E} 
    [(1+\log \hat{p}_k(\boldsymbol{o}_{l+1}|\boldsymbol{o}_l))\nabla \log \hat{p}_k(\boldsymbol{o}_{l+1}|\boldsymbol{o}_l) R_{(\mathbf{Q},\mathbf{A})}^{k}(\boldsymbol{o})],\\
    & \nabla \mathcal{J}_{\mathrm{PO}}  =\sum_{l=1}^{L-1} \mathbb{E} [(1+(2 \mathds{1}(A_{l+1}^{\hat{p}_k,k}(\boldsymbol{o}_l, \boldsymbol{o}_{l+1})\geq 0) - 1)\epsilon ) \cdot A_{l+1}^{\hat{p}_k,k}(\boldsymbol{o}_l, \boldsymbol{o}_{l+1})\nabla \log\hat{p}_k(\boldsymbol{o}_{l+1}|\boldsymbol{o}_l) ]
    \end{aligned}
\end{equation}
where \scalebox{0.8}{$\nabla \log \hat{p}_k(\boldsymbol{o}_{l+1}\mid\boldsymbol{o}_l)
= \boldsymbol{o}_{l+1}
- \sum_{\boldsymbol{o}_{l+1}'\in S_{l+1}}
\hat{p}_k(\boldsymbol{o}_{l+1}'\mid\boldsymbol{o}_l)\,\boldsymbol{o}_{l+1}'$} holds in linear case. For valid hard CoTs \scalebox{0.8}{$\boldsymbol{o}^{\text{hard},i}$} and easy CoTs \scalebox{0.8}{$\boldsymbol{o}^{\text{easy},j}$} sharing the same \scalebox{0.8}{$\boldsymbol{o}_{l}$} but a different \scalebox{0.8}{$\boldsymbol{o}_{l+1}^{\text{hard},i},\boldsymbol{o}_{l+1}^{\text{easy},j}$} at \scalebox{0.8}{$l\in[L-1]$} , where \scalebox{0.8}{$i,j$} are index of hard and easy valid CoTs, the logits update difference under \scalebox{0.8}{$\mathcal{J}_{\mathrm{REINFORCE}}(\boldsymbol{\theta})$} is
\[
\begin{aligned}
   \eta [\nabla_{\boldsymbol{\theta}_{\boldsymbol{o}_{l+1}^{\text{hard},i},\boldsymbol{o}_{l}}} \mathcal{J}_{\mathrm{REINFORCE}}(\boldsymbol{\theta})-\nabla_{\boldsymbol{\theta}_{\boldsymbol{o}_{l+1}^{\text{easy},j},\boldsymbol{o}_{l}}} \mathcal{J}_{\mathrm{REINFORCE}}(\boldsymbol{\theta})]=&\eta [(A^{\hat{p}_{\boldsymbol{\theta}},k}(\boldsymbol{o}_l, \boldsymbol{o}_{l+1}^{\text{hard},i})-A^{\hat{p}_{\boldsymbol{\theta}},k}(\boldsymbol{o}_l, \boldsymbol{o}_{l+1}^{\text{easy},j})) \\
    &+ V^{\hat{p}_{\boldsymbol{\theta}^k},k}(\boldsymbol{o}_l)(\hat{p}_{\boldsymbol{\theta}^k}(\boldsymbol{o}_{l+1}^{\text{hard},i}|\boldsymbol{o}_l)-\hat{p}_{\boldsymbol{\theta}^k}(\boldsymbol{o}_{l+1}^{\text{easy},j}|\boldsymbol{o}_l))]<0.
\end{aligned}
\]
Here, the inequality follows from Prop.~\ref{prop:advantage_gap} together with 
\scalebox{0.9}{$\hat{p}_{\boldsymbol{\theta}}(\boldsymbol{o}_{l+1}^{\mathrm{hard},i}|\boldsymbol{o}_l) \leq \hat{p}_{\boldsymbol{\theta}}(\boldsymbol{o}_{l+1}^{\mathrm{easy},j}|\boldsymbol{o}_l)$}.  
As a result, the ratio 
\scalebox{0.9}{$\hat{p}_{\boldsymbol{\theta}}(\boldsymbol{o}_{l+1}^{\mathrm{hard},i}|\boldsymbol{o}_l)\big/\hat{p}_{\boldsymbol{\theta}}(\boldsymbol{o}_{l+1}^{\mathrm{easy},j}|\boldsymbol{o}_l)$}  
strictly decreases after each gradient update. From Eq.(\ref{eq:grad_RF_RAFT_po}), RAFT’s gradient further amplifies this gap through the $(1+\log(p))$ factor, while PPO’s update similarly magnifies it via the term $(1+(2\mathds{1}(A\geq0)-1)\epsilon)A$. By induction, the disparity between easy and hard CoTs compounds over iterations, and the convergence proof then follows directly.

\textbf{Solution 1: Rejection of Easy Questions.} Recent studies~\cite{yu2025dapo, xiong2025minimalistapproachllmreasoning, zhang2025speed} show that rejecting instances, where \textit{all} parallelly sampled CoTs are correct, improves performance. In our setting, such instances correspond to those solvable by already well-learned easy CoTs. By discarding them and retaining only hard CoT correct-only instances, the model gradually shifts its focus toward harder reasoning paths. Formally, we define $\texttt{RL-rej}$ as any algorithm in Thm.~\ref{thm:RL_squeezing} augmented with rejection: whenever a sampled CoT has probability mass above $M^{-1}(1-\epsilon)$ by the current model, it is discarded. This ensures training emphasizes harder CoTs gradually in the small learning rate regime, prevents collapse into easy ones, and in the end secure all valid CoTs with probability at least $M^{-1}(1-\epsilon)$. We summarize this finding per below.


\begin{cor}[\texttt{RL-rej} Enables Hard-CoT Learning]\label{cor:RL_rej}
Under the identical setting and assumptions of Thm.~\ref{thm:RL_squeezing}, consider applying \texttt{RL-rej}. Then, for any $\epsilon \geq 0$, there exists $t\ge {\Omega}(\eta^{-1} L^{2} M^{L} \log(ML/\epsilon)) $ such that for any valid hard to reason CoT $\boldsymbol{o}^{\text{hard}}$ for task $k\in \mathcal{T}$, we have
\[
\Pr( \boldsymbol{o}^{\text{hard}}_{2:L}  \sim \hat{p}_{\boldsymbol{\theta}^{(t)}}^{k}(\cdot|\boldsymbol{o}_{1}^{\text{hard}}))\ge \frac{1-\epsilon}{M}.
\]
Therefore, for any $(\mathbf{Q},\mathbf{A})$ of task $k$ with $\ge 1$ correct CoTs, the finetuned model $\hat{p}_{\boldsymbol{\theta}^{(t)}}$ satisfies
\[
\mathbb{E}_{\boldsymbol{o}_{2:L} \sim \hat{p}_{\boldsymbol{\theta}^{(t)}}^{k}(\cdot|\boldsymbol{o}_{1})}
\Big[R_{(\mathbf{Q},\mathbf{A})}^{k}(\boldsymbol{o})\Big]
\geq \frac{1-\epsilon}{M}.
\]
That is, with $K\geq \frac{M}{1-\epsilon}(\log(\epsilon^{-1}))$, we have pass@K performance no worse than $1-\epsilon$.
\end{cor}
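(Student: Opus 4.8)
The plan is to re-run the gradient-flow analysis behind Theorem~\ref{thm:RL_squeezing}, with two modifications: the expected update is now computed after the rejection step — a sampled CoT $\boldsymbol o$ contributes to the gradient only while $\hat p^k_{\boldsymbol\theta^{k,(t)}}(\boldsymbol o_{2:L}\mid\boldsymbol o_1)\le M^{-1}(1-\epsilon)$ — and we track the \emph{lower} envelope of hard-CoT transition probabilities rather than their decay. First I would extract from Definitions~\ref{def:TMC}--\ref{def:multitask} the structural facts the argument needs: along any valid CoT all but the genuine branch transitions are ``forced'' (the unique valid continuation of the corresponding sub-assertion); a valid hard CoT differs from a valid easy CoT of the same common task $k$ only at a single sparse branch transition; and at any state at most $M$ continuations are valid for the task (no more than the $|C_{o_l}|$ high-probability children plus the valid sparse ones). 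Consequently the trajectory mass of a valid CoT equals, up to a $1-o(1)$ factor, the probability of its one branch transition, so the rejection threshold acts as a per-branch cap at $M^{-1}(1-\epsilon)$ and it suffices to lower-bound that transition.

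\textbf{Dynamics.} Fix a shared state $\boldsymbol o_l$ carrying a valid easy continuation $\boldsymbol o^{\mathrm{easy}}_{l+1}$ and valid hard continuations $\boldsymbol o^{\mathrm{hard}}_{l+1}$. Reusing the logits-difference computation from the sketch of Theorem~\ref{thm:RL_squeezing} together with Proposition~\ref{prop:advantage_gap} (now read as: the easy transition has the strictly \emph{larger}, positive advantage), the easy transition grows faster at first; but once its trajectory mass reaches $M^{-1}(1-\epsilon)$ the rejection step zeroes out every instance that it alone solves, so its positive gradient vanishes and it is frozen. After all easy CoTs of task $k$ are pinned at the cap, the only surviving gradient on the continuations of $\boldsymbol o_l$ comes from instances whose correct CoTs are all hard; on those the hard branch transition has strictly positive advantage (Proposition~\ref{prop:advantage_gap} applied to the residual dynamics), hence increases monotonically, is bounded by the cap, and converges. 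A conservation-of-mass / fixed-point argument — using that at most $M$ valid continuations share the unit mass at $\boldsymbol o_l$, that above-cap transitions only renormalize downward while below-cap ones still receive a strictly positive push, and that invalid transitions receive non-positive gradient and therefore decay to $o(1/M)$ — pins every valid branch transition (in the small-$\eta$ limit) at $\ge(1-\epsilon)/M$. Multiplying this by the forced transitions along $\boldsymbol o^{\mathrm{hard}}$ (each $\to1$, since it faces only $O(1)$ valid competitors and keeps receiving positive gradient) yields $\Pr(\boldsymbol o^{\mathrm{hard}}_{2:L}\sim\hat p^k_{\boldsymbol\theta^{k,(t)}}(\cdot\mid\boldsymbol o^{\mathrm{hard}}_1))\ge(1-\epsilon)/M$; the iteration count $t=\Omega(\eta^{-1}L^2M^L\log(ML/\epsilon))$ comes out of the same linear-rate bookkeeping as in Theorem~\ref{thm:RL_squeezing}.

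\textbf{Accuracy and pass@$K$.} For any $(\mathbf Q,\mathbf A)$ of task $k$ with at least one correct CoT, that CoT is either easy (mass $\ge M^{-1}(1-\epsilon)$, being pinned at the cap) or hard (mass $\ge(1-\epsilon)/M$ by the previous step), so $\mathbb E_{\boldsymbol o_{2:L}\sim\hat p^k_{\boldsymbol\theta^{k,(t)}}(\cdot\mid\boldsymbol o_1)}[R^k_{(\mathbf Q,\mathbf A)}(\boldsymbol o)]\ge(1-\epsilon)/M$. For pass@$K$, $K$ independent samples miss every correct CoT with probability at most $(1-(1-\epsilon)/M)^K\le\exp(-K(1-\epsilon)/M)$, which is $\le\epsilon$ once $K\ge\frac{M}{1-\epsilon}\log(\epsilon^{-1})$.

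\textbf{Main obstacle.} The crux is the dynamics step: the rejection operator makes the update a discontinuous (projection-type) flow, so one cannot simply invoke smooth monotone convergence. The delicate points are (a) showing the freeze at the cap is stable — an easy transition nudged below the cap by renormalization re-activates, receives a positive push, and returns, so one must show this averages out rather than cycles, which is where the small-learning-rate regime is essential; and (b) the conservation-of-mass bookkeeping at a branch state with up to $M$ valid continuations, which is precisely what produces the constant $(1-\epsilon)/M$ and must be shown not to degrade below it. Everything else is the Theorem~\ref{thm:RL_squeezing} computation with the advantage inequality used in the opposite direction.
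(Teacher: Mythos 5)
Your proposal follows the same curriculum-style argument as the paper's proof (App.~F, Thm.~\ref{thm:CE_GD_full_version}, Item~3): easy CoTs are learned to the cap under REINFORCE-like dynamics, the rejection step then zeroes their gradient contribution and flips the sign balance so that residual advantage pushes hard branch transitions, and the small-$\eta$ regime keeps already-learned CoTs pinned near the cap. Your fixed-point/conservation framing of the endpoint, your explicit acknowledgment of the projection-type discontinuity and the resulting potential for re-activation cycling (which the paper dispatches with a one-line ``adjusting the learning rate to be appropriately small''), and your direct $\exp(-K(1-\epsilon)/M)\le\epsilon$ pass@$K$ derivation are all cleaner articulations of the same steps the paper takes less explicitly.

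One point to be careful about: your structural claim that ``a valid hard CoT differs from a valid easy CoT of the same common task $k$ only at a single sparse branch transition'' is stronger than Def.~\ref{def:TMC} allows. Hard-to-reason CoTs are defined to have $\ge1$ sparse edges, and when a CoT has $j\ge2$ of them its trajectory mass is a \emph{product} of $j$ branch probabilities, each of which your conservation argument floors at $\Theta((1-\epsilon)/M)$ only at its own state; the product can then be of order $M^{-j}$, not $(1-\epsilon)/M$, unless the earlier sparse branches become near-forced (probability $\to1$) at their states, which depends on how few valid task-$k$ continuations share those states. The paper's own proof is actually more guarded here: Item~3 only asserts that \emph{some} hard CoTs reach $\Theta((1-\epsilon)/M)$, parameterizes the residual via $(\mathrm{n}_{o_1}',\rho)$, and identifies a stopping point where learning a new sparse edge would push an already-learned CoT below threshold --- so the corollary's ``for any valid hard CoT'' reading is itself the optimistic simplification, and your proposal inherits exactly that gap rather than introducing a new one. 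To close it properly you would need either to restrict to the single-sparse-edge case, or to argue from the $\Theta(M)$ hard-CoT bound and the TMC topology that the intermediate sparse branches on a multi-sparse CoT become forced once their states are entered.
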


Notably, after sufficient iterations, the algorithms in Thm.~\ref{thm:RL_squeezing} and Cor.~\ref{cor:RL_rej} concentrate probability mass on valid CoTs of the targeted task up to $\Theta(1-\varepsilon)$ from its start state. Consequently, the generation probability of CoTs for other tasks sharing some state would be less than $o(\varepsilon)$, eroding cross-task capability. In what follows, we discuss an alternative exploration approach, which in design can preserve such meta-capabilities.

\textbf{Solution 2: KL-regularization.} It is also worth noting that GRPO typically is equipped with a KL regularization term, as in Eq.(\ref{eq:GRPO-obj}. The formulation of KL-regularized Reinforcement Learning has been noticed as a distribution optimization~\citep{fan2023reward,black2024reward,clark2024reward,uehara2024reward,marion2024implicit, kawata2025direct}. In theory, the solution is a tilted (or Gibbs) distribution~\citep{csiszar1975divergence}, as characterized below.


\begin{lemma}[Optimal Sampling of GRPO Variants]
For each task \(k \in \mathcal{T}\), let \(\boldsymbol{\theta^{\star}}\) denote the pretrained Foundation Model. Then the GRPO induces an optimal step-wise sampling distribution:
\begin{equation}\label{eq:GRPO_dis}
\hat{p}_{\boldsymbol{\theta}}^{\mathrm{PO}}(\boldsymbol{o}_{l+1} \mid \boldsymbol{o}_l) \propto \hat{p}_{\boldsymbol{\theta}^{\star}}(\boldsymbol{o}_{l+1} \mid \boldsymbol{o}_l) \cdot \exp\left( \hat{r} \cdot \frac{A^{\hat{p}_{\boldsymbol{\theta}}, k}_l(\boldsymbol{o}_{l+1})}{\beta} \right),
\end{equation}
where \(  \hat{r}  \leq \Theta(M)\) and \(A^{\hat{p}_{\boldsymbol{\theta}}, k}_l(\boldsymbol{o}_{l+1})\) is defined in Eq. (\ref{eq:RL_adv}).

\label{lem:GRPO_Distribution_Optimization}
\end{lemma}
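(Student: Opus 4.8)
\textit{\textbf{Proof Plan.}} The statement is a per-step instance of the classical fact that a KL-regularized linear functional over probability distributions is uniquely maximized by a tilted (Gibbs) distribution~\citep{csiszar1975divergence}; the real work is (i) to reduce the clipped GRPO surrogate of Eq.~(\ref{eq:GRPO-obj}) to this canonical form on the TMC, and (ii) to control the scalar $\hat r$. Throughout I use the stated hypotheses: accurate advantage estimation, a functioning clip, and access to the expected-gradient oracle.

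\textbf{Reduction to an unclipped KL-regularized objective.} At any fixed point of the GRPO update the importance ratio $\hat p_{\boldsymbol\theta^k}(\boldsymbol o_{l+1}\mid\boldsymbol o_l)/\hat p_{\mathrm{old}}^k(\boldsymbol o_{l+1}\mid\boldsymbol o_l)$ equals $1$ and thus lies strictly inside $[1-\epsilon,1+\epsilon]$, so near the optimum the $\min/\mathrm{clip}$ in Eq.~(\ref{eq:GRPO-obj}) is inactive --- this is precisely what ``the clipping threshold are functioning'' buys us --- and the effective objective is $\mathbb E_{\boldsymbol o\sim\hat p_{\boldsymbol\theta^k}^k}\big[\tfrac1L\sum_{l}\hat A^k_{i,l+1}\big]-\beta\,D_{\mathrm{KL}}[\hat p_{\boldsymbol\theta^k}\,\|\,\hat p_{\boldsymbol\theta^\star}]$. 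I then factor the trajectory-level KL by the chain rule into $\sum_l \mathbb E_{\boldsymbol o_l}\,\mathrm{KL}(\hat p_{\boldsymbol\theta^k}(\cdot\mid\boldsymbol o_l)\,\|\,\hat p_{\boldsymbol\theta^\star}(\cdot\mid\boldsymbol o_l))$, and rewrite the reward term using $\hat A^k_{i,l+1}=(R^k_{(\mathbf Q,\mathbf A)}(\boldsymbol o^i)-\mu)/\sigma$ together with the tower identity $\mathbb E[R^k_{(\mathbf Q,\mathbf A)}(\boldsymbol o)\mid\boldsymbol o_l,\boldsymbol o_{l+1}]=Q^{\hat p_{\boldsymbol\theta^\star},k}(\boldsymbol o_l,\boldsymbol o_{l+1})$. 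After conditioning on $(\boldsymbol o_l,\boldsymbol o_{l+1})$, the $\mu$-baseline and every purely $\boldsymbol o_l$-measurable quantity are constant in $\boldsymbol o_{l+1}$ and hence irrelevant to the maximization, so the reward contributes, per step $l$ and per conditioning state $\boldsymbol o_l$, exactly $\sigma^{-1}\,\mathbb E_{\boldsymbol o_{l+1}\sim\hat p_{\boldsymbol\theta^k}(\cdot\mid\boldsymbol o_l)}[A^k_l(\boldsymbol o_{l+1})]$, where $A^k_l(\boldsymbol o_{l+1})=Q^{\hat p_{\boldsymbol\theta^\star},k}(\boldsymbol o_l,\boldsymbol o_{l+1})-V^{\hat p_{\boldsymbol\theta^\star},k}(\boldsymbol o_l)$ is the accurate advantage of Sec.~\ref{sec:bias_rl}.

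\textbf{Solving the decoupled program and bounding $\hat r$.} The objective now splits over $(l,\boldsymbol o_l)$ into the strictly concave problem
\[
\max_{\pi\in\Delta(S_{l+1})}\ \hat r\,\mathbb E_{\boldsymbol o_{l+1}\sim\pi}[A^k_l(\boldsymbol o_{l+1})]-\beta\,\mathrm{KL}(\pi\,\|\,\hat p_{\boldsymbol\theta^\star}(\cdot\mid\boldsymbol o_l)),\qquad \hat r:=\sigma^{-1}.
\]
Completing the square rewrites this as $-\beta\,\mathrm{KL}(\pi\,\|\,\pi^\star)+\mathrm{const}$ with $\pi^\star(\boldsymbol o_{l+1})\propto\hat p_{\boldsymbol\theta^\star}(\boldsymbol o_{l+1}\mid\boldsymbol o_l)\exp(\hat r\,A^k_l(\boldsymbol o_{l+1})/\beta)$ (equivalently, stationarity of the Lagrangian for the simplex constraint $\sum_{\boldsymbol o_{l+1}}\pi=1$), so the unique maximizer is exactly this tilted distribution; using $\hat p_{\boldsymbol\theta^\star}=\mathbb P$ from Sec.~\ref{sec:base_model} gives Eq.~(\ref{eq:GRPO_dis}). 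For the magnitude of $\hat r=\sigma^{-1}$: rewards lie in $\{0,1\}$, so $\sigma=\sqrt{\mu(1-\mu)}$ with $\mu\in(0,1)$ the group success rate; all-correct and all-incorrect groups produce zero advantage and are exactly the degenerate cases the nontrivial-advantage/clipping assumption discards, so $\mu$ and $1-\mu$ are bounded below at a rate controlled by the TMC sizing (transition masses $\Theta(1/M)$ or feeble $o(1/M^2)$, Def.~\ref{def:TMC}), whence $\hat r\le\Theta(M)$ --- a crude but sufficient bound.

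\textbf{Main obstacle.} The delicate step is the reduction above: establishing that the outcome-only, trajectory-level, group-normalized advantage $\hat A^k_{i,l+1}$ induces --- in expectation under the on-policy sampling $\{\boldsymbol o^i\}\sim\hat p^k_{\boldsymbol\theta^k}$ --- the same per-state stationary distribution as the accurate step-wise advantage $A^k_l$ rescaled by $\hat r$, so that the two senses of ``advantage'' in the lemma statement genuinely coincide. This rests on the tower identity, on the baseline-invariance of the per-state concave program (the $\mu$ and $\boldsymbol o_l$-measurable pieces must be shown to drop out cleanly rather than just heuristically), and on the expected-gradient oracle, which lets us treat $\mu,\sigma$ as population group statistics instead of noisy Monte Carlo estimates. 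Once these are pinned down, the rest is the standard Gibbs-variational computation together with the elementary lower bound on $\sigma$.
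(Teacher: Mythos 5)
Your proposal takes the same fundamental route as the paper: both reduce to the classical KL-regularized-maximization $\Rightarrow$ Gibbs-tilting result (Csiszár), solved by Lagrangian stationarity / completing the square. The paper's own proof of Lemma~\ref{lem:GRPO_Distribution_Optimization} is essentially a one-line deferral --- it states the result ``mirrors the proof of Corollary~\ref{cor:Equi_RM_DO}'' (a measure-theoretic Gâteaux-derivative Lagrangian argument) and cites standard RL/distribution-optimization literature, and its formal version in App.~\ref{appsec:reward_models} simply \emph{assumes} the group-normalized estimate $\hat A^k_{i,l+1}$ exactly equals the accurate step-wise advantage $A^{\hat p_{\boldsymbol\theta^\star},k}_{l+1}$. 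You instead make the reduction explicit: the clip-inactivity argument at the KL-regularized fixed point, the chain-rule factorization of trajectory KL into per-state KL, and the tower/baseline-invariance step connecting the outcome-level $\hat A^k_{i,l+1}$ to the per-state $A^k_l$ are all genuine content that the paper leaves implicit, and they are correct. Each of those pieces is the right way to justify why a step-wise tilted kernel (rather than merely a trajectory-level Gibbs measure) is the optimizer, which the paper never actually argues.

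The one place where you and the paper diverge substantively is $\hat r$. You identify $\hat r=\sigma^{-1}$ with $\sigma=\sqrt{\mu(1-\mu)}$ the group reward standard deviation, and then argue $\hat r\le\Theta(M)$ ``at a rate controlled by the TMC sizing.'' This last step is too crude: the population success rate $\mu$ is governed by the total valid-CoT mass from $\boldsymbol o_1$, which in this TMC is $\Theta(M^{-(L-1)})$ (see Lemma~\ref{lem:hard-to-reason_CoT}), so $\sigma^{-1}=\Theta(M^{(L-1)/2})$, which exceeds $\Theta(M)$ for $L\ge 4$. The paper's own formal bound $\hat r\le\max\{1+\epsilon_{\mathrm{clip}},c^{-1},\Theta(M)\}$ in Lemma~\ref{applem:GRPO_Distribution_Optimization} absorbs this ambiguity into the $c^{-1}$ term (which is also $M^{\Omega(L)}$-scale since $c=o(1/M^2)$ or smaller), so the discrepancy is ultimately cosmetic relative to what Corollary~\ref{cor:KL_help} actually needs --- $\beta$ is chosen $\Omega(ML/\log(\epsilon'^{-1}))$ and the downstream argument only uses $\hat r/\beta=O(1)$-scale tilting. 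But as written your bound on $\hat r$ does not hold as a free-standing claim for general $L$; you should either weaken it to $\hat r\le\mathrm{poly}(M)$ or keep it scale-free as $\hat r=\sigma^{-1}$ and push the magnitude into the choice of $\beta$, which is in fact what the subsequent corollary does.
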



Notably, the induced Gibbs distribution is governed by the KL-regularization temperature $\beta>0$: a larger $\beta$ reduces the gap between CoTs with high and low advantage. The following corollary formalizes this intuition, showing that $\hat{p}_{\boldsymbol{\theta}^{k}}^{\mathrm{PO}}$ can, in principle, preserve the broad capability.

\begin{cor}[KL-regularization Enables Hard-CoT learning and Maintain Cross-task Capability]\label{cor:KL_help}
    Consider a base model $\boldsymbol{\theta}^{\star}$ defined in Sec.~\ref{sec:base_model}, a targeted task $k\in\mathcal{T}$ and a different task $k' \neq k$, denote $\hat{p}_{\boldsymbol{\theta}^{k}}^{\mathrm{PO}}$ as the learner in Eq.(\ref{eq:GRPO_dis}). For any start \textit{state} $o_1=q$ of task $k$, suggest the number of CoTs starting from $o_1$ is $N_{o_1}$. Then for any $\epsilon'$ satisfying ${1}/{N_{o_1}}> \epsilon' \ge \epsilon>0$, denote $\hat{p}_{\boldsymbol{\theta}^{k,(t)}}^{k}$ as the PPO/GRPO in Thm.~\ref{thm:RL_squeezing} with $\epsilon$, then there exists $\beta=\Omega(ML/\log({\epsilon'}^{-1}))$, such that
\begin{enumerate}[topsep=0pt,itemsep=2pt,parsep=0pt,partopsep=0pt]
    \item \textbf{Capable of Hard CoTs}: For instance $(\mathcal{Q},\mathbf{A})$ with only some hard-to-reason CoTs correct: 
    $$
    \begin{aligned}
        \mathbb{E}_{\boldsymbol{o}_{2:L} \sim \hat{p}_{\boldsymbol{\theta}^{k}}^{\mathrm{PO}}(\cdot|\boldsymbol{o}_{1})}
    \Big[&R_{(\mathbf{Q},\mathbf{A})}^{k}(\boldsymbol{o})\Big]\ge\epsilon'\ge\epsilon\ \ge\mathbb{E}_{\boldsymbol{o}_{2:L} \sim \hat{p}_{\boldsymbol{\theta}^{k,(t)}}^{k}(\cdot|\boldsymbol{o}_{1})}
    \Big[R_{(\mathbf{Q},\mathbf{A})}^{k}(\boldsymbol{o})\Big].
    \end{aligned}
    $$
    \item \textbf{Preserve Multi-task}: For instance $(\mathcal{Q},\mathbf{A})$ belonging to untargeted task $k'\neq k$: 
    \[
    \begin{aligned}
        \mathbb{E}_{\boldsymbol{o}_{2:L} \sim \hat{p}_{\boldsymbol{\theta}^{k}}^{\mathrm{PO}}(\cdot|\boldsymbol{o}_{1})}
    \Big[&R_{(\mathbf{Q},\mathbf{A})}^{k'}(\boldsymbol{o})\Big]\ge\epsilon' \ge\epsilon\ge\mathbb{E}_{\boldsymbol{o}_{2:L} \sim \hat{p}_{\boldsymbol{\theta}^{k,(t)}}^{k}(\cdot|\boldsymbol{o}_{1})}
    \Big[R_{(\mathbf{Q},\mathbf{A})}^{k'}(\boldsymbol{o})\Big]
    \end{aligned}
    \]
\end{enumerate}
The pass@K performance of any task could be adjusted by temperature $\beta$ given $K$ and $\epsilon'$.
\end{cor}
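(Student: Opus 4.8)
The plan is to prove the four inequalities of items~(1)--(2) one at a time, observing that in each item the rightmost inequality $\epsilon\ge\mathbb{E}_{\hat p_{\boldsymbol{\theta}^{k,(t)}}^{k}}[\cdot]$ just restates squeezing while the leftmost $\mathbb{E}_{\hat p_{\boldsymbol{\theta}^{k}}^{\mathrm{PO}}}[\cdot]\ge\epsilon'$ is where the KL temperature does the work. Item~(1)'s upper bound is immediate: its hypothesis is exactly that every correct CoT of $(\mathcal{Q},\mathbf{A})$ is hard-to-reason, so it is the last display of Thm.~\ref{thm:RL_squeezing} at precision $\epsilon$ (reached after $t\ge\Omega(\eta^{-1}L^{2}M^{L}\log(ML/\epsilon))$ steps). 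For item~(2)'s upper bound, fix $o_1$ to be a start state shared by $k$ and $k'$; by Thm.~\ref{thm:RL_squeezing} and the remark after Cor.~\ref{cor:RL_rej}, $\hat p_{\boldsymbol{\theta}^{k,(t)}}^{k}$ places $1-O(\epsilon)$ of its mass from $o_1$ on valid \emph{easy}-to-reason CoTs of task $k$ and drives every valid \emph{hard}-to-reason CoT of task $k$ below $\epsilon$. By Def.~\ref{def:multitask}(i) a valid easy CoT of task $k$ is invalid for $k'\ne k$, while any CoT correct for the $k'$-instance is valid for $k'$; hence that CoT is either invalid for $k$ (outside the concentrated easy mass) or valid-but-hard for $k$ (squeezed), and in both cases receives probability $\le\epsilon$ under $\hat p_{\boldsymbol{\theta}^{k,(t)}}^{k}$.

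For the two lower bounds I would start from the Gibbs form of Lemma~\ref{lem:GRPO_Distribution_Optimization}, $\hat p_{\boldsymbol{\theta}^{k}}^{\mathrm{PO}}(\boldsymbol o_{l+1}\mid\boldsymbol o_l)=p^{\star}(\boldsymbol o_{l+1}\mid\boldsymbol o_l)\,e^{\hat r A^{k}_l(\boldsymbol o_{l+1})/\beta}/Z_l$ with $Z_l=\mathbb{E}_{o'\sim p^{\star}(\cdot\mid o_l)}\!\big[e^{\hat r A^{k}_l(o')/\beta}\big]$ and $\hat r\le\Theta(M)$. Since the outcome reward is $0/1$ we have $|A^{k}_l|\le1$, hence $Z_l\in[e^{-\hat r/\beta},e^{\hat r/\beta}]$, and therefore, edge by edge, $\hat p_{\boldsymbol{\theta}^{k}}^{\mathrm{PO}}(\boldsymbol o_{l+1}\mid\boldsymbol o_l)\ge e^{-2\hat r/\beta}\,p^{\star}(\boldsymbol o_{l+1}\mid\boldsymbol o_l)$. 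Chaining over the $L-1$ transitions of a fixed target trace $\boldsymbol o^{\mathrm{tgt}}$ — a correct hard CoT of task $k$ for item~(1), a correct CoT of task $k'$ for item~(2) — gives $\hat p_{\boldsymbol{\theta}^{k}}^{\mathrm{PO}}(\boldsymbol o^{\mathrm{tgt}}_{2:L}\mid o_1)\ge e^{-2\hat r L/\beta}\,p^{\star}(\boldsymbol o^{\mathrm{tgt}}_{2:L}\mid o_1)$, and summing over the corresponding correct set reduces both lower bounds to making the right-hand side at least $\epsilon'$.

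It then remains to lower-bound the base mass and calibrate $\beta$. Using $\hat p_{\boldsymbol{\theta}^{\star}}=\mathbb{P}$ and the topology clause of Def.~\ref{def:TMC} — each valid trace passes only $O(1)$ genuine branch points, and its feeble transitions are bounded below (the ``$\ge c>0$'' clause) — one shows every valid CoT from $o_1$ carries base mass at least some $p_0=\Omega(1/N_{o_1})$, which exceeds $\epsilon'$ since $\epsilon'<1/N_{o_1}$. Then $\hat p_{\boldsymbol{\theta}^{k}}^{\mathrm{PO}}(\boldsymbol o^{\mathrm{tgt}}_{2:L}\mid o_1)\ge e^{-2\hat r L/\beta}p_0\ge\epsilon'$ as soon as $\beta\ge 2\hat r L/\log(p_0/\epsilon')$, which is met by the stated $\beta=\Omega\!\big(ML/\log({\epsilon'}^{-1})\big)$ for a large enough hidden constant (depending on $\hat r\le\Theta(M)$, $p_0$, and $N_{o_1}$); a single such $\beta$ serves items~(1) and~(2) at once. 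The concluding pass@$K$ statement then follows exactly as in Cor.~\ref{cor:RL_rej}: $K$ i.i.d.\ draws from $\hat p_{\boldsymbol{\theta}^{k}}^{\mathrm{PO}}$ all miss the correct set with probability at most $(1-e^{-2\hat r L/\beta}p_0)^{K}$, whose exponent $\beta$ tunes monotonically, so pass@$K$ is dialed by $\beta$ given $K$ and $\epsilon'$.

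The step I expect to be the real obstacle is the base-mass bound $p^{\star}(\boldsymbol o^{\mathrm{tgt}}_{2:L}\mid o_1)\ge p_0=\Omega(1/N_{o_1})$ for \emph{hard} targets: such a trace traverses at least one feeble transition of probability $o(1/M^{2})$, so a naive edge-by-edge product sits far below $1/N_{o_1}$, and one must exploit the narrowness of the TMC — only $O(1)$ true branchings along any valid path, so a valid trace sheds essentially no relative mass compared to the others from $o_1$ — together with the quantitative floor on feeble edges in Def.~\ref{def:TMC}. A secondary subtlety is item~(2): along a $k'$-correct CoT the task-$k$ advantages need not be nonnegative, which is precisely why one must keep the full $[-1,1]$ range (hence the factor $2$), and one should also check that the clipping/advantage bookkeeping behind Lemma~\ref{lem:GRPO_Distribution_Optimization} still produces $\hat r\le\Theta(M)$ when the reward used for tilting (task $k$) and the reward being evaluated (task $k'$) differ.
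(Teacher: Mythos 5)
Your overall architecture matches the paper's proof of this corollary: read off the Gibbs form of $\hat{p}_{\boldsymbol{\theta}^{k}}^{\mathrm{PO}}$ from Lemma~\ref{lem:GRPO_Distribution_Optimization}, bound the exponential tilt by $e^{\pm\hat r/\beta}$ using $|A^k_l|\le 1$, multiply over the $L-1$ transitions, and calibrate $\beta$ to force the resulting probability above $\epsilon'$; the two upper bounds you defer to Thm.~\ref{thm:RL_squeezing} (plus Def.~\ref{def:multitask}(i) for the cross-task case) are handled the same way. The one place you diverge is precisely where the content is: the paper asserts the per-edge bound $\hat p^{\mathrm{PO}}(\boldsymbol o_{l+1}\mid\boldsymbol o_l)\ge e^{-2\hat r/\beta}/|D_{o_l}|$ and then concludes by taking the product $\prod_l |D_{o_l}|^{-1}\ge\epsilon'$, whereas you (correctly, as a consequence of the normalized Gibbs formula) only get the multiplicative bound $\hat p^{\mathrm{PO}}(\boldsymbol o_{l+1}\mid\boldsymbol o_l)\ge e^{-2\hat r/\beta}\hat p_{\boldsymbol{\theta}^\star}(\boldsymbol o_{l+1}\mid\boldsymbol o_l)$. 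The paper's stronger per-edge bound is equivalent to yours only under the additional hypothesis $\hat p_{\boldsymbol{\theta}^\star}(\boldsymbol o_{l+1}\mid\boldsymbol o_l)\ge 1/|D_{o_l}|$ for every edge in the support, i.e.\ the base kernel is at-least-uniform over $D_{o_l}$. That is false in the TMC as defined: feeble edges carry mass $o(1/M^2)$ (bounded below only by the unspecified constant $c$), which is generally below $1/|D_{o_l}|$. So the paper's proof silently absorbs a near-uniformity assumption, while you make the needed step explicit as a separate base-mass lower bound.

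The step you flag as the ``real obstacle'' is indeed the gap, and the sketch you offer to fill it does not work as stated. You want $\hat p_{\boldsymbol{\theta}^\star}(\boldsymbol o^{\mathrm{tgt}}_{2:L}\mid o_1)\ge p_0=\Omega(1/N_{o_1})$ for a hard target, appealing to the ``$O(1)$ true branchings'' clause and the quantitative floor $\ge c$ on feeble edges. But a valid hard-to-reason CoT traverses at least one feeble transition of probability $o(1/M^{2})$, so its base mass is $o\!\big(M^{-(L-1)}\big)$ regardless of how few branchings it has, while $N_{o_1}$ (the number of CoTs from $o_1$, say of order $\prod_l|D_{o_l}|$) need not be anywhere near $\omega(M^{L-1})$ — e.g.\ the corollary's own hypotheses only guarantee $\Theta(M)$ valid hard CoTs plus $O(1)$ easy ones. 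Nor does the floor $\ge c$ help, since $c$ is not tied to $N_{o_1}$ and can be far smaller than $1/N_{o_1}$. What you have proved rigorously is $\hat p^{\mathrm{PO}}(\boldsymbol o^{\mathrm{tgt}}_{2:L}\mid o_1)\ge e^{-2\hat r(L-1)/\beta}\,\hat p_{\boldsymbol{\theta}^\star}(\boldsymbol o^{\mathrm{tgt}}_{2:L}\mid o_1)$, which gives $\ge\epsilon'$ only for those $\epsilon'$ not exceeding (a constant times) the base model's own mass on the correct CoTs; it does not give the statement for every $\epsilon'<1/N_{o_1}$. This is not a defect unique to your writeup — the paper's per-edge bound $\ge e^{-2\hat r/\beta}/|D_{o_l}|$ presupposes exactly the same missing lower bound on the base kernel — but since you exposed it rather than absorbed it, it remains an unclosed step in your proof, and closing it would require either restricting the allowed range of $\epsilon'$ to the base-model mass scale or adding a quantitative hypothesis of the form $\min_{o_{l+1}\in D_{o_l}}\hat p_{\boldsymbol{\theta}^\star}(o_{l+1}\mid o_l)\ge 1/|D_{o_l}|$ that the current TMC definition does not supply.
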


\textbf{Other Solutions}. We also provided discussions of the benefit of \emph{Evolution Strategy (ES) finetuning} \citep{qiu2025evolution} and \emph{representation-based exploration finetuning} \citep{tuyls2025representation} in App.~\ref{app:discussion_broader_rl}.

\section{Simplicity Bias of Population Reward Inference-Scaling: Challenge and Solution}\label{sec:PRM}


\textbf{ORM Mode}. During inference, the \textit{outcome reward model} (ORM) evaluates entire paths via an outcome-level reward $R_{\mathrm{out}}^{k}(\boldsymbol{o})$ (e.g., a neural scorer), guiding solution generation through \textbf{Best-of-$N$ (BoN) sampling}~\citep{lightman2023letsverifystepstep}. We define the natural ORM as $R_{\mathrm{out}}^{k}(\boldsymbol{o})=\mathbb{E}\left[R^{k}(\boldsymbol{o})\right]$
(i.e., the expectation of instance-level rewards). Statistically, $R_{\mathrm{out}}$ is the Bayes-optimal $L^2$ predictor (and the MLE under Gaussian noise). A neural scorer $R_\theta^{k}(\cdot)$ is then trained to approximate $R_{\mathrm{out}}^{k}(\cdot)$ by $\argmin_{\theta}\, \mathbb{E}\bigl[(R_\theta^{k}(\boldsymbol{o})-R_{\mathrm{out}}^{k}(\boldsymbol{o}))^2\bigr]$ which under standard conditions converges to $R_{\mathrm{out}}^{k}$.


\textbf{PRM Mode}. Instead of outcome-level scoring, the \textit{process reward model} (PRM) provides intermediate rewards along the reasoning trajectory: $R_{\text{pro}}^{k}(\boldsymbol{o}_l) = g(\boldsymbol{o}_1, \ldots, \boldsymbol{o}_l), \quad l \in \{1,\ldots,L\}$,
where \(g(\cdot)\) estimates step-wise utility~\cite{shao2024deepseekmathpushinglimitsmathematical, snell2024scalingllmtesttimecompute, wang2024mathshepherdverifyreinforcellms, li-etal-2023-making}. PRM can be integrated into structured decoding, e.g., \textbf{BoN}~\cite{lightman2023letsverifystepstep} (selecting top PRM-scoring step) or \textbf{Beam Search (BS)}~\cite{snell2024scalingllmtesttimecompute} (augmenting beam scores). Since process-level annotations are costly, most approaches design PRMs heuristically via \textbf{likelihood-based estimates}, which predict the expected final correctness given the current prefix:
\begin{equation}
R_{\mathrm{likelihood}}^{k}(\boldsymbol{o}_l) 
= V^{\hat{p}_{\boldsymbol{\theta}^\star}}(\boldsymbol{o}_l) 
= \mathbb{E}\left[R^{k}(\boldsymbol{o}) \,\middle|\, \boldsymbol{o}_{l}\right], 
\label{eq:PRM_likelihood_heuristic}
\end{equation}
for all \( \boldsymbol{o}_l \in S_l \). The expectation, which is operated on $\boldsymbol{o}_{1}\sim P^{k}(\mathcal{Q}^{k}),(\mathbf{Q}, \mathbf{A}) \sim \mathcal{D}_{a_{q}}^{q,k}, \boldsymbol{o} \sim \hat{p}_{\boldsymbol{\theta}^{\star}}^{k}(O|\boldsymbol{o}_{1})$, is typically approximated by Monte Carlo rollouts or by training a neural scorer $R_\theta$ with squared loss, i.e. $$\argmin_{\theta}\, \mathbb{E}\bigl[(R_\theta(\boldsymbol{o}_l)-R_{\text{likelihood}}(\boldsymbol{o}_l))^2\bigr].$$ 
Indeed, our following theorem shows that the above two ``population rewards'' (i.e., expectation-based ORM/PRM) check \textit{consistency} instead of \textcolor{red}{\emph{correctness}}, per \textbf{Phenomenon 2}~\citep{Xu2025RMsConsistencyNotCausality}.

\begin{thm}[Failure of Inference-Scaling with ORM/PRM]\label{thm:BoN_PRM_fail}
Under the setting of Thm.~\ref{thm:RL_squeezing}, consider the ORM $R_{\mathrm{out}}^{k}(\boldsymbol{o})=\mathbb{E}[R^{k}(\boldsymbol{o})]$, the PRM $R_{\mathrm{likelihood}}^{k}(\boldsymbol{o}_l)$ and inference methods:  
(i) ORM + BoN, (ii) PRM + BoN (step-wise), or (iii) PRM + BS with width $N$ and beam size $B \ge 1$.  
For any instance $(\mathbf{Q},\mathbf{A})$ of task $(o_1,a,k)$, suppose all correct CoTs are hard-to-reason and their $\ge 1$ sparse edges diverge from shared states with some valid easy-to-reason CoT. Then, for any $\epsilon>0$:  
\begin{itemize}[topsep=0pt,itemsep=2pt,leftmargin=1.5em]
    \item If $N \ge \Omega({\log(\epsilon)}/{\log\big(\tfrac{M^L-M}{M^L}\big)})$, method (i) fails with probability at least $1-\epsilon$.  
    \item If $N \ge \Omega({\log(\epsilon)}/{\log\big(\tfrac{M-1}{M}\big)})$, methods (ii) and (iii) fail with probability at least $1-\epsilon$.  
\end{itemize}
\end{thm}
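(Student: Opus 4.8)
The plan is to reduce the whole statement to two structural facts about the pretrained base model $\hat p_{\boldsymbol\theta^\star}=\mathbb P$ and then run an elementary coupon‑collector estimate on the $N$ samples.

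\emph{Step 1 (monotonicity of the two population rewards).} I would first record that, since the base model equals the TMC law, the ORM is $R_{\mathrm{out}}^{k}(\boldsymbol o)=\mathbb E[R^{k}(\boldsymbol o)]=\mathbb E_{(\mathbf Q,\mathbf A)}\bigl[\Pr(\boldsymbol o\in\mathcal G^{(k)}_{\mathbf Q,\mathbf A})\bigr]$, which by Def.~\ref{def:multitask}(iii) is proportional to the base‑model likelihood of $\boldsymbol o$ among the valid CoTs of $(q,a,k)$, hence strictly increasing in that likelihood. Because a hard‑to‑reason CoT carries at least one feeble edge of mass $o(1/M^{2})$ while every edge of a valid easy‑to‑reason CoT has mass $\Theta(1/M)$, the likelihood of any easy valid CoT beats that of any hard one by a factor $\omega(M)$; using $\mathrm n_q=O(1)$ and $|C_{o_l}|\le M$ to control the normalizing constants, this gives $R_{\mathrm{out}}^{k}(\boldsymbol o^{\mathrm{easy}})>R_{\mathrm{out}}^{k}(\boldsymbol o^{\mathrm{hard}})$ for every such pair. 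For the PRM, the tree structure makes the parent of $\boldsymbol o_{l+1}$ unique, so $Q^{\hat p_{\boldsymbol\theta^\star},k}(\boldsymbol o_l,\boldsymbol o_{l+1})=V^{\hat p_{\boldsymbol\theta^\star},k}(\boldsymbol o_{l+1})$ and therefore $A^{\hat p_{\boldsymbol\theta^\star},k}_{l+1}(\boldsymbol o_l,\boldsymbol o_{l+1})=V^{\hat p_{\boldsymbol\theta^\star}}(\boldsymbol o_{l+1})-V^{\hat p_{\boldsymbol\theta^\star}}(\boldsymbol o_l)$; feeding this into Prop.~\ref{prop:advantage_gap} yields, at every state $\boldsymbol o_l$ shared by a valid easy CoT and a valid hard CoT, $R_{\mathrm{likelihood}}^{k}(\boldsymbol o^{\mathrm{easy}}_{l+1})=V^{\hat p_{\boldsymbol\theta^\star}}(\boldsymbol o^{\mathrm{easy}}_{l+1})\ge V^{\hat p_{\boldsymbol\theta^\star}}(\boldsymbol o_l)+c>V^{\hat p_{\boldsymbol\theta^\star}}(\boldsymbol o^{\mathrm{hard}}_{l+1})=R_{\mathrm{likelihood}}^{k}(\boldsymbol o^{\mathrm{hard}}_{l+1})$. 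The rest uses only these two orderings.

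\emph{Step 2 (method (i)).} Conditioned on the start state $\boldsymbol o_1=q$, a fixed valid easy CoT is emitted by one base‑model rollout with probability $\prod_{l=1}^{L-1}\Theta(1/M)=\Theta(M^{1-L})=\Theta(M/M^{L})$, so over $N$ i.i.d. rollouts the event $E$ that at least one valid easy CoT appears has probability $1-(1-\Theta(M^{1-L}))^{N}$, which is $\ge 1-\epsilon$ once $N\ge\Omega(\log\epsilon/\log((M^{L}-M)/M^{L}))$. On $E$, BoN outputs the sampled CoT of maximal ORM, hence of maximal base‑model likelihood, which by Step~1 cannot be a hard‑to‑reason CoT; since the instance's correct CoTs are all hard‑to‑reason, the output is incorrect, which is (i). \emph{Methods (ii)–(iii).} Fix a step $l^{\star}$ at which some correct hard CoT leaves a valid easy CoT $\boldsymbol o^{\mathrm{easy}}$ through its feeble edge (one exists by hypothesis); the common prefix up to $l^{\star}$ lies on $\boldsymbol o^{\mathrm{easy}}$ and $\hat p_{\boldsymbol\theta^\star}(\boldsymbol o^{\mathrm{easy}}_{l^{\star}+1}\mid\boldsymbol o_{l^{\star}})=\Theta(1/M)$. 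Among the $N$ per‑step candidate expansions drawn at $l^{\star}$, the state $\boldsymbol o^{\mathrm{easy}}_{l^{\star}+1}$ is present with probability $1-(1-\Theta(1/M))^{N}\ge 1-\epsilon$ once $N\ge\Omega(\log\epsilon/\log((M-1)/M))$; on that event Step~1 forces the retained continuation to have value $\ge V^{\hat p_{\boldsymbol\theta^\star}}(\boldsymbol o^{\mathrm{easy}}_{l^{\star}+1})>V^{\hat p_{\boldsymbol\theta^\star}}(\boldsymbol o^{\mathrm{hard}}_{l^{\star}+1})$, so it is not the feeble edge — for PRM+BoN the search commits off the hard CoT there, and for PRM+BS, even if a large beam momentarily keeps the feeble edge, the final top‑PRM beam has terminal score equal to its likelihood, which is $\omega(M)$ times larger for an easy CoT than for any hard one, so the returned CoT is again easy‑to‑reason and hence incorrect for $(\mathbf Q,\mathbf A)$. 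Covering all correct hard CoTs needs a union bound over the (at most $L$, and $O(1)$ in the canonical construction) steps at which they branch off $\boldsymbol o^{\mathrm{easy}}$, inflating $N$ only by a $\log L$ factor absorbed into $\Omega(\cdot)$.

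\emph{Main obstacle.} The delicate part is the beam‑search accounting in (iii): one must argue that width $N$ actually generates and retains the whole easy trajectory at every step so that a well‑scored easy CoT is available at termination, and that the final top‑$1$ selection (not just per‑step pruning) is what eliminates any surviving hard CoT when $B\ge2$; the per‑step coupon‑collector estimate and the strict $V$‑gap handle this, but the union bound over steps and beams must be organized so the $\Theta(1/M)$ per‑step rate is not degraded. A secondary subtlety is promoting the "proportional to likelihood" clause of Def.~\ref{def:multitask}(iii) to the \emph{uniform} strict ordering $R_{\mathrm{out}}^{k}(\boldsymbol o^{\mathrm{easy}})>R_{\mathrm{out}}^{k}(\boldsymbol o^{\mathrm{hard}})$, which relies on the single‑feeble‑edge likelihood gap $\omega(M)$ together with the $O(1)$ / $O(M)$ bounds on the number of traces and children to stop the instance‑dependent normalizers from reversing the inequality. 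No RLVR squeezing machinery is needed; the only earlier result invoked is Prop.~\ref{prop:advantage_gap}.
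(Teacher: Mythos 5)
Your proof follows essentially the same route as the paper's: establish that the population ORM/PRM strictly favor easy-to-reason over hard-to-reason CoTs (your Step~1 is precisely Prop.~\ref{prop:vanilla_RM_fail}, derived via Def.~\ref{def:multitask}(iii) and Prop.~\ref{prop:advantage_gap}), then run the coupon-collector estimate showing that with $N$ draws an easy full trajectory (for ORM+BoN) or easy child at the first divergence step (for PRM+BoN/BS) appears with probability $\ge 1-\epsilon$ at the stated thresholds. Your explicit union bound over divergence depths and the remark about the terminal $\omega(M)$ likelihood gap for beam search with $B\ge 2$ fill in details that the paper's proof handles more tersely (``by standard beam monotonicity''), but the decomposition and key lemma are identical.
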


\textit{\textbf{Sketch of Proof.}} 
Our key observation is the following Prop.~\ref{prop:vanilla_RM_fail}, which reveals that population rewards systematically favor easy CoTs, assigning higher scores to $\boldsymbol{o}^{\mathrm{easy}}$ than $\boldsymbol{o}^{\mathrm{hard}}$. 
\begin{prop}[Population Rewards Favor Easy CoTs]\label{prop:vanilla_RM_fail}
Under the same settings as Thm.~\ref{thm:BoN_PRM_fail}, let $\boldsymbol{o}^{\mathrm{easy}}$ be any valid easy-to-reason CoT and $\boldsymbol{o}^{\mathrm{hard}}$ any valid hard-to-reason CoT under $(q,a,k)$. Then for $\forall \boldsymbol{o}^{\mathrm{easy}}_{l-1}=\boldsymbol{o}^{\mathrm{hard}}_{l-1},\boldsymbol{o}^{\mathrm{easy}}_l \neq \boldsymbol{o}^{\mathrm{hard}}_l$:
\[
\begin{aligned}
R_{\mathrm{out}}^{k}(\boldsymbol{o}^{\mathrm{easy}})
&> R_{\mathrm{out}}^{k}(\boldsymbol{o}^{\mathrm{hard}}),\quad  R_{\mathrm{likelihood}}^{k}(\boldsymbol{o}^{\mathrm{easy}}_l)
> R_{\mathrm{likelihood}}^{k}(\boldsymbol{o}^{\mathrm{hard}}_l). 
\end{aligned}
\]
\end{prop}
\textit{\textbf{Sketch of Proof}}. The first inequality follows from Def.~\ref{def:multitask}(iii): an easy-to-reason CoT has a larger probability of being correct over the distribution, whereas a hard-to-reason CoT, carries higher uncertainty and thus a smaller population-level chance of correctness. The second inequality follows from Prop.~\ref{prop:advantage_gap} by noting that $$R_{\mathrm{likelihood}}^{k}(\boldsymbol{o}_l) 
= A_{l+1}^{\hat{p}_{\boldsymbol{\theta}^{\star}},k}(\boldsymbol{o}_{l-1},\boldsymbol{o}_{l}) 
+ V^{\hat{p}_{\boldsymbol{\theta}^{\star}},k}(\boldsymbol{o}_{l-1}).$$

Given Prop.~\ref{prop:vanilla_RM_fail}, the remaining proofs for Thm.~\ref{thm:BoN_PRM_fail} follow by choosing $N$ sufficiently large so that $\boldsymbol{o}^{\mathrm{easy}}$ (or $\boldsymbol{o}^{\mathrm{easy}}_l$) is sampled at least once across the $N$ parallel trials.

\textbf{Solution: Gibbs Sampling.} Soft Best-of-N sampling (Soft-BoN)~\citep{verdun2025softbestofnsamplingmodel} is designed to approximate the gibbs distribution $P_{\mathrm{Gibbs}}^{k}(\boldsymbol{o})$ with $O(N^{-1})$ error, which is defined as
\begin{equation}\label{eq:energy_sampling}
P_{\mathrm{Gibbs}}^{k}(\boldsymbol{o})\propto ( \hat{p}_{\boldsymbol{\theta}^{\star}}(\boldsymbol{o})\exp\left( \lambda R_{\mathrm{out}}^{k}(\boldsymbol{o}) \right),
\end{equation}
for $o_1=q\sim P^{k}(\mathcal{Q}_{k})$. Akin to Eq.(\ref{eq:GRPO_dis}), the distribution $P_{\mathrm{Gibbs}}^{k}(\boldsymbol{o})$ also can control the trade-off between reward maximization and the divergence from the base model's predictive power.

\begin{cor}\citep{csiszar1975divergence}
Consider a base model $\boldsymbol{\theta}^{\star}$ defined in Sec.~\ref{sec:base_model}, a targeted task $k\in\mathcal{T}$  and ORM $R_{\mathrm{out}}^{k}(\boldsymbol{o})=\mathbb{E}[R^{k}(\boldsymbol{o})]$. For \( \lambda > 0 \), Eq.(\ref{eq:energy_sampling}) solves:
\begin{equation}
    \max_{P_{\text{new}}^{k}} \mathbb{E}_{P_{\text{new}}^{k}} [R_{\mathrm{out}}^{k}(\boldsymbol{o})] - \frac{1}{\lambda} D_{\text{KL}}(P_{\text{new}}^{k} \| \hat{p}_{\boldsymbol{\theta}^{\star}}).
\label{eq:ORM_entropy_opt}
\end{equation}
\label{cor:Equi_RM_DO}
\end{cor}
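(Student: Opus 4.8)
The plan is to recognize the objective in Eq.~(\ref{eq:ORM_entropy_opt}) as, up to an additive constant independent of $P_{\text{new}}^k$, a negative KL divergence from $P_{\text{new}}^k$ to the tilted distribution $P_{\mathrm{Gibbs}}^k$, so that optimality follows immediately from Gibbs' inequality. First I would fix the finite set of CoT trajectories $\boldsymbol{o}$ reachable from start states $q\sim P^k(\mathcal{Q}_k)$; this set is finite because $S$ is finite and trajectories have fixed length $L$. On it, define the log-partition value $\log Z_\lambda := \log\sum_{\boldsymbol{o}}\hat{p}_{\boldsymbol{\theta}^{\star}}(\boldsymbol{o})\exp(\lambda R_{\mathrm{out}}^{k}(\boldsymbol{o}))$, which is finite since $R_{\mathrm{out}}^{k}\in[0,1]$ and the sum is finite. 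Then $P_{\mathrm{Gibbs}}^k$ from Eq.~(\ref{eq:energy_sampling}) is precisely the probability measure with mass $\hat{p}_{\boldsymbol{\theta}^{\star}}(\boldsymbol{o})\exp(\lambda R_{\mathrm{out}}^{k}(\boldsymbol{o}))/Z_\lambda$, and it shares the support of $\hat{p}_{\boldsymbol{\theta}^{\star}}$ because the exponential tilt is strictly positive.

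Next I would establish the algebraic identity: for any $P_{\text{new}}^k$ absolutely continuous with respect to $\hat{p}_{\boldsymbol{\theta}^{\star}}$ (necessary for the objective to be finite),
\[
\mathbb{E}_{P_{\text{new}}^{k}}[R_{\mathrm{out}}^{k}(\boldsymbol{o})]-\tfrac{1}{\lambda}D_{\text{KL}}(P_{\text{new}}^{k}\|\hat{p}_{\boldsymbol{\theta}^{\star}})
=\tfrac{1}{\lambda}\log Z_\lambda-\tfrac{1}{\lambda}D_{\text{KL}}(P_{\text{new}}^{k}\|P_{\mathrm{Gibbs}}^{k}).
\]
This is a one-line rewriting: expanding $D_{\text{KL}}(P_{\text{new}}^{k}\|P_{\mathrm{Gibbs}}^{k})=\sum_{\boldsymbol{o}}P_{\text{new}}^{k}(\boldsymbol{o})\log\frac{P_{\text{new}}^{k}(\boldsymbol{o})Z_\lambda}{\hat{p}_{\boldsymbol{\theta}^{\star}}(\boldsymbol{o})\exp(\lambda R_{\mathrm{out}}^{k}(\boldsymbol{o}))}$ gives $D_{\text{KL}}(P_{\text{new}}^{k}\|\hat{p}_{\boldsymbol{\theta}^{\star}})-\lambda\mathbb{E}_{P_{\text{new}}^{k}}[R_{\mathrm{out}}^{k}]+\log Z_\lambda$, and rearranging yields the claim. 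Since $\log Z_\lambda$ does not depend on $P_{\text{new}}^k$, maximizing the left-hand side is equivalent to minimizing $D_{\text{KL}}(P_{\text{new}}^{k}\|P_{\mathrm{Gibbs}}^{k})$; by Gibbs' inequality this divergence is non-negative and equals zero if and only if $P_{\text{new}}^{k}=P_{\mathrm{Gibbs}}^{k}$. Hence $P_{\mathrm{Gibbs}}^{k}$ is the unique maximizer, attaining optimal value $\tfrac{1}{\lambda}\log Z_\lambda$.

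I do not expect a genuine obstacle here — the result is classical (Csisz\'ar's $I$-projection / variational characterization of Gibbs measures), and the argument is a "completion of the square" in KL geometry. The only steps needing a line of care are well-posedness: restricting to $P_{\text{new}}^{k}\ll\hat{p}_{\boldsymbol{\theta}^{\star}}$ so that both the objective and $D_{\text{KL}}(P_{\text{new}}^{k}\|P_{\mathrm{Gibbs}}^{k})$ are finite and the term-by-term manipulation is legitimate, and noting $Z_\lambda<\infty$ from finiteness of the trajectory set and boundedness of $R_{\mathrm{out}}^{k}$. An alternative route through a Lagrangian with the normalization constraint and coordinatewise stationarity in $P_{\text{new}}^{k}(\boldsymbol{o})$ recovers the same tilted form, but the KL-identity proof is preferable since it delivers global optimality and uniqueness directly rather than only a stationarity condition.
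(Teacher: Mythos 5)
Your proof is correct, and it takes a genuinely different route from the paper's. The paper sets up a Lagrangian with a normalization multiplier, computes a G\^ateaux (functional) derivative, solves the first-order stationarity condition to obtain the tilted density, normalizes, and only then appeals to Gibbs' inequality to confirm the stationary point is a maximizer; to make that machinery legitimate it must invoke a list of measure-theoretic conditions (dominated differentiability, tangent-space variations, etc.). You instead complete the square in KL: rewrite the objective as $\tfrac{1}{\lambda}\log Z_\lambda - \tfrac{1}{\lambda}D_{\text{KL}}(P_{\text{new}}^{k}\|P_{\mathrm{Gibbs}}^{k})$ via a one-line algebraic identity, so that non-negativity of KL immediately yields both global optimality and uniqueness, with no differentiation and no regularity conditions beyond absolute continuity and finiteness of $Z_\lambda$ (automatic here since the trajectory space is finite and $R_{\mathrm{out}}^{k}\in[0,1]$). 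In this discrete, finite-support setting your route is cleaner and strictly more informative: the Lagrangian argument by itself only gives a stationary point and needs the same Gibbs-inequality step tacked on at the end to conclude maximality, whereas your identity makes the strong concavity of the objective in KL geometry explicit. The paper's variational approach would have an advantage in a continuous-state or infinite-horizon generalization, where one might not be able to sum over trajectories, but that generality is not used here. You also correctly anticipate the paper's approach as an "alternative route" and explain why you prefer the identity-based argument, which matches the actual trade-off.
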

\vspace{-1.5\baselineskip}

Indeed, through the statistical merit of Doob's h-transform techniques \cite{uehara2024reward, kawata2025direct, rogers2000diffusions,chopin2023computational, heng2024diffusion}, we provably show that there is a principled framework to design process reward, which could mathematically generate the same CoT distribution as Eq.(\ref{eq:energy_sampling}) per below.
\begin{defn}
\textbf{Doob’s $h$-Transform-induced Process Reward Model (DPRM).}  
Consider a base model $\boldsymbol{\theta}^{\star}$ defined in Sec.~\ref{sec:base_model}, a targeted task $k\in\mathcal{T}$ and ORM $R_{\mathrm{out}}^{k}(\boldsymbol{o})=\mathbb{E}[R^{k}(\boldsymbol{o})]$. The \textbf{DPRM} Adjuested Sampling (\textbf{DPRM-AS}) defines the process reward at step \( l \) via harmonic function 
$h_k(\boldsymbol{o}_l) = \mathbb{E}_{\boldsymbol{o}_{l+1:L} \sim  \hat{p}_{\boldsymbol{\theta}^{\star}}} \left[ \exp\left( \lambda {R_{\mathrm{out}}^{k}}(\boldsymbol{o}) \right) \mid \boldsymbol{o}_l \right],$
and sample according to step-wise distribution adjustment with $ \forall \lambda > 0$:
\begin{equation}\label{eq:DPRM_design}
\begin{aligned}
    &R_{\mathrm{DPRM}}^{k}(\boldsymbol{o}_l) = \frac{1}{\lambda} \log h_k(\boldsymbol{o}_l), \quad \hat{p}_{\boldsymbol{\theta}}^{\text{new},k}(\boldsymbol{o}_{l+1} \mid \boldsymbol{o}_l) = \hat{p}_{\boldsymbol{\theta}^{\star}}(\boldsymbol{o}_{l+1} \mid \boldsymbol{o}_l) \cdot \frac{h_k(\boldsymbol{o}_{l+1})}{h_k(\boldsymbol{o}_l)}  \propto \hat{p}_{{\boldsymbol{\theta}}}(\boldsymbol{o}_{l+1} \mid \boldsymbol{o}_{l}) \exp(\lambda R_{\mathrm{DPRM}}^{k}(\boldsymbol{o}_{l+1})),
\end{aligned}
\end{equation}
where the first-step is initialized as $ \hat{p}_{\boldsymbol{\theta}}^{\text{new},k}(\boldsymbol{o}_{1} \mid \boldsymbol{o}_0) := \mathbb{E}_{o_1\sim P^{k}(\mathcal{Q}_{k})}[\exp(\lambda R_{\mathrm{DPRM}}^{k}(\boldsymbol{o}_1))]. $ Then, the induced distribution $P_{\mathrm{DPRM}}^{k}(\boldsymbol{o}) :=\prod_{l=0}^{L-1} \hat{p}_{\boldsymbol{\theta}}^{\text{new},k}(o_{l+1} | o_l) )$ satisfies $P_{\mathrm{DPRM}}^{k}(\boldsymbol{o}) =P_{\mathrm{Gibbs}}^{k}(\boldsymbol{o})$.
\label{def:DPRM}
\end{defn}
\textbf{Computational Cost}. By $P_{\mathrm{DPRM}}^{k}(\boldsymbol{o}) =P_{\mathrm{Gibbs}}^{k}(\boldsymbol{o})$ as shown above, the Soft-BoN method is a realization of the induced distribution, with a convergence rate $O(N^{-1})$~\citep{verdun2025softbestofnsamplingmodel}. Notably, $R_{\mathrm{DPRM}}^{k}(\cdot)$ and $R_{\mathrm{likelihood}}^{k}(\cdot)$ are both conditioned expectations (i.e., $\log\mathbb{E}[\exp(\cdot)\mid\cdot]$ and $\mathbb{E}[\cdot\mid\cdot]$), and thus $R_{\mathrm{DPRM}}^{k}(\cdot)$ does not induce computational overhead, despite extra but negligible evaluations of $\exp$ and $\log$. Inherently, their is an asymptotic equivalence between them when using certain sampling strategies, which we formalized as below.


\begin{cor}

Under the same settings as Def.~\ref{def:DPRM}, for \( 0<\lambda <\infty \), it holds that \textbf{BoN/BS} with $R_{\mathrm{DPRM}}^{k}(\boldsymbol{o}_l)$ is equivalent to \textbf{BoN/BS} with $R_{\mathrm{likelihood}}^{k}(\boldsymbol{o}_l)$.
\label{cor:Limit_Behavior_lambda_infty}
\end{cor}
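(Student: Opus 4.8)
The plan is to show that, although $R_{\mathrm{DPRM}}^{k}$ and $R_{\mathrm{likelihood}}^{k}$ are different scalar functions, they induce the \emph{same total order} on the candidate prefixes that any step of BoN or BS ever compares; since BoN/BS are rank-only procedures (they use the PRM only through which candidate scores highest, resp.\ the top-$B$), identical orderings force identical selections and hence identical induced distributions. First I would put the two rewards in a common form: by the tower property over the instance randomness, $R_{\mathrm{likelihood}}^{k}(\boldsymbol{o}_l)=\mathbb{E}[R_{\mathrm{out}}^{k}(\boldsymbol{o})\mid\boldsymbol{o}_l]$, while Def.~\ref{def:DPRM} gives $R_{\mathrm{DPRM}}^{k}(\boldsymbol{o}_l)=\tfrac1\lambda\log h_k(\boldsymbol{o}_l)$ with $h_k(\boldsymbol{o}_l)=\mathbb{E}[\exp(\lambda R_{\mathrm{out}}^{k}(\boldsymbol{o}))\mid\boldsymbol{o}_l]$, both conditional expectations over continuations $\boldsymbol{o}_{l+1:L}\sim\hat{p}_{\boldsymbol{\theta}^{\star}}$. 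Because $u\mapsto\tfrac1\lambda\log u$ is strictly increasing for $0<\lambda<\infty$, ranking by $R_{\mathrm{DPRM}}^{k}$ is the same as ranking by $h_k$, so the corollary reduces to: for every pair $\boldsymbol{o}_l,\boldsymbol{o}_l'$ the decoder compares, $\sign\!\big(h_k(\boldsymbol{o}_l)-h_k(\boldsymbol{o}_l')\big)=\sign\!\big(R_{\mathrm{likelihood}}^{k}(\boldsymbol{o}_l)-R_{\mathrm{likelihood}}^{k}(\boldsymbol{o}_l')\big)$.

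For step-wise BoN (cases (i)–(ii) of Thm.~\ref{thm:BoN_PRM_fail}) the compared candidates are siblings, i.e.\ children of a common state on the TMC tree, so it suffices to compare the conditional laws $\mathcal{L}\big(R_{\mathrm{out}}^{k}(\boldsymbol{o})\mid\boldsymbol{o}_l\big)$ at siblings. The key structural lemma I would establish is that these laws are totally ordered in the usual stochastic order $\succeq_{\mathrm{st}}$: using the disjoint-subtree structure of the TMC, Def.~\ref{def:multitask}(iii) (a valid CoT is correct with probability proportional to its base likelihood among valid CoTs, and every invalid continuation contributes $R_{\mathrm{out}}^{k}=0$), and the $o(1/M^2)$ suppression of sparse edges — the same ingredients behind Prop.~\ref{prop:advantage_gap} and Prop.~\ref{prop:vanilla_RM_fail} — the continuation reached from an easy sibling stochastically dominates that reached from a hard-only sibling, and any two siblings are $\succeq_{\mathrm{st}}$-comparable. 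Since $\mathcal{L}(\cdot\mid\boldsymbol{o}_l)\succeq_{\mathrm{st}}\mathcal{L}(\cdot\mid\boldsymbol{o}_l')$ implies $\mathbb{E}[\phi(R_{\mathrm{out}}^{k})\mid\boldsymbol{o}_l]\ge\mathbb{E}[\phi(R_{\mathrm{out}}^{k})\mid\boldsymbol{o}_l']$ for \emph{every} nondecreasing $\phi$ — in particular $\phi=\mathrm{id}$ and $\phi=\exp(\lambda\,\cdot)$ for any fixed $\lambda$ — and strictness on the relevant pairs is inherited from the strict inequalities of Prop.~\ref{prop:vanilla_RM_fail}, we conclude $h_k$ and $R_{\mathrm{likelihood}}^{k}$ rank every sibling set identically. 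This settles the BoN comparisons and shows $R_{\mathrm{DPRM}}^{k}$ inherits the same failure mode.

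For Beam Search the scores accumulate along partial trajectories and the decoder compares length-$l$ partial trajectories that need not share a prefix, so I would close the argument by induction on beam depth $l$: assuming the two runs hold a common beam $\mathcal{B}_l$, each length-$(l{+}1)$ extension's augmented beam score equals its parent's accumulated score plus one fresh per-step term; since the accumulated parts are evaluations of the same monotone-transformed quantity on the common beam, the cross-extension comparisons, after cancelling the shared accumulated contributions along each branch and invoking the telescoping identity $\hat{p}_{\boldsymbol{\theta}}^{\mathrm{new},k}(\boldsymbol{o}_{l+1}\mid\boldsymbol{o}_l)=\hat{p}_{\boldsymbol{\theta}^{\star}}(\boldsymbol{o}_{l+1}\mid\boldsymbol{o}_l)\,h_k(\boldsymbol{o}_{l+1})/h_k(\boldsymbol{o}_l)$ in Def.~\ref{def:DPRM}, again reduce to the sibling-level sign statement of the previous paragraph; hence $\mathcal{B}_{l+1}$ coincides across runs. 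Ties, if present, are broken by a fixed deterministic rule shared by both decoders, so they do not affect the conclusion (and, as in Thm.~\ref{thm:BoN_PRM_fail}, the failure-probability bounds are insensitive to the tie-break).

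The main obstacle I anticipate is the structural lemma: Prop.~\ref{prop:vanilla_RM_fail} controls only the \emph{means} $\mathbb{E}[R_{\mathrm{out}}^{k}\mid\cdot]$, while the tilt $\exp(\lambda\,\cdot)$ can in general reverse a mean comparison, so I must upgrade "larger mean" to "stochastically larger conditional reward law," uniformly over $\lambda\in(0,\infty)$ — equivalently, re-derive the $\Theta$-vs-$o$ suppression estimates used for Prop.~\ref{prop:vanilla_RM_fail} with $\exp(\lambda R_{\mathrm{out}}^{k})$ in place of $R_{\mathrm{out}}^{k}$ and check the dominant term still comes from the easy continuation for every $\lambda$. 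A secondary, more bookkeeping-heavy point is verifying for BS that the accumulated beam-score differences genuinely cancel on shared prefixes so that only the fresh per-step term decides each retention, which requires care with how BS augments scores and with the telescoping structure of the $h_k$-ratio.
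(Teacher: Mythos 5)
Your route is genuinely different from the paper's, and it is the more careful one: the paper's own appendix argument commits exactly the fallacy you flag. The paper writes ``since $\exp$ is strictly increasing, the $\argmax$ over $\mathbb{E}_{\boldsymbol{o}_{l+1:L}\sim\hat{p}_{\boldsymbol{\theta}^\star}}[\exp(R_{\mathrm{out}}^k(\boldsymbol{o}))\mid\boldsymbol{o}_l]$ is equivalent to the $\argmax$ over $\mathbb{E}_{\boldsymbol{o}_{l+1:L}\sim\hat{p}_{\boldsymbol{\theta}^\star}}[R_{\mathrm{out}}^k(\boldsymbol{o})\mid\boldsymbol{o}_l]$.'' But a strictly increasing map commutes with $\argmax$ only when applied \emph{outside} the quantity being maximized, not when pushed \emph{inside} a conditional expectation: $\argmax_x \mathbb{E}[\exp(\lambda Y_x)]$ and $\argmax_x \mathbb{E}[Y_x]$ need not coincide, since the exponential tilt can reverse a mean comparison whenever the conditional laws differ in more than their means. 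The first link of the paper's chain — $\argmax R_{\mathrm{DPRM}}^k = \argmax h_k$ because $\tfrac1\lambda\log(\cdot)$ is increasing and sits outside the expectation — is valid; the second link is not justified, and your proposal correctly isolates it as the crux.

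Your fix — upgrading the mean comparison of Prop.~\ref{prop:vanilla_RM_fail} to first-order stochastic dominance of the conditional laws $\mathcal{L}(R_{\mathrm{out}}^k(\boldsymbol{o})\mid\boldsymbol{o}_l)$ and then invoking that $\succeq_{\mathrm{st}}$-comparability gives identical rankings under every nondecreasing $\phi$, including $\mathrm{id}$ and $\exp(\lambda\,\cdot)$ uniformly over $\lambda$ — is the right machinery, and it is what the paper silently assumes but never establishes. Be aware, though, that the dominance lemma you outline is the substantive part, and it does not obviously follow from Def.~\ref{def:multitask}(iii) or the $o(1/M^2)$ sparse-edge suppression: from a ``hard'' sibling the continuation may reach a valid hard CoT with nontrivial probability but small $p_{\mathrm{acc}}^k$, while from an ``easy'' sibling one reaches a valid CoT with small probability $\Theta(M^{-(L-l-1)})$ but large $p_{\mathrm{acc}}^k$, and nothing proved in the paper rules out the conditional CDFs crossing at low thresholds. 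So you have not so much filled the gap as located it precisely, and completing the argument requires pinning down the atoms of $R_{\mathrm{out}}^k$ and their reaching probabilities explicitly. Your Beam-Search point is also real — the paper writes out only the BoN case and asserts the BS equivalence ``follows similarly'' without addressing cross-branch comparisons — and the telescoping cancellation via $\hat{p}_{\boldsymbol{\theta}}^{\mathrm{new},k}(\boldsymbol{o}_{l+1}\mid\boldsymbol{o}_l)=\hat{p}_{\boldsymbol{\theta}^\star}(\boldsymbol{o}_{l+1}\mid\boldsymbol{o}_l)\,h_k(\boldsymbol{o}_{l+1})/h_k(\boldsymbol{o}_l)$ is the right way to reduce it to the sibling-level statement.
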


\textit{\textbf{Sketch of Proof}}. The key observation is by the monotonicity of $\exp(\cdot)$ and $\log(\cdot)$, it holds that
\[
   \argmax_{\boldsymbol{o}_l \in S_l^{\text{BoN}}} R_{\mathrm{DPRM}}^{k}(\boldsymbol{o}_l) = \argmax_{\boldsymbol{o}_l \in S_l^{\text{BoN}}}R_{\mathrm{likelihood}}^{k}(\boldsymbol{o}_l),
\]
where \( S_l^{\text{BoN}} = \{\boldsymbol{o}_l^1, \dots, \boldsymbol{o}_l^N\} \) is the set of \textbf{BoN} candidates.

Through the similar techniques in Cor.~\ref{cor:KL_help}, we then show that the broad capability is preserved by $P_{\mathrm{DPRM}}^{k}(\boldsymbol{o}) =P_{\mathrm{Gibbs}}^{k}(\boldsymbol{o})$ as below.

\begin{cor}[Gibbs Distribution Preserves Meta-Capability]\label{cor:DPRM_preserve}
    Under the same settings in Cor.~\ref{cor:KL_help}, for any $\epsilon'$ satisfying ${1}/{N_{o_1}}> \epsilon' \ge \epsilon >0$. Then there exists $\lambda=O(\log({\epsilon'}^{-1})/ML)$, denote $\hat{p}_{\mathrm{IS}}^k(\cdot)$ as any of the inference predictors (i)-(iii) in Cor.~\ref{cor:KL_help} with $N \ge \Omega({\log(\epsilon)}/ {\log\big(\tfrac{M^L-M}{M^L}\big)})$, it holds that
\begin{enumerate}[topsep=0pt,itemsep=2pt,parsep=0pt,partopsep=0pt]
    \item \textbf{Capable of Hard CoTs}: $\mathbb{E}_{\boldsymbol{o} \sim P_{\mathrm{Gibbs}}^{k}(\boldsymbol{o})}
    \Big[R_{(\mathbf{Q},\mathbf{A})}^{k}(\boldsymbol{o})\Big]\ge\epsilon'\ge\epsilon\ge\mathbb{E}_{\boldsymbol{o} \sim \hat{p}_{\mathrm{IS}}^k(\boldsymbol{o})}
    \Big[R_{(\mathbf{Q},\mathbf{A})}^{k}(\boldsymbol{o})\Big] $.
    \item \textbf{Preserve Multi-task}: $\mathbb{E}_{\boldsymbol{o} \sim P_{\mathrm{Gibbs}}^{k}(\boldsymbol{o})}
    \Big[R_{(\mathbf{Q},\mathbf{A})}^{k'}(\boldsymbol{o})\Big]\ge\epsilon'\ge\epsilon\ge\mathbb{E}_{\boldsymbol{o} \sim \hat{p}_{\mathrm{IS}}^k(\boldsymbol{o})}
    \Big[R_{(\mathbf{Q},\mathbf{A})}^{k}(\boldsymbol{o})\Big] $.
\end{enumerate}
The pass@K of Soft-BoN ($o(\frac{1}{N})$ error to gibbs sampling~\citep{verdun2025softbestofnsamplingmodel}) for any task could then be adjusted by temperature $\beta$ given $K$ and $\epsilon'$ \citep{wu2025role}.
\end{cor}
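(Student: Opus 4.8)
The plan is to bolt together two facts: the reward-model failure bound of Thm.~\ref{thm:BoN_PRM_fail}, which supplies both right-hand inequalities, and an elementary pointwise comparison showing that the Gibbs tilt never deflates a base-model trajectory probability by more than a controllable factor, which supplies both left-hand inequalities. Since $P_{\mathrm{DPRM}}^{k}=P_{\mathrm{Gibbs}}^{k}$ by Def.~\ref{def:DPRM} and Soft-BoN tracks $P_{\mathrm{Gibbs}}^{k}$ with $O(1/N)$ error~\citep{verdun2025softbestofnsamplingmodel}, everything proved for $P_{\mathrm{Gibbs}}^{k}$ transfers to DPRM-AS and to Soft-BoN up to an $O(1/N)$ additive slack.

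For the right-hand inequalities I would invoke Thm.~\ref{thm:BoN_PRM_fail} directly on the instance $(\mathbf{Q},\mathbf{A})$ whose only correct CoTs are hard-to-reason and branch off a shared state with some valid easy CoT: with $N\ge\Omega(\log(\epsilon)/\log((M^{L}-M)/M^{L}))$, every inference-scaling predictor (i)--(iii) misses a correct CoT with probability $\ge 1-\epsilon$, hence $\mathbb{E}_{\boldsymbol{o}\sim\hat{p}_{\mathrm{IS}}^{k}}[R_{(\mathbf{Q},\mathbf{A})}^{k}(\boldsymbol{o})]\le\epsilon$. The cross-task statement is the same argument plus Def.~\ref{def:multitask}(i): with probability $\ge 1-\epsilon$ the BoN/BS procedure selects an easy-to-reason CoT of task $k$ (Prop.~\ref{prop:vanilla_RM_fail}), which is invalid — hence not correct — for any instance of $k'\neq k$, so the cross-task expectation is again $\le\epsilon$.

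The core is the left-hand inequalities. The key observation is that $R_{\mathrm{out}}^{k}(\cdot)\in[0,1]$ makes the Gibbs normaliser $Z_{q}=\mathbb{E}_{\boldsymbol{o}\sim\hat{p}_{\boldsymbol{\theta}^{\star}}(\cdot\mid o_{1}=q)}[\exp(\lambda R_{\mathrm{out}}^{k}(\boldsymbol{o}))]$ lie in $[1,e^{\lambda}]$, so $P_{\mathrm{Gibbs}}^{k}(\boldsymbol{o}\mid q)\ge e^{-\lambda}\hat{p}_{\boldsymbol{\theta}^{\star}}(\boldsymbol{o}\mid q)$ for every trajectory; for the GRPO-style step-wise tilt of Cor.~\ref{cor:KL_help} the same holds after composing $L-1$ step factors, each $\ge e^{-\hat r/\beta}$ with $\hat r\le\Theta(M)$ (Lemma~\ref{lem:GRPO_Distribution_Optimization}), giving the factor $e^{-\Theta(ML/\beta)}$ — which is precisely why $\beta=\Omega(ML/\log({\epsilon'}^{-1}))$ and $\lambda=O(\log({\epsilon'}^{-1})/(ML))$ are the matching scalings. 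Taking the constant in $\lambda$ small makes $e^{-\lambda}=({\epsilon'})^{o(1)}$; combined with the TMC floor — every valid CoT, easy or hard, for task $k$ or $k'$, has $\hat{p}_{\boldsymbol{\theta}^{\star}}(\boldsymbol{o}\mid q)$ bounded below by the product of its $\Theta(1/M)$ and feeble edge weights (Def.~\ref{def:TMC}; Thm.~\ref{thm:informal_pretrain} for exact recovery) — and with the hypothesis $\epsilon'<1/N_{o_{1}}$ calibrated to sit below that floor, one gets $P_{\mathrm{Gibbs}}^{k}(\boldsymbol{o}^{\mathrm{hard}}\mid q)\ge\epsilon'$ for any correct hard CoT of $(\mathbf{Q},\mathbf{A})$ and $P_{\mathrm{Gibbs}}^{k}(\boldsymbol{o}'\mid q)\ge\epsilon'$ for any correct CoT $\boldsymbol{o}'$ of a cross-task instance sharing start state $q$ (such $\boldsymbol{o}'$ lies in the TMC by Def.~\ref{def:multitask}(ii)). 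Summing $R_{(\mathbf{Q},\mathbf{A})}^{k}=\mathds{1}(\boldsymbol{o}\in\mathcal{G}_{\mathbf{Q},\mathbf{A}}^{(k)})$ against $P_{\mathrm{Gibbs}}^{k}$ yields $\mathbb{E}_{P_{\mathrm{Gibbs}}^{k}}[R_{(\mathbf{Q},\mathbf{A})}^{k}]\ge\epsilon'$ and, identically, the cross-task bound; then pass@$K\ge 1-(1-\epsilon')^{K}$ on the Soft-BoN side gives the claimed $K$--$\epsilon'$--$\beta$ tradeoff after absorbing the $O(1/N)$ slack.

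Chaining the two steps gives $\mathbb{E}_{P_{\mathrm{Gibbs}}^{k}}[R^{k}]\ge\epsilon'\ge\epsilon\ge\mathbb{E}_{\hat{p}_{\mathrm{IS}}^{k}}[R^{k}]$ and its multi-task analogue. The main obstacle I anticipate is not any single estimate but keeping the three scales mutually consistent: one must verify the window $1/N_{o_{1}}>\epsilon'\ge\epsilon$ is nonempty given the smallest valid-CoT probability in the TMC, that the constant hidden in $\lambda$ (resp. $\beta$) can be taken small (resp. large) enough that the $L-1$-fold reweighting product stays above $\epsilon'/\hat{p}_{\boldsymbol{\theta}^{\star}}(\boldsymbol{o}^{\mathrm{hard}}\mid q)$ while the tilt remains nontrivial, and that the cross-task instance genuinely shares the start state $q$ with task $k$ so that $P_{\mathrm{Gibbs}}^{k}$ assigns it positive mass (otherwise the $R^{k'}$ comparison must be read conditionally on $q$, as implicitly in Cor.~\ref{cor:KL_help}). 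A secondary point is confirming that the feeble-edge lower bound of Def.~\ref{def:TMC} keeps $\hat{p}_{\boldsymbol{\theta}^{\star}}(\boldsymbol{o}^{\mathrm{hard}}\mid q)$ strictly above the chosen $\epsilon'$; once that floor is fixed, the monotone-reweighting estimate closes everything.
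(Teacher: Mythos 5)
Your plan is correct and structurally the same as the paper's. The right-hand inequalities are, as you say, a direct application of Thm.~\ref{thm:BoN_PRM_fail} (together with Prop.~\ref{prop:vanilla_RM_fail} and Def.~\ref{def:multitask}(i) for the cross-task case), and the left-hand inequalities hinge on showing that the Gibbs tilt never deflates a valid base-model trajectory below a controllable floor. The paper's proof (which simply says ``follows the proof of Cor.~\ref{appcor:KL_help}'') runs the floor argument step-wise: via the DPRM product form of Eq.(\ref{eq:DPRM_design}) and the advantage bound $A\in[-1,1]$, each tilted transition loses at most a factor $e^{-2\hat{r}/\beta}$, which after $L-1$ steps gives the same $e^{-\Theta(ML/\beta)}$ dilution you derive; it then compares against $C=\prod_l |D_{o_l}|^{-1}$ to pick $\beta$. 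Your whole-trajectory version ($Z_q\le e^\lambda$ from $R_{\mathrm{out}}^k\in[0,1]$, hence $P^k_{\mathrm{Gibbs}}\ge e^{-\lambda}\hat p_{\boldsymbol{\theta}^\star}$) is cleaner and skips the harmonic-function decomposition entirely, but it is the same estimate in disguise after $\lambda\leftrightarrow\Theta(M)/\beta$. The caveat you flag at the end — that the feeble-edge floor of Def.~\ref{def:TMC} must be verified to sit above the chosen $\epsilon'$, so that the window $1/N_{o_1}>\epsilon'\ge\epsilon$ is nonempty — is a real and worthwhile observation: the paper's step-wise proof tacitly substitutes $\hat p_{\boldsymbol{\theta}^\star}(\boldsymbol{o}_{l+1}\mid\boldsymbol{o}_l)\ge 1/|D_{o_l}|$ in its lower bound, which as written does not hold on sparse edges and is exactly the point your framing makes explicit; your version keeps the honest floor $\hat p_{\boldsymbol{\theta}^\star}(\boldsymbol{o}\mid q)$ in view and correctly names it as the quantity that must be checked against $\epsilon'$.
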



\begin{table}[ht]
  \caption{Task definition and CoTs characteristics in our Multi-Task TMC simulation.}
  \label{tab:task-definition}
  \centering
  \resizebox{0.7\linewidth}{!}{\begin{tabular}{cclccc}
    \toprule
    \textbf{Task} & \textbf{Path Index} & \textbf{State Transition} & \textbf{Type} & \textbf{Probability} & \textbf{Expected Correctness over $\mathcal{D}_a^{q,k}$} \\
    \midrule
    \multirow{4}{*}{TASK1} & 0 & $S_1[0] \rightarrow S_2[0] \rightarrow S_3[0] \rightarrow S_4[0]$ & \textbf{EASY}-To-REASON & 0.413223 & 0.727995 \\
    & 1 & $S_1[0] \rightarrow S_2[0] \rightarrow S_3[1] \rightarrow S_4[0]$ & HARD-To-REASON & 0.075131 & 0.132363 \\
    & 2 & $S_1[0] \rightarrow S_2[1] \rightarrow S_3[0] \rightarrow S_4[0]$ & HARD-To-REASON & 0.004132 & 0.007280 \\
    & 3 & $S_1[0] \rightarrow S_2[1] \rightarrow S_3[1] \rightarrow S_4[0]$ & HARD-To-REASON & 0.075131 & 0.132363 \\
    \midrule
    \multirow{4}{*}{TASK2} & 0 & $S_1[1] \rightarrow S_2[0] \rightarrow S_3[0] \rightarrow S_4[1]$ & \textbf{EASY}-To-REASON & 0.413223 & 0.955691 \\
    & 1 & $S_1[1] \rightarrow S_2[0] \rightarrow S_3[1] \rightarrow S_4[1]$ & HARD-To-REASON & 0.007513 & 0.017376 \\
    & 2 & $S_1[1] \rightarrow S_2[1] \rightarrow S_3[0] \rightarrow S_4[1]$ & HARD-To-REASON & 0.004132 & 0.009557 \\
    & 3 & $S_1[1] \rightarrow S_2[1] \rightarrow S_3[1] \rightarrow S_4[1]$ & HARD-To-REASON & 0.007513 & 0.017376 \\
    \bottomrule
  \end{tabular}}
\end{table}
\begin{table}[ht]
  \caption{CoT generation statistics for different strategies on TASK1 and TASK2. Values represent percentages of valid easy CoTs, valid hard CoTs, and invalid CoTs generated by each method.}
  \label{tab:cot-statistics}
  \centering
  \resizebox{0.9\linewidth}{!}{%
  \begin{tabular}{lcccccc}
    \toprule
    \textbf{Strategy} & \textbf{TASK1 Valid} & \textbf{TASK1 Valid} & \textbf{TASK1 Invalid} & \textbf{TASK2 Valid} & \textbf{TASK2 Valid} & \textbf{TASK2 Invalid} \\
    & \textbf{Easy CoTs (\%)} & \textbf{Hard CoTs (\%)} & \textbf{CoTs (\%)} & \textbf{Easy CoTs (\%)} & \textbf{Hard CoTs (\%)} & \textbf{CoTs (\%)} \\
    \midrule
    \textbf{Base Model} & 21.67\% & 8.07\% & 70.27\% & 20.03\% & 1.10\% & 78.87\% \\
    \midrule
    \multicolumn{7}{l}{\textbf{Finetuned Methods}} \\
    REINFORCE & \textbf{94.33\%} & 3.43\% & 2.23\% & \textbf{1.52\%} & 0.87\% & 97.62\% \\
    RAFT & \textbf{95.22\%} & 2.33\% & 2.45\% & \textbf{2.30\%} & 0.92\% & 96.78\% \\
    PPO (Eq.11) & \textbf{91.82\%} & 5.40\% & 2.78\% & \textbf{2.23\%} & 1.03\% & 96.73\% \\
    \texttt{RL-rej} (Sec.3.1) & 49.62\% & \textbf{17.42\%} & 32.97\% & 30.63\% & 2.27\% & 67.10\% \\
    GRPO-KL (Eq.13) & 46.47\% & \textbf{16.27\%} & 37.27\% & 54.18\% & 1.68\% & 44.13\% \\
    \midrule
    \multicolumn{7}{l}{\textbf{Inference Scaling Methods}} \\
    Soft-BoN & 8.98\% & \textbf{19.30\%} & 71.72\% & 7.00\% & 17.27\% & 75.73\% \\
    ORM-BoN w. $R_{\mathrm{out}}^{k}(\cdot)$ & 21.00\% & 7.30\% & 71.70\% & 20.23\% & 0.97\% & 78.80\% \\
    PRM-BoN w. $R_{\mathrm{likelihood}}^{k}(\cdot)$ & 99.13\% & 0.87\% & 0.00\% & 13.42\% & 36.77\% & 49.82\% \\
    DPRM-BoN & 99.52\% & 0.48\% & 0.00\% & 13.40\% & 37.02\% & 49.58\% \\
    DPRM-AS (by step-wise Soft-BoN) & 17.23\% & \textbf{36.10\%} & 46.67\% & 12.02\% & 38.02\% & 49.97\% \\
    \bottomrule
  \end{tabular}
  }
\end{table}
\section{Empirical Simulations}\label{sec:empirical_simulations}

To validate our theoretical findings, we run simulations on an abstract Tree-structured Markov Chain (TMC) with two tasks (\textbf{TASK 1} is the target), as shown in Tab.~\ref{tab:task-definition}. The TMC has $L=4$ layers with two nodes each ($|S_l|=2$). In layers 1–3, each state has one high-probability outgoing edge. Pretraining runs for $T_1=2000$ and $T_2=500$ steps (error $<0.001$); fine-tuning for $T=1000$ steps with learning rate $0.05$. Estimation of the Rewards/advantages use $1000/200$ Monte Carlo samples; temperature $\lambda=0.5$; BoN uses $N=15$. Training and testing each use 200 question instances sampled per Def.~\ref{def:multitask}; BoN and Gibbs-style methods are fully enumerated.  
\textbf{TASK 1} requires reaching $S_4[0]$ from $S_1[0]$; \textbf{TASK 2} requires $S_4[1]$ from $S_1[1]$. A CoT is valid here if it connects the start and end states; among valid paths, only the one via $S_2[0]\to S_3[0]$ is \emph{easy}, all others are \emph{hard}. We report the proportions of easy, hard, and invalid CoTs from $S_1[0]$ (TASK 1) and $S_1[1]$ (TASK 2), as well as the expected correctness over the population per Def.~\ref{def:multitask}.  

\textbf{Findings in Tab.~\ref{tab:cot-statistics}.} REINFORCE, RAFT, and PPO heavily favor easy-to-reason CoTs in TASK 1, suppressing hard-to-reason CoTs in TASK 1 and valid CoTs in TASK 2, showing clear \emph{simplicity bias} and \emph{forgetting}. In contrast, diversity-promoting methods (\texttt{RL-rej}, GRPO-KL, Soft-BoN, DPRM-AS) balance easy/hard-to-reason CoTs in TASK 1 and preserve TASK 2 CoT's generation capability, thanks to shared sparse edges in the TMC (two nodes per layer, see Table~\ref{tab:task-definition}). ORM/PRM-BoN, relying on population rewards $R_{\mathrm{out}}^{k}(\cdot)$, also overfavor easy-to-reason CoTs; PRM-BoN and DPRM-BoN behave similarly, as predicted. Further details are available in App.~\ref{app:add_exp}.

\section{Conclusion, Limitations, and Future Work}

We introduced a Tree-structured Markov framework to model foundation model's diverse multi-task reasoning patterns, and theoretically validates \textbf{Phenomenon 1-3}: both RLVR and inference-scaling exhibit a \emph{simplicity bias}, favoring easier, common reasoning paths (consistency) rather than true correctness. Building on this, we demonstrated the benefit of various exploration strategies—mitigating this bias and preserving rare but crucial CoTs. Our analysis further highlights a sharp contrast with traditional RL (e.g., AlphaGo~\citep{silver2016mastering}): whereas RL advantages promote effective state-space exploration in standard RL, in post-training they instead push models to overemphasize easy (high-pass-rate) paths within the model’s scope~\citep{yue2025doesreinforcementlearningreally}. This negative insight may also explain why \citet{setlur2025rewarding} employ independent models that reinterpret RL advantage differently for finetuning and PRM scoring. Our current TMC framework is deliberately abstract and restrictive (see App.~\ref{app:limitation} and~\ref{app:discussion_broader_rl}), and could be generalized to more realistic models. Another promising direction is to apply TMC analysis to reflective behavior and \emph{aha} moments~\citep{yu2025dapo}.
\vspace{-.3\baselineskip}

\section{Acknowledgment}

DB and HW are supported in part by the Research Grants Council of the Hong Kong Special Administrative Region (Project No. CityU 11206622). WH is supported by JSPS KAKENHI (24K20848) and JST BOOST (JPMJBY24G6). TS was partially supported by JSPS KAKENHI (24K02905) and JST CREST (PMJCR2015). This research is supported by the National Research Foundation, Singapore and the Ministry of Digital Development and Information under the AI Visiting Professorship Programme (award number AIVP-2024-004). Any opinions, findings and conclusions or recommendations expressed in this material are those of the author(s) and do not reflect the views of National Research Foundation, Singapore and the Ministry of Digital Development and Information.

\bibliography{ref}
\bibliographystyle{icml2026_fogen}

\newpage
\appendix



\section{Additional Related Work}
\label{app:related_work}

\textbf{LLMs as Markov Processes.}  
A growing body of work has drawn connections between large language models (LLMs) and Markovian dynamics. \citet{Zekri24} established a theoretical equivalence between next-token prediction in LLMs and finite-state Markov chains, deriving scaling laws for in-context learning when prompted with such chains.  
\citet{Nichani24} demonstrated that disentangled transformers are capable of learning Markov chains in context. \citet{Ildiz24} studied how a single self-attention layer can simulate context-conditioned Markov chains, while \citet{dingglobal} showed that multi-layer transformers can approximate preconditioned gradient descent over Markovian distributions.  
\citet{Edelman24} analyzed the distinct phases of training as transformers learn Markov chains, and \citet{Makkuva24} investigated the function landscape of single-layer transformers on Markovian data, revealing challenges in learning higher-order chains.  
\citet{rajaraman2024analysis} proved that constant-depth transformers can learn \(k\)-order Markov processes when the next-token distribution depends on the previous \(k\) tokens.  
Furthermore, \citet{cao2025transformerssimulatemlesequence} showed that transformers can simulate the maximum likelihood estimation (MLE) algorithm for learning Bayesian networks, which subsume Markov chains as a special case. Despite these advances, most prior works focus on modeling sequential variable dependencies, without abstracting the structure to chain-of-thought (CoT) reasoning. The most relevant exception is the recent work of \citet{kim2025metastabledynamicschainofthoughtreasoning}, which investigates CoT processes under metastable Markov chain assumptions. They show the necessity of search, RL-based finetuning, and distillation to navigate sparse transition spaces, also under a softmax modeling assumption. Their proposed algorithm is tailored specifically for such metastable settings. Instead, motivated by real-world multi-task, tree-structured reasoning tasks with binary ($0$-$1$) rewards, our work aims to theoretically compare the intrinsic biases of RL-based finetuning and inference sampling, and to connect these with recent discussions on the squeezing effect, the benefits of reasoning diversity, and the inherent limitations of RL-based fine-tuning.

\textbf{Spectral Bias.} The study of spectral bias in deep learning is extensive, with many works showing that neural networks tend to learn low-frequency or simple patterns with high signal-to-noise ratio first \citep{arpit2017closer, valle2018deep, kalimeris2019sgd, chen2023sudden, abbe2023sgd, molina2024understandingdynamicsfrequencybias}. \citet{Edelman24} demonstrated that this simplicity bias during training can delay convergence to the correct solution in Markov chain learning. \citet{chen2023layer} observed that shallow layers in neural networks prioritize fitting lower-order functions, while \citet{allenzhu2023backwardfeaturecorrectiondeep} showed that this tendency in shallow networks can lead to drastically increased sample complexity due to their bias toward low-order polynomials. \citet{tian2024composingglobaloptimizersreasoning} examined simplicity bias from the perspective of algebraic structure learning. Other works have highlighted potential downsides: \citet{shah2020pitfalls, yang2024identifying} showed that such biases can be detrimental, causing models to overlook important features or be misled by spurious correlations. Recent work \citet{ren2025learningdynamicsllmfinetuning} identified the \textit{squeezing effect} of Direct Preference Optimization: probability mass becomes increasingly concentrated on the outut that was most confident prior to the update. A following-up work \citet{deng2025effectnegativegradientgroup} identified similar phenomenon of GRPO. Separately, \citet{li2025preserving} analyzes the nature of the cross-entropy loss, showing that it systematically shifts probability mass from non-target tokens to target tokens—regardless of the quality of the non-target options—ultimately leading to distribution collapse during finetuning. However, prior work has not systematically characterized how this squeezing effect influences fine-tuning dynamics. In our study, under the Tree-structured Markov Chain (TMC) framework and a linear softmax model, we show that binary outcome rewards can potentially amplify this effect, favoring simple reasoning paths during fine-tuning and contributing to the model’s inductive bias.

\textbf{Distribution Sharpening, Entropy Structure, and Diversity Collapse.}
A rapidly growing line of work studies whether RLVR expands reasoning capabilities or mainly sharpens the base model's existing reasoning distribution. 
Empirical studies suggest that RLVR often improves finite-sample precision while shrinking empirical support, thereby missing underrepresented correct answers~\citep{wu2025invisibleleashrlvrescape}. 
In formal theorem proving, \citet{he2025rewardingunlikelyliftinggrpo} identify a rank bias in GRPO: high-probability correct trajectories are preferentially reinforced whereas rare correct trajectories are neglected. 
Concurrent works further formalize diversity collapse through selection/reinforcement bias~\citep{gai2025differentialsmoothing} or finite-batch sampling bias and semantic coupling~\citep{fan2026sharpeningcollapse}. 
These works motivate algorithmic corrections such as unlikeliness rewards, differential smoothing, inverse-success advantage calibration, and diversity-aware sampling. 
Complementarily, a token-level entropy perspective shows that only a minority of high-entropy ``forking'' tokens drive much of RLVR's reasoning improvement~\citep{wang20258020rulehighentropyminority}, while entropy-control methods such as Entrocraft explicitly shape the entropy curve to mitigate long-run performance saturation~\citep{li2026entropycurvecontrol}. 
Recent test-time and interpretability studies also reveal structured uncertainty in reasoning traces: \citet{zhao2026entropycentroids} use high-entropy segment centroids as intrinsic rewards for response selection, whereas \citet{zhao2026shorthand} distinguish low-entropy structural tokens from higher-entropy problem-specific tokens for reasoning compression and diagnosis. 
Our work differs from these algorithmic and empirical studies by giving a multi-task TMC account of why post-training reweights existing reasoning paths toward high-probability/easy CoTs and can suppress rare-but-valid hard CoTs.

\textbf{Curricula, Process Rewards, and Inference-Time Scaling.}
Several works study how data selection, process supervision, and inference-time search can preserve useful exploration. 
Difficulty-aware filtering methods select intermediate-difficulty prompts because all-correct or all-incorrect groups provide weak learning signal~\citep{bae2025OnlineDifficultyFiltering}; this is aligned with our theoretical justification for rejecting overly easy instances. 
Outcome-based RL can provably induce CoT-style graph traversal under suitable data distributions, but its learnability relies on sufficient mass on simple examples~\citep{bu2025provable}, highlighting the role of implicit curricula. 
For process supervision, \citet{setlur2025rewarding} define process rewards as progress under a prover policy, while \citet{yao2026prl} derive process rewards by decomposing an entropy-regularized RL objective, turning sparse outcome rewards into step-level guidance. 
At inference time, PRM-guided parallel generation has been analyzed through particle filtering and SMC~\citep{golowich2026rejectresamplerepeat}, and martingale-based foresight decoding provides another principled route to step valuation and pruning~\citep{li2026martingaleforesightsampling}. 
These methods are complementary to our DPRM perspective, which interprets PRM/BoN-style inference as a reweighting of base-model trajectories and studies when such reweighting overemphasizes common/easy CoTs. \citet{bu2026dprmplugindoobh} extend our PRM idea to Diffusion Language Models.

\textbf{Process Reward Models (PRMs) \& Reinforcement Learning with Verifiable Rewards (RLVR).} Process Reward Models (PRMs) and Reinforcement Learning with Verifiable Rewards (RLVR) both employ external verifiers to reward reasoning steps, with PRMs guiding inference \cite{lightman2023letsverifystepstep, li-etal-2023-making, snell2024scalingllmtesttimecompute} and RLVR enhancing finetuning \cite{wang2025reinforcementlearningreasoninglarge, foster2025goodfoundationnecessaryefficient}. \citet{setlur2025scalingtesttimecomputeverification} show that verifier-based scaling outperforms verifier-free approaches when the reward distributions have anti-concentration and heterogeneity properties. \cite{foster2025goodfoundationnecessaryefficient} also analyzed on linear softmax model, for which they designed an algorithm that is computationally efficient, and showed the necessity of coverage within their framework. \citet{yue2025doesreinforcementlearningreally} find RLVR's gains limited to small $k$, with base models matching or surpassing it at large $k$, suggesting RLVR reinforces existing reasoning rather than fostering new patterns—echoing our finding that RL finetuning overfits to simpler paths due to the squeezing effect. \citet{schmied2025llmsgreedyagentseffects} highlight RLVR's "greediness", favoring easy actions akin to our findings, while \citet{yu2025dapo}'s DAPO and \citet{xiong2025minimalistapproachllmreasoning}'s minimalist approaches counter this by rejecting overly-correct samples, promoting diverse reasoning and keeping steady entropy, whose merits are also theoretically justified in our settings. \cite{wang2025reinforcementlearningreasoninglarge} also empirically showed the critical role of promoting exploration with diverse reasoning patterns. \citet{setlur2025rewarding} propose a separate prover policy to enhance exploration, noting the base model's advantage calculation limits diversity—supporting our observation of RLVR's bias toward simpler paths. \citet{li2025preserving} add that cross-entropy finetuning reduces sampling diversity, reinforcing the need for varied inference strategies. These findings collectively underscore the value of diverse reasoning, motivating our comparison of RL and PRM under binary outcome rewards.

\section{Limitations and Broader Impact}
\label{app:limitation}

\begin{table}[ht]
  \caption{Theoretical comparison between RLVR and inference-scaling under our TMC setting. The first column indicates pass@K performance. The second and third columns assess whether a method assigns highest credit to easy-to-reason CoTs and whether it can also sample hard-to-reason CoTs with suitable temperature. The fourth column evaluates whether the method preserves the base model’s multi-task capability. The results suggest that post-training methods tend to favor easy-to-reason CoTs, and that only methods capable of sampling hard-to-reason CoTs can achieve satisfactory pass@K.
}
  \label{tab:method-comparison-updated}
  \centering
  \resizebox{0.7\linewidth}{!}{%
  \begin{tabular}{lcccc}
    \toprule
    {\textbf{Methods}} & \textbf{Succeed w. pass@K} & \textbf{Prefer Easy CoT} & \textbf{Capable of Hard CoT} & \textbf{Preserve Multi-task}  \\
    \midrule
    REINFORCE  (Eq.(\ref{eq:REINFORCE-obj}) / RAFT (Eq.(\ref{eq:RAFT-obj}) & \xmark & \cmark & \xmark & \xmark \\
    PPO (Eq.(\ref{eq:PPO-obj}) & \xmark & \cmark & \xmark & \xmark  \\
    \texttt{RL-rej} (Sec \ref{sec:bias_rl}) & \cmark & \cmark & \cmark & \xmark  \\
    KL-regularized PO (Eq.(\ref{eq:GRPO-obj}) & \cmark & \cmark & \cmark & \cmark  \\
    \midrule
    ORM/PRM-BoN/BS (Sec \ref{sec:PRM}) & \xmark & \cmark & \xmark & \xmark  \\
    Soft-BoN/DPRM-AS (Sec \ref{sec:PRM}) & \cmark & \cmark & \cmark & \cmark \\
    \bottomrule
  \end{tabular}
  }
\end{table}

The central clue of the distributional bias lies in the expectation (population)-based reward estimators, namely $R_{\mathrm{out}}^{k}(\boldsymbol{o})=\mathbb{E}[R^{k}(\boldsymbol{o})]$ and $R_{\mathrm{likelihood}}^{k}(\boldsymbol{o}_l)=V^{\hat{p}_{\boldsymbol{\theta}^\star}}(\boldsymbol{o}_l)=\mathbb{E}[R^{k}(\boldsymbol{o}) \mid \boldsymbol{o}_{l}]$. While these estimators are Bayes-optimal in the $L^2$ sense, they inherently favor frequent patterns, thereby down-weighting rare-but-valuable CoTs. This bias highlights the necessity of more reliable reward designs, as also discovered by~\citet{Xu2025RMsConsistencyNotCausality}.

\textbf{Unmodeled Complexity in Large-Scale}. While our theoretical analysis introduces new perspectives on finetuning and inference-scaling under binary ($0$–$1$) outcome supervision, several limitations remain. First, the latent reasoning model and neural formulation may require further refinement to better align with practical scenarios, including: handling \textit{varying reasoning depths}; incorporating \textit{structural priors} (e.g., multi-index models); modeling with nonlinear transformers instead of a linear softmax model (per discussed in App.~\ref{app:discussion_broader_rl}); and analyzing parameter-efficient tuning methods like LoRA~\cite{hu2021loralowrankadaptationlarge}.

\textbf{Reward Hacking, and the Benefit of Consistency}. Even within the TMC framework, our formulation does not fully capture challenges such as robustness to noisy rewards, hallucinations, or reward hacking. For example, in Fig.~\ref{fig:illustration of Multitask TMC}, the trajectories $q \to o_2^1 \to a_3$ (valid for Task 4) and $q \to o_2^2 \to a_3$ (valid for Task 5) share the same endpoints but are invalid for each other’s task, illustrating a form of reward misalignment or \textit{hallucination}. This warrants deeper investigation. A concurrent study by \citet{wen2025reinforcementlearningverifiablerewards} raised a concern: rather than rewarding rare reasoning paths, they classified them as incorrect CoTs and treated common paths as \textit{logically coherent}-which they assumed correct. They further advocated for stronger verifiers and new RLVR algorithms explicitly designed to incentivize correct reasoning paths—a perspective we share. In our Multi-task TMC (Def.~\ref{def:multitask}), our ``validity'' notion is to distinguish in-correct rare paths for a task with those correct ones. We left a more detailed discussions of the pros and cons of the simplicity bias an important future direction.

\textbf{Entropy may fail to decrease in practice—particularly when the training dynamics become unstable}. In the context of RLVR finetuning using only outcome rewards (without SFT supervision):

\begin{itemize}
    \item If the base model’s capability is too weak for the target task (i.e., pass rate is too low), the gradient variance can become excessively large, leading to chaotic updates and potential entropy increase \citep{li2025hallucination}.
    \item Likewise, if the reward oracle is noisy \citep{cai2025reinforcement} or unable to verify intermediate reasoning steps for difficult problems (e.g., reward hacking as discussed in  the previous paragraph), the supervision signals become inconsistent, again possibly causing entropy to increase.
\end{itemize}

These situations lie  outside our theoretical assumptions since our framework requires the low-probability transition edge to remain above a constant ($c$), and does not model oracle noise. As suggested by  \citet{li2025hallucination} and \citet{wen2025reinforcementlearningverifiablerewards}, incorporating teacher-forced SFT to improve the base model’s competence, and enhancing  reward oracle fidelity —for example, by verifying intermediate steps using tools such as Lean4 in theorem-proving—can stabilize such finetuning processes and mitigate this phenomenon.

\textbf{Faster-vs-Better Trade-Off}. Moreover, although our results highlight the value of diversity—particularly when a non-negligible fraction of instances require hard-to-reason CoTs—our analysis does not quantify the additional computational cost such diversity induces. This reflects an inherent tradeoff: overfitting to simpler reasoning paths enables faster finetuning when the target is improving overall accuracy within certain iterations, while supporting diverse reasoning incurs greater complexity—a “no free lunch” scenario.

\textbf{Non‑Markovianity of LLM Reasoning}. Markov‑chain (MC) abstractions—where transition probabilities encode step difficulty—are well‑established in prior theory~\citep{xu2019can, sanford2024understanding, abbe2024far, besta2024graph,kim2025metastabledynamicschainofthoughtreasoning}. In particular, \citet{kim2025metastabledynamicschainofthoughtreasoning} model LLM inference as a metastable MC and design algorithms showing benefits of search and distillation. Building on empirical evidence of tree‑alike reasoning~\citep{lightman2023letsverifystepstep, snell2024scalingllmtesttimecompute,yue2025doesreinforcementlearningreally,ai2025rethinkingreflectionpretraining,gandhi2025cognitivebehaviorsenableselfimproving}, and observed real‑world hardness metrics (base‑model \textit{pass rates}~\citet{tong2024dartmathdifficultyawarerejectiontuning}), our Multi‑task TMC is arguably more aligned with practice than prior work. We acknowledge that MC models cannot perfectly capture actual LLM inference, per \citet{zhang2025markovianreflectiveexplorationbayesadaptive} on LLM non‑Markovianity. Nonetheless, this does not diminish the value of MC‑based theories: conclusions remain informative and can often be generalized to non‑Markovian settings with suitable extensions.

While our findings are theoretical, they provide high-level justification for recent empirical efforts that promote reasoning diversity and reject overly easy instances, offering useful insights for future work on RL fine-tuning, PRM design, and inference strategies in LLMs. We do not anticipate any direct societal risks arising from this research.

\section{Additional Experiments}\label{app:add_exp}
\subsection{Comprehensive Performance and Coverage Analysis}

Building upon the empirical simulations presented in Section~\ref{sec:empirical_simulations}, we provide additional experimental results that further validate our theoretical findings. The following analysis examines both performance metrics (Pass@K rates) and coverage characteristics (valid CoT generation patterns) across different sampling strategies for both TASK1 and TASK2.

\subsubsection{Performance Analysis}

Figure~\ref{fig:task1_performance} and Figure~\ref{fig:task2_performance} present the Pass@30 performance for TASK1 and TASK2, respectively, across all evaluated sampling strategies. The results demonstrate several key patterns that align with our theoretical predictions:

\textbf{TASK1 Performance:} The performance across different strategies shows relatively consistent results, with Pass@30 rates ranging from 0.65 to 0.73. Notably, DPRM achieves the highest performance (0.73), followed closely by Reinforce-rej (e.g. \texttt{RL-rej}) and GRPO-KL (both at 0.72). The base model performs moderately well (0.71), while PRM-BoN shows the lowest performance (0.65). This suggests that while most strategies can achieve reasonable performance on the primary task, there are meaningful differences in their effectiveness.

\textbf{TASK2 Performance:} The results reveal a stark contrast, with performance ranging from 0.35 to 0.95. The base model and diversity-promoting methods (Reinforce-rej (e.g. \texttt{RL-rej}), GRPO-KL) achieve the highest performance (0.95), demonstrating their ability to maintain capability on secondary tasks. In contrast, standard RL fine-tuning methods (REINFORCE, RAFT, PPO) show significantly degraded performance (0.35-0.48), confirming the \emph{forgetting} phenomenon predicted by our theoretical analysis.

\begin{figure}[ht]
\centering
\includegraphics[width=0.8\textwidth]{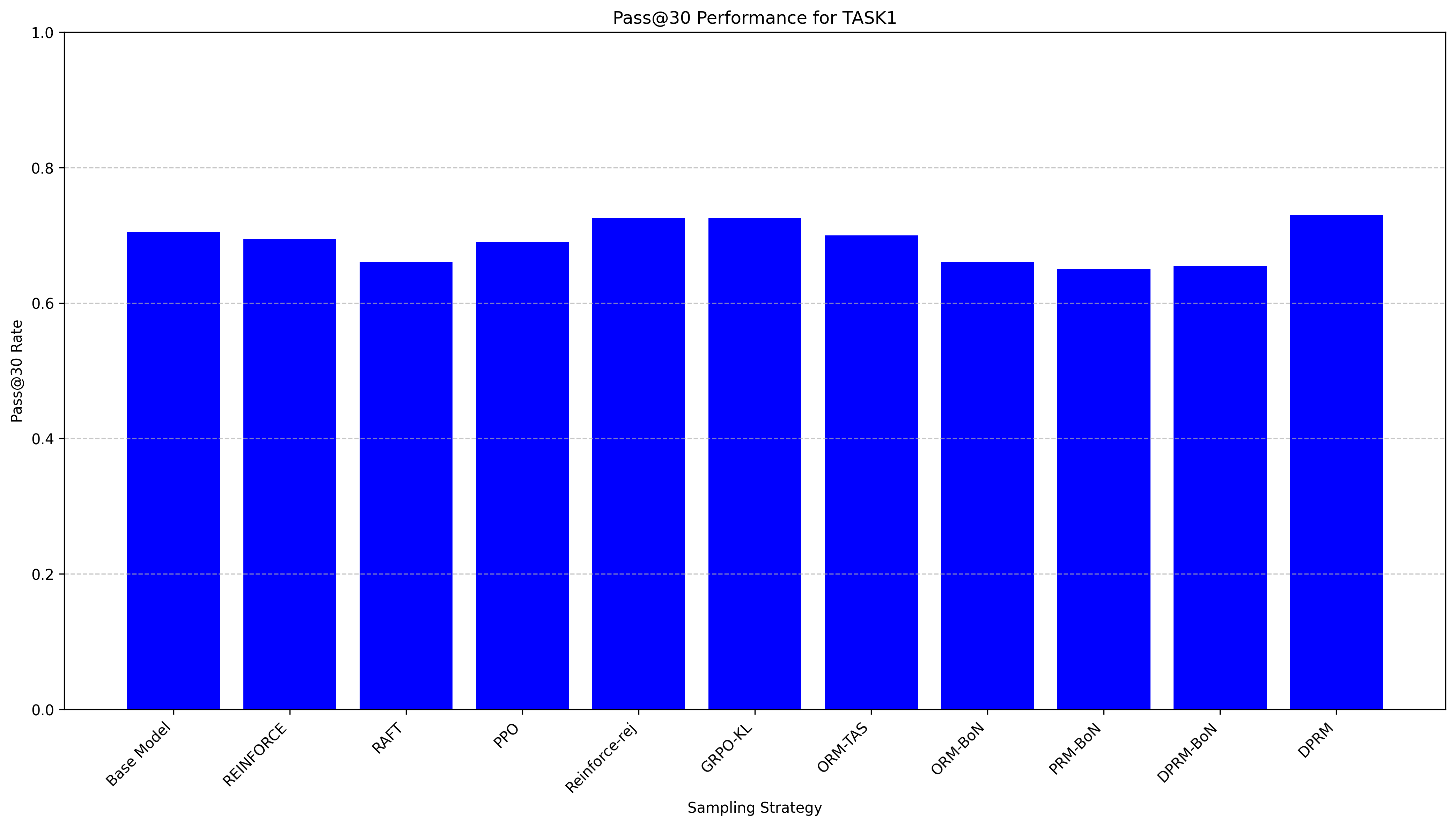}
\caption{Pass@30 Performance for TASK1 across different sampling strategies. The results show relatively consistent performance across most methods, with DPRM achieving the highest rate of 0.73.}
\label{fig:task1_performance}
\end{figure}

\begin{figure}[ht]
\centering
\includegraphics[width=0.8\textwidth]{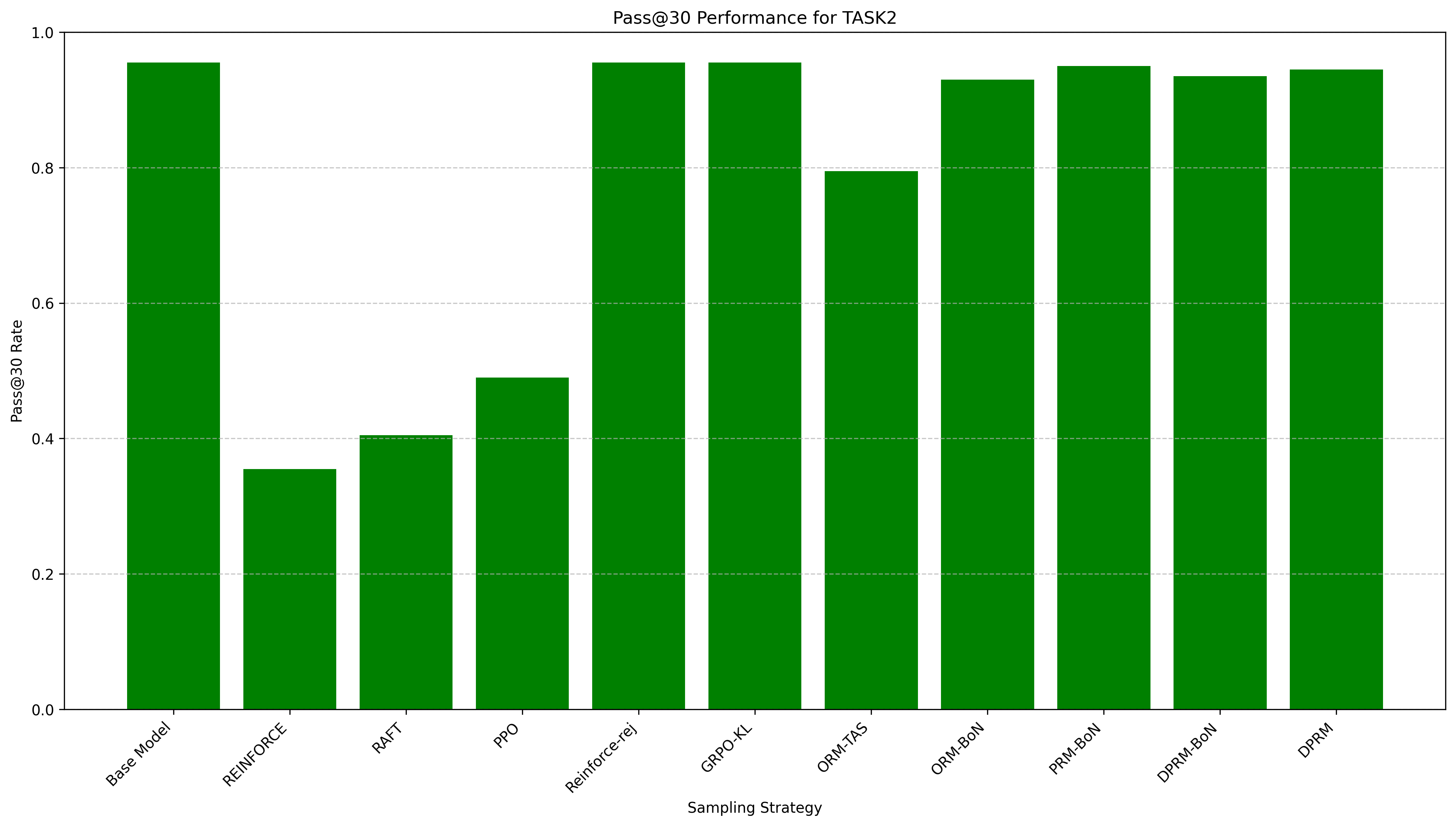}
\caption{Pass@30 Performance for TASK2 across different sampling strategies. The results demonstrate significant performance degradation for standard RL methods (REINFORCE, RAFT, PPO) compared to diversity-promoting approaches, confirming the forgetting phenomenon.}
\label{fig:task2_performance}
\end{figure}

\subsubsection{Coverage Analysis}

The coverage analysis, presented in Figure~\ref{fig:task1_coverage} and Figure~\ref{fig:task2_coverage}, provides insights into the types of CoTs generated by each strategy. These stacked bar charts show the proportion of invalid, hard valid, and easy valid CoTs generated by each method.

\textbf{TASK1 Coverage:} The results reveal distinct patterns across different strategy categories. Standard RL fine-tuning methods (REINFORCE, RAFT, PPO) and PRM-based methods (PRM-BoN, DPRM-BoN) generate predominantly easy valid CoTs (90-98\%) with minimal invalid CoTs, demonstrating strong \emph{simplicity bias}. In contrast, diversity-promoting methods (Reinforce-rej (e.g. \texttt{RL-rej}), GRPO-KL) show a more balanced distribution, with substantial proportions of both easy and hard valid CoTs. The base model and ORM-based methods generate a high proportion of invalid CoTs (70-72\%), indicating limited effectiveness in generating task-appropriate reasoning paths.

\textbf{TASK2 Coverage:} The coverage patterns for TASK2 are markedly different, reflecting the task's increased difficulty. Most strategies generate a high proportion of invalid CoTs, with standard RL methods showing particularly poor performance (97-98\% invalid). However, diversity-promoting methods (GRPO-KL, PRM-BoN, DPRM-BoN, DPRM) achieve significantly better coverage, with 45-55\% valid CoTs. This demonstrates the importance of diversity-promoting mechanisms for maintaining capability across multiple tasks.

\begin{figure}[ht]
\centering
\includegraphics[width=0.9\textwidth]{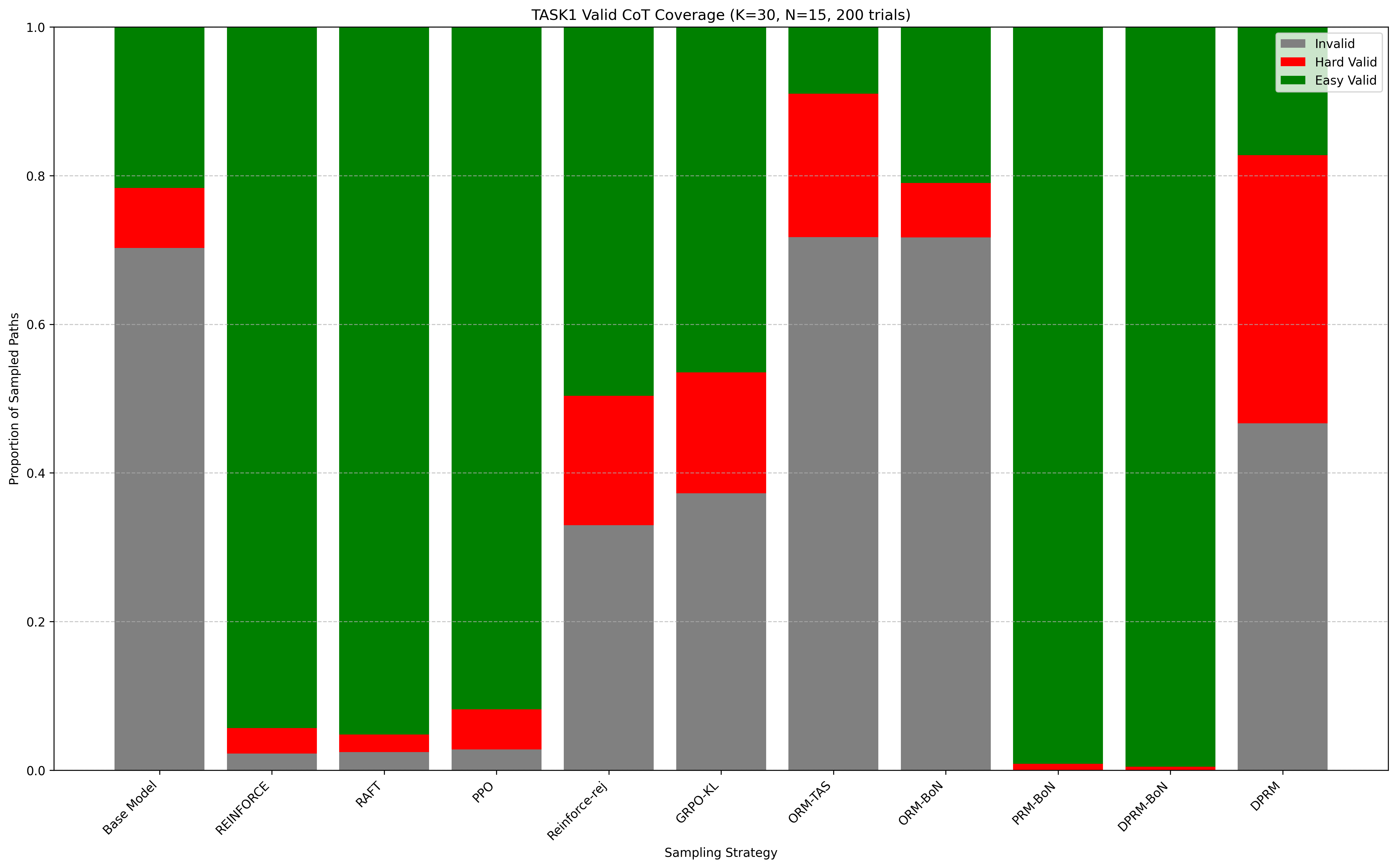}
\caption{Valid CoT Coverage for TASK1 (K=30, N=15, 200 trials). The stacked bars show the proportion of invalid (gray), hard valid (red), and easy valid (green) CoTs generated by each strategy. Standard RL methods show strong simplicity bias with predominantly easy valid CoTs.}
\label{fig:task1_coverage}
\end{figure}

\begin{figure}[ht]
\centering
\includegraphics[width=0.9\textwidth]{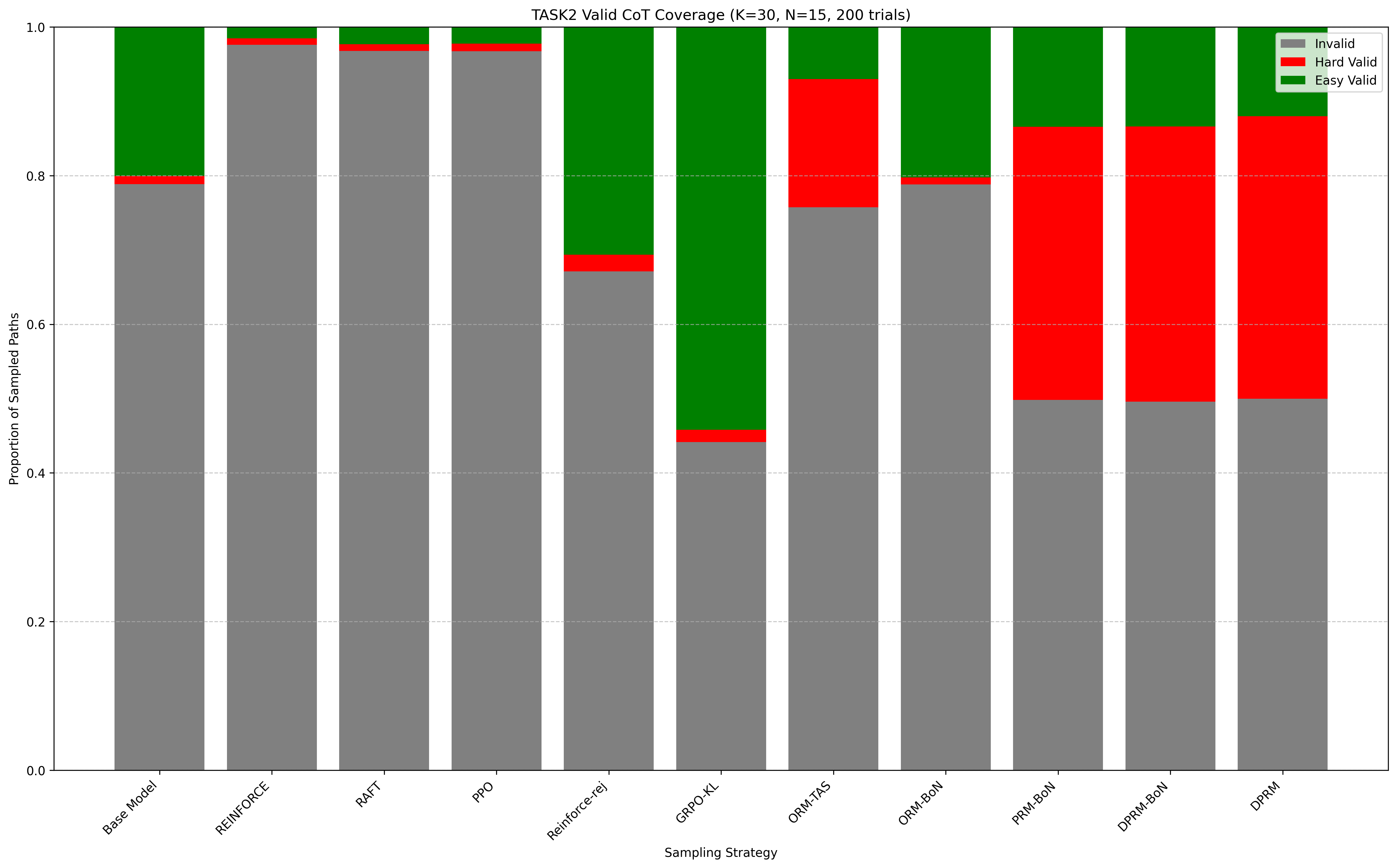}
\caption{Valid CoT Coverage for TASK2 (K=30, N=15, 200 trials). The stacked bars show the proportion of invalid (gray), hard valid (red), and easy valid (green) CoTs generated by each strategy. Diversity-promoting methods achieve significantly better coverage compared to standard RL approaches.}
\label{fig:task2_coverage}
\end{figure}

\subsubsection{Key Insights and Implications}

These comprehensive results provide several important insights that extend our theoretical analysis:

\textbf{Simplicity Bias Confirmation:} The coverage analysis clearly demonstrates the \emph{simplicity bias} in standard RL fine-tuning methods, which overwhelmingly favor easy-to-reason CoTs while suppressing hard-to-reason alternatives. This bias is particularly pronounced in TASK1, where REINFORCE, RAFT, and PPO generate 90-95\% easy valid CoTs.

\textbf{Forgetting Phenomenon:} The dramatic performance degradation on TASK2 for standard RL methods (from 0.70-0.72 on TASK1 to 0.35-0.48 on TASK2) provides empirical evidence for the \emph{forgetting} phenomenon predicted by our theoretical analysis. This confirms that overfitting to the primary task can severely compromise performance on secondary tasks.

\textbf{Diversity-Promoting Benefits:} Methods that promote diversity (Reinforce-rej (e.g. \texttt{RL-rej}), GRPO-KL, DPRM variants) demonstrate superior performance on TASK2 while maintaining reasonable performance on TASK1. This validates our theoretical prediction that diversity-promoting mechanisms are crucial for multi-task scenarios.

\textbf{Inference Scaling Effectiveness:} The PRM-based and DPRM-based inference methods show particularly interesting behavior, achieving high performance on TASK1 while maintaining reasonable coverage on TASK2. This suggests that process reward models can effectively guide reasoning without the computational overhead of fine-tuning.

These results collectively support our theoretical findings and provide practical guidance for designing effective multi-task reasoning systems in large language models.

\section{Details of Reward Models and Methods}\label{appsec:reward_models}
\subsection{Summary of Notations}

We remark that in our setting, for all $l \in [L]$, $\boldsymbol{o}_{l} = e_{\boldsymbol{o}_{l}} \in \mathbb{R}^{\lvert S\rvert}$ denotes the one-hot encoding of token $o_l$ from the vocabulary. In practice, language models typically apply a softmax over the entire vocabulary to produce next-token probabilities. Hence, for simplicity, we do not distinguish between $\boldsymbol{o}_l$ and $o_l$ in notation, and treat them interchangeably throughout the paper. We summarize our notation in Table \ref{tab:notations}.

Let $\mathcal{Q}_{k} \subseteq S_{1}$ be the set of \textit{question states} for task $k\in \mathcal{T}$. Suggest $P^{k}$ is a distribution over the \textit{question states} $\mathcal{Q}_{k}$ associated with task $k$, denote ${R_{\mathrm{out}}^{k}}(\boldsymbol{o})=\mathbb{E}_{(\mathbf{Q}, \mathbf{A}) \sim \mathcal{D}_{a_{q}}^{q,k}}[R_{(\mathbf{Q},\mathbf{A})}^{k}(\boldsymbol{o})]$~\cite{setlur2025rewarding, setlur2024rlincorrectsyntheticdata} as the population reward over $\mathcal{D}_{a_q}^{q,k}$ of the task tuple $(q,a_q,k)$.

\begin{table}[t]
  \caption{Summary of Notations}
  \label{tab:notations}
  \centering
  \resizebox{\textwidth}{!}{%
  \begin{tabular}{lc}
    \toprule
    \textbf{Notation} & \textbf{Description} \\
    \midrule
    $S_l$, $S$ & State space at layer $l$; $S = \bigcup_{l=1}^L S_l$ is the full state space. \\
    \hline
    $o$, $O$, $\boldsymbol{o}$, $o_l$, $\boldsymbol{o}_l$ & $O$ denotes a trajectory; $\boldsymbol{o}$ its one-hot form; $o_l$, $\boldsymbol{o}_l$ are step-$l$ token and embedding. \\
    \hline
    $\mathbb{P}(\cdot |\cdot) $ or $\mathbb{P}_{\mathrm{TMC}}(\cdot |\cdot)$, $\hat{p}_{{\boldsymbol{\theta}}}(\cdot | \cdot)$ & $\mathbb{P}(\cdot |\cdot)$ or $\mathbb{P}_{\mathrm{TMC}}(\cdot |\cdot)$: TMC kernel in Def.~\ref{def:TMC}; $\hat{p}_{{\boldsymbol{\theta}}}(\cdot | \cdot)$: softmax predictor based on $\boldsymbol{\theta}$. \\
    \hline
    $C_{o_l} , D_{o_l}$ & High probability transition subset in $S_{l+1}$; Non-zero probability transition subset. \\
    \hline
    $\mathcal{Q}_k$, $(q,a_q,k)$ & $\mathcal{Q}_k \subseteq S_1$: question states in task $k$; $(q,a_q,k)$: task tuple with $q \mapsto a_q$. \\
    \hline
    $\mathcal{D}^{q,k}_{a_q},\mathcal{G}_{\mathbf{Q},\mathbf{A}}^{(k)}$ &  Instance Distribution over task tuple $(q,a_q,k)$; Correct CoTs for $(\mathbf{Q}, \mathbf{A}) \sim \mathcal{D}^{q,k}_{a_q}$.\\
    \hline
    $\mathcal{G}_{q,a_q}^{(k)},\mathcal{G}_{q, a_{q}}^{(k),\text{easy}},\mathcal{G}_{q, a_{q}}^{(k),\text{hard}}$ & Valid CoTs set for $(q,a_q,k)$; partitioned into easy and hard subsets. \\
    \hline
    $\mathcal{I}_{o_{l+1}, o_l}^{(k)}$, $\mathcal{S}_{o_l}^{(k)}$, $\mathcal{S}_{o_l}^{(k),\text{easy}}$, $\mathcal{S}_{o_l}^{(k),\text{hard}}$ & Valid CoTs passing $(o_l, o_{l+1})$; subset of reachable $o_{l+1}$ from $o_l$ in valid CoTs; easy/hard-to-reason subsets. \\
    \hline
    $\boldsymbol{\theta}^\star$, $\boldsymbol{\theta}^k$, $\boldsymbol{\theta}^{k,(t)}$ & $\boldsymbol{\theta}^\star$: base model in Sec.~\ref{sec:base_model}; $\boldsymbol{\theta}^k$: task-$k$ model; superscript $(t)$: iteration. \\
    \hline
    ${R_{\mathrm{out}}^{k}}(\boldsymbol{o}),{R_{\mathrm{out}}^{k}}^{\hat{p}}(\boldsymbol{o})$ & ${R_{\mathrm{out}}^{k}}(\boldsymbol{o})$: Expected accuracy over $(\mathbf{Q}, \mathbf{A}) \sim \mathcal{D}^{q,k}_{a_q}$ for sampled CoT $\boldsymbol{o}$, by $\hat{p}_{\boldsymbol{\theta}^\star}$ or $\hat{p}$. \\
    \hline
    $R_{\mathrm{likelihood}}^{k}(\boldsymbol{o}_l),R_{\mathrm{DPRM}}^{k}(\boldsymbol{o}_l)$ &  Expected accuracy of $\boldsymbol{o}_l$; DPRM reward in Eq.(\ref{eq:DPRM_design}). \\
    \hline
    $A_{l+1}^{\hat{p}_{\boldsymbol{\theta}},k}(\boldsymbol{o}_l, \boldsymbol{o}_{l+1}),Q^{\hat{p}_{\boldsymbol{\theta}},k}(\boldsymbol{o}_l, \boldsymbol{o}_{l+1}),V^{\hat{p}_{\boldsymbol{\theta}},k}(\boldsymbol{o}_l)$ &  RL's Advantage for task $k$; Expected accuracy of state $\boldsymbol{o}_{l+1}$ and $\boldsymbol{o}_l$. \\
    \hline
    $p_{\text{acc}}^{k}(o)$ & Success probability of CoT $o$ for task $k$. \\
    \hline
    $\beta, \lambda$ & Temperature parameters of $\hat{p}_{\boldsymbol{\theta}^{k}}^{\mathrm{PO}}$ in Eq.(\ref{eq:GRPO_dis}) and $P_{\mathrm{Gibbs}}^{k}(\boldsymbol{o})$ in Eq.(\ref{eq:energy_sampling}). \\
    \hline
    $\mathrm{Pass@K}_{q,k}^{\hat{p}}$ & Probability that $\hat{p}$ generates at least one correct CoT in $K$ samples for $(q,a_q,k)$. \\
    \hline
    $O(\cdot)$, $\Omega(\cdot)$, $\Theta(\cdot)$ & Standard asymptotic notation: upper, lower, and tight bounds, respectively. \\
    \bottomrule
  \end{tabular}
  }
\end{table}

\subsection{RLVR Finetuning}
\textbf{REINFORCE}. The classical REINFORCE algorithm \cite{williams1992simple} maximizes the expected reward from sampled trajectories. For mathematical reasoning, a standard approach is using $0-1$ correctness of reasoning answer as the reward \cite{xiong2025minimalistapproachllmreasoning, setlur2025rewarding}. In our TMC setting, for task $k$ and given prompt $q$, the REINFORCE objective is
\begin{equation}
    \mathcal{J}_{\mathrm{REINFORCE}}(\boldsymbol{\theta}^{k}) = \mathbb{E}_{o_1=q\sim P^{k}(\mathcal{Q}_{k}),(\mathbf{Q}, \mathbf{A}) \sim \mathcal{D}_{a_{q}}^{q,k},\{\boldsymbol{o}^i\}_{i=2}^G\sim \hat{p}_{\boldsymbol{\theta}^{k}}(O|\boldsymbol{o}^i_{1})} \left[ \mathds{1}(\boldsymbol{o} \in \mathcal{G}_{\mathbf{Q},\mathbf{A}}^{(k)}) \right],
\end{equation}
where $\boldsymbol{o}_{1:L}$ denotes the trajectory sampled from the policy, and $\mathds{1}(\boldsymbol{o} \in \mathcal{G}_{\mathbf{Q},\mathbf{A}}^{(k)}) \in \{0,1\}$ indicates whether the final output yields the correct answer. In our scenario, the objective would become 
\[
\begin{aligned}
    \mathcal{J}_{\mathrm{REINFORCE}}(\boldsymbol{\theta}^{k}) &= \mathbb{E}_{o_1=q\sim P^{k}(\mathcal{Q}_{k}),o_{2:L} \sim \hat{p}_{\boldsymbol{\theta}^{k}}} \left[ {R_{\mathrm{out}}^{k}}(\boldsymbol{o})  \right].
\end{aligned}
\]

\textbf{RAFT} (Rejection Sampling Fine-tuning) optimizes LLMs by sampling multiple responses from a policy, using a reward signal to select the best one, and then fine-tuning the policy using supervised learning on the selected best responses \cite{xiong2025minimalistapproachllmreasoning, dong2023raft}. The objective is to maximize the likelihood of these high-reward outputs:
\begin{equation}
    \mathcal{J}_{\mathrm{RAFT}}({\boldsymbol{\theta}}) = \mathbb{E}_{[(q, o^*) \sim \mathcal{D}_{\mathrm{RAFT}}]} [\log \pi_{\boldsymbol{\theta}}(o^*|q)],
\end{equation}
where $\mathcal{D}_{\mathrm{RAFT}}$ is a dataset constructed from queries $q$ and their corresponding best sampled responses $o^*$, as determined by a reward function. \cite{xiong2025minimalistapproachllmreasoning} found that a minimal RL approach to finetune the base model is to reject both the entirely correct and incorrect responses. In our TMC case, we have
\[
\mathcal{J}_{\mathrm{RAFT}}(\boldsymbol{\theta}^{k}) 
    = \mathbb{E}_{\boldsymbol{o}_{1}\sim P^{k}(\mathcal{Q}^{k}),(\mathbf{Q}, \mathbf{A}) \sim \mathcal{D}_{a_{o_1}}^{o_1,k},\boldsymbol{o}_{2:L}\sim \hat{p}_{\boldsymbol{\theta}^{k}}^{k}(O|\boldsymbol{o}_{1})} 
    \left[\sum_{l=1}^{L-1} \log \hat{p}_{\boldsymbol{\theta}^k}(\boldsymbol{o}_{l+1}|\boldsymbol{o}_l) R_{(\mathbf{Q},\mathbf{A})}^{k}(\boldsymbol{o})\right]
\]

\textbf{Direct Preference Optimization (DPO)} optimizes the policy directly using a dataset of human preferences, provided as pairs of preferred ($o_w$) and dispreferred ($o_l$) responses for a given prompt $q$ \cite{rafailov2023direct}. It avoids explicit reward model training or reinforcement learning, instead optimizing a loss based on the policy's probability ratio relative to a reference policy $\pi_{ref}$:
\begin{equation}
    \mathcal{J}_{\text{DPO}}({\boldsymbol{\theta}}) = \mathbb{E}_{[(q, o_w, o_l) \sim \mathcal{D}_{\text{DPO}}]} [\log \sigma\left(\beta \log \frac{\pi_{\boldsymbol{\theta}}(o_w|q)}{\pi_{ref}(o_w|q)} - \beta \log \frac{\pi_{\boldsymbol{\theta}}(o_l|q)}{\pi_{ref}(o_l|q)}\right)],
\end{equation}
where $\mathcal{D}_{\text{DPO}}$ is the preference dataset, $\sigma$ is the logistic sigmoid function, and $\beta$ is a temperature hyperparameter that scales the difference in log-probabilities.

In our TMC setting, for task $k$, suppose for each prompt \(q\), the reference model (base model $\hat{p}_{\boldsymbol{\theta}^{\star}}$ or current model $\hat{p}_{\text{old}}^{k}$) produces two candidate trajectories: a preferred one \(\boldsymbol{o}^{+}_{1:L}\), and a dispreferred one \(\boldsymbol{o}^{-}_{1:L}\), where \(R_{(\mathbf{Q},\mathbf{A})}^{k}(\boldsymbol{o}_L^+)=1 >R_{(\mathbf{Q},\mathbf{A})}^{k}(\boldsymbol{o}_L^-)=0\). The DPO objective for the current policy \(\hat{p}_{\boldsymbol{\theta}^{k}}\) is:
\begin{equation}\label{eq:DPO-obj}
    \mathcal{J}_{\text{DPO}}(\boldsymbol{\theta}^{k}) := \sum_{(q, \boldsymbol{o}^+, \boldsymbol{o}^-) \in \mathcal{D}^k} \log \sigma\left( \beta \cdot \left[  \log \frac{\hat{p}_{\boldsymbol{\theta}^{k}}(\boldsymbol{o}^+_{2:L}|\boldsymbol{o}^+_1)}{\hat{p}_{\text{old}}^{k}(\boldsymbol{o}^+_{2:L}|\boldsymbol{o}^+_1)} -  \log \frac{\hat{p}_{\boldsymbol{\theta}^{k}}(\boldsymbol{o}^-_{2:L}|\boldsymbol{o}^-_1)}{\hat{p}_{\text{old}}^{k}(\boldsymbol{o}^+_{2:L}|\boldsymbol{o}^+_1)} \right] \right),
\end{equation}
where \(\sigma(\cdot)\) is the sigmoid function and \(\beta > 0\) is a temperature hyperparameter controlling preference sharpness. This objective promotes the likelihood ratio of preferred over dispreferred CoTs as measured under \(\hat{p}_{\boldsymbol{\theta}^{k}}\), relative to the fixed reference \(\hat{p}_{\boldsymbol{\theta}^{\star}}\) used for sampling.

\textbf{Proximal Policy Optimization (PPO)} \cite{Schulman17} optimizes LLMs by maximizing the following surrogate objective \cite{spinningup_ppo}:
\begin{equation}
    \begin{split}
    \mathcal{J}_{\text{PPO}}({\boldsymbol{\theta}}) = & \mathbb{E}_{[q \sim P(Q), o \sim \pi_{{\boldsymbol{\theta}}_{old}}(O|q)]} \frac{1}{|o|} \\
    &\sum_{t=1}^{|o|} \min [ \frac{\pi_{\boldsymbol{\theta}}(o_{t} | q, o_{<t})}{\pi_{{\boldsymbol{\theta}}_{old}}(o_{t} | q, o_{<t})} A_{t}, \text{clip} ( \frac{\pi_{\boldsymbol{\theta}}(o_{t} | q, o_{<t})}{\pi_{{\boldsymbol{\theta}}_{old}}(o_{t} | q, o_{<t})}, 1 - \epsilon, 1 + \epsilon )  A_{t} ] ,
    \end{split}
\end{equation}
where $A_t$ is the advantage computed via Generalized Advantage Estimation (GAE), requiring an \textit{additional critic model}. $\epsilon$ is a clipping-related hyperparameter.

In our TMC setting, we have the advantage function as
\begin{equation}
    A_{l+1}^{\hat{p}_{\boldsymbol{\theta}},k}(\boldsymbol{o}_l, \boldsymbol{o}_{l+1}) 
    := Q^{\hat{p}_{\boldsymbol{\theta}},k}(\boldsymbol{o}_l,\boldsymbol{o}_{l+1}) 
    - V^{\hat{p}_{\boldsymbol{\theta}},k}(\boldsymbol{o}_l).
\label{eq:traditional_adv_TMC}
\end{equation}
Here the transition-value and state-value functions are
\begin{align}
    Q^{\hat{p}_{\boldsymbol{\theta}},k}(\boldsymbol{o}_l,\boldsymbol{o}_{l+1}) 
    &:= \mathbb{E}_{o_1=q\sim P^k(\mathcal{Q}_{k}),\ \boldsymbol{o}_{l+2:L}\sim \hat{p}_{\boldsymbol{\theta}}}
    \left[{R_{\mathrm{out}}^{k}}(\boldsymbol{o}) \middle| \boldsymbol{o}_l,\boldsymbol{o}_{l+1}\right], \\
    V^{\hat{p}_{\boldsymbol{\theta}},k}(\boldsymbol{o}_l) 
    &:= \mathbb{E}_{o_1=q\sim P^k(\mathcal{Q}_{k}),\ \boldsymbol{o}_{l+1}\sim \hat{p}_{\boldsymbol{\theta}}(\cdot|\boldsymbol{o}_l)}
    \left[Q^{\hat{p}_{\boldsymbol{\theta}},k}(\boldsymbol{o}_l,\boldsymbol{o}_{l+1})\right].
\end{align}

The PPO objective \cite{spinningup_ppo} in our scenario is
\begin{equation}
    \begin{split}
    \mathcal{J}_{\text{PPO}}(\boldsymbol{\theta}^{k}) = & \mathbb{E}_{q\sim P^{k}(\mathcal{Q}_{k}),(\mathbf{Q}, \mathbf{A}) \sim \mathcal{D}_{a_{q}}^{q,k},\{\boldsymbol{o}^i\}_{i=2}^G\sim \hat{p}_{\boldsymbol{\theta}^{k}}(O|\boldsymbol{o}^i_{1})}  \Bigg[\frac{1}{L} \sum_{l=1}^{L-1}\min [ \frac{\hat{p}_{\boldsymbol{\theta}^{k}}(\boldsymbol{o}_{l+1}^i | \boldsymbol{o}_l^i)}{\hat{p}_{\text{old}}^{k}(\boldsymbol{o}_{l+1}^i | \boldsymbol{o}_l^i)} A_{l+1}^{\hat{p}_{\boldsymbol{\theta}},k}, \\
    &\text{clip} ( \frac{\hat{p}_{\boldsymbol{\theta}^{k}}(\boldsymbol{o}_{l+1}^i | \boldsymbol{o}_l^i)}{\hat{p}_{\text{old}}^{k}(\boldsymbol{o}_{l+1}^i | \boldsymbol{o}_l^i)}, 1 - \epsilon, 1 + \epsilon )  A_{l+1}^{\hat{p}_{\boldsymbol{\theta}},k} ]  \Bigg].
    \end{split}
\label{appeq:PPO-obj}\end{equation}
In our modeling setup, the advantage estimate $A_{l+1}^{\hat{p}_{\boldsymbol{\theta}},k}$ aims to approximate Eq.(\ref{eq:traditional_adv_TMC}), the gap between the value of making a particular transition at step $l$, versus the expected value of acting from state $\boldsymbol{o}_l$ without knowledge of $\boldsymbol{o}_{l+1}$.

\textbf{GRPO} \cite{shao2024deepseekmathpushinglimitsmathematical}, in contrast, samples a group of output trajectories $\{o^i\}_{i=1}^G$ from $\pi_{{\boldsymbol{\theta}}_{old}}$ and optimizes:

\begin{equation}
\begin{aligned}
    \mathcal{J}_{\text{GRPO}}^{k}({\boldsymbol{\theta}}) = & \mathbb{E}_{[q \sim P(Q), \{o^i_{:}\}_{i=1}^G \sim \pi_{{\boldsymbol{\theta}}_{old}}(O|q)]}  \\
    & \frac{1}{G}\sum_{i=1}^G\frac{1}{|o^i_{:}|} \sum_{t=1}^{|o^i_{:}|} \{ \min [ \frac{\pi_{\boldsymbol{\theta}}(o_{t}^{i} | q, o_{<t}^{i})}{\pi_{{\boldsymbol{\theta}}_{old}}(o_{t}^{i} | q, o_{<t}^{i})} \hat{A}_{i,t}, \text{clip} ( \frac{\pi_{\boldsymbol{\theta}}(o_{t}^{i} | q, o_{<t}^{i})}{\pi_{{\boldsymbol{\theta}}_{old}}(o_{t}^{i} | q, o_{<t}^{i})}, 1 - \epsilon, 1 + \epsilon )  \hat{A}_{i,t} ]\\
    &- \beta \mathbb{D}_{KL}[\pi_{{\boldsymbol{\theta}}} || \pi_{ref}]\} ,
\end{aligned}
\label{appeq:GRPO-obj_origin}
\end{equation}
where $\hat{A}_{i,t}$ is computed based on relative rewards within the sampled group, and $\beta$ controls KL regularization. 

 In our scenario, the formulation of GRPO \cite{shao2024deepseekmathpushinglimitsmathematical} equates
\begin{equation}
    \begin{aligned}
    \resizebox{\textwidth}{!}{$\mathcal{J}_{\text{GRPO}}^{k}(\boldsymbol{\theta}^{k}) = \mathbb{E}_{q\sim P^{k}(\mathcal{Q}_{k}),(\mathbf{Q}, \mathbf{A}) \sim \mathcal{D}_{a_{q}}^{q,k},\{\boldsymbol{o}^i\}_{i=2}^G\sim \hat{p}_{\boldsymbol{\theta}^{k}}(O|\boldsymbol{o}^i_{1})} \Bigg[ \frac{1}{G} \sum_{i=1}^G \frac{1}{L} \sum_{l=1}^{L-1} \Big\{ \min \Big[ \frac{\hat{p}_{\boldsymbol{\theta}^{k}}(\boldsymbol{o}_{l+1}^i | \boldsymbol{o}_l^i)}{\hat{p}_{\text{old}}^{k}(\boldsymbol{o}_{l+1}^i | \boldsymbol{o}_l^i)} \hat{A}_{i,l+1}^{k},$} \\
    \text{clip} \left( \frac{\hat{p}_{\boldsymbol{\theta}^{k}}(\boldsymbol{o}_{l+1}^i | \boldsymbol{o}_l^i)}{\hat{p}_{\text{old}}^{k}(\boldsymbol{o}_{l+1}^i | \boldsymbol{o}_l^i)}, 1 - \epsilon, 1 + \epsilon \right) \hat{A}_{i,l+1}^{k} \Big] \Big\} - \beta D_{\text{KL}}[\hat{p}_{\boldsymbol{\theta}^{k}} || \hat{p}_{\boldsymbol{\theta}^{\star}}] \Bigg],
    \end{aligned}
\label{appeq:GRPO-obj}\end{equation}

\textbf{Outcome Supervision RL with GRPO}. 
A outcome reward model assigns scores $\mathbf{r} = \{r_1^k, ..., r_G^k\}$ to sampled outputs, which are then normalized: $\widetilde{r}_i = \frac{r_{i,index(l)}^k - \text{mean}(\mathbf{r}^k)}{\text{std}(\mathbf{r}^k)}$ within the group. The advantage is set as $\hat{A}_{i,l+1}^{k} = \widetilde{r}_i^k, \forall l \in [L-1]$, aiming to approximate Eq.~(\ref{eq:traditional_adv_TMC}). Here, for the task $k \in \mathcal{T}$, if we consider an \textit{offline} scenario, our outcome reward model is $r_{i,index(l)}^k=R_{\mathrm{out}}^{k}(\cdot)$ defined in Sec.~\ref{sec:PRM}.

\textbf{Process Supervision RL with GRPO}. Instead of a single reward per output, a process reward model assigns step-wise rewards $\mathbf{R} = \{ \{r_{1,index(1)}^k, ..., r_{1,index(L)}^k\}, ... \}$, where $index(l)$ denotes the $l$-th step's end token index. Rewards are normalized: $\widetilde{r}_{i,index(l)}^k = \frac{r_{i,index(l)}^k - \text{mean}(\mathbf{R})}{\text{std}(\mathbf{R})}$. The advantage is computed as:
\begin{equation}\label{eq:origin_prm_GRPO}
    \hat{A}_{i,l}^{k} = \sum_{index(j) \geq l} \widetilde{r}_{i,index(j)}^k,
\end{equation}
and the policy is optimized via Eq.~(\ref{eq:GRPO-obj}). Specifically, we could adopt $r_{i,index(l)}^k = R_{\text{pro}}^{k}(\boldsymbol{o}_{l}^{i}), \forall i \in [G], l \in \{1,\cdots,L\}$ in Sec.~\ref{sec:PRM}. However, this approach is unnatural - since $R_{\mathrm{DPRM}}^{k}(\cdot)$ is designed for temperature-controlled adjusted sampling. Instead, a more common approach is to choose the $R_{\mathrm{likelihood}}^{k}(\boldsymbol{o}_l)$ in Eq.(\ref{eq:PRM_likelihood_heuristic}) and $R_{\text{potential}}^{k}(\boldsymbol{o}_l)$. 

In this work, following \cite{xiong2025minimalistapproachllmreasoning, yu2025dapo}, we only studied the properties of GRPO with outcome reward. However, our theorem can include the GRPO with process reward by assuming that the advantage calculated in Eq.(\ref{eq:origin_prm_GRPO}) is approximating Eq.(\ref{eq:traditional_adv_TMC}) accurately.

\subsection{Reward-based Sampling}
\textbf{ORM Mode}. Given an input $x$, the model generates an CoT trajectory $o_1, \cdots, o_{L}$. Define $\boldsymbol{o}_{l} \in \mathbb{R}^{\lvert S\rvert}$ as the one-hot vector representing $o_{1},$ $\boldsymbol{o} = (\boldsymbol{o}_1, \cdots, \boldsymbol{o}_L)^{\top} \in \sR^{L \times \lvert S\rvert}$ as the trajectory vector. An outcome reward model (ORM) $R_{\mathrm{out}}^{k}(\cdot)$ assigns a scalar score based on the entire output:
\begin{equation}
    R_{\mathrm{out}}^{k}(\boldsymbol{o}) = f(\boldsymbol{o}),
\label{eq:def_R_out^k}\end{equation}
where $f(\cdot)$ usually evaluates correctness, coherence, or other task-specific criteria \cite{shao2024deepseekmathpushinglimitsmathematical, wang2024mathshepherdverifyreinforcellms, li-etal-2023-making, snell2024scalingllmtesttimecompute}.

\textbf{PRM Mode}. Instead of rewarding only the final output, a \textit{process reward model} (PRM) assigns intermediate rewards along the reasoning trajectory:
\begin{equation}
    R_{\text{pro}}^{k}(\boldsymbol{o}_l) = g(\boldsymbol{o}_1, ..., \boldsymbol{o}_l), \quad l \in \{1, ..., L\},
\end{equation}
where \( g(\cdot) \) estimates step-wise utility using heuristics, verification signals, or learned evaluation metrics \cite{shao2024deepseekmathpushinglimitsmathematical, snell2024scalingllmtesttimecompute, wang2024mathshepherdverifyreinforcellms, li-etal-2023-making}. Designing process rewards from outcome rewards is essential due to the high cost of human annotation. However, existing approaches are largely heuristic—either based on (i) the expected correctness of the final answer from the current state, typically via Monte Carlo rollouts~\cite{setlur2025rewarding, setlur2024rlincorrectsyntheticdata, wang2024mathshepherdverifyreinforcellms}:
\begin{equation}
R_{\mathrm{likelihood}}^{k}(\boldsymbol{o}_l) =\mathbb{E}_{\boldsymbol{o}_{l +1:L} \sim \hat{p}_{\boldsymbol{\theta^{\star}}}} \left[ R_{\mathrm{out}}^{k}(\boldsymbol{o}) \mid \boldsymbol{o}_l \right],
\label{eqapp:PRM_likelihood_heuristic}
\end{equation}
and (ii) using binary signals to indicate whether the current state can still reach a correct solution~\cite{snell2024scalingllmtesttimecompute, setlur2025scalingtesttimecomputeverification}:
\begin{equation}
R_{\text{potential}}^{k}(\boldsymbol{o}_l)= \sup_{\boldsymbol{o}^{\prime}:\, \boldsymbol{o}_{l}^{\prime} = \boldsymbol{o}_{l},\, \boldsymbol{o}^{\prime} \in \mathcal{T}_{\text{all}}} R_{\mathrm{out}}^{k}(\boldsymbol{o}^{\prime}) = \mathbf{1} \left\{ \exists \boldsymbol{o}^{\prime} \in \mathcal{T}_{\text{all}}:\, {R_{\mathrm{out}}^{k}}(\boldsymbol{o}^{\prime}, a) \right\},
\label{eqapp:PRM_potential_heuristic}
\end{equation}
for all \( \boldsymbol{o}_l \in S_l \), \( l \in \{1, \ldots, L\} \). Here, \( \mathcal{T}_{\text{all}} \) is typically approximated typically by Monte Carlo rollouts.

\textbf{Temperature-controlled Adjusted Sampling}. Here, we consider refinubf the sampling distribution using the reward model \( R_{\mathrm{out}}^{k} \). Define the original sampling probability of a trajectory \( \boldsymbol{o} \) under \( \boldsymbol{\theta}^{\star} \) as:
\[
\hat{p}_{\boldsymbol{\theta}^{\star}}(\boldsymbol{o}) = \mathbb{P}_{\rho}^{\text{test}}(o_1) \prod_{l=1}^{L-1} \hat{p}_{\boldsymbol{\theta^{\star}}}(o_{l+1} | o_l),
\]
where \( \mathbb{P}_{\rho}^{\text{test}}(o_1) = \Theta(1/M_0) \) is the initial distribution over \( S_1 \). The adjusted sampling distribution, guided by \( R_{\mathrm{out}}^{k} \), is defined as:
\begin{equation}
P_{\mathrm{Gibbs}}^{k}(\boldsymbol{o})= \frac{\hat{p}_{\boldsymbol{\theta}^{\star}}(\boldsymbol{o}) \exp\left( \lambda R_{\mathrm{out}}^{k}(\boldsymbol{o})\right)}{\sum_{\boldsymbol{o}' \in \mathcal{T}_{\text{all}}} \hat{p}_{\boldsymbol{\theta}^{\star}}(\boldsymbol{o}') \exp\left( \lambda R_{\mathrm{out}}^{k}(\boldsymbol{o}^{\prime}) \right)} \propto \hat{p}_{\boldsymbol{\theta}^{\star}}(\boldsymbol{o}) \exp\left( \lambda R_{\mathrm{out}}^{k}(\boldsymbol{o}) \right).
\label{eq:tem_adjusted_orm}\end{equation}
for a temperature parameter \( \lambda > 0 \), with normalization over \( \mathcal{T}_{\text{all}} \). The estimation of the $\mathcal{T}_{\text{all}}$ is typically through \textit{Monte Carlo Rollout}. This discrete distribution reweights the pretrained model’s probabilities to favor trajectories with higher estimated rewards, consistent with traditional sampling literature where the exponential form amplifies the influence of the reward signal. The form \( P_{\mathrm{Gibbs}}^{k}(\boldsymbol{o}) \propto \hat{p}_{\boldsymbol{\theta}^{\star}}(\boldsymbol{o}) \exp(\lambda R_{\mathrm{out}}^{k}(\boldsymbol{o})) \) mirrors soft policy sampling in RL and NLP literature (e.g., REINFORCE or importance sampling). \( \lambda \) controls the trade-off: large \( \lambda \) heavily biases toward high-reward trajectories; small \( \lambda \) preserves the original distribution. 

\subsection{Discussion on Broader Finetuning Settings}
\label{app:discussion_broader_rl}

\subsection{Benefit of Broader Exploration Strategies}
Beyond standard RL or KL‐regularized finetuning, our theoretical framework also provides insight into several \emph{non-standard} post-training strategies that emphasize broader exploration. 

One example is \emph{Evolution Strategy (ES) finetuning} \cite{qiu2025evolution}, which updates parameters via isotropic perturbations,
\[
\theta_{t+1}
=
\theta_t
+
\frac{\alpha}{N}
\sum_{n=1}^{N}
R(\theta_t+\sigma\epsilon_n)\,\epsilon_n,
\qquad
\epsilon_n\sim\mathcal N(0,I),
\]
where \(\alpha\) and \(\sigma\) are hyperparameters and \(R(\cdot)\) denotes a reward model.  
From the perspective of our TMC analysis, such isotropic exploration assigns comparable reward access to parameter directions corresponding to both easy-to-reason and hard-to-reason transitions at each layer \(l\).  This contrasts with advantage-based RL finetuning, which relies on the model’s own CoT sampling and therefore preferentially reinforces easy-to-reason edges with higher probability.  As a result, ES does not systematically amplify already frequent easy directions and thus naturally mitigates the squeezing effect identified in our theory.

A related class of methods is \emph{representation-based exploration finetuning} \cite{tuyls2025representation}, which explicitly encourages large covariance and diversity in latent representations.  Under our framework, this can be interpreted as preserving diversity across parameter directions associated with low-probability transitions, preventing them from being collapsed or suppressed by repeated advantage-driven updates.  Such behavior aligns precisely with our theoretical characterization of mechanisms that counteract the squeezing of hard-to-reason CoT steps.

Taken together, these examples illustrate that our theoretical perspective not only consolidates existing intuitions about exploration, but also offers a unifying lens for understanding and motivating broader, less conventional exploration-based post-training strategies.

\textbf{Case of DPO.} Recall from Eq.(\ref{eq:DPO-obj}) that the DPO objective is defined as:
\[
\mathcal{J}_{\text{DPO}}(\boldsymbol{\theta}^k) := \sum_{(q, \boldsymbol{o}^+, \boldsymbol{o}^-) \in \mathcal{D}^k}
\log \sigma\left( \beta \cdot \left[
\log \frac{\hat{p}_{\boldsymbol{\theta}^k}(\boldsymbol{o}^+_{2:L} \mid \boldsymbol{o}^+_1)}{\hat{p}_{\text{old}}^{k}(\boldsymbol{o}^+_{2:L} \mid \boldsymbol{o}^+_1)}
- \log \frac{\hat{p}_{\boldsymbol{\theta}^k}(\boldsymbol{o}^-_{2:L} \mid \boldsymbol{o}^-_1)}{\hat{p}_{\text{old}}^{k}(\boldsymbol{o}^-_{2:L} \mid \boldsymbol{o}^-_1)}
\right] \right),
\]
where $R_{(\mathbf{Q},\mathbf{A})}^k(\boldsymbol{o}_L^+) = 1 > R_{(\mathbf{Q},\mathbf{A})}^k(\boldsymbol{o}_L^-) = 0$. As discussed in~\cite{ren2025learningdynamicsllmfinetuning}, DPO can exhibit a squeezing effect, and such dynamics might also apply under our TMC reasoning framework. However, DPO is not a natural fit for our setting: we are concerned with correctness rather than relative preferences over reasoning paths. As such, the data required to support DPO—pairs $(q, \boldsymbol{o}^+, \boldsymbol{o}^-)$ indicating relative preference—is not directly meaningful in our binary ($0$–$1$) reward formulation. For this reason, while the objective form is stated for reference, we do not pursue further theoretical development of DPO in this work. Nonetheless, it may serve as a promising direction for future study of RLHF under the TMC framework with additional assumptions on preference structure.

\subsection{Extension to the general (nonlinear Transformer) case} 

Our multi-task TMC framework recovers three empirically observed phenomena (Phenomenon 1-3 in the introduction) in nonlinear multihead Transformers. While it is common for theoretical analyses of large-scale LMs to use idealized surrogate models to distill and prove generalizable principles (e.g., \citet{foster2025goodfoundationnecessaryefficient, kim2025metastabledynamicschainofthoughtreasoning}), we here further clarify how our theory connects to nonlinear Transformer setting.

\paragraph{Linear surrogates sufficiently capture tabular latent-state transitions.} Our formulation models reasoning as a discrete Markov chain—an abstraction used in several recent studies~\cite{xu2019can, sanford2024understanding, abbe2024far, besta2024graph, kim2025metastabledynamicschainofthoughtreasoning}—where the current state encodes all information for current reasoning step. Thus, global token dependencies are captured in state transitions, eliminating the need for positional entanglement. Prior work~\cite{Nichani24, Edelman24} has shown that transformers can successfully learn Markovian dynamics, and in our setting, the linear softmax model is already overparameterized enough to capture the TMC structure.

\paragraph{Extension to the general (nonlinear Transformer) case.} Our multi-task TMC recovers three empirically observed phenomena (\textbf{Phenomenon 1-3} in the introduction) that also hold in nonlinear multihead Transformers:

\textbf{Phenomenon 1(architecture-aware): RL-induced squeezing.} Squeezing (sharpening) under RL post-training: rare-but-correct CoTs are forgotten, a behavior widely reported in math/coding systems \citep{he2025rewardingunlikelyliftinggrpo} but previously lacking theoretical explanation. In our framework, it emerges when gradients over-reinforce easy CoTs driven by their higher advantage, also stemming from the decoupled neural representation of different states.

\textbf{Nonlinear Logits.} When the model’s logits deviate from the linear form in Eq.(\ref{eq:base_model} and instead follow the general parameterization of Eq.(\ref{eq:base_model_non_linear}, i.e., $\hat{p}_{\boldsymbol{\theta}}(\cdot \mid \boldsymbol{x}) = \operatorname{softmax}(h_{\boldsymbol{\theta}}(\cdot, \boldsymbol{x}))$ for $\boldsymbol{x} \in \{0,1\}^{|S|}$, the fine-tuning dynamics become considerably more complex.

As noted in Remark~\ref{rmk:conclusion_holds_non_linear_h}, Lemma~\ref{lem:PG_Reinforce} depends on a set of extended conditions, notably the Parameter Isolation condition (Eq.(\ref{eq:h_param_isolation})), which typically fails to hold in practice. In large language models (LLMs), token representations are entangled via shared parameters across layers and positions, making it impossible to isolate updates per token. This design is aligned with in-context learning~\cite{Nichani24}, where sequential dependencies are a fundamental modeling assumption.

To understand the impact of nonlinearity more concretely, we adopt a first-order approximation of the logit update at transition $o_l \to o_{l+1}$ following Proposition 1 in~\cite{ren2025learningdynamicsllmfinetuning}:
\[
\resizebox{\textwidth}{!}{$\begin{aligned}
    \Delta \log \hat{p}_{\boldsymbol{\theta}}(\cdot|\boldsymbol{o}_l) & = \eta [\nabla_{h(\cdot,\boldsymbol{o}_{l})}\log \hat{p}_{\boldsymbol{\theta}}(\cdot|\boldsymbol{o}_l)]\mathbb{E}\{\mathcal{K}_{\boldsymbol{\theta}}(\boldsymbol{o}_{l},\boldsymbol{o}_{l}^{\text{train}})[\nabla_{h(\cdot,\boldsymbol{o}_{l})} \mathcal{J}^{\text{train}}(\boldsymbol{o}_{l+1}^{\text{train}},\boldsymbol{o}_{l}^{\text{train}})]\} +O(\eta^{2}\| \nabla_{\boldsymbol{\theta}} h_{\boldsymbol{\theta}}(\cdot,\boldsymbol{o}_{l}) \|_{\mathrm{op}})\\
    &=  \eta(\mathbb{I}-\mathbf{1}\hat{p}_{\boldsymbol{\theta}}(\cdot|\boldsymbol{o}_l)^{\top})\mathbb{E}\{\mathcal{K}_{\boldsymbol{\theta}}(\boldsymbol{o}_{l},\boldsymbol{o}_{l}^{\text{train}})[\nabla_{h(\cdot,\boldsymbol{o}_{l})} \mathcal{J}^{\text{train}}(\boldsymbol{o}_{l+1}^{\text{train}},\boldsymbol{o}_{l}^{\text{train}})]\}+O(\eta^{2}\| \nabla_{\boldsymbol{\theta}} h_{\boldsymbol{\theta}}(\cdot,\boldsymbol{o}_{l}) \|_{\mathrm{op}})
\end{aligned}$}
\]
where $\mathcal{J}^{\text{train}}$ is the state-wise loss function (e.g. entropy loss or expected accuracy $Q(\boldsymbol{o}_{l+1}^{\text{train}},\boldsymbol{o}_{l}^{\text{train}})$), $\mathcal{K}_{\boldsymbol{\theta}}$ is the empirical NTK (eNTK) defined as $\mathcal{K}_{\boldsymbol{\theta}}=(\nabla_{\boldsymbol{\theta}} h_{\boldsymbol{\theta}}(\cdot,\boldsymbol{o}_{l}) \nabla_{\boldsymbol{\theta}} h_{\boldsymbol{\theta}}(\cdot,\boldsymbol{o}_{l})^{\top})$, and the expectation is taken over question states $q=o_1 \sim P^k(\mathcal{Q}^k)$, training instances $(\mathbf{Q}, \mathbf{A}) \sim \mathcal{D}^{q,k}_{a_q}$, and sampled CoTs $\boldsymbol{o}^{\text{train}} \sim \hat{p}_{\boldsymbol{\theta}^k}(\cdot | q)$. In contrast to the linear case where $\mathcal{K}_{\boldsymbol{\theta}} = \boldsymbol{o}_l \boldsymbol{o}_l^\top$, the nonlinear update depends on the learned geometry of the representation space.

The squeezing effect occurs if
\[
\frac{\hat{p}_{\boldsymbol{\theta}^k}(\boldsymbol{o}_{l+1}' \mid \boldsymbol{o}_l)}{\hat{p}_{\boldsymbol{\theta}^k}(\boldsymbol{o}_{l+1} \mid \boldsymbol{o}_l)} \leq 1
\quad \Longleftrightarrow \quad
\Delta \log \hat{p}_{\boldsymbol{\theta}}(\boldsymbol{o}_{l+1}') \geq \Delta \log \hat{p}_{\boldsymbol{\theta}}(\boldsymbol{o}_{l+1}),
\]
for $o_{l+1}' \in C_{o_l}$ and $o_{l+1} \in D_{o_l} \setminus C_{o_l}$. The update difference satisfies:
\[
\begin{aligned}
    \resizebox{\textwidth}{!}{$\Delta \log \hat{p}_{\boldsymbol{\theta}}(\boldsymbol{o}_{l+1}'|\boldsymbol{o}_l) - \Delta \log \hat{p}_{\boldsymbol{\theta}}(\boldsymbol{o}_{l+1}|\boldsymbol{o}_l)=\eta [\boldsymbol{o}_{l+1}'-\boldsymbol{o}_{l+1}]^{\top}\Theta((\mathbb{I}-\mathbf{1}\hat{p}_{\boldsymbol{\theta}}(\cdot|\boldsymbol{o}_l)^{\top})\mathbb{E}\{\mathcal{K}_{\boldsymbol{\theta}}(\boldsymbol{o}_{l},\boldsymbol{o}_{l}^{\text{train}})[\nabla_{h(\cdot,\boldsymbol{o}_{l})} \mathcal{J}^{\text{train}}(\boldsymbol{o}_{l+1}^{\text{train}},\boldsymbol{o}_{l}^{\text{train}})]\}.$}
\end{aligned}
\]
This shows that the relative update magnitudes—and thus the squeezing effect—depend on the eNTK structure and how different CoT representations interact. If the non-linear representations of hard and easy CoTs are highly correlated, their learning dynamics may reinforce or suppress each other, analogous to the phenomenon in~\cite{ren2025learningdynamicsllmfinetuning}, where learning digit 4 accelerates digit 9 but impedes unrelated classes. In our setting, this implies that whether the squeezing effect persists under nonlinearity hinges on structural coupling between CoTs in the representation space. In real-world, different reasoning patterns do have co-relations, and we left the broader investigations with certain assumptions as an important future direction.

\textbf{Phenomenon 2(architecture-agnostic): Consistency bias of neural verifiers.} As shown in Prop.~\ref{prop:vanilla_RM_fail} likelihood-based population objectives intrinsically upweight frequent patterns and downweight rare CoTs—this bias arises from the objective itself, not the network class. This explains the observed phenomenon in real-world LLMs \citep{Xu2025RMsConsistencyNotCausality}.

\textbf{Phenomenon 3(architecture-agnostic): Hard instances rely on rare CoTs.} This is defined by base-model pass rate \citep{tong2024dartmathdifficultyawarerejectiontuning} and is independent of the underlying architecture.

\section{Details and Proofs of TMDP}\label{sec:details_TMDP}

\begin{rmk}
    \label{rmk:real-world example of valid-correct}
    For the reader's high-level understanding, we here list some scenarios where the common valid reasoning patterns do not suffice for specific instance.
    \begin{itemize}
    \item \textbf{Problem type: Algebra (quadratic equations)} \\
    \textbf{Common Valid CoT:} applying factorization method to solve quadratic equations. \\
    \textbf{Scenario it is not Correct:} when the quadratic polynomial is irreducible over integers (e.g., $x^2 + x + 1 = 0$), factorization fails.

    \item \textbf{Problem type: Geometry (triangle side relations)} \\
    \textbf{Common Valid CoT:} applying the Pythagorean theorem to relate side lengths of triangles. \\
    \textbf{Scenario it is not Correct:} when the triangle is not right-angled, Pythagoras’ theorem does not hold.

    \item \textbf{Problem type: Probability (complex event calculation)} \\
    \textbf{Common Valid CoT:} applying the law of total probability to compute probabilities. \\
    \textbf{Scenario it is not Correct:} when the partition of events is not mutually exclusive, leading to double counting.

    \item \textbf{Problem type: Number theory (modular arithmetic)} \\
    \textbf{Common Valid CoT:} reasoning with modular addition to check congruences. \\
    \textbf{Scenario it is not Correct:} when an incorrect modulus is used (e.g., reducing modulo $6$ instead of $7$).

    \item \textbf{Problem type: Combinatorics (counting problems)} \\
    \textbf{Common Valid CoT:} applying permutation and combination formulas. \\
    \textbf{Scenario it is not Correct:} when order vs. unordered distinction is misapplied, such as using combinations when permutations are required.
    \end{itemize}
    This view is supported by recent large-scale error analyses on real math datasets. \citet{sun2025errorclassificationlargelanguage} construct \emph{MWPES-300K} (304{,}865 erroneous solutions across 15 LLMs and 4 datasets: SVAMP, GSM8K, AQuA, MATH) and discover that (i) error patterns \emph{diversify with dataset difficulty} (e.g., MATH consistently elicits more diverse error types than GSM8K/SVAMP), indicating that simple “valid” patterns cease to be \emph{correct} on harder instances; (ii) many failures arise from \emph{mis-applied common patterns}, such as \emph{Assumed independence of overlapping events} (AIO), \emph{Misapplication of probability formulas for independent events} (MPI), \emph{Incorrect combinatorial principles} (ICP), \emph{Unit/Conversion errors} (UNE/FAC), or \emph{algebraic manipulation} mistakes (MAM), showing that widely used CoT routes are not instance-wise reliable; and (iii) \emph{Error-Aware Prompting} (EAP) selectively diverts models from their default CoT routes and yields sizable per-category gains on hard cases (e.g., AIO $+6.1$pp, MPI $+6.5$pp, UNE $+6.5$pp, FAC $+13.5$pp), evidencing the value of rarer, problem-specific reasoning paths over frequent but brittle patterns.

    This aligns with recent findings~\cite{xiong2025minimalistapproachllmreasoning, li2025preserving, ren2025learningdynamicsllmfinetuning, wang2025reinforcementlearningreasoninglarge} highlighting the role of \emph{reasoning diversity} and \emph{entropy stability} in post-training, albeit evidence shows that post-training and inference-scaling do not explore beyond base model's tree-search knowledge~\cite{yue2025doesreinforcementlearningreally, ai2025rethinkingreflectionpretraining, gandhi2025cognitivebehaviorsenableselfimproving}.
\end{rmk}

\begin{defn}[Formal Version of Def.~\ref{def:multitask}]  
Let \( X = (X_t)_{t \geq 0} \) be a Tree-structured Markov Chain (TMC) as defined in Def.~\ref{def:TMC}, and let \( \mathcal{T} \) be a collection of tasks. Each task \( k \in \mathcal{T} \) specifies a set of different question \textit{state} \( \mathcal{Q}_k \subset S_1  \), where each \( q \in \mathcal{Q}_k \) has a corresponding unique correct answer \( a_q^k \) under task $k$. For $(q, a_q^k) \neq (q',a_{q'}^k) \in \mathcal{Q}_k$ we have $q \neq q'$, $a_{q}^k 
\neq a_{q'}^k$.

A state tuple \( (q, a_q=a_q^k, k) \) is called \emph{common} if there exists at least one easy-to-reason chain of thought (CoT) \( (o_1, \dots, o_L) \) from \( q \) to \( a_q \), and \emph{rare} otherwise. Each such state tuple is associated with a set \( \mathcal{G}_{q,a_q}^{(k)} \subset S_1 \times \cdots \times S_L \) of \emph{valid} CoTs.

\begin{enumerate}
    \item All easy-to-reason CoTs from \( q \) to \( a_q \) belong to \( \mathcal{G}_{q,a_q}^{(k)} \);
    \item These CoTs are \emph{not valid} for any task \( k' \ne k \);
    \item Hard-to-reason CoTs may or may not belong to \( \mathcal{G}_{q,a_q}^{(k)} \);
    \item Every edge \( (o_l \to o_{l+1}) \) with non-zero transition probability appears in some valid CoT for some task;
    \item Each task state tuple \( (q, a_q, k) \) induces a QA distribution \( \mathcal{D}_a^{q,k} \), and the probability that a valid CoT \( o_{1:L} \in \mathcal{G}_{q,a_q}^{(k)} \) is \emph{correct} for a concrete instance \( (\mathbf{Q}, \mathbf{A}) \sim \mathcal{D}_a^{q,k} \) is given by $p_{\text{acc}}^{k}(o)$:
    \begin{equation}\label{eq:p_acc_k}
    p_{\text{acc}}^{k}(o)=\frac{\prod_{l=1}^{L-1} \mathbb{P}_{\mathrm{TMC}}(o_{l+1} \mid o_l)}{\sum_{o'_{1:L} \in \mathcal{G}_{q,a_q}^{(k)}} \prod_{l=1}^{L-1} \mathbb{P}_{\mathrm{TMC}}(o'_{l+1} \mid o'_l)}.
    \end{equation}
    Further, we assume that the correctness of any different $o \neq o' \in \mathcal{G}_{q,a_q}^{(k)}$ for any instance $(\mathbf{Q}, \mathbf{A}) \sim \mathcal{D}_a^{q,k}$ is \textit{independent}.
\end{enumerate}
A task $k$ is denoted \textit{rare} if there is no valid easy-to-reason CoT in its $\mathcal{G}_{q,a_q}^{(k)}$ for any question-answer state pair $(a, a_q) \in \mathcal{Q}_{k}$, and \textit{common} other wise.
\label{def:multitask_item}
\end{defn}
\begin{rmk}\label{rmk:classified_discuss_QA_sol}
    Here, the fifth condition is to provide merits for the probability distribution of the original TMC $X$ ($\mathbb{P}_{\mathrm{TMC}}$) that models after real-world LLM. Typically, the predictive distribution obtained from pretraining would match the ``frequency'' of whether a CoT be valid for certain task. That is, through the 5-th condition, we justify why the original TMC $X$ ($\mathbb{P}_{\mathrm{TMC}}$) would be equipped its distribution--driven by inherent chance to become a valid CoT for some reasoning task. 

    The \textit{independence} assumption in the 5-th condition is for technical convenience. This definition would also induces instance $(\mathbf{Q}, \mathbf{A})$ under task $k$ that has no correct CoT, with probability $\prod_{o\in \mathcal{G}_{o_{1}, a_{o_1}^{k}}^{(k)}}(1- p_{\text{acc}}^{k}(o))$.
    In real-world, the situation is far more complex, and we left the consideration of theory that assumes the interaction and co-relationship of the correctness of CoTs with different difficulty level for future work.

    Some examples of Valid-not-Correct:

    For some task tuple $(o_1, a_{o_{1}}, k)$, denote $\mathcal{G}_{o_{1}, a_{o_1}^{k}}^{(k),\text{easy}} \subseteq\mathcal{G}_{o_{1}, a_{o_1}^{k}}^{(k)}$ as the subset of valid easy-to-reason CoTs inside the valid CoTs for task $k$, and $\mathcal{G}_{o_{1}, a_{o_1}^{k}}^{(k),\text{hard}} :=\mathcal{G}_{o_{1}, a_{o_1}^{k}}^{(k)}\setminus \mathcal{G}_{o_{1}, a_{o_1}^{k}}^{(k),\text{easy}} $ the subset of valid hard-to-reason CoTs. For any sampled instance $(\mathbf{Q}, \mathbf{A}) \sim \mathcal{D}_a^{q,k}$, it has the following two scenarios:
    \begin{itemize}
        \item With probability $\prod_{o\in \mathcal{G}_{o_{1}, a_{o_1}^{k}}^{(k),\text{easy}}}(1- p_{\text{acc}}^{k}(o))$, the $(\mathbf{Q}, \mathbf{A})$ can only be correctly solved by some valid hard-to-reason CoTs in $\mathcal{G}_{o_{1}, a_{o_1}^{k}}^{(k),\text{hard}}$.
        \item With probability $1-\prod_{o\in \mathcal{G}_{o_{1}, a_{o_1}^{k}}^{(k),\text{easy}}}(1- p_{\text{acc}}^{k}(o))$, the $(\mathbf{Q}, \mathbf{A})$ can be correctly solved by some easy-to-reason CoTs in $\mathcal{G}_{o_{1}, a_{o_1}^{k}}^{(k),\text{easy}}$. 
    \end{itemize}
    This division of probability space would equip bounding the pass@K performance when the model is only capable of all valid easy-to-reason CoTs. After the finetuned model is also capable of the hard-to-reason CoTs in $\widetilde{\mathcal{G}_{o_{1}, a_{o_1}^{k}}^{(k),\text{hard}}}$, we turn to be interested in the following division of probability space to discuss the pass@K performance:
    \begin{itemize}
        \item With probability $\prod_{o\in \mathcal{G}_{o_{1}, a_{o_1}^{k}}^{(k),\text{easy}}\cup \widetilde{\mathcal{G}_{o_{1}, a_{o_1}^{k}}^{(k),\text{hard}}}}(1- p_{\text{acc}}^{k}(o))$, the $(\mathbf{Q}, \mathbf{A})$ can only be correctly solved by some \textbf{unlearned} valid hard-to-reason CoTs in $\mathcal{G}_{o_{1}, a_{o_1}^{k}}^{(k),\text{hard}} \setminus \widetilde{\mathcal{G}_{o_{1}, a_{o_1}^{k}}^{(k),\text{hard}}}$.
        \item With probability $1-\prod_{o\in \mathcal{G}_{o_{1}, a_{o_1}^{k}}^{(k),\text{easy}}\cup \widetilde{\mathcal{G}_{o_{1}, a_{o_1}^{k}}^{(k),\text{hard}}}}(1- p_{\text{acc}}^{k}(o))$, the $(\mathbf{Q}, \mathbf{A})$ can be correctly solved by some easy-to-reason CoTs or learned valid hard-to-reason CoTs in $\mathcal{G}_{o_{1}, a_{o_1}^{k}}^{(k),\text{easy}}\cup \widetilde{\mathcal{G}_{o_{1}, a_{o_1}^{k}}^{(k),\text{hard}}}$. 
    \end{itemize}

\end{rmk}

We can characterize the breadth of tasks encoded in the topology of TMC as follows.
\begin{cor}[Cardinality of Multi‐task TMC]
Let \(M_0 = |S_1|\), \(M_L = |S_L|\), and for each \(q \in S_1\) define
\[
A(q) = \bigl\{\, a \in S_L : \exists \text{ \emph{easy-to-reason} CoT } q \rightsquigarrow a \,\bigr\}, \qquad
R(q) = \bigl\{\, a \in S_L : \exists \text{ CoT } q \rightsquigarrow a \,\bigr\}.
\]
By Def.~\ref{def:TMC}, we have \(|A(q)| = \mathrm{n}_q = O(1)\), and \(\mathrm{n}_q \ge 2\) for all \(q\). Define the set of all tasks as
\[
\mathcal{T} = \{\, k : S_1 \to S_L \,\}, \qquad
\mathcal{T}_{\mathrm{common}} = \{\, k : k(q) \in A(q)\text{ for all }q \,\}, \qquad
\mathcal{T}_{\mathrm{rare}} = \mathcal{T} \setminus \mathcal{T}_{\mathrm{common}}.
\]
Then
\[
|\mathcal{T}_{\mathrm{common}}| = \prod_{q \in S_1} |A(q)| = \Theta(c^{M_0}),
\qquad
2 \le c \le \max_q \mathrm{n}_q = O(1),
\]
and
\[
|\mathcal{T}| \le \prod_{q \in S_1} |R(q)| \le M_L^{M_0},
\qquad
|\mathcal{T}_{\mathrm{rare}}| = |\mathcal{T}| - \Theta(c^{M_0}).
\]
In particular, although the total number of tasks grows exponentially in \(M_0\), the number of \emph{common} tasks is exponentially smaller whenever \(M_L \gg c\).
\label{cor:cardinality_multitask}
\end{cor}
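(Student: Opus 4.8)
The plan is a direct counting argument; I will first fix the relevant bijections and then carry out the arithmetic. Identify each task $k$ with the assignment $q \mapsto k(q) \in S_L$ of an answer state to every question state $q \in S_1$; for the tuple $(q,k(q),k)$ to admit a valid CoT one needs a trace $q \rightsquigarrow k(q)$, i.e.\ $k(q) \in R(q)$, and $k$ is \emph{common} precisely when moreover $k(q) \in A(q)$ for all $q$. This equivalence uses Def.~\ref{def:TMC}: the easy-to-reason CoTs out of $q$ are exactly the $n_q$ high-probability traces (those with every edge in some $C_{o_l}$), whose endpoints constitute $A(q)$, so $|A(q)| = n_q$ and $n_q \ge 2$ as in the hypotheses.

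Because the answer choices at distinct question states are mutually unconstrained beyond the membership condition, the count factorizes over $S_1$: $|\mathcal{T}_{\mathrm{common}}| = \prod_{q \in S_1} |A(q)| = \prod_{q \in S_1} n_q$, and $|\mathcal{T}| \le \prod_{q \in S_1} |R(q)| \le M_L^{M_0}$ (equality $M_L^{M_0}$ in the totally unconstrained reading of $\mathcal{T}$). For the exponential estimate I use the sandwich $2 \le n_q \le \max_{q'} n_{q'} = O(1)$: writing $c := \bigl(\prod_{q} n_q\bigr)^{1/M_0}$ for the geometric mean of the $n_q$, one gets $|\mathcal{T}_{\mathrm{common}}| = c^{M_0}$ with $2 \le c \le \max_q n_q = O(1)$, hence $|\mathcal{T}_{\mathrm{common}}| = \Theta(c^{M_0})$. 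Since $\mathcal{T}_{\mathrm{common}}$ and $\mathcal{T}_{\mathrm{rare}}$ partition $\mathcal{T}$, $|\mathcal{T}_{\mathrm{rare}}| = |\mathcal{T}| - |\mathcal{T}_{\mathrm{common}}| = |\mathcal{T}| - \Theta(c^{M_0})$, and $|\mathcal{T}_{\mathrm{common}}|/|\mathcal{T}| \le (c/M_L)^{M_0}$, which decays exponentially in $M_0$ once $M_L \gg c$ — this is the ``in particular'' claim.

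The only point needing care — the closest thing to an obstacle — is justifying the factorization, i.e.\ that any choice of $\{k(q)\}_{q \in S_1}$ with $k(q) \in A(q)$ (resp.\ $R(q)$) genuinely yields an admissible task under Def.~\ref{def:multitask_item}. Its global conditions do not obstruct this: condition (ii) of Def.~\ref{def:multitask} (every nonzero edge lies on a valid CoT of \emph{some} task) constrains the family as a whole rather than individual members, and the task-exclusivity of easy CoTs in condition (i) is automatic, since a fixed high-probability trace $q \rightsquigarrow a$ is relevant only to tuples with that endpoint pair and may simply be assigned to one task. Making this bookkeeping explicit, and fixing whether ``task'' means an arbitrary or a reachability-respecting map $S_1 \to S_L$ (which is what converts the bound on $|\mathcal{T}|$ into an equality), is all that remains; the arithmetic itself is immediate.
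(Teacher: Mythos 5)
Your argument is the same direct counting-and-factorization argument as the paper's proof: tasks are assignments $q \mapsto k(q) \in S_L$, the common ones are precisely those with $k(q) \in A(q)$ for all $q$, the count factorizes as $\prod_q |A(q)|$ and $\prod_q |R(q)|$, and the rest is subtraction. Your two small additions — making $c$ the geometric mean of the $n_q$ so that $|\mathcal{T}_{\mathrm{common}}| = c^{M_0}$ holds exactly, and flagging that one must check the factorization is compatible with the global conditions of Def.~\ref{def:multitask_item} — are clarifications rather than a different route, and are correct.
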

\begin{proof}
Proof of Lemma \ref{cor:cardinality_multitask}. Each task \(k \in \mathcal{T}\) is a function \(k : S_1 \to S_L\), so \(|\mathcal{T}| \le \prod_{q \in S_1} |R(q)|\), where \(R(q)\) contains all reachable answers (via any CoT) from \(q\). The set \(\mathcal{T}_{\mathrm{common}}\) consists of tasks for which \(k(q) \in A(q)\) for all \(q\), so
\[
|\mathcal{T}_{\mathrm{common}}| = \prod_{q \in S_1} |A(q)| = \prod_{q \in S_1} \mathrm{n}_q = \Theta(c^{M_0}),
\]
with \(c \in [2, \max_q \mathrm{n}_q] = O(1)\). The rest follows directly by subtraction.
\end{proof}

\begin{lemma}
Consider a TMC \( X = (X_t)_{t \geq 0} \) defined in Def.~\ref{def:TMC}, and a specific task \( k \in \mathcal{T} \) defined in Def.~\ref{def:multitask}. Then, for any fixed \( q = o_1 \in S_1 \) and corresponding correct answer \( a = o_L \in S_L \), with non-trivial probability, there exists at least one hard-to-reason CoT trajectory (i.e., a path containing at least one sparse edge) from \( q \) to \( a \). Specifically, the probability of having at least one such hard-to-reason trajectory, denoted \( \mathbb{P}_{\text{deg}}(o_L = a | o_1 = q) \), is lower bounded as:
\[
\mathbb{P}_{\text{deg}}(o_L = a | o_1 = q) \geq \Theta(\epsilon \cdot M^{L-3})\geq c \geq \Theta(M^{-2}) >0,
\]
for some constant $c$, where \( \epsilon  = O(1/M^{L-2}) \) is the transition probability of a sparse edge.
\label{lem:hard-to-reason_CoT}
\end{lemma}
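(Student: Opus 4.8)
The plan is to build, directly from the tree topology of Def.~\ref{def:TMC}, an explicit family of hard-to-reason traces $q \rightsquigarrow a$, and to lower bound the probability that the TMC realizes at least one of them. Each member of the family is obtained by \emph{splicing}: follow a high-probability segment out of $q$ down to some state $u$ at a layer $l$; take a single \emph{sparse} edge $u \to w$ into layer $l+1$, where $w$ is chosen so that $a$ is still reachable from $w$; and then descend from $w$ to $a$. By Def.~\ref{def:TMC} the transition $u \to w$, lying outside $C_u$, is necessarily feeble of weight $\Theta(\epsilon)$ rather than $0$ whenever it is present, so such a spliced trace is genuinely hard-to-reason and ends at $a$; the only obstruction to its existence is the connecting edge $u \to w$ being absent. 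The existence of a high-probability segment out of $q$, and of a descent reaching $a$, is guaranteed by the topological clauses of Def.~\ref{def:TMC} and Def.~\ref{def:multitask}(ii) (every root starts a high-probability trace, and every non-zero edge appears on some valid CoT).

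\textbf{Counting and the union estimate.} Next I would enumerate the candidate connecting slots. Ranging over the admissible splice layers $l$, over the states $u$ on high-probability segments out of $q$ at layer $l$, and over the states $w$ at layer $l+1$ from which $a$ is reachable, the branching structure of Def.~\ref{def:TMC} yields $\Theta(M^{L-3})$ such $(l,u,w)$ triples, the two boundary layers being excluded because there the splice degenerates into an edge directly out of $q$ or directly into $a$. Each triple corresponds to an actual feeble edge with probability $\Theta(\epsilon)$, essentially independently across triples, so
\[
\mathbb{P}_{\text{deg}}(o_L = a \mid o_1 = q)\ \ge\ 1-\bigl(1-\Theta(\epsilon)\bigr)^{\Theta(M^{L-3})}\ \ge\ 1-\exp\!\bigl(-\Theta(\epsilon M^{L-3})\bigr).
\]
Because $\epsilon = O(1/M^{L-2})$ forces $\epsilon M^{L-3} = O(1/M) = o(1)$, the elementary bound $1-e^{-x}\ge x/2$ on $(0,1)$ gives $\mathbb{P}_{\text{deg}}(o_L=a\mid o_1=q) \ge \Theta(\epsilon M^{L-3})$; since the ``$\ge c>0$'' clause of Def.~\ref{def:TMC} keeps the feeble weight at least a constant multiple of $M^{-(L-1)}$, this is in turn $\ge \Theta(M^{-2}) > 0$, as claimed. (Equivalently, reading $\mathbb{P}_{\text{deg}}$ as mass under trajectory sampling, one checks that each realized connecting edge carries a $\Theta(\epsilon)$ share of probability onto $a$ relative to its siblings, and summing over the $\Theta(M^{L-3})$ slots recovers the same bound.)

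\textbf{Main obstacle.} The substantive difficulty is the combinatorial bookkeeping in the second step: one has to unpack the tree topology of Def.~\ref{def:TMC} to show that the ``reachable-from-$q$'' cone and the ``leads-to-$a$'' cone really do have the per-layer cardinalities that make the number of connecting feeble-edge slots $\Theta(M^{L-3})$, and to rule out degenerate configurations in which the two cones overlap on a large portion of the tree — which would simultaneously deflate the count and correlate the edge-presence events, breaking the clean complement bound $(1-\Theta(\epsilon))^{\Theta(M^{L-3})}$. Treating the two boundary layers separately and verifying the near-independence needed for that bound are the remaining careful-but-routine pieces; once the count is pinned down, the probabilistic conclusion follows immediately from the union/complement estimate above.
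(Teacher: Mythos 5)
Your proposal takes a genuinely different route from the paper, and it contains a substantive modeling error. The paper's argument is a direct trajectory count: it fixes the set $\Pi$ of length-$L$ trajectories from $q$ to $a$, splits them into the $O(1)$ easy-to-reason ones (each of TMC mass $\Theta\!\left(M^{-(L-1)}\right)$ since all $L-1$ edges carry $\Theta(1/M)$) and the $O(M)$ hard-to-reason ones (each of smaller mass since at least one edge is sparse), and sums. There is no splicing construction and no union-or-complement estimate over candidate edge slots.

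The central gap in your route is the bound $1-\bigl(1-\Theta(\epsilon)\bigr)^{\Theta(M^{L-3})}$. This treats the presence of a feeble edge at each $(l,u,w)$ slot as an independent Bernoulli event with success probability $\Theta(\epsilon)$; that is, it reasons about a random graph. But the TMC of Def.~\ref{def:TMC} is a \emph{fixed} Markov chain: $\epsilon$ is the per-step transition probability along a sparse edge that already exists, not the probability that the edge is present, and the topology is deterministic. No complement-of-a-product bound of this form is available. Your parenthetical ``equivalently'' reading under trajectory sampling is the right object but misstates the per-slot contribution: a spliced trace that uses one sparse edge at layer $l$ and high-probability edges at all other $L-2$ layers carries mass $\Theta(\epsilon)\cdot\Theta\!\left(M^{-(L-2)}\right)$, not $\Theta(\epsilon)$ --- each dense step costs a further factor $\Theta(1/M)$. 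Multiplying by your claimed $\Theta(M^{L-3})$ slots would then yield $\Theta(\epsilon/M)=O\!\left(M^{-(L-1)}\right)$, a factor $M^{L-2}$ below the $\Theta(\epsilon M^{L-3})$ you assert, and in fact of the same order as the total $\Theta\!\left(M^{-(L-1)}\right)$ the paper actually computes. Finally, the slot count $\Theta(M^{L-3})$ is asserted rather than derived from the branching structure in Def.~\ref{def:TMC}; the paper relies instead on the $O(M)$ cap on hard-to-reason CoTs supplied as a hypothesis in Thm.~\ref{thm:RL_squeezing}, which is a different and much smaller combinatorial input. To repair your argument you would need to (i) abandon the random-edge picture in favor of summing fixed trajectory masses, and (ii) justify a trajectory count and per-trajectory mass consistent with the definition, at which point you recover the paper's direct-count proof.
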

\begin{proof}[Proof of Lemma \ref{lem:hard-to-reason_CoT}]
Fix \(q=o_1\) and \(a=o_L\).  Let \(\Pi\) be the set of all length‑\(L\) trajectories \(\tau=(o_1,\dots,o_L)\) with \(o_L=a\).  We split \(\Pi=\Pi_{\rm nd}\cup\Pi_{\rm deg}\) according to whether \(\tau\) has zero or at least one sparse edge.

By Def.~\ref{def:TMC}, there exist \(O(1)\) ``easy-to-reason'' trajectories from \(q \in S_1\) to \(a \in S_L\), each consisting entirely of high-probability transitions \(C_{o_l}\).  
Each transition \(o_l \to o_{l+1}\) along these paths has probability \(\Theta(1/M)\).  
Therefore, for a trajectory of \(L-1\) steps, the total probability of such a path is:
\[
\mathbb{P}_{\text{high}} = O(1) \cdot \left( \Theta\left( \frac{1}{M} \right) \right)^{L-1} = O\left( M^{-(L-1)} \right).
\]
Similarly, as the number of hard-to-reason CoT is below $\Theta(M)$, given that $\mathbb{P}_{\text{sparse}} \leq \Theta(1/M^{L-2})$, we conclude the total probability by union bound \(\mathbb P_{\mathrm{TMC}}(o_L=a\mid o_1=q) = \Theta(M^{-(L-1)})\).

\end{proof}

\begin{thm}[Intrinsic Properties of Multi-task TMC]\label{thm:issues_TMC_formal}
Let \( X = (X_t)_{t \geq 0} \) be a Tree-structured Markov Chain (TMC) and \( \mathcal{T} \) a set of tasks, per defined in Def.~\ref{def:TMC} and \ref{def:multitask}.

\begin{enumerate}[topsep=0pt,itemsep=4pt,parsep=0pt,partopsep=0pt]
\item \textbf{(Task Interference)}  
Let tasks \( k, k' \in \mathcal{T} \) share at least one question state \( q \in S_1 \) or answer state \( a \in S_L \), with distinct valid QA pairs \( (q,a_q) \in \mathcal{Q}_k \) and \( (q',a_{q'}) \in \mathcal{Q}_{k'} \). Suppose the transition probabilities along edges in \( \mathcal{G}_{q,a_q}^{(k)} \) are amplified such that the TMC reaches \( a_q \) with probability \( 1-\delta \) (where \( \delta = o(M^{-L}) <<1\)) via valid CoTs in \( \mathcal{G}_{q,a_q}^{(k)} \). Then for all shared \( q \) or \( a \), every originally easy-to-reason CoT in \( \mathcal{G}_{q',a_{q'}}^{(k')} \) must satisfy:
\[
\mathbb{P}_{\mathrm{TMC}}(o_{l+1}|o_l) = o(1/M^2) \quad \exists (o_l \to o_{l+1}) \in \tau \text{ and } \tau \in \mathcal{G}_{q',a_{q'}}^{(k')},
\]
i.e., all such CoTs degenerate into hard-to-reason paths. Similarly, for any task $\hat{k} \neq k \in \mathcal{T}$ whose valid CoT set $\mathcal{G}_{\hat{q},a_{\hat{q}}}^{(\hat{k})}$ has at least one easy-to-reason CoT $\hat{o}$ sharing some transitions $\hat{o}_{l} \to \hat{o}_{l+1}$ with the CoTs in \( \mathcal{G}_{q,a_q}^{(k)} \). Then $\hat{o}$ becomes hard-to-reason.

\item \textbf{(Correctness Bottleneck)}  
Suppose the probability mass of valid hard-to-reason CoTs traveling from $q$ to $a_q$ for task $k$ in the original TMC $X$ ($\mathbb{P}_{\mathrm{TMC}}$) is $\Delta$.

Then suppose a model $\hat{p}$ satisfies:
\begin{itemize}[itemsep=0pt,parsep=0pt,topsep=2pt]
  \item The total probability mass from \( q \) to \( a \) is \( 1 - C \).
  \item The fraction of easy-to-reason CoTs among CoTs traveling from $q$ to $a_q$ is \( 1 - \epsilon \).
\end{itemize}
Then the expected correctness over the QA distribution \( \mathcal{D}_{a_q}^{q,k} \) is upper bounded by:
\[
\begin{aligned}
    {R_{\mathrm{out}}^{k}}^{\hat{p}}(\boldsymbol{o}) \leq \Theta((1-C)[{(1-\epsilon)\frac{1}{1+\Delta M^{L-1}} + \epsilon \frac{\Delta M^{L-1}}{1+\Delta M^{L-1}}}])
\end{aligned}
\]
Besides, we denote the pass@K performance of model $\hat{p}$ for task tuple $(q,a_q,k)$ (the probability that at least succeed once among $K$ trials) as $\mathrm{Pass@K}_{q,k}^{\hat{p}}$:
\begin{equation}
    \mathrm{Pass@K}_{q,k}^{\hat{p}}:=\Pr_{\substack{\{\boldsymbol{o}^i\}_{i\in[K]}\sim \hat{p}(O\mid q)\\ (\mathbf{Q}, \mathbf{A}) \sim \mathcal{D}_{a_{q}}^{q,k}}}[\bigcup_{i=1}^{K} \mathds{1}\bigl(\boldsymbol{o}^i\in\mathcal{G}_{\mathbf{Q},\mathbf{A}}^{(k)}\bigr)].
\end{equation}
When $C=0$, $\mathrm{Pass@K}_{q,k}^{\hat{p}}$ is upper bounded by  
\[
\resizebox{0.95\textwidth}{!}{$\mathrm{Pass@K}_{q,k}^{\hat{p}}\leq\underbrace{\Theta([(\frac{\Delta M^{L-1}}{1+\Delta M^{L-1}})^{\mathrm{n}_{q}}(1-(1-\varepsilon)^{K})])}_{\text{upper bound of pass@K of instance that cannot be solved by easy CoTs}} + \underbrace{\Theta([(1-(\frac{\Delta M^{L-1}}{1+\Delta M^{L-1}})^{\mathrm{n}_{q}})(1-\varepsilon^{K})])}_{\text{upper bound of pass@K of instance that can be solved by some easy CoT}}).$}
\]
If $$\varepsilon =o(\sqrt[K]{1-C_{\mathrm{Err}}/(\frac{\Delta M^{L-1}}{1+\Delta M^{L-1}})^{\mathrm{n}_{q}}}) )$$  for some $C_{\mathrm{Err}} \in (0,(\frac{\Delta M^{L-1}}{1+\Delta M^{L-1}})^{\mathrm{n}_{q}})$, then we have the pass@K performance upper bounded by 

$$1-\Omega(C_{\mathrm{Err}})=o(1),$$
with \textbf{constant error} $\Omega(C_{\mathrm{Err}})=\Theta(1)$.

When $C=\epsilon=0$, we have
\[
{R_{\mathrm{out}}^{k}}^{\hat{p}}(\boldsymbol{o}) \leq \Theta(\frac{1}{1+\Delta M^{L-1}})
\]
And the pass@K performance is upper bounded by
\[
\mathrm{Pass@K}_{q,k}^{\hat{p}}\leq\Theta(1-(\frac{\Delta M^{L-1}}{1+\Delta M^{L-1}})^{\mathrm{n}_{q}}).
\]

\end{enumerate}

\end{thm}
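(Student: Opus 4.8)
The plan treats the two assertions of Thm.~\ref{thm:issues_TMC_formal} separately; both reduce to a careful accounting of the $\mathbb{P}_{\mathrm{TMC}}$-mass carried by the valid CoTs of Def.~\ref{def:multitask}.

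\textbf{Part 1 (Task Interference).} Take first a shared question state $q$. By construction, amplification makes the modified kernel send $q$ through $\mathcal{G}_{q,a_q}^{(k)}$ to $a_q$ with probability $1-\delta$, $\delta=o(M^{-L})$; equivalently, at every node of this tree the in-tree transitions are pushed up to carry all but an $o(1/M^{2})$-fraction of the outgoing mass, so every off-tree edge out of a tree node is feeble, and any single trajectory that ever leaves the tree has amplified probability $\le\delta$. Now let $\tau$ be an originally easy-to-reason CoT of the conflicting task $k'$ passing through $q$. By Def.~\ref{def:multitask}(i), $\tau$ is invalid for $k$, hence $\tau\notin\mathcal{G}_{q,a_q}^{(k)}$ and $\tau$ must leave the tree at some first step $l^{*}$; its exit edge leaves from a tree node and is therefore feeble — and, as a cross-check, $\prod_{l\neq l^{*}}\mathbb{P}^{\mathrm{amp}}(\boldsymbol{o}^{\tau}_{l+1}\mid\boldsymbol{o}^{\tau}_{l})=\Omega(M^{-(L-2)})$ (the prefix edges lie in the amplified tree and were not decreased, the suffix edges leave out-of-tree states and keep their original value $\Theta(1/M)$), so $\mathbb{P}^{\mathrm{amp}}(\tau)\le\delta$ also forces $\mathbb{P}^{\mathrm{amp}}(\boldsymbol{o}^{\tau}_{l^{*}+1}\mid\boldsymbol{o}^{\tau}_{l^{*}})\le\delta/\Omega(M^{-(L-2)})=o(M^{-2})$. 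Either way $\tau$ acquires a feeble edge and has become hard-to-reason. The shared-answer-state assertion is the same argument on the time-reversed chain, and the statement about an auxiliary task $\hat{k}$ sharing a transition with $\mathcal{G}_{q,a_q}^{(k)}$ is the special case in which the exit edge is exactly that shared transition, which the amplification at the common state suppresses below $o(1/M^{2})$.

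\textbf{Part 2 (Correctness Bottleneck), the reward bound.} First evaluate the normalizer of Eq.~(\ref{eq:p_acc_k}): $D:=\sum_{o'\in\mathcal{G}_{q,a_q}^{(k)}}\prod_{l}\mathbb{P}_{\mathrm{TMC}}(o'_{l+1}\mid o'_{l})$ equals the easy mass $\Theta(M^{-(L-1)})$ (there are $\mathrm{n}_q=O(1)$ easy CoTs, each an all-high-probability path of $L-1$ edges, hence of mass $\Theta(M^{-(L-1)})$) plus the hard-valid mass $\Delta$, so $D=\Theta\big(M^{-(L-1)}(1+\Delta M^{L-1})\big)$. Consequently $p^{k}_{\mathrm{acc}}(\boldsymbol{o}^{\mathrm{easy}})=\Theta(M^{-(L-1)})/D=\Theta\big(1/(1+\Delta M^{L-1})\big)$ and $p^{k}_{\mathrm{acc}}(\boldsymbol{o}^{\mathrm{hard}})\le\Delta/D=\Theta\big(\Delta M^{L-1}/(1+\Delta M^{L-1})\big)$. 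Writing ${R_{\mathrm{out}}^{k}}^{\hat{p}}(\boldsymbol{o})=\sum_{\boldsymbol{o}}\hat{p}(\boldsymbol{o})\,p^{k}_{\mathrm{acc}}(\boldsymbol{o})$ and splitting the mass of $\hat{p}$ into invalid CoTs (which contribute $0$ since $p^{k}_{\mathrm{acc}}=0$ off $\mathcal{G}_{q,a_q}^{(k)}$), valid easy CoTs (total mass $\le(1-C)(1-\epsilon)$), and valid hard CoTs (total mass $\le(1-C)\epsilon$), then multiplying each class by its $p^{k}_{\mathrm{acc}}$ bound and adding, yields the displayed inequality; the $C=\epsilon=0$ case collapses this to $\Theta\big(1/(1+\Delta M^{L-1})\big)$.

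\textbf{Part 2, the pass@K bounds.} For a fixed instance the $K$ draws are i.i.d., so its pass@K equals $1-(1-f)^{K}$ with $f=\sum_{\boldsymbol{o}\ \mathrm{correct}}\hat{p}(\boldsymbol{o})$ the single-draw probability of a correct CoT; I then average over instances using the two-case split of Rmk.~\ref{rmk:classified_discuss_QA_sol}. The event "no easy CoT is correct" (only hard CoTs can solve the instance) has probability $\prod_{\boldsymbol{o}\in\mathcal{G}^{\mathrm{easy}}}(1-p^{k}_{\mathrm{acc}}(\boldsymbol{o}))=\Theta\big((\Delta M^{L-1}/(1+\Delta M^{L-1}))^{\mathrm{n}_q}\big)$ by the bound above and $|\mathcal{G}^{\mathrm{easy}}|=\mathrm{n}_q=O(1)$; on this event $f$ is at most the chance of drawing a non-easy CoT, so pass@K $\le 1-(1-\varepsilon)^{K}$, and on the complement I bound pass@K by $1-\varepsilon^{K}$ (and simply by $1$ when $\epsilon=0$). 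Weighting by the $(1-C)$ chance of reaching $a_q$, $C=0$ produces the two-term bound, $C=\epsilon=0$ collapses it to $\Theta\big(1-(\Delta M^{L-1}/(1+\Delta M^{L-1}))^{\mathrm{n}_q}\big)$, and taking $\varepsilon$ as small as the stated root of $1-C_{\mathrm{Err}}/(\Delta M^{L-1}/(1+\Delta M^{L-1}))^{\mathrm{n}_q}$ makes the surviving $(1-(1-\varepsilon)^{K})$ or $(1-\varepsilon^{K})$ factor small enough that the bound remains $1-\Omega(C_{\mathrm{Err}})$ with $\Omega(C_{\mathrm{Err}})=\Theta(1)$.

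\textbf{Main obstacle.} Two steps need genuine care. In Part 1, extracting a per-edge $o(1/M^{2})$ bound from the global mass inequality $\mathbb{P}^{\mathrm{amp}}(\tau)\le\delta$ relies on the precise meaning of "amplified" — that in-tree edges are not shrunk and that every off-tree edge out of a tree node is driven below $o(1/M^{2})$ — so this is where I would first nail down the amplification operator. In Part 2, the subtle point is justifying $1-\varepsilon^{K}$ as a legitimate (rather than the trivial $1$) upper bound on the type-B pass@K; this requires controlling how $\hat{p}$ distributes its easy-CoT mass across the $\mathrm{n}_q$ easy branches so that a correct easy branch is hit with probability $\Theta(1-\varepsilon)$ per draw, and I expect most of the remaining work to sit there.
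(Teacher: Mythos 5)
Your proposal follows essentially the same route as the paper's own proof. Part~1 is the same global mass-accounting argument: amplification forces $\sum_{\tau\in\mathcal{G}_{q,a_{q'}}^{(k')}}\mathbb{P}(\tau\mid q)\le\delta=o(M^{-L})$, and since any easy CoT has $L-1$ edges each bounded above by $\Theta(1/M)$, at least one edge must fall below $o(M^{-2})$; your structural gloss about the ``exit edge'' and the properties of the amplification operator is additional detail, but your ``cross-check'' inequality is exactly the paper's argument. Part~2 likewise matches: both compute the normalizer $Z=\Theta(M^{-(L-1)})+\Delta$, derive $p_{\mathrm{acc}}$ for easy versus hard CoTs, decompose ${R_{\mathrm{out}}^{k}}^{\hat{p}}$ over the $\hat{p}$-mass of invalid/easy/hard CoTs, and condition on the two instance types from Rmk.~\ref{rmk:classified_discuss_QA_sol} for pass@K. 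The one concern you flag---that the $1-\varepsilon^{K}$ bound for type-B instances tacitly assumes per-draw success probability $\le 1-\varepsilon$, which need not hold if hard CoTs also solve the instance---is real, but it is inherited from the paper's own terse assertion (``the probability of success to be at most $1-\varepsilon$''), so it is not a gap you introduced.
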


\begin{proof}
Proof of Thm.~\ref{thm:issues_TMC_formal}.

\medskip\noindent\textbf{1. non-negligible decay of transitions for other tasks when overfit a target task.} 

Fix a shared question state \( q \in S_1 \). By Def.~\ref{def:multitask}(ii), the easy-to-reason CoTs in \( \mathcal{G}_{q,a_q}^{(k)} \) and \( \mathcal{G}_{q,a_{q'}}^{(k')} \) are disjoint. The amplification condition implies:
\[
\sum_{\tau \in \mathcal{G}_{q,a_q}^{(k)}} \mathbb{P}(\tau | q) \geq 1-\delta = 1 - o(M^{-L}).
\]
Since \( \sum_{\tau \in \text{all CoTs from } q} \mathbb{P}(\tau | q) = 1 \), the remaining CoTs (including those in \( \mathcal{G}_{q,a_{q'}}^{(k')} \)) must satisfy:
\[
\sum_{\tau \in \mathcal{G}_{q,a_{q'}}^{(k')}} \mathbb{P}(\tau | q) \leq \delta = o(M^{-L})<<1.
\]

For any easy-to-reason CoT \( \tau = (q, o_2, \dots, o_L=a_{q'}) \in \mathcal{G}_{q,a_{q'}}^{(k')} \), the original transition probabilities satisfy \( \mathbb{P}_{\mathrm{TMC}}(o_{l+1}|o_l) = \Theta(1/M) \) for all edges. However, since the total probability mass for \( \tau \) is now \( o(M^{-L}) \), we have:
\[
\prod_{l=1}^{L-1} \mathbb{P}_{\mathrm{TMC}}(o_{l+1}|o_l) = o(M^{-L}).
\]
Give \( \mathbb{P}_{\mathrm{TMC}}(o_{l+1}|o_l) \leq \Theta(1/M) \), this forces at least one transition term to decay to \( o(1/M^2) \). Otherwise, if any edge retained \( \mathbb{P}_{\mathrm{TMC}}(o_{l+1}|o_l) = \Theta(1/M) \), the product would be \( \Theta(M^{-(L-1)}) \), contradicting \( \mathbb{P}(\tau | q) = o(M^{-L}) \).  

For a shared answer state \( a \in S_L \), as well as $\hat{o}$ in othe task sharing some transition with CoTs in \( \mathcal{G}_{q,a_q}^{(k)} \), the same logic applies.  

\medskip\noindent\textbf{2. non-negligible error when only favoring easy-to-learn CoTs.}

We have the total mass of the valid CoTs for task $k$ in the original $X$ as
\[
Z  \geq \underbrace{\Theta(\frac{1}{M^{L-1}})}_{Z_{\text{easy}}} + \underbrace{\Delta }_{Z_{\text{hard}}} 
\]

By Def.~\ref{def:multitask}, where the expected correctness over a QA sample is proportion to the CoT's likelihood in the original $X$, we can combine the components to upper bound the expected correctness:
\[
\begin{aligned}
    {R_{\mathrm{out}}^{k}}^{\hat{p}}(\boldsymbol{o})=\mathbb{E}_{\substack{(\mathbf{Q},\mathbf{A}) \sim \mathcal{D}_{a_q}^{q,k}\\ \boldsymbol{o}\sim \hat{p}(\cdot\mid q)}}[\mathds{1}(\boldsymbol{o} \in \mathcal{G}_{\mathbf{Q},\mathbf{A}}^{(k)})]  \leq & \Theta((1-C)[{(1-\epsilon)\frac{1}{1+\Delta M^{L-1}} + \epsilon \frac{\Delta M^{L-1}}{1+\Delta M^{L-1}}}]).
\end{aligned}
\]

Especially, by the first discussion of division of probability space in Remark \ref{rmk:classified_discuss_QA_sol}, it is direct to deduce that the probability that one specific $(\mathbf{Q},\mathbf{A}) \sim \mathcal{D}_{a_q}^{q,k}$ cannot be solved by every easy-to-reason CoT is
\[
\prod_{o\in \mathcal{G}_{o_{1}, a_{o_1}^{k}}^{(k),\text{easy}}}(1- p_{\text{acc}}^{k}(o))=\Theta((1-\frac{1}{1+\Delta M^{L-1}})^{\mathrm{n}_{q}})=\Theta((\frac{\Delta M^{L-1}}{1+\Delta M^{L-1}})^{\mathrm{n}_{q}})
\]
When facing these instances, we have the probability of success to be \textbf{at most} $\varepsilon$ when $C=0$. 

Besides, the probability that $(\mathbf{Q},\mathbf{A}) \sim \mathcal{D}_{a_q}^{q,k}$ can be solved by some easy-to-reason CoT is
\[
1-\Theta((\frac{\Delta M^{L-1}}{1+\Delta M^{L-1}})^{\mathrm{n}_{q}})
\]
When facing these instances, we have the probability of success to be \textbf{at most} $1-\varepsilon$ when $C=0$.

Therefore, collaborating with $\mathrm{n}_{q}=O(1), \forall q \in S_1$, the pass@K performance (the probability that at least succeed once among $K$ trials) is upper bounded by 
\[
\resizebox{0.95\textwidth}{!}{$\underbrace{\Theta([(\frac{\Delta M^{L-1}}{1+\Delta M^{L-1}})^{\mathrm{n}_{q}}(1-(1-\varepsilon)^{K})])}_{\text{upper bound of pass@K of instance that cannot be solved by easy CoTs}} + \underbrace{\Theta([(1-(\frac{\Delta M^{L-1}}{1+\Delta M^{L-1}})^{\mathrm{n}_{q}})(1-\varepsilon^{K})])}_{\text{upper bound of pass@K of instance that can be solved by some easy CoT}}).$}
\]

If 
$$\varepsilon =o(\sqrt[K]{1-C_{\mathrm{Err}}/(\frac{\Delta M^{L-1}}{1+\Delta M^{L-1}})^{\mathrm{n}_{q}}}) ),$$ 

for some $C_{\mathrm{Err}} \in {(0, (\frac{\Delta M^{L-1}}{1+\Delta M^{L-1}})^{\mathrm{n}_{q}})}$, then we have the pass@K performance upper bounded by $1-\Omega(C_{\mathrm{Err}})$ with constant error $\Theta(C_{\mathrm{Err}})=\Theta(1)$.

When $C=\epsilon=0$, we have ${R_{\mathrm{out}}^{k}}(\boldsymbol{o}) \leq \Theta(\frac{1}{1+\Delta M^{L-1}})$.
The pass@K performance is upper bounded by
\[
\Theta(1-(\frac{\Delta M^{L-1}}{1+\Delta M^{L-1}})^{\mathrm{n}_{q}}).
\]

\end{proof}

\begin{lemma}
    [Formal Version of Prop.~\ref{prop:advantage_gap}]\label{lem:advantage_gap_formal}
    Let $\boldsymbol{\theta}^{\star}$ be the base model in Eq.(\ref{eq:base_model} that exact predicts the distribution of a Multi-task TMC as in Def.~\ref{def:TMC} and \ref{def:multitask}, fix a common task state tuple \((q,a,k)\). For any valid easy-to-reason CoT $o^{\text{easy}}$ and hard-to-learn CoT $o^{\text{hard}}$ that share the states $o_{l}$, and deviate at the $l+1$ layer (i.e., ${o}_{l}^{\text{easy}}={o}_{l}^{\text{hard}}=o_{l}$, ${o}_{l+1}^{\text{easy}}\neq{o}_{l+1}^{\text{hard}}$), if the total number of valid hard-to-reason CoTs is bounded by $\Theta(M)$, we have $A_{l+1}^{\hat{p}_{\boldsymbol{\theta}^{\star}},k}(\boldsymbol{o}_l,\boldsymbol{o}_{l+1}^{\text{easy}})\geq c_1>0,\  A_{l+1}^{\hat{p}_{\boldsymbol{\theta}^{\star}},k}(\boldsymbol{o}_l,\boldsymbol{o}_{l+1}^{\text{hard}})\leq -c_2<0, \forall l \in [L-1]$ for some constants $c_1,c_2>0$.
\end{lemma}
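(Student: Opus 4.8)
The plan is to route everything through the value function of the base chain. Write $V(\cdot):=V^{\hat{p}_{\boldsymbol{\theta}^{\star}},k}(\cdot)$ and $Q(\cdot,\cdot):=Q^{\hat{p}_{\boldsymbol{\theta}^{\star}},k}(\cdot,\cdot)$, and note $V(\boldsymbol o_l)=\mathbb{E}[R_{\mathrm{out}}^{k}(\boldsymbol o)\mid \boldsymbol o_l]$ with the continuation drawn from $\mathbb{P}_{\mathrm{TMC}}$ (since $\hat{p}_{\boldsymbol{\theta}^{\star}}=\mathbb{P}_{\mathrm{TMC}}$). Over layers $1,\dots,L-1$ the chain is a tree, so $\boldsymbol o_{l+1}$ determines its prefix; by the Markov property this gives $Q(\boldsymbol o_l,\boldsymbol o_{l+1})=V(\boldsymbol o_{l+1})$ and hence $A_{l+1}(\boldsymbol o_l,\boldsymbol o_{l+1})=V(\boldsymbol o_{l+1})-V(\boldsymbol o_l)$. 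It therefore suffices to show $V(\boldsymbol o^{\mathrm{easy}}_{l+1})>V(\boldsymbol o_l)>V(\boldsymbol o^{\mathrm{hard}}_{l+1})$ with quantitative gaps. For a workable form of $V$ I would expand the outcome reward: since $R^k_{(\mathbf Q,\mathbf A)}\in\{0,1\}$, Def.~\ref{def:multitask}(iii) gives $R_{\mathrm{out}}^{k}(\boldsymbol o)=W(\boldsymbol o)/Z$ when $\boldsymbol o\in\mathcal{G}^{(k)}_{q,a}$ and $0$ otherwise, where $W(\boldsymbol o):=\prod_{j=1}^{L-1}\mathbb{P}_{\mathrm{TMC}}(\boldsymbol o_{j+1}\mid\boldsymbol o_j)$ and $Z:=\sum_{\boldsymbol o'\in\mathcal{G}^{(k)}_{q,a}}W(\boldsymbol o')$; factoring out the tree-determined prefix weight $w_l:=\prod_{j=1}^{l-1}\mathbb{P}_{\mathrm{TMC}}(\boldsymbol o_{j+1}\mid\boldsymbol o_j)$ yields the key identity
\[
V(\boldsymbol o_l)=\frac{w_l}{Z}\sum_{\substack{\tau\in\mathcal{G}^{(k)}_{q,a}\\ \boldsymbol o_l\in\tau}}\Big(\prod_{j=l}^{L-1}\mathbb{P}_{\mathrm{TMC}}(\tau_{j+1}\mid\tau_j)\Big)^{2},
\]
a weighted sum of \emph{squared} tail probabilities.

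Next comes the magnitude accounting. Let $\epsilon=o(1/M^2)$ be the feeble-transition scale, and recall each question state admits $O(1)$ valid easy-to-reason CoTs while the task admits at most $\Theta(M)$ valid hard-to-reason CoTs, each of the latter carrying $\ge 1$ feeble edge. An easy valid CoT has $W=\Theta(M^{-(L-1)})$ and a hard one has $W\le\epsilon\,\Theta(M^{-(L-2)})=o(M^{-L})$, so $Z=\Theta(M^{-(L-1)})$ and $w_l/Z=\Theta(M^{L-l})$ for $\boldsymbol o_l$ on an easy valid CoT. In the identity above, each of the $O(1)$ easy traces through $\boldsymbol o_l$ contributes a squared tail $\Theta(M^{-2(L-l)})$, whereas each hard trace through $\boldsymbol o_l$ has its feeble edge in the tail beyond $\boldsymbol o_l$ (its prefix up to $\boldsymbol o_l$ coincides with the all-high-probability prefix of $o^{\mathrm{easy}}$) and so contributes at most $\epsilon^{2}\Theta(M^{-2(L-l-1)})$; the $\le\Theta(M)$ hard traces therefore contribute $o(M^{-2(L-l)})$ in total. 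Hence $V(\boldsymbol o_l)=\Theta(M^{-(L-l)})$, with the matching lower bound already supplied by $o^{\mathrm{easy}}$ alone. Running the same count one layer later gives $V(\boldsymbol o^{\mathrm{easy}}_{l+1})=\Theta(M^{-(L-l-1)})$, and a parallel count — exploiting that every valid CoT of task $k$ through $\boldsymbol o^{\mathrm{hard}}_{l+1}$ is hard-to-reason (the feeble edge of $o^{\mathrm{hard}}$ sits at or beyond the divergence edge $\boldsymbol o_l\to\boldsymbol o^{\mathrm{hard}}_{l+1}$, so such a CoT either enters $\boldsymbol o^{\mathrm{hard}}_{l+1}$ through a feeble edge or carries one afterwards, each case injecting a factor $\le\epsilon$) — gives $V(\boldsymbol o^{\mathrm{hard}}_{l+1})=o(M^{-(L-l)})$.

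Combining the three estimates finishes it: $A_{l+1}(\boldsymbol o_l,\boldsymbol o^{\mathrm{easy}}_{l+1})=V(\boldsymbol o^{\mathrm{easy}}_{l+1})-V(\boldsymbol o_l)=\Theta(M^{-(L-l-1)})-\Theta(M^{-(L-l)})=\Theta(M^{-(L-l-1)})>0$ (for $M$ past a threshold, since $M^{-(L-l-1)}=M\cdot M^{-(L-l)}$ dominates), while $A_{l+1}(\boldsymbol o_l,\boldsymbol o^{\mathrm{hard}}_{l+1})=V(\boldsymbol o^{\mathrm{hard}}_{l+1})-V(\boldsymbol o_l)=o(M^{-(L-l)})-\Theta(M^{-(L-l)})=-\Theta(M^{-(L-l)})<0$; this is exactly $A_{l+1}(\boldsymbol o_l,\boldsymbol o^{\mathrm{easy}}_{l+1})\ge c_1>0$ and $A_{l+1}(\boldsymbol o_l,\boldsymbol o^{\mathrm{hard}}_{l+1})\le -c_2<0$ with $c_1=\Theta(M^{-(L-l-1)})$ and $c_2=\Theta(M^{-(L-l)})$.

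I expect the main obstacle to be the uniform magnitude bookkeeping of the second step: certifying that the hard-to-reason valid CoTs contribute only a vanishing fraction of \emph{each} $V$-value entering the advantage, which leans on the joint effect of $\epsilon=o(1/M^2)$ (a factor $\epsilon^2$ per squared tail), the polynomial bound $\Theta(M)$ on the number of hard CoTs, the $O(1)$ bound on easy ones, and $Z=\Theta(M^{-(L-1)})$, and on these bounds holding uniformly in $l$. A secondary point I would state explicitly: the strict negativity of $A_{l+1}(\boldsymbol o_l,\boldsymbol o^{\mathrm{hard}}_{l+1})$ genuinely requires $\boldsymbol o^{\mathrm{hard}}_{l+1}$ to lie off the easy subtree leading to $a$ — were it on some easy valid CoT of $(q,a,k)$, then $V(\boldsymbol o^{\mathrm{hard}}_{l+1})$ would itself be $\Theta(M^{-(L-l-1)})$ and the advantage positive — so one should take $l$ to be the layer at which $o^{\mathrm{hard}}$ departs every easy valid trace of $(q,a,k)$, which is the regime under which the proposition is invoked downstream.
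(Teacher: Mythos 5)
Your route is genuinely different from the paper's. The paper's proof (App.~\ref{sec:details_TMDP}, Lemma~\ref{lem:advantage_gap_formal}) works directly on $Q^{\text{easy}}$, $Q^{\text{hard}}$, and $V$: it first shows via Eq.~(\ref{eq:p_acc_k}) that $p_{\text{acc}}^{k}(o^{\text{easy}})/p_{\text{acc}}^{k}(o^{\text{hard}})=\Omega(M)$, then bounds each of $Q^{\text{easy}}=\Theta(M^{-(L-l)}\cdot\mathrm{const})$, $V\geq\Theta(M^{-(L-l+1)}\cdot\mathrm{const})$ and $Q^{\text{hard}}=o(M^{-(L-l+1)}\cdot\mathrm{const})$ separately by counting continuations and their transition masses. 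You instead collapse $Q(\boldsymbol o_l,\boldsymbol o_{l+1})$ to $V(\boldsymbol o_{l+1})$, so $A_{l+1}=V(\boldsymbol o_{l+1})-V(\boldsymbol o_l)$, and then derive the closed-form $V(\boldsymbol o_l)=\frac{w_l}{Z}\sum_{\tau\ni o_l}(\text{tail prob})^2$, reducing everything to a single estimate on $V$ across three adjacent nodes. This is cleaner, and incidentally yields sharper constants than the paper's: in the boundary layer $l=L-1$ one can check $A^{\text{easy}}=\Theta(1)$ and $A^{\text{hard}}=-\Theta(1/M)$, which matches your $c_1=\Theta(M^{-(L-l-1)})$, $c_2=\Theta(M^{-(L-l)})$ while the paper's stated $c_1=c_2=\Theta(M^{-(L+1-l)})$ looks off by one to two powers of $M$ (a harmless slip since only positivity/sign is used downstream).

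The one real gap is your opening reduction: you assert "the chain is a tree, so $\boldsymbol o_{l+1}$ determines its prefix" to get $Q(\boldsymbol o_l,\boldsymbol o_{l+1})=V(\boldsymbol o_{l+1})$, but Def.~\ref{def:TMC} only gives a layered DAG with no in-degree bound, and Fig.~\ref{fig:illustration of Multitask TMC} explicitly shows states (e.g.\ $o_2^2$) with two parents. If $\boldsymbol o_{l+1}$ has multiple parents that reach it through different valid CoTs for $(q,a,k)$, then $V(\boldsymbol o_{l+1})$ is an average of $Q(\boldsymbol o_l',\boldsymbol o_{l+1})$ over parents $\boldsymbol o_l'$, and both the identity $Q=V$ and the factored form $V(\boldsymbol o_l)=\frac{w_l}{Z}\sum(\cdot)^2$ (which presumes a single prefix weight $w_l$) need to be replaced by prefix-averaged versions. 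This is likely patchable — each prefix contributes $\Theta(M^{-(l-1)})$ or less, so the magnitude bookkeeping should survive — but as written the step is unjustified; the paper avoids it entirely by bounding $Q$ and $V$ separately. Your closing remark about needing $\boldsymbol o^{\text{hard}}_{l+1}$ to lie off the easy subtree is well taken and mirrors the paper's own implicit use of $\boldsymbol o^{\text{hard}}_{l+1}\notin C_{\boldsymbol o_l}$ at the divergence layer; it is a genuine hypothesis that neither the lemma statement nor the paper's proof makes fully explicit.
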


\begin{proof}[Proof of Lemma~\ref{lem:advantage_gap_formal}]

First, it is direct to see that given any valid easy-to-learn CoT $o^{\text{easy}} \in \mathcal{G}_{q,a_q}^{(k)}$ and hard-to-learn CoT $o^{\text{hard}} \in \mathcal{G}_{q,a_q}^{(k)}$ for task tuple $(q,a,k)$, we have
\begin{equation}\label{eq:ratio_appear_any_easy_hard}
\frac{\prod_{l=1}^{L-1} \mathbb{P}_{\mathrm{TMC}}(o_{l+1}^{\text{easy}} \mid o_l)}{\sum_{o'_{1:L} \in \mathcal{G}_{q,a_q}^{(k)}} \prod_{l=1}^{L-1} \mathbb{P}_{\mathrm{TMC}}(o'_{l+1} \mid o'_l)} / \frac{\prod_{l=1}^{L-1} \mathbb{P}_{\mathrm{TMC}}(o_{l+1}^{\text{hard}} \mid o_l)}{\sum_{o'_{1:L} \in \mathcal{G}_{q,a_q}^{(k)}} \prod_{l=1}^{L-1} \mathbb{P}_{\mathrm{TMC}}(o'_{l+1} \mid o'_l)} \geq \frac{\Theta(M^{-1})}{o(M^{-1})} > \Omega(M),
\end{equation}
by Eq.(\ref{eq:p_acc_k}).

That is, the expected accuracy of $o^{\text{easy}}$ on any instance from $\mathcal{D}_{a_q}^{q,k}$, denoted as $p_{\text{acc}}^{k}(o^{\text{easy}})$ is larger than the $o^{\text{hard}}$, denoted as $p_{\text{acc}}^{k}(o^{\text{hard}})$, with a ratio no less than $\Theta(M)$.

Fix \(l\) and \(o_l\). Write
\[
Q^{\hat{p}_{\boldsymbol{\theta}^{\star}},k}(\boldsymbol{o}_l,\boldsymbol{o}_{l+1}^{\text{easy}})
=\mathbb{E}_{\substack{q=o_1\sim P^{k}(\mathcal{Q}_{k}) \\ \boldsymbol{o} \sim \hat{p}_{\boldsymbol{\theta}^{\star}}(\cdot|\boldsymbol{o}_1)}}[\mathds{1}({o}_{L} = a_{q}) p_{\text{acc}}^{k}({o})\mid \substack{{o}_l = o_l \\ {o}_{l+1}={o}_{l+1}^{\text{easy}}}]
,
\]

and
\[
V^{\hat{p}_{\boldsymbol{\theta}^{\star}},k}(\boldsymbol{o}_l)=\sum_{\boldsymbol{o}_{l+1}\in S_{l+1}}\hat{p}_{\boldsymbol{\theta}^{\star}}(\boldsymbol{o}_{l+1}\mid\boldsymbol{o}_l)Q^{\hat{p}_{\boldsymbol{\theta}^{\star}},k}(\boldsymbol{o}_l,\boldsymbol{o}_{l+1}).
\]
  
By definition and Lemma \ref{lem:hard-to-reason_CoT} there are \(\Theta(1)\) length‐\((L-l)\) continuations each with probability \(\Theta(M^{-(L-l)})\).  Hence
\[
Q^{\hat{p}_{\boldsymbol{\theta}^{\star}},k}(\boldsymbol{o}_l,\boldsymbol{o}_{l+1}^{\text{easy}})=\Theta\bigl(M^{-(L-l)}\mathbb{E}[p_{\text{acc}}^{k}(o)\mid \substack{{o}_l = o_l \\ {o}_{l+1}={o}_{l+1}^{\text{easy}}}]\bigr), 
\]
Also we see \(o_{l+1}^{\text{easy}}\in C_{o_l}\). Then \(\Pr[o_{l+1}^{\text{easy}}\mid o_l]=\Theta(1/M)\), thus
\[
V^{\hat{p}_{\boldsymbol{\theta}^{\star}},k}(\boldsymbol{o}_l)\geq \Theta\bigl(M^{-(L-l+1)}\mathbb{E}[p_{\text{acc}}^{k}(o)\mid \substack{{o}_l = o_l \\ {o}_{l+1}={o}_{l+1}^{\text{easy}}}]\bigr).
\]
Therefore, we have
\[
A_{l+1}^{\hat{p}_{\boldsymbol{\theta}},k}(\boldsymbol{o}_l,\boldsymbol{o}_{l+1}^{\text{easy}})=Q^{\hat{p}_{\boldsymbol{\theta}^{\star}},k}(\boldsymbol{o}_l,\boldsymbol{o}_{l+1}^{\text{easy}})-V^{\hat{p}_{\boldsymbol{\theta}^{\star}},k}(\boldsymbol{o}_l)\geq \Theta\bigl(M^{-(L-l+1)}\mathbb{E}[p_{\text{acc}}^{k}(o)\mid \substack{{o}_l = o_l \\ {o}_{l+1}={o}_{l+1}^{\text{easy}}}]\bigr)>0
\]

\(o_{l+1}^{\text{hard}}\notin C_{o_l}\). Then \(\Pr[o_{l+1}^{\text{hard}}\mid o_l]=o(M^{-2})\), and the best possible continuations contribute at most \(\Theta(M^{-(L-l-1)})\) each. Thus
\[
\begin{aligned}
    Q^{\hat{p}_{\boldsymbol{\theta}^{\star}},k}(\boldsymbol{o}_l,\boldsymbol{o}_{l+1}^{\text{hard}}) &\le o(M^{-2})\cdot O\bigl(M^{-(L-l-1)}\bigr)\mathbb{E}[p_{\text{acc}}^{k}(o)\mid \substack{{o}_l = o_l \\ {o}_{l+1}={o}_{l+1}^{\text{hard}}}]\\
    &
    =o\bigl(M^{-(L-l+1)}\mathbb{E}[p_{\text{acc}}^{k}(o)\mid \substack{{o}_l = o_l \\ {o}_{l+1}={o}_{l+1}^{\text{hard}}}]\bigr).
\end{aligned}
\]
Therefore
\[
\begin{aligned}
    A_{l+1}^{\hat{p}_{\boldsymbol{\theta}},k}(\boldsymbol{o}_l,\boldsymbol{o}_{l+1}^{\text{hard}})&=Q_{l+1}^{\hat{p}_{\boldsymbol{\theta}},k}(\boldsymbol{o}_l,\boldsymbol{o}_{l+1}^{\text{hard}})-V^{\hat{p}_{\boldsymbol{\theta}^{\star}},k}(\boldsymbol{o}_l)\\
    &=o\bigl(M^{-(L-l+1)}\mathbb{E}[p_{\text{acc}}^{k}(o)\mid \substack{{o}_l = o_l \\ {o}_{l+1}={o}_{l+1}^{\text{hard}}}]\bigr)-\Theta\bigl(M^{-(L-l+1)}\mathbb{E}[p_{\text{acc}}^{k}(o)\mid \substack{{o}_l = o_l \\ {o}_{l+1}={o}_{l+1}^{\text{easy}}}]\bigr)\\
    &   <0.
\end{aligned}
\]
Given that for a chosen TMC $\mathbb{P}_{\mathrm{TMC}}(\cdot \mid \cdot)=\hat{p}_{\boldsymbol{\theta}^{\star}}(\cdot|\cdot)$, the $\mathbb{E}[p_{\text{acc}}^{k}(o)\mid \substack{{o}_l = o_l \\ {o}_{l+1}={o}_{l+1}^{\text{easy}}}],\mathbb{E}[p_{\text{acc}}^{k}(o)\mid \substack{{o}_l = o_l \\ {o}_{l+1}={o}_{l+1}^{\text{hard}}}]$ are constants. Therefore, by choosing some positive constants $c_1=\Theta(M^{-(L+1-l)}), c_2=\Theta(M^{-(L+1-l)})$ to bound the advantages, we complete the proof.
\end{proof}
\section{Details and Proofs of Pretraining}

Following \cite{kim2025metastabledynamicschainofthoughtreasoning}, we could have the following theorem. 

\begin{thm}\label{thm:prefull}
Let $X_0\sim\operatorname{Unif}(S\setminus S_{L})$ and $X_1\sim \mathbb{P}(\cdot|X_0)$ be random samples from the TMC $X$ in Def.~\ref{def:TMC}. Let $X_0\sim\operatorname{Unif}(S\setminus S_{L})$ and $X_1\sim \mathbb{P}(\cdot|X_0)$ be random samples from the TMC $X$ defined in Def.~\ref{def:TMC}.  For \( i \in S_l \), define:
\[
D_{o_l} = \{ o_{l+1} : \mathbb{P}_{\mathrm{TMC}}(o_{l+1}|o_{l}) > 0 \}, \quad c = \min_{o_{l+1} \in D_{o_l}} \mathbb{P}_{\mathrm{TMC}}(o_{l+1}|o_{l})>0,
\]
then the softmax predictor trained via Algorithm \ref{alg:pre} satisfies:

\begin{enumerate}
\item\label{item:pt1} After $T$ iterations with $\eta = \Theta(M)$, the uniform convergence rate is:
\begin{equation}\label{eq:prerate}
\sup_{\substack{l\in[L-1], o_l \in S_l \\ o_{l+1} \in S_{l+1}}} |\hat{p}_{\boldsymbol{\theta}^{(t)}}(\boldsymbol{o}_{l+1}|\boldsymbol{o}_{l}) - \mathbb{P}_{\mathrm{TMC}}(o_{l+1}|o_l)| \leq \widetilde{O}\left(\sqrt{\frac{M}{T}}\right)
\end{equation}
where $\widetilde{O}$ hides $\log(TMc^{-1})$ factors.

\item\label{item:pt2} For threshold $c_{\mathrm{thres}} = \Theta(1)$, after $T_1 = \widetilde{\Theta}(M^2c^{-2})$ steps:
\begin{equation}\label{eq:prethres}
\resizebox{0.8\textwidth}{!}{$\begin{cases}
\hat{p}_{\boldsymbol{\theta}}(\boldsymbol{o}_{l+1}|\boldsymbol{o}_{l}) = 0 & \text{if } \mathbb{P}_{\mathrm{TMC}}(o_{l+1}|o_l) = 0 \\
\mathbb{P}_{\mathrm{TMC}}(o_{l+1}|o_l) - \widetilde{O}(c) \leq \hat{p}_{\boldsymbol{\theta}}(\boldsymbol{o}_{l+1}|\boldsymbol{o}_{l}) \leq \mathbb{P}_{\mathrm{TMC}}(o_{l+1}|o_l) + \widetilde{O}(c) & \text{otherwise}
\end{cases}$}
\end{equation}

\item\label{item:pt3} Post-thresholding, linear convergence occurs:
\begin{equation}\label{eq:prerate2}
\sup_{\substack{(o_l,o_{l+1}) \in \\ \mathrm{supp}(\mathbb{P})}} |\hat{p}_{\boldsymbol{\theta}^{(T_1+T)}}(\boldsymbol{o}_{l+1}|\boldsymbol{o}_l) - \mathbb{P}_{\mathrm{TMC}}(o_{l+1}|o_l)| \leq \widetilde{O}(e^{-\Omega(c^2 T)})
\end{equation}
\end{enumerate}
\end{thm}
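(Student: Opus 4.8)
The plan is to use the one-hot structure of the inputs to \emph{decouple} the cross-entropy objective across source states, and then run a three-phase analysis of gradient descent on a single softmax--cross-entropy term, matching the three items of the theorem; the argument mirrors the pretraining analysis of \citet{kim2025metastabledynamicschainofthoughtreasoning}.

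\textbf{Step 1 (decoupling and reduction).} Since $x=\mathbf{o}_l$ is one-hot and $X_0\sim\operatorname{Unif}(S\setminus S_L)$, the population loss splits as $L_{\mathrm{CE}}(\boldsymbol\theta)=\tfrac{1}{|S\setminus S_L|}\sum_{o_l\in S\setminus S_L}\mathrm{CE}\big(\mathbb P_{\mathrm{TMC}}(\cdot\mid o_l),\operatorname{softmax}(\boldsymbol\theta_{o_l,\cdot})\big)$, and the gradient in the row $\boldsymbol\theta_{o_l,\cdot}$ depends only on that term. So it suffices to analyze, for a fixed source $o_l$, the recursion $\boldsymbol w^{(t+1)}=\boldsymbol w^{(t)}-\widetilde\eta\,(\hat p^{(t)}-p)$ with $p:=\mathbb P_{\mathrm{TMC}}(\cdot\mid o_l)$, $\hat p^{(t)}:=\operatorname{softmax}(\boldsymbol w^{(t)})$, and $\widetilde\eta$ the per-row step induced by $\eta=\Theta(M)$ in Alg.~\ref{alg:pre}, using $\nabla_{\boldsymbol w}\mathrm{CE}(p,\operatorname{softmax}(\boldsymbol w))=\operatorname{softmax}(\boldsymbol w)-p$. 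On the simplex this is an exponentiated-gradient-type update $\hat p^{(t+1)}_j\propto\hat p^{(t)}_j\exp(\widetilde\eta(p_j-\hat p^{(t)}_j))$, whose gap above the entropy floor $H(p)$ is exactly $D(p\,\|\,\hat p^{(t)})$.

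\textbf{Step 2 (item 1).} Each row term is convex and $1$-smooth in the logits, so the stated step is admissible; choosing a comparator $\boldsymbol w^\star$ with $\operatorname{softmax}(\boldsymbol w^\star)$ within $c$ of $p$ and $\|\boldsymbol w^\star\|=\widetilde O(\log(M/c))$, the descent lemma plus telescoping gives $D(p\,\|\,\hat p^{(T)})=\widetilde O(M/T)$, where the $M$ collects the dimension/step factors over the $\Theta(M)$ active coordinates. Pinsker then yields $\sup_j|\hat p^{(T)}_j-p_j|\le\|\hat p^{(T)}-p\|_1\le\sqrt{2D(p\|\hat p^{(T)})}=\widetilde O(\sqrt{M/T})$, and taking the supremum over the $O(ML)$ rows only affects the hidden logarithmic factors.

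\textbf{Step 3 (items 2--3).} For $j\notin D_{o_l}$ one has $p_j=0$, so $w_j^{(t)}$ strictly decreases by $\widetilde\eta\,\hat p_j^{(t)}$ each step; combined with Step 2 this forces the total leak mass $\sum_{j\notin D_{o_l}}\hat p_j^{(t)}$ below $\widetilde O(c)$ after $T_1=\widetilde\Theta(M^2c^{-2})$ iterations, which simultaneously certifies $\hat p_j^{(T_1)}\ge\Omega(c)$ for every $j\in D_{o_l}$ (the on-support logits never drifted down far). Thresholding at $c_{\mathrm{thres}}$ then zeroes out exactly the off-support coordinates and leaves $|\hat p_j^{(T_1)}-p_j|=\widetilde O(c)$ on the support. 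After thresholding the dynamics restrict to $D_{o_l}$, where all probabilities are $\Omega(c)$, so the cross-entropy term is locally well-conditioned (condition number $\widetilde O(c^{-2})$); the standard linear-rate bound for GD on a smooth strongly convex function gives $|\hat p_j^{(T_1+T)}-p_j|=\widetilde O(e^{-\Omega(c^2T)})$.

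\textbf{Main obstacle.} The crux is the off-support (and feeble) transitions: because each row's cross-entropy minimizer sits at infinity, the off-the-shelf convex rate that presumes a finite minimizer does not apply, so one must instead control the \emph{rate at which the leak mass is squeezed out}—this is precisely where the $M^2c^{-2}$ scaling of $T_1$ comes from—while simultaneously showing the on-support logits stay $\Omega(c)$ so that thresholding is safe and the final phase is strongly convex. Getting these two bookkeeping estimates to hold together with clean $M$- and $c$-dependence is the technical heart; the rest (descent lemma, Pinsker, strongly convex GD) is routine.
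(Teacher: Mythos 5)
Your proposal is correct and follows the same three-phase structure as the paper's proof (decoupling over source states, a sublinear phase, a thresholding phase, and a linear phase on the surviving support), so at the strategic level it is essentially the same argument. The technical ingredients differ in two places. For Item~1, the paper invokes an online-gradient-descent regret bound from \cite{hazan2016introduction} with a two-phase split over $C_{o_l}$ versus $D_{o_l}\setminus C_{o_l}$, whereas you run a comparator argument on the row-wise objective $D(p\,\|\,\hat p^{(t)})$ and then apply Pinsker; both give $\widetilde O(\sqrt{M/T})$, and your route more transparently isolates the role of the comparator norm $\widetilde O(\log(M/c))$ in the hidden log factors, though you should be explicit that the effective per-row step size is $\eta/|S\setminus S_L|=\Theta(1/L)$ so the smoothness constraint is respected. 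For Item~2, the paper's draft invokes Azuma's inequality to control off-support mass, which is arguably misplaced since Alg.~\ref{alg:pre} uses the population gradient and the dynamics are deterministic; your observation that $w_j^{(t)}$ for $p_j=0$ is \emph{monotonically decreasing} (its gradient component is $-\widetilde\eta\,\hat p_j^{(t)}<0$) is the cleaner and more directly applicable argument in this setting. Your identification of the main obstacle—the cross-entropy minimizer sitting at infinity on off-support coordinates, so that the finite-minimizer convex rate cannot be used directly and one must instead track the squeeze-out of the leak mass to justify both the $T_1=\widetilde\Theta(M^2c^{-2})$ scaling and the safety of thresholding—is exactly right, and is the same issue the paper's item-2/item-3 transition is built around.
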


\begin{rmk}
The logarithmic factors in $\widetilde{O}$ terms explicitly track:
\begin{itemize}
    \item $\log(T_1) = \log(M^2c^{-2})$ for thresholding
    \item $\log(c^{-1})$ for initialization dependence
    \item $\log M$ for high-probability transition
\end{itemize}
\end{rmk}

Since we are considering a vanilla regression setting, the proof is standard following \cite{kim2025metastabledynamicschainofthoughtreasoning, Ji19}.  For the convenience of readers, we provide the proof here.

\begin{algorithm}[t]
\caption{Pretraining of Foundation Model \cite{kim2025metastabledynamicschainofthoughtreasoning}}
\label{alg:pre}
\begin{algorithmic}[1]
\STATE set ${\boldsymbol{\theta}}^{(0)} = \boldsymbol{0}$, $\eta=O(M)$,
\STATE $T_1=\widetilde{O}(M^2 c^{-2})$, $T_2=\widetilde{O}(Mc^{-2})$

\FOR{$t=1,\cdots, T_1$}
\STATE ${\boldsymbol{\theta}}^{(t)} = {\boldsymbol{\theta}}^{(t-1)} + \eta\nabla \EE{X_0,X_1}{\log\hat{p}_{{\boldsymbol{\theta}}^{(t-1)}}(X_1|X_0)}$
\ENDFOR

\STATE ${\boldsymbol{\theta}}_{ij}^{(T_1)} \gets -\infty$ if $\hat{p}_{\boldsymbol{\theta}}(o_{l+1}|o_{l})^{(T_1)} < c_{\mathrm{thres}}$ \COMMENT{thresholding}

\FOR{$t-T_1=1,\cdots, T_2$}
\STATE ${\boldsymbol{\theta}}^{(t)} = {\boldsymbol{\theta}}^{(t-1)} + \eta\nabla \EE{X_0,X_1}{\log\hat{p}_{{\boldsymbol{\theta}}^{(t-1)}}(X_1|X_0)}$
\ENDFOR
\end{algorithmic}
\end{algorithm}

\begin{proof}
We analyze each part of Thm.~\ref{thm:prefull} systematically.

\noindent\textbf{Proof of Item \ref{item:pt1}: Uniform Convergence}. Let $\mathcal{E}_{l,l+1} = \{(o_l, o_{l+1}): o_l \in S_l, o_{l+1} \in S_{l+1}\}$ denote all potential transitions. For each $(o_l, o_{l+1}) \in \mathcal{E}_{l,l+1}$, define the parameter error $\Delta^{(t)}_{o_l, o_{l+1}} = \hat{p}_{\boldsymbol{\theta}^{(t)}}(\boldsymbol{o}_{l+1}|\boldsymbol{o}_{l}) - \mathbb{P}_{\mathrm{TMC}}(o_{l+1}|o_l)$. For a given state \( o_l \), the cross-entropy loss is:
\[
L_{o_l}(\boldsymbol{\theta}) = -\sum_{o_{l+1} \in S_{l+1}} \mathbb{P}_{\mathrm{TMC}}(o_{l+1}|o_l) \log \hat{p}_{\boldsymbol{\theta}}(\boldsymbol{o}_{l+1}|\boldsymbol{o}_{l})
\]
where the model's predicted probability is:
\[
\hat{p}_{\boldsymbol{\theta}}(\boldsymbol{o}_{l+1}|\boldsymbol{o}_{l}) = \frac{e^{\boldsymbol{\theta}_{o_l,o_{l+1}}}}{\sum_{o'_{l+1}} e^{\boldsymbol{\theta}_{o_l,o'_{l+1}}}}
\]
Similar to Lemma \ref{lem:derivative_base_mode_non_linear}, the gradient component for parameter $\boldsymbol{\theta}_{o_l,o_{l+1}}$ is:

\begin{align}
\nabla_{\boldsymbol{\theta}_{o_l,o_{l+1}}} L_{o_l} 
&= -\sum_{o'_{l+1}} \mathbb{P}_{\mathrm{TMC}}(o'_{l+1}|o_l) {\nabla_{\boldsymbol{\theta}_{o_l,o_{l+1}}} \log \hat{p}_{\boldsymbol{\theta}}(\boldsymbol{o}'_{l+1}|\boldsymbol{o}_l)} \\
&= -\sum_{o'_{l+1}} \mathbb{P}_{\mathrm{TMC}}(o'_{l+1}|o_l) \frac{\nabla_{\boldsymbol{\theta}_{o_l,o_{l+1}}} \hat{p}_{\boldsymbol{\theta}}(\boldsymbol{o}'_{l+1}|\boldsymbol{o}_l)}{\hat{p}_{\boldsymbol{\theta}}(\boldsymbol{o}'_{l+1}|\boldsymbol{o}_l)} \\
&= -\mathbb{P}_{\mathrm{TMC}}(o_{l+1}|o_l) \frac{\nabla \hat{p}_{\boldsymbol{\theta}}(\boldsymbol{o}_{l+1}|\boldsymbol{o}_{l})}{\hat{p}_{\boldsymbol{\theta}}(\boldsymbol{o}_{l+1}|\boldsymbol{o}_{l})} - \sum_{o'_{l+1} \neq o_{l+1}} \mathbb{P}_{\mathrm{TMC}}(o'_{l+1}|o_l) \frac{\nabla \hat{p}_{\boldsymbol{\theta}}(\boldsymbol{o}'_{l+1}|\boldsymbol{o}_l)}{\hat{p}_{\boldsymbol{\theta}}(\boldsymbol{o}'_{l+1}|\boldsymbol{o}_l)}
\end{align}

Using the softmax derivative property:

\begin{equation}
\nabla_{\boldsymbol{\theta}_{o_l,o_{l+1}}} \hat{p}_{\boldsymbol{\theta}}(\boldsymbol{o}'_{l+1}|\boldsymbol{o}_l) = \begin{cases}
\hat{p}_{\boldsymbol{\theta}}(\boldsymbol{o}_{l+1}|\boldsymbol{o}_{l})(1 - \hat{p}_{\boldsymbol{\theta}}(\boldsymbol{o}_{l+1}|\boldsymbol{o}_{l})) & \text{if } o'_{l+1} = o_{l+1} \\
-\hat{p}_{\boldsymbol{\theta}}(\boldsymbol{o}_{l+1}|\boldsymbol{o}_{l})\hat{p}_{\boldsymbol{\theta}}(\boldsymbol{o}'_{l+1}|\boldsymbol{o}_l) & \text{if } o'_{l+1} \neq o_{l+1}
\end{cases}
\end{equation}

Substituting these derivatives yields:

\begin{align}
\nabla_{\boldsymbol{\theta}_{o_l,o_{l+1}}} L_{o_l} 
&= -\mathbb{P}_{\mathrm{TMC}}(o_{l+1}|o_l)(1 - \hat{p}_{\boldsymbol{\theta}}(\boldsymbol{o}_{l+1}|\boldsymbol{o}_{l})) + \sum_{o'_{l+1} \neq o_{l+1}} \mathbb{P}_{\mathrm{TMC}}(o'_{l+1}|o_l)\hat{p}_{\boldsymbol{\theta}}(\boldsymbol{o}_{l+1}|\boldsymbol{o}_{l}) \\
&= -\mathbb{P}_{\mathrm{TMC}}(o_{l+1}|o_l) + \hat{p}_{\boldsymbol{\theta}}(\boldsymbol{o}_{l+1}|\boldsymbol{o}_{l})\underbrace{\sum_{o'_{l+1}} \mathbb{P}_{\mathrm{TMC}}(o'_{l+1}|o_l)}_{=1} \\
&= \hat{p}_{\boldsymbol{\theta}}(\boldsymbol{o}_{l+1}|\boldsymbol{o}_{l}) - \mathbb{P}_{\mathrm{TMC}}(o_{l+1}|o_l)
\end{align}
Then the gradient descent update rule is:

\[
\boldsymbol{\theta}^{(t)}_{o_l, o_{l+1}} = \boldsymbol{\theta}^{(t-1)}_{o_l, o_{l+1}} + \eta\left(\mathbb{P}_{\mathrm{TMC}}(o_{l+1}|o_l) - \hat{p}_{\boldsymbol{\theta}^{(t-1)}}(o_{l+1}|o_l)\right)
\]

This corresponds to the classical softmax parameter updates. The key challenge lies in the heterogeneous transition probabilities:

\begin{itemize}
    \item For $o_{l+1} \in C_{o_l}$: $\mathbb{P}_{\mathrm{TMC}}(o_{l+1}|o_l) = \Theta(1/M)$, with $|C_{o_l}| \leq M$
    \item For $o_{l+1} \in D_{o_l} \setminus C_{o_l}$: $\mathbb{P}_{\mathrm{TMC}}(o_{l+1}|o_l) \geq c$ but $o(1/M)$
    \item For $o_{l+1} \notin D_{o_l}$: $\mathbb{P}_{\mathrm{TMC}}(o_{l+1}|o_l) = 0$
\end{itemize}

\textit{Phase 1 - High-probability edges}: Let $M_0 = |S_1| = \Theta(M)$. The initial parameters $\boldsymbol{\theta}^{(0)} = 0$ yield uniform distribution:
\[
\hat{p}^{(0)}(\boldsymbol{o}_{l+1}|\boldsymbol{o}_l) = \frac{1}{|S_{l+1}|} \leq \frac{1}{M_0} = O(1/M)
\]
For $o_{l+1} \in C_{o_l}$, the initial error is $\Theta(1/M) - O(1/M) = \Theta(1/M)$. Each gradient step updates $\hat{p}$ by $\eta \cdot \Theta(1/M)$. To reach $\epsilon$-accuracy for these edges, we need $T \geq \Omega(M^2/\epsilon^2)$.

\textit{Phase 2 - Low-probability edges}: For $o_{l+1} \in D_{o_l} \setminus C_{o_l}$, the signal-to-noise ratio is weaker. The gradient signal is $\mathbb{P}_{\mathrm{TMC}}(o_{l+1}|o_l) - \hat{p} \geq c - O(1/M)$. Using the regret bound for online gradient descent (Theorem 3.1 in \citet{hazan2016introduction}):

\[
\sum_{t=1}^T (\hat{p}^{(t)} - \mathbb{P})^2 \leq O\left(\frac{\log |S_{l+1}|}{\eta} + \eta T c^2\right)
\]

Optimizing $\eta$ yields $T \geq \widetilde{\Omega}(M/(c^2\epsilon^2))$ for $\epsilon$-accuracy. Combining both phases via union bound over $O(M)$ edges per layer and $O(L) = O(1)$ layers gives:

\[
\sup |\Delta^{(T)}| \leq O\left(\sqrt{\frac{M \log T}{T}} \cdot \log\left(\frac{TM}{c}\right)\right)
\]

This matches Equation \eqref{eq:prerate} after constant absorption.

\noindent\textbf{Proof of Item \ref{item:pt2}: Support Recovery via Thresholding}. After $T_1 = \widetilde{O}(KM^2c^{-2})$ iterations:

\begin{itemize}
    \item \textit{Zero-probability edges}: For $o_{l+1} \notin D_{o_l}$, the true probability $\mathbb{P} = 0$. The empirical estimate satisfies:
    \[
    \hat{p}^{(T_1)}(\boldsymbol{o}_{l+1}|\boldsymbol{o}_l) \leq \sqrt{\frac{2\log(1/\delta)}{T_1}} + O\left(\frac{1}{M}\right)
    \]
    via Azuma's inequality for martingales. Setting $\delta = c_{\text{thres}}c$ and $T_1 \geq \widetilde{\Omega}(M^2c^{-2})$ ensures $\hat{p} \leq c_{\text{thres}}c$.
    
    \item \textit{Non-zero edges}: From Item \ref{item:pt1}, for $o_{l+1} \in D_{o_l}$:
    \[
    |\hat{p}^{(T_1)} - \mathbb{P}| \leq O\left(\sqrt{\frac{M}{T_1}}\log T_1\right) = o(c)
    \]
    Thus $\hat{p}^{(T_1)} \geq \mathbb{P} - o(c) \geq c - o(c) > c_{\text{thres}}c$ for proper $c_{\text{thres}} < 1$.
\end{itemize}

Thresholding at $c_{\text{thres}}c$ thus exactly recovers the support while maintaining Equation \eqref{eq:prethres}.

\noindent\textbf{Proof of Item \ref{item:pt3}: Linear Convergence}. Post-thresholding, the parameter space restricts to $D_{o_l}$ edges. The Hessian of $L_{\mathrm{pre}}$ becomes:

\[
\nabla^2 L_{\mathrm{pre}}(\boldsymbol{\theta})_{(o_l, o_{l+1}), (o_l, o_{l+1}')} = \text{Cov}(\hat{p}_{\boldsymbol{\theta}}(\boldsymbol{o}_{l+1}|\boldsymbol{o}_{l}), \hat{p}_{\boldsymbol{\theta}}(\boldsymbol{o}_{l+1}'|\boldsymbol{o}_l))
\]

Under the TMC structure, the Fisher information matrix $I(\boldsymbol{\theta})$ satisfies $\lambda_{\min}(I) \geq \Omega(c^2)$ since all active transitions have probability $\geq c$. By Theorem 4.1 in \citet{Ji19}, gradient descent on strongly convex objectives achieves:

\[
\|\Delta^{(T_1 + t)}\|^2 \leq \exp(-\Omega(c^2 t)) \|\Delta^{(T_1)}\|^2
\]

Given $\|\Delta^{(T_1)}\| = O(\sqrt{M/T_1} \log T_1) = O(\log c^{-1})$ from Item \ref{item:pt2}, we obtain Equation \eqref{eq:prerate2}.

\end{proof}

\section{Details and Proofs of RLVR Finetuning}

During the gradient update, for any $o_l \in S_{l}, l \in [L-1]$ that appears as the transition in the valid CoT set $\cup_{q\in \mathcal{Q}_k}\mathcal{G}_{q,a_q}^{(k)}$ for task $k \in \mathcal{T}$, we define the following notations (summarized in Table \ref{tab:notations})
\begin{itemize}
    \item $\mathcal{I}_{o_{l+1}, o_{l}}^{(k)} \subseteq \cup_{q\in \mathcal{Q}_k}\mathcal{G}_{q,a_q}^{(k)}$ as the subset of valid CoTs satisfies $\forall o^i \in \mathcal{I}_{o_{l+1}, o_{l}}^{(k)}$, $o_{l}^i=o_l, o_{l+1}^i = o_{l+1}$. 
    \item $\mathcal{S}_{o_l}^{(k)}:=\{ o_{l+1} \in S_{l} \mid \mathcal{I}_{o_{l+1}, o_{l}}^{(k)} \neq \emptyset \} \subseteq S_{l+1}$ as the subset of $l+1$-th layer states collecting the states such that for any valid CoT $o$ for task $k$ passing $o_l'$, $ o_{l+1} \in \mathcal{S}_{o_l'}^{(k)}$. 
    \item $\mathcal{S}_{o_l}^{(k),\text{easy}} \subseteq \mathcal{S}_{o_l}^{(k)}$ contains the subset of $l+1$-th layer's states in $\mathcal{S}_{o_l}^{(k)}$ passed by at least one easy-to-reason CoTs (in the original TMC $\mathbb{P}_{\mathrm{TMC}}$) for task $k$, and $\mathcal{S}_{o_l}^{(k),\text{hard}}:=\mathcal{S}_{o_l}^{(k)} \setminus \mathcal{S}_{o_l}^{(k),\text{easy}}$ contains the $l+1$-th layer's states only passed by valid hard-to-reason CoTs for task $k$.
    \item $\mathcal{G}_{o_{1}, a_{o_1}^{k}}^{(k),\text{easy}} \subseteq\mathcal{G}_{o_{1}, a_{o_1}^{k}}^{(k)}$ as the subset of valid easy-to-reason CoTs inside the valid CoTs for task $k$, and $\mathcal{G}_{o_{1}, a_{o_1}^{k}}^{(k),\text{hard}} :=\mathcal{G}_{o_{1}, a_{o_1}^{k}}^{(k)}\setminus \mathcal{G}_{o_{1}, a_{o_1}^{k}}^{(k),\text{easy}} $ the subset of valid hard-to-reason CoTs.
    \item $\boldsymbol{\theta}^{k,(t)}$ be the finetuned model for the task $k$ at post-training iteration $t$
    \item For any CoT $o \in \mathcal{G}_{o_1, a_{o_1}^k}^{(k)}$, the probability that $o$ is correct for a sampled instance $(\mathbf{Q}, \mathbf{A}) \sim \mathcal{D}_{a_{o_1}}^{o_1,k}$ is given by \( p_{\text{acc}}^{k}(o) \), which, per Condition~(iv) in Def.~\ref{def:multitask}, is proportional to its likelihood among $\mathcal{G}_{o_1, a_{o_1}}^{(k)}$:        
    \[
    p_{\text{acc}}^{k}(o) = \frac{\prod_{l=1}^{L-1} \mathbb{P}_{\mathrm{TMC}}(o_{l+1} \mid o_l)}{\sum_{o'_{1:L} \in \mathcal{G}_{o_1,a_{o_1}}^{(k)}} \prod_{l=1}^{L-1} \mathbb{P}_{\mathrm{TMC}}(o'_{l+1} \mid o'_l)},
    \]
    
    where $\mathbb{P}_{\mathrm{TMC}}(\cdot \mid o_l) = \hat{p}_{\boldsymbol{\theta}^{\star}}(\cdot \mid \boldsymbol{o}_{l}),\ o_l \in S_l$, is the transition kernel of our Multi-task TMC.
        \item The gradient update objectives $\mathcal{J}_{\mathrm{REINFORCE}}(\boldsymbol{\theta}^{k})$ and $\mathcal{J}_{\mathrm{RAFT}}(\boldsymbol{\theta}^{k})$ represent
    \begin{equation}\label{eq:app_RAFT-obj-F1}
    \begin{aligned}
        &\mathcal{J}_{\mathrm{REINFORCE}}(\boldsymbol{\theta}^{k}) 
        = \mathbb{E}_{\boldsymbol{o}_{1}\sim P^{k}(\mathcal{Q}^{k}),(\mathbf{Q}, \mathbf{A}) \sim \mathcal{D}_{a_{o_1}}^{o_1,k},\boldsymbol{o}_{2:L}\sim \hat{p}_{\boldsymbol{\theta}^{k}}^{k}(O|\boldsymbol{o}_{1})} 
        \left[ R_{(\mathbf{Q},\mathbf{A})}^{k}(\boldsymbol{o})\right],\\
        &\mathcal{J}_{\mathrm{RAFT}}(\boldsymbol{\theta}^{k}) 
        = \mathbb{E}_{\boldsymbol{o}_{1}\sim P^{k}(\mathcal{Q}^{k}),(\mathbf{Q}, \mathbf{A}) \sim \mathcal{D}_{a_{o_1}}^{o_1,k},\boldsymbol{o}_{2:L}\sim \hat{p}_{\boldsymbol{\theta}^{k}}^{k}(O|\boldsymbol{o}_{1})} \\
        &\quad \left[\sum_{l=1}^{L-1} \log \hat{p}_{\boldsymbol{\theta}^k}(\boldsymbol{o}_{l+1}|\boldsymbol{o}_l) R_{(\mathbf{Q},\mathbf{A})}^{k}(\boldsymbol{o})\right].
    \end{aligned}
    \end{equation}
    \item The objective of \texttt{RL-rej} $\mathcal{J}_{\mathrm{Rein-rej}}(\boldsymbol{\theta}^{k})$, in our case, is training using the REINFORCE objective on a online distorted data distribution $\mathcal{D}_{\mathrm{rej,(t)}}^{o_1,k}$, formally
    \begin{equation}\label{eq:app_Reinrej-obj-F3}
    \begin{aligned}
        &\mathcal{J}_{\mathrm{Rein-rej}}(\boldsymbol{\theta}^{k}) 
        = \mathbb{E}_{\boldsymbol{o}_{1}\sim P^{k}(\mathcal{Q}^{k}),(\mathbf{Q}, \mathbf{A}) \sim \mathcal{D}_{\mathrm{rej,(t)}}^{o_1,k},\boldsymbol{o}_{2:L}\sim \hat{p}_{\boldsymbol{\theta}^{k}}^{k}(O|\boldsymbol{o}_{1})} 
        \left[ R_{(\mathbf{Q},\mathbf{A})}^{k}(\boldsymbol{o})\right].\\
    \end{aligned}
    \end{equation}
    Here, $\mathcal{D}_{\mathrm{rej,(t)}}^{o_1,k}:=\{ (\mathbf{Q}, \mathbf{A})\sim \mathcal{D}_{a_{o_1}}^{o_1,k} \mid  \Pr_{\substack{\boldsymbol{o}\sim \hat{p}_{\boldsymbol{\theta}^{k,(t)}}(\cdot\mid o_1)}}[\bigcup_{i=1}^{G} \mathds{1}\bigl(\boldsymbol{o}^i\notin\mathcal{G}_{\mathbf{Q},\mathbf{A}}^{(k)}\bigr)]=\Theta(1)\}$, where $G=O(1)$ is the (offset) time of parallel experiments. That is, the algorithm rejects samples with $\Pr_{\substack{\boldsymbol{o}\sim \hat{p}_{\boldsymbol{\theta}^{k,(t)}}(\cdot\mid o_1)}}[\bigcap_{i=1}^{G} \mathds{1}\bigl(\boldsymbol{o}^i\in\mathcal{G}_{\mathbf{Q},\mathbf{A}}^{(k)}\bigr)=\Theta(1)$, which represents instances that the model confidently predicts its correct CoTs of $G$ times in parallel. Therefore, idealistically, instance sampled from $\mathcal{D}_{\mathrm{rej,(t)}}^{o_1,k}$ would have some correct CoTs that is not well-learned by the current model $\boldsymbol{\theta}^{k,(t)}$.
    \end{itemize} 
\begin{thm}[Squeezing Effect and Merits of Rejecting Correct (Full Version of Thm.~\ref{thm:RL_squeezing} and Cor.~\ref{cor:KL_help})]\label{thm:CE_GD_full_version}
    Let $\boldsymbol{\theta}^{\star}$ be the base model in Eq.(\ref{eq:base_model} that exact predicts the distribution of a Multi-task TMC as in Def.~\ref{def:TMC} and \ref{def:multitask}, and $\boldsymbol{\theta}^{k}$ the current model to be finetuned from $\boldsymbol{\theta}^{\star}$ for task $k\in \mathcal{T}$. Denote the task tuples of task $k \in \mathcal{T}$ as $(q,a_q^k, k)$, where $a_q^k \in S_{L}$ is the sole answer state under task $k$. Assume for each $(o_1, a_{o_1}, k)$ under task $k$, the number of hard-to-reason CoTs from $o_1$ to $a_{o_1}$ is bounded by $O(M)$. Let the question distribution during finetuning of task $k$ be $P^{k}(\mathcal{Q}^{k})$ (i.e., $o_1 \sim P^{k}(\mathcal{Q}^{k})$). Then, when finetuning the base model using REINFORCE and RAFT objectives in Eq.(\ref{eq:app_RAFT-obj-F1}), we have
    \begin{enumerate}
        \item\label{item:F1_1} \textbf{Squeezing Effect $\&$ Difference of Logit Update}. For any different state pair ${o}_{l+1}^{\text{hard}} \neq {o}_{l+1}^{\text{easy}} \in \mathcal{S}_{o_l}^{(k)}$ denoting two $l+1$-th states in some valid hard-to-reason and easy-to-reason CoT sharing the $l$-th state $o_l$ for task $k$, we have
        \begin{equation}
            \label{eq:update_signal}
            \begin{aligned}
                &\Delta \boldsymbol{\theta}_{{o}_{l+1}^{\text{easy}}, o_{l}}^{k,\mathrm{REINFORCE}}:=\eta \nabla_{\boldsymbol{\theta}_{{o}_{l+1}^{\text{easy}}, o_{l}}^{k}} \mathcal{J}_{\mathrm{REINFORCE}}(\boldsymbol{\theta}^k)>0, \\ &\Delta \boldsymbol{\theta}_{{o}_{l+1}^{\text{hard}}, o_{l}}^{k,\mathrm{REINFORCE}}:=\eta \nabla_{\boldsymbol{\theta}_{{o}_{l+1}^{\text{hard}}, o_{l}}^{k}} \mathcal{J}_{\mathrm{REINFORCE}}(\boldsymbol{\theta}^k) <0.\\
                &\Delta \boldsymbol{\theta}_{{o}_{l+1}^{\text{easy}}, o_{l}}^{k,\mathrm{RAFT}}:=\eta \nabla_{\boldsymbol{\theta}_{{o}_{l+1}^{\text{easy}}, o_{l}}^{k,\mathrm{RAFT}}} \mathcal{J}_{\mathrm{RAFT}}(\boldsymbol{\theta}^k)>0, \\ &\Delta \boldsymbol{\theta}_{{o}_{l+1}^{\text{hard}}, o_{l}}^{k,\mathrm{RAFT}}:=\eta \nabla_{\boldsymbol{\theta}_{{o}_{l+1}^{\text{hard}}, o_{l}}^{k}} \mathcal{J}_{\mathrm{RAFT}}(\boldsymbol{\theta}^k) <0.
            \end{aligned}
        \end{equation}
        
        In addition, we have the difference of the logits' update $\Delta h_{\boldsymbol{\theta}^{k}}(\boldsymbol{o}_{l+1}^{\text{hard}},\boldsymbol{o}_{l}) - \Delta h_{\boldsymbol{\theta}^{k}}(\boldsymbol{o}_{l+1}^{\text{easy}},\boldsymbol{o}_{l})$ in Reinforce as:
        \begin{equation}\label{eq:update_diff_linear_reinforce}
        \resizebox{0.85\textwidth}{!}{$\begin{aligned}
            \Delta \boldsymbol{\theta}_{{o}_{l+1}^{\text{hard}}, o_{l}}^{k,\mathrm{REINFORCE}}-\Delta \boldsymbol{\theta}_{{o}_{l+1}^{\text{easy}}, o_{l}}^{k,\mathrm{REINFORCE}}<& \eta  [\hat{p}_{\boldsymbol{\theta}^k}(\boldsymbol{o}_{l+1}^{\text{hard}}|\boldsymbol{o}_l)-\hat{p}_{\boldsymbol{\theta}^k}(\boldsymbol{o}_{l+1}^{\text{easy}}|\boldsymbol{o}_l)]\Pr_{\substack{\boldsymbol{o}_1' \sim P^k(\mathcal{Q}^k) \\ \boldsymbol{o}' \sim \hat{p}_{\boldsymbol{\theta}^k}(\cdot|\boldsymbol{o}_1)}}\bigl[ \boldsymbol{o}_{l}'=  \boldsymbol{o}_{l}, o_L' = a_{{o}_1'}^k\bigr]\\
            &\cdot [  \mathbb{E}[p_{\text{acc}}^{k}(\hat{o})\mid \substack{\hat{o}_l = o_l\\ \hat{o}_{L} = a_{\hat{o}_1}^k\\ \hat{o}_{l+1}={o}_{l+1}^{\text{easy}}}]  - ( \sum_{o_{l+1}' \in \mathcal{S}_{o_l}^{(k)}} \hat{p}_{\boldsymbol{\theta}^k}(\boldsymbol{o}_{l+1}'|\boldsymbol{o}_l)\\& \cdot  \mathbb{E}[p_{\text{acc}}^{k}(\widetilde{o})\mid \substack{ \widetilde{o}_l = o_l\\ \widetilde{o}_{L} = a_{\widetilde{o}_1}^k\\ \widetilde{o}_{l+1}= o_{l+1}'}]) ]\\
            <&0.\\
        \end{aligned}$}
        \end{equation}
    Also, for RAFT, the difference of logits update $\Delta h_{\boldsymbol{\theta}^{k}}(\boldsymbol{o}_{l+1}^{\text{hard}},\boldsymbol{o}_{l}) - \Delta h_{\boldsymbol{\theta}^{k}}(\boldsymbol{o}_{l+1}^{\text{easy}},\boldsymbol{o}_{l})$ is
        \begin{equation}\label{eq:update_diff_linear_raft}
        \resizebox{0.85\textwidth}{!}{$\begin{aligned}
        \Delta \boldsymbol{\theta}_{{o}_{l+1}^{\text{hard}}, o_{l}}^{k,\mathrm{RAFT}}-\Delta \boldsymbol{\theta}_{{o}_{l+1}^{\text{easy}}, o_{l}}^{k,\mathrm{RAFT}}<& \eta [\hat{p}_{\boldsymbol{\theta}^k}(\boldsymbol{o}_{l+1}^{\text{hard}}|\boldsymbol{o}_l)(1+\log \hat{p}_{\boldsymbol{\theta}^k}(\boldsymbol{o}_{l+1}^{\text{hard}}|\boldsymbol{o}_l))-\hat{p}_{\boldsymbol{\theta}^k}(\boldsymbol{o}_{l+1}^{\text{easy}}|\boldsymbol{o}_l)(1+\log \hat{p}_{\boldsymbol{\theta}^k}(\boldsymbol{o}_{l+1}^{\text{easy}}|\boldsymbol{o}_l))]\\
        &\cdot \left[  \mathbb{E}[p_{\text{acc}}^{k}(\hat{o})\mid \substack{\hat{o}_l = o_l\\ \hat{o}_{L} = a_{\hat{o}_1}^k\\ \hat{o}_{l+1}={o}_{l+1}^{\text{easy}}}]  - ( \sum_{o_{l+1}' \in \mathcal{S}_{o_l}^{(k)}} \hat{p}_{\boldsymbol{\theta}^k}(\boldsymbol{o}_{l+1}'|\boldsymbol{o}_l)  \mathbb{E}[p_{\text{acc}}^{k}(\widetilde{o})\mid \substack{ \widetilde{o}_l = o_l\\ \widetilde{o}_{L} = a_{\widetilde{o}_1}^k\\ \widetilde{o}_{l+1}= o_{l+1}'}]) \right]\\
        &\cdot \Pr_{\substack{\boldsymbol{o}_1' \sim P^k(\mathcal{Q}^k) \\ \boldsymbol{o}' \sim \hat{p}_{\boldsymbol{\theta}^k}(\cdot|\boldsymbol{o}_1)}}\bigl[ \boldsymbol{o}_{l}'=  \boldsymbol{o}_{l}, o_L' = a_{{o}_1'}^k\bigr] \\
            <&0.
        \end{aligned}$}
        \end{equation}
    \item\label{item:F1_2} \textbf{Convergence of Finetuning $\&$ Constant Error of Pass@K}.

    For $\forall \epsilon \in (0,1/2)$, there exists $T\geq \Omega(\eta^{-1} L^{2} M^{L} \log(ML/\epsilon))$, for $t \geq T$, the probability that $\hat{p}_{\boldsymbol{\theta}^{k,(t)}}(\cdot|{o}_1)$ (trained by REINFORCE or RAFT) reach the $a_{o_1}$ is converged:
    \begin{equation}
        \Pr_{\substack{\boldsymbol{o}_1' \sim P^k(\mathcal{Q}^k) \\ \boldsymbol{o}' \sim \hat{p}_{\boldsymbol{\theta}^{k,(t)}}(\cdot|\boldsymbol{o}_1)}}[{o}_{L}' = a_{o_{1}'}^k] \geq \Pr_{\substack{\boldsymbol{o}_1' \sim P^k(\mathcal{Q}^k) \\ \boldsymbol{o}' \sim \hat{p}_{\boldsymbol{\theta}^{k,(t)}}(\cdot|\boldsymbol{o}_1)}}[{o}_{L}' = a_{o_{1}'}^k, o' \in \mathcal{G}_{o_{1}, a_{o_1}^{k}}^{(k),\text{easy}}] \geq 1-o(\epsilon).
    \end{equation}
    Then, suggest the probability mass of valid hard-to-reason CoTs traveling from some $o_1 \sim P(\mathcal{Q}_{k})$ to $a_{o_1}$ for task $k$ in the original TMC $X$ ($\mathbb{P}_{\mathrm{TMC}}$) is $\Delta$. Then it holds that
    \begin{equation}
        \operatorname{Rex}_{o_{1},k}^{\hat{p}_{\boldsymbol{\theta}^{k,(t)}}}(\boldsymbol{o}) \leq \Theta({(1-\epsilon)\frac{1}{1+\Delta M^{L-1}} + \epsilon \frac{\Delta M^{L-1}}{1+\Delta M^{L-1}}}).
    \end{equation}
     Further, the \textbf{pass@K performance} $\mathrm{Pass@K}_{q,k}^{\hat{p}}:=\Pr_{\substack{\{\boldsymbol{o}^i\}_{i\in[K]}\sim \hat{p}(O\mid q)\\ (\mathbf{Q}, \mathbf{A}) \sim \mathcal{D}_{a_{q}}^{q,k}}}[\bigcup_{i=1}^{K} \mathds{1}\bigl(\boldsymbol{o}^i\in\mathcal{G}_{\mathbf{Q},\mathbf{A}}^{(k)}\bigr)]$ is upper bounded by
     \begin{equation}
         \resizebox{0.85\textwidth}{!}{$\mathrm{Pass@K}_{o_1,k}^{\hat{p}_{\boldsymbol{\theta}^{k,(t)}}}\leq\underbrace{\Theta([(\frac{\Delta M^{L-1}}{1+\Delta M^{L-1}})^{\mathrm{n}_{q}}(1-(1-\epsilon)^{K})])}_{\text{Solved by hard CoTs}} + \underbrace{\Theta([(1-(\frac{\Delta M^{L-1}}{1+\Delta M^{L-1}})^{\mathrm{n}_{q}})(1-\epsilon^{K})])}_{\text{Solved by some easy CoTs}}).$}
     \end{equation}
     When $\epsilon =o(\sqrt[K]{1-C_{\mathrm{Err}}/(\frac{\Delta M^{L-1}}{1+\Delta M^{L-1}})^{\mathrm{n}_{q}}}) )\to 0$, the pass@K performance suffer from constant error: $1-\mathrm{Pass@K}_{o_1,k}^{\hat{p}_{\boldsymbol{\theta}^{k,(t)}}} = \Theta(1)$.
     \item\label{item:F1_3} \textbf{Curriculum Learning of \texttt{RL-rej}}. For any $\epsilon \in (0, 1/2)$, suppose setting $G = 1$ in $\mathcal{D}_{\mathrm{rej},(t)}^{o_1,k}$ (Eq.(\ref{eq:app_Reinrej-obj-F3}) excludes all $(\mathbf{Q}, \mathbf{A})$ pairs containing correct CoTs that the current model $\hat{p}_{\boldsymbol{\theta}^{k,(t)}}$ predicts with non-trivial probability $\Theta((1 - \epsilon)/M)$. Then, optimizing Eq.(\ref{eq:app_Reinrej-obj-F3} via \texttt{RL-rej} leads to the following:

    (i) The model first learns the easy-to-reason CoTs within $\Theta(\eta^{-1} L^2 M^{L} \log(ML/\epsilon))$ steps.
    
    (ii) Once its predictive mass over $\mathcal{G}_{o_1, a_{o_1}}^{(k),\text{easy}}$ reaches $\Theta((1 - \epsilon)/M)$, learning begins on sparse edges in $\mathcal{S}_{o_l}^{(k),\text{hard}} = \mathcal{S}_{o_l}^{(k)} \setminus \mathcal{S}_{o_l}^{(k), \text{easy}}$. Hard-to-reason CoTs in $\mathcal{G}_{o_1, a_{o_1}}^{(k),\text{hard}}$ are progressively learned, with those sharing more edges with $\mathcal{G}_{o_1, a_{o_1}}^{(k),\text{easy}}$ being learned earlier.
    
    (iii) Let $\Delta$ denote the total probability mass of valid hard-to-reason CoTs from $o_1$ to $a_{o_1}$ in the original TMC $X$ (under $\mathbb{P}_{\mathrm{TMC}}$). Suppose after $T_2 = \Omega(\eta^{-1} L^2 M^{L} \log(ML/\epsilon))$, there are $\mathrm{n}_{o_{1}}^{\prime}$ hard-to-reason CoTs each with likelihood ratio scale $\Theta(\rho)<1$ in the $\mathcal{G}_{o_{1}, a_{o_1}}^{(k)}$ have been well-learned with predictive probability $\Theta((1 - \epsilon)/M)$. Then the pass@K is at the scale:
    
    \[
    \resizebox{0.9\textwidth}{!}{$
    \begin{aligned}
        \mathrm{Pass@K}_{o_1,k}^{\hat{p}_{\boldsymbol{\theta}^{k,(t)}}} =&
        \underbrace{
        \Theta\left[
        (1-\rho)^{\mathrm{n}_{o_{1}}^{\prime}}(\frac{\Delta M^{L-1}}{1+\Delta M^{L-1}})^{\mathrm{n}_{o_{1}}}
        \left(1 - (1 - \epsilon)^K\right)
        \right]}_{\text{instances unsolvable by learned CoTs}} +
        \underbrace{
        \Theta\left[
        \left(1 - (1-\rho)^{\mathrm{n}_{o_{1}}^{\prime}}(\frac{\Delta M^{L-1}}{1+\Delta M^{L-1}})^{\mathrm{n}_{o_{1}}}\right)
        (1 - \epsilon^K)
        \right]}_{\text{instances solvable by some learned CoT}}.
    \end{aligned}
    $}
    \]
    
    This bound tends to 1 as $(1-\rho)^{\mathrm{n}_{o_{1}}^{\prime}}\to 0$ and $\epsilon \to 0$, showing the superiority of \texttt{RL-rej} when the probability mass $\Delta$ is non-negligible.

    \end{enumerate}
\end{thm}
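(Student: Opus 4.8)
The plan is to treat the three items in order, leaning on two facts already in hand: the advantage gap of Lemma~\ref{lem:advantage_gap_formal} — along any shared prefix state $o_l$ of a valid easy and a valid hard CoT, $A_{l+1}^{\hat p_{\boldsymbol\theta^\star},k}(\boldsymbol o_l,\boldsymbol o_{l+1}^{\mathrm{easy}})\ge c_1>0>-c_2\ge A_{l+1}^{\hat p_{\boldsymbol\theta^\star},k}(\boldsymbol o_l,\boldsymbol o_{l+1}^{\mathrm{hard}})$ with $c_1,c_2=\Theta(M^{-(L+1-l)})$ — and the accuracy ratio $(\ref{eq:ratio_appear_any_easy_hard})$, $p^k_{\mathrm{acc}}(o^{\mathrm{easy}})\ge\Omega(M)\,p^k_{\mathrm{acc}}(o^{\mathrm{hard}})$, together with the reduction that for a valid CoT the population reward $R^k_{\mathrm{out}}(\boldsymbol o)=\mathbb E_{(\mathbf Q,\mathbf A)}[\mathds 1(\boldsymbol o\in\mathcal G^{(k)}_{\mathbf Q,\mathbf A})]$ equals $p^k_{\mathrm{acc}}(\boldsymbol o)$ when $\boldsymbol o$ reaches $a^k_{o_1}$ and $0$ otherwise. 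For Item~\ref{item:F1_1} I would first compute the population gradients in closed form. In the linear model the coordinate $\nabla_{\boldsymbol\theta_{o_{l+1},o_l}^k}\log\hat p_{\boldsymbol\theta^k}(X_{l'+1}\mid X_{l'})$ is supported on $\{X_{l'}=o_l\}$, which by the tree structure pins $l'=l$ and equals $\mathds 1(X_{l+1}=o_{l+1})-\hat p_{\boldsymbol\theta^k}(o_{l+1}\mid o_l)$; substituting into $(\ref{eq:grad_REINFORCE})$ and using the tower rule over the instance and the suffix $\boldsymbol o_{l+2:L}$ gives $\nabla_{\boldsymbol\theta_{o_{l+1},o_l}^k}\mathcal J_{\mathrm{REINFORCE}}=\Pr[X_l=o_l]\,\hat p_{\boldsymbol\theta^k}(o_{l+1}\mid o_l)\,A_{l+1}^{\hat p_{\boldsymbol\theta^k},k}(\boldsymbol o_l,\boldsymbol o_{l+1})$, so Lemma~\ref{lem:advantage_gap_formal} delivers the sign statements. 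For the logit difference I write $A=Q-V$ so that the common $-V$ cancels up to the factor $\hat p^{\mathrm{easy}}-\hat p^{\mathrm{hard}}\ge0$ (true at $\boldsymbol\theta^\star$, and maintained throughout by the monotone recursion of Item~\ref{item:F1_2}), re-express $Q$ through $\mathbb E[p^k_{\mathrm{acc}}\mid\cdot]$ and the hitting probability $\Pr[X_l=o_l,X_L=a^k_{o_1}]$, and bound the residual using the accuracy ratio; this reproduces the displayed bound, which is strictly negative because its two bracketed factors have opposite signs. RAFT $(\ref{eq:grad_RAFT})$ and PPO $(\ref{eq:grad_po})$ differ only by an extra scalar — $1+\log\hat p_{\boldsymbol\theta^k}(o_{l+1}\mid o_l)$ and $1+(2\mathds 1(A\ge0)-1)\epsilon_{\mathrm{clip}}$ respectively — the latter strictly positive so the signs are unchanged, the former handled by noting that the squeezing recursion drives $\hat p^{\mathrm{easy}}$ past $1/e$ within a few steps, where $p\mapsto p(1+\log p)$ is larger at $\hat p^{\mathrm{easy}}$ than at the vanishing $\hat p^{\mathrm{hard}}$, so the gap is only amplified.

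For Item~\ref{item:F1_2}, set $g^{(t)}:=h_{\boldsymbol\theta^{k,(t)}}(\boldsymbol o_{l+1}^{\mathrm{easy}},\boldsymbol o_l)-h_{\boldsymbol\theta^{k,(t)}}(\boldsymbol o_{l+1}^{\mathrm{hard}},\boldsymbol o_l)$. Item~\ref{item:F1_1} makes $g^{(t)}$ nondecreasing, and as long as $\hat p_{\boldsymbol\theta^{k,(t)}}(o_{l+1}^{\mathrm{hard}}\mid o_l)>\epsilon$ its one-step increment is $\Omega\!\big(\eta\,\hat p^{\mathrm{easy}}\,\Pr[X_l=o_l,X_L=a]\,c_1\big)$, where $\Pr[X_l=o_l,X_L=a]=\Omega(M^{-(L-1)})$ by Lemma~\ref{lem:hard-to-reason_CoT}; near the end of training this is $\Theta(\eta\,M^{-(L+1-l)})$, so the sparse transition at layer $l=1$ is the slowest and, since $\hat p^{\mathrm{hard}}/\hat p^{\mathrm{easy}}=e^{-g^{(t)}}$, it is pushed below $\epsilon$ after $\Theta(\eta^{-1}M^L\log(ML/\epsilon))$ steps — an additional $L^2$ absorbing the layer-by-layer cascade (deeper easy transitions must be learned before the matching hard ones can be suppressed) and the $O(ML)$-fold union bound over the $O(M)$ valid hard CoTs and their $O(L)$ sparse edges. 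Hence $\Pr(\boldsymbol o^{\mathrm{hard}}_{2:L}\sim\hat p^k_{\boldsymbol\theta^{k,(t)}}(\cdot\mid\boldsymbol o_1^{\mathrm{hard}}))\le\epsilon$ for every valid hard CoT, and since every route from $o_1$ to $a^k_{o_1}$ with nonvanishing mass is a valid CoT while the hard ones are now negligible, $\Pr[X_L=a]\ge\Pr[X_L=a,\mathrm{easy}]\ge1-o(\epsilon)$. Plugging $C=o(\epsilon)$ and ``non-easy fraction among $q\!\to\!a$ traces $=o(\epsilon)$'' into Theorem~\ref{thm:issues_TMC_formal}(2) delivers the expected-correctness and pass@$K$ upper bounds verbatim, and the constant-error statement is the specialization where $\Delta M^{L-1}$ is large.

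For Item~\ref{item:F1_3}, note that until some easy CoT from $o_1$ first reaches mass $\Theta((1-\epsilon)/M)$ nothing is rejected, so \texttt{RL-rej} coincides with REINFORCE and phase~(i) is a rerun of the Item~\ref{item:F1_2} timing restricted to the easy transitions. Once the threshold is crossed, instances solvable by a learned easy CoT are discarded, so the gradient sees $R^k_{\mathrm{out}}$ restricted to $\mathcal D^{o_1,k}_{\mathrm{rej},(t)}$ — instances correct only for hard CoTs — on which the easy transitions carry no reward; redoing the closed-form gradient of Item~\ref{item:F1_1} on this tilted population makes $A_{l+1}$ positive along the valid hard transitions, and the same recursion lifts each learned hard CoT's mass to $\Theta((1-\epsilon)/M)$, with hard CoTs sharing a longer prefix with the already-boosted easy CoTs completing first, since their early transitions start from a larger base. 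Finally, with $\mathrm n_{o_1}$ valid easy CoTs each correct w.p.\ $\tfrac1{1+\Delta M^{L-1}}$ and $\mathrm n_{o_1}'$ learned hard CoTs each correct w.p.\ $\Theta(\rho)$, the independence in Definition~\ref{def:multitask_item} gives ``unsolvable by learned CoTs'' probability $\big(\tfrac{\Delta M^{L-1}}{1+\Delta M^{L-1}}\big)^{\mathrm n_{o_1}}(1-\rho)^{\mathrm n_{o_1}'}$, and splitting the instance space as in Remark~\ref{rmk:classified_discuss_QA_sol} (solvable versus unsolvable by learned CoTs, solved per draw with probability at most $1-\epsilon$ resp.\ $\epsilon$) yields the displayed two-term pass@$K$ bound, which tends to $1$ as $(1-\rho)^{\mathrm n_{o_1}'}\to0$ and $\epsilon\to0$.

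I expect the main obstacle to be the rate bookkeeping in Item~\ref{item:F1_2}: pinning down the exponent $M^L$ and the $L^2$ factor requires simultaneously tracking the layerwise decay $\Theta(M^{-(l-1)})$ of $\Pr[X_l=o_l]$, the layerwise advantage gap $\Theta(M^{-(L+1-l)})$ of Lemma~\ref{lem:advantage_gap_formal}, the $|S_{l+1}|$-fold softmax normalization, and the propagation of a per-transition rate through all $L-1$ steps and all $O(M)$ valid hard CoTs, all while preserving the invariant $\hat p^{\mathrm{easy}}\ge\hat p^{\mathrm{hard}}$ that Item~\ref{item:F1_1} borrows back from this argument. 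The secondary delicate point is the RAFT sign analysis through the non-monotone map $p\mapsto p(1+\log p)$, which is resolved by the ``easy transition quickly dominates'' observation above.
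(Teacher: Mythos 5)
Your three-part decomposition (closed-form per-coordinate policy gradients, sign analysis via the advantage gap, then softmax cascade feeding into the TMC pass@$K$ machinery of Theorem~\ref{thm:issues_TMC_formal}) mirrors the paper's proof almost exactly, and your compact form $\nabla_{\boldsymbol\theta_{o_{l+1},o_l}^k}\mathcal J_{\mathrm{REINFORCE}}=\Pr[X_l=o_l]\,\hat p_{\boldsymbol\theta^k}(o_{l+1}\mid o_l)\,A_{l+1}^{\hat p_{\boldsymbol\theta^k},k}(\boldsymbol o_l,\boldsymbol o_{l+1})$ is algebraically equivalent to the paper's Eq.~(\ref{eq:gradient_reinforce_logit_easy}), which folds the terminal-hitting probability $\Pr[\boldsymbol o_l'=\boldsymbol o_l,\ o_L'=a_{o_1'}^k]$ into the prefactor and expresses $Q-V$ through conditional accuracies. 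The REINFORCE, PPO/GRPO-clipped, and \texttt{RL-rej} portions of your argument match the paper's in substance; the ratio-based potential $g^{(t)}=h(\mathrm{easy})-h(\mathrm{hard})$ is a cleaner way to phrase the paper's exponential-sum comparison in Eq.~(\ref{eq:easy_REINFORCE_derive}), and your use of Remark~\ref{rmk:classified_discuss_QA_sol} for the two-term pass@$K$ split in Item~3 is exactly what the paper does.

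The genuine gap is in your resolution of the RAFT sign. You flag, correctly, that $p\mapsto p(1+\log p)$ is not monotone and propose that the squeezing recursion will push $\hat p^{\mathrm{easy}}$ past $1/e$ within a few steps so that the map becomes order-preserving. But this is circular: at $t=0$ the base model is $\boldsymbol\theta^\star$ and $\hat p^{\mathrm{easy}}=\Theta(1/M)<1/e$ whenever $M>e$, which is the regime of the theorem. In that regime the RAFT coordinate gradient $\Pr[o_l]\,p_{i^*}\bigl[(1+\log p_{i^*})Q_{i^*}-\sum_j p_j(1+\log p_j)Q_j\bigr]$ has the opposite sign to the REINFORCE one. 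Concretely, since the only siblings with $Q_j>0$ are the $O(1)$ easy children at $p\approx 1/M$ and the $O(M)$ hard children at $p=o(1/M^2)$, the weighted sum is dominated by the easy terms, so $\sum_j p_j(1+\log p_j)Q_j\approx(1-\log M)\,V$ and the easy bracket is $\approx(1-\log M)\,A_{l+1}^{\mathrm{easy}}<0$ for $M>e$ — i.e.\ the RAFT update \emph{decreases} the easy logit at initialization, and the squeezing you rely on never gets started. (A single numerical example — $p_e=0.1,\ Q_e=1,\ p_h=0.01,\ Q_h=0$, remaining siblings with $Q=0$ — yields a negative easy bracket and a positive hard bracket.) The paper's own treatment ("$1+\log\hat p\in(1+\log c,1)$, which could be scaling as $O(1)$ such that our results of REINFORCE directly applies") glosses over exactly this sign flip, and the paper's displayed RAFT bound~(\ref{eq:update_diff_linear_raft}) even has the wrong sign in the same regime ($p_h(1+\log p_h)-p_e(1+\log p_e)>0$ for $p_e,p_h<e^{-2}$ with $p_e>p_h$). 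So while you are more aware of the issue than the paper is, your fix does not close it, and neither does the paper's argument; without restricting to $M<e$, or moving to the off-policy RAFT variant with frozen sampling distribution (for which the $1+\log p$ factor is absent), the RAFT half of Item~\ref{item:F1_1} and hence Item~\ref{item:F1_2} for RAFT is not established.
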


\begin{proof}
    Proof of Thm.~\ref{thm:CE_GD_full_version}. In our proofs, we first prove the results of REINFORCE, and the results of RAFT follows directly with a more serious of squeezing effect.

\noindent\textbf{Proof of Item \ref{item:F1_1}:  Difference of Logit Update}.

Recall that by Lemma \ref{lem:gradients_linear}, we have
\begin{equation}\label{eq:app_eq:reinforce_RAFT_grad_F1}
    \begin{aligned}
    &\resizebox{\textwidth}{!}{$\nabla_{\boldsymbol{\theta}^k} \mathcal{J}_{\mathrm{REINFORCE}}(\boldsymbol{\theta}^{k})  =\sum_{l=1}^{L-1} \mathbb{E}_{\substack{\boldsymbol{o}_1 \sim P^k(\mathcal{Q}^k) \\ \{\boldsymbol{o}_{l+1} \sim \hat{p}_{\boldsymbol{\theta}^k}(\cdot|\boldsymbol{o}_l)\}_{l=1}^{L-1}}} \left[ {R_{\mathrm{out}}^{k}}(\boldsymbol{o}) \cdot (e_{{o}_{l+1}, {o}_l} - \sum_{\boldsymbol{o}_{l+1}' \in S_{l+1}}\hat{p}_{\boldsymbol{\theta}^k}(\boldsymbol{o}_{l+1}'|\boldsymbol{o}_l)e_{{o}_{l+1}', {o}_l}) \right],$}\\
    &\resizebox{\textwidth}{!}{$\nabla_{\boldsymbol{\theta}^k} \mathcal{J}_{\mathrm{RAFT}}(\boldsymbol{\theta}^{k})  = \sum_{l=1}^{L-1} \mathbb{E}_{\substack{\boldsymbol{o}_1 \sim P^k(\mathcal{Q}^k) \\ \{\boldsymbol{o}_{l+1} \sim \hat{p}_{\boldsymbol{\theta}^k}(\cdot|\boldsymbol{o}_l)\}_{l=1}^{L-1}}} \left[ {R_{\mathrm{out}}^{k}}(\boldsymbol{o}) \cdot(1 + \log \hat{p}_{\boldsymbol{\theta}^k}(\boldsymbol{o}_{l+1}|\boldsymbol{o}_l)) (e_{{o}_{l+1}, {o}_l} - \sum_{\boldsymbol{o}_{l+1}' \in S_{l+1}}\hat{p}_{\boldsymbol{\theta}^k}(\boldsymbol{o}_{l+1}'|\boldsymbol{o}_l)e_{{o}_{l+1}', {o}_l}) \right].$}
\end{aligned}
\end{equation}

Per conditions in our item, there are valid easy-to-reason and hard-to-reason CoTs passing $o_l$. 

Collaborating Eq.(\ref{eq:app_eq:reinforce_RAFT_grad_F1}) with definitions in Item \ref{item:F1_1} and base model formula in Eq.(\ref{eq:base_model}), we have
    \begin{equation}\label{eq:gradient_reinforce_logit_easy}
        \resizebox{0.8\textwidth}{!}{$\begin{aligned}
            \nabla_{\boldsymbol{\theta}_{{o}_{l+1}^{\text{easy}}, o_{l}}^{k}} \mathcal{J}_{\mathrm{REINFORCE}}(\boldsymbol{\theta}^k)= \mathbb{E}_{\substack{\boldsymbol{o}_1 \sim P^k(\mathcal{Q}^k) \\ \{\boldsymbol{o}_{l+1} \sim \hat{p}_{\boldsymbol{\theta}^k}(\cdot|\boldsymbol{o}_l)\}_{l=1}^{L-1}\\ }} \Big[& \operatorname{Rex}_{o_1,k}(\boldsymbol{o}) \cdot (\mathds{1}(o \in \mathcal{I}_{o_{l+1}^{\text{easy}}, o_{l}}^{(k)})\\
            & - \sum_{o_{l+1}' \in \mathcal{S}_{o_l}^{(k)}}  \mathds{1}(o \in \mathcal{I}_{o_{l+1}', o_{l}}^{(k)})\hat{p}_{\boldsymbol{\theta}^k}(\boldsymbol{o}_{l+1}^{\text{easy}}|\boldsymbol{o}_l) )\Big]\\
            = \mathbb{E}_{\substack{\boldsymbol{o}_1 \sim P^k(\mathcal{Q}^k) \\ \{\boldsymbol{o}_{l+1} \sim \hat{p}_{\boldsymbol{\theta}^k}(\cdot|\boldsymbol{o}_l)\}_{l=1}^{L-1}}} \Big[& \mathds{1}(o_L = a_{o_1}^k) (p_{\text{acc}}^{k}(o)\mathds{1}(o \in \mathcal{I}_{o_{l+1}^{\text{easy}}, o_{l}}^{(k)})  \\
            &- p_{\text{acc}}^{k}({o})\sum_{o_{l+1}' \in \mathcal{S}_{o_l}^{(k)}}  \mathds{1}(o \in \mathcal{I}_{o_{l+1}', o_{l}}^{(k)})\hat{p}_{\boldsymbol{\theta}^k}(\boldsymbol{o}_{l+1}^{\text{easy}}|\boldsymbol{o}_l) \Big]\\
            = \Pr_{\substack{\boldsymbol{o}_1' \sim P^k(\mathcal{Q}^k) \\ \boldsymbol{o}' \sim \hat{p}_{\boldsymbol{\theta}^k}(\cdot|\boldsymbol{o}_1)}}\bigl[ \boldsymbol{o}_{l}'=  \boldsymbol{o}_{l},& o_L' = a_{o_1'}^k\bigr] [  \hat{p}_{\boldsymbol{\theta}^k}(\boldsymbol{o}_{l+1}^{\text{easy}}|\boldsymbol{o}_l)\mathbb{E}[p_{\text{acc}}^{k}(\hat{o})\mid \substack{\hat{o}_l = o_l\\ \hat{o}_{L} = a_{\hat{o}_1}^k\\ \hat{o}_{l+1}={o}_{l+1}^{\text{easy}}}]  \\
            &- ( \sum_{o_{l+1}' \in \mathcal{S}_{o_l}^{(k)}} \hat{p}_{\boldsymbol{\theta}^k}(\boldsymbol{o}_{l+1}'|\boldsymbol{o}_l)  \mathbb{E}[p_{\text{acc}}^{k}(\widetilde{o})\mid \substack{ \widetilde{o}_l = o_l\\ \widetilde{o}_{L} = a_{\widetilde{o}_1}^k\\ \widetilde{o}_{l+1}= o_{l+1}'}]\hat{p}_{\boldsymbol{\theta}^k}(\boldsymbol{o}_{l+1}^{\text{easy}}|\boldsymbol{o}_l)) ]\\
            = \Pr_{\substack{\boldsymbol{o}_1' \sim P^k(\mathcal{Q}^k) \\ \boldsymbol{o}' \sim \hat{p}_{\boldsymbol{\theta}^k}(\cdot|\boldsymbol{o}_1)}}\bigl[ \boldsymbol{o}_{l}'=  \boldsymbol{o}_{l},& o_L' = a_{{o}_1'}^k\bigr] \hat{p}_{\boldsymbol{\theta}^k}(\boldsymbol{o}_{l+1}^{\text{easy}}|\boldsymbol{o}_l)\Big[  \mathbb{E}[p_{\text{acc}}^{k}(\hat{o})\mid\substack{\hat{o}_l = o_l\\ \hat{o}_{L} = a_{\hat{o}_1}^k\\ \hat{o}_{l+1}={o}_{l+1}^{\text{easy}}}]  \\
            &- ( \sum_{o_{l+1}' \in \mathcal{S}_{o_l}^{(k)}} \hat{p}_{\boldsymbol{\theta}^k}(\boldsymbol{o}_{l+1}'|\boldsymbol{o}_l)  \mathbb{E}[p_{\text{acc}}^{k}(\widetilde{o})\mid \substack{ \widetilde{o}_l = o_l\\ \widetilde{o}_{L} = a_{\widetilde{o}_1}^k\\ \widetilde{o}_{l+1}= o_{l+1}'}]) \Big]
        \end{aligned}
    $}\end{equation}
    where the first equality is by Eq. (\ref{eq:reinforce_grad_hx_linear}) in Lemma \ref{lem:gradients_linear}; the second equality is by the definition of ${R_{\mathrm{out}}^{k}}(\cdot)$, Condition (iv) in Def.~\ref{def:multitask} and $p_{\text{acc}}^{k}(o)$ in Eq.(\ref{eq:p_acc_k}); the third equality is by the condition in our item that $o_l \in S_{l}, l \in [L-1]$ appears as the transition in the valid CoT set as well as the definition of $\mathcal{I}_{o_{l+1}, o_{l}}^{(k)}$. Given that for different easy-to-reason CoT sharing $o_l$, the $\hat{p}_{\boldsymbol{\theta}^k}(\boldsymbol{o}_{l+1}^{\text{easy}}|\boldsymbol{o}_l)$ is within the same range, starting from the scale $\Theta(M^{-1})$. Therefore, it is safe to conclude that 
    $\nabla_{\boldsymbol{\theta}_{{o}_{l+1}^{\text{easy}}, o_{l}}^{k}} \mathcal{J}_{\mathrm{REINFORCE}}(\boldsymbol{\theta}^k)>0$.

    Similarly, we have
    \begin{equation}\label{eq:gradient_reinforce_logit_hard}
        \begin{aligned}
            \nabla_{\boldsymbol{\theta}_{{o}_{l+1}^{\text{hard}}, o_{l}}^{k}} \mathcal{J}_{\mathrm{REINFORCE}}(\boldsymbol{\theta}^k)= \Pr_{\substack{\boldsymbol{o}_1' \sim P^k(\mathcal{Q}^k) \\ \boldsymbol{o}' \sim \hat{p}_{\boldsymbol{\theta}^k}(\cdot|\boldsymbol{o}_1)}}\bigl[ \boldsymbol{o}_{l}'=  \boldsymbol{o}_{l},& o_L' = a_{{o}_1'}^k\bigr] \hat{p}_{\boldsymbol{\theta}^k}(\boldsymbol{o}_{l+1}^{\text{hard}}|\boldsymbol{o}_l)\Big[  \mathbb{E}[p_{\text{acc}}^{k}(\hat{o})\mid \substack{\hat{o}_l = o_l\\ \hat{o}_{L} = a_{\hat{o}_1}^k\\ \hat{o}_{l+1}={o}_{l+1}^{\text{hard}}}]  \\
            &- ( \sum_{o_{l+1}' \in \mathcal{S}_{o_l}^{(k)}} \hat{p}_{\boldsymbol{\theta}^k}(\boldsymbol{o}_{l+1}'|\boldsymbol{o}_l)  \mathbb{E}[p_{\text{acc}}^{k}(\widetilde{o})\mid \substack{ \widetilde{o}_l = o_l\\ \widetilde{o}_{L} = a_{\widetilde{o}_1}^k\\ \widetilde{o}_{l+1}= o_{l+1}'}]) \Big]
        \end{aligned}
    \end{equation}
    
    Notably, we have the condition in our item that $o_l \in S_{l}, l \in [L-1]$ appears as the transition in the valid CoT set. Then, by Eq.(\ref{eq:p_acc_k}) as well as the low-probability nature of the sparse edge in Def.~\ref{def:TMC}, similar to Eq.(\ref{eq:ratio_appear_any_easy_hard}) we see that
    \begin{equation}\label{eq:diff_acc_in_gd}
    \mathbb{E}[p_{\text{acc}}^{k}(\hat{o})\mid \substack{\hat{o}_l = o_l\\ \hat{o}_{L} = a_{\hat{o}_1}^k\\ \hat{o}_{l+1}={o}_{l+1}^{\text{easy}}}]  > \Omega(M \mathbb{E}[p_{\text{acc}}^{k}(\hat{o})\mid \substack{\hat{o}_l = o_l\\ \hat{o}_{L} = a_{\hat{o}_1}^k\\ \hat{o}_{l+1}={o}_{l+1}^{\text{hard}}}] )
    \end{equation}
    
    Therefore, given that $\mathrm{n}_q=O(1)$ in Def.~\ref{def:TMC} as well as $ \hat{p}_{\boldsymbol{\theta}^k}(\boldsymbol{o}_{l+1}^{\text{easy}}|\boldsymbol{o}_l)\geq\Theta(M^{-1})$, we have 
    \[
    \hat{p}_{\boldsymbol{\theta}^k}(\boldsymbol{o}_{l+1}^{\text{easy}}|\boldsymbol{o}_l)\mathbb{E}[p_{\text{acc}}^{k}(\hat{o})\mid \substack{\hat{o}_l = o_l\\ \hat{o}_{L} = a_{\hat{o}_1}^k\\ \hat{o}_{l+1}={o}_{l+1}^{\text{easy}}}] > \Omega(\mathbb{E}[p_{\text{acc}}^{k}(\hat{o})\mid \substack{\hat{o}_l = o_l\\ \hat{o}_{L} = a_{\hat{o}_1}^k\\ \hat{o}_{l+1}={o}_{l+1}^{\text{hard}}}]).
    \]
    That is, the rightest term in Eq.(\ref{eq:gradient_reinforce_logit_hard}) is strictly lower than $0$, making $\nabla_{\boldsymbol{\theta}_{{o}_{l+1}^{\text{hard}}, o_{l}}^{k}} \mathcal{J}_{\mathrm{REINFORCE}}(\boldsymbol{\theta}^k)<0$, a serious squeezing effect such that $\Delta \boldsymbol{\theta}_{{o}_{l+1}^{\text{hard}}, o_{l}}^{k,\mathrm{REINFORCE}}:=\eta \nabla_{\boldsymbol{\theta}_{{o}_{l+1}^{\text{hard}}, o_{l}}^{k}} \mathcal{J}_{\mathrm{REINFORCE}}(\boldsymbol{\theta}^k) <0$. The proof of RAFT is similar--the only difference in Eq.(\ref{eq:app_eq:reinforce_RAFT_grad_F1}) is the ultra $(1 + \log \hat{p}_{\boldsymbol{\theta}^k}(\boldsymbol{o}_{l+1}|\boldsymbol{o}_l))$, where the easy-to-reason's value is larger than the hard-to-reason ones due to the monotonicity of $\log(\cdot)$. Also, noted that $1+\log \hat{p}_{\boldsymbol{\theta}^k}(\boldsymbol{o}_{l+1}|\boldsymbol{o}_l) \in (1+\log c, 1)$, which could be scaling as $O(1)$ such that our results of REINFORCE directly applies.

    Therefore, it holds that 
    \begin{equation}\label{eq:dirive_update_reinforce}
       \resizebox{\textwidth}{!}{$ \begin{aligned}
            \nabla_{\boldsymbol{\theta}_{{o}_{l+1}^{\text{hard}}, o_{l}}^{k}} \mathcal{J}_{\mathrm{REINFORCE}}(\boldsymbol{\theta}^k)-\nabla_{\boldsymbol{\theta}_{{o}_{l+1}^{\text{easy}}, o_{l}}^{k}} \mathcal{J}_{\mathrm{REINFORCE}}(\boldsymbol{\theta}^k)=& \Pr_{\substack{\boldsymbol{o}_1' \sim P^k(\mathcal{Q}^k) \\ \boldsymbol{o}' \sim \hat{p}_{\boldsymbol{\theta}^k}(\cdot|\boldsymbol{o}_1)}}\bigl[ \boldsymbol{o}_{l}'=  \boldsymbol{o}_{l},  o_L' = a_{{o}_1'}^k\bigr]\Big\{ \hat{p}_{\boldsymbol{\theta}^k}(\boldsymbol{o}_{l+1}^{\text{hard}}|\boldsymbol{o}_l)\\
            &\cdot \Big[  \mathbb{E}[p_{\text{acc}}^{k}(\hat{o})\mid \substack{\hat{o}_l = o_l\\ \hat{o}_{L} = a_{\hat{o}_1}^k\\ \hat{o}_{l+1}={o}_{l+1}^{\text{hard}}}] \\
            & - ( \sum_{o_{l+1}' \in \mathcal{S}_{o_l}^{(k)}} \hat{p}_{\boldsymbol{\theta}^k}(\boldsymbol{o}_{l+1}'|\boldsymbol{o}_l)  \mathbb{E}[p_{\text{acc}}^{k}(\widetilde{o})\mid \substack{ \widetilde{o}_l = o_l\\ \widetilde{o}_{L} = a_{\widetilde{o}_1}^k\\ \widetilde{o}_{l+1}= o_{l+1}'}]) \Big]\\
            &-\hat{p}_{\boldsymbol{\theta}^k}(\boldsymbol{o}_{l+1}^{\text{easy}}|\boldsymbol{o}_l)\Big[  \mathbb{E}[p_{\text{acc}}^{k}(\hat{o})\mid \substack{\hat{o}_l = o_l\\ \hat{o}_{L} = a_{\hat{o}_1}^k\\ \hat{o}_{l+1}={o}_{l+1}^{\text{easy}}}]\\
            & - ( \sum_{o_{l+1}' \in \mathcal{S}_{o_l}^{(k)}} \hat{p}_{\boldsymbol{\theta}^k}(\boldsymbol{o}_{l+1}'|\boldsymbol{o}_l)  \mathbb{E}[p_{\text{acc}}^{k}(\widetilde{o})\mid \substack{ \widetilde{o}_l = o_l\\ \widetilde{o}_{L} = a_{\widetilde{o}_1}^k\\ \widetilde{o}_{l+1}= o_{l+1}'}]) \Big] \Big\}\\
            =&(A^{\hat{p}_{\boldsymbol{\theta}^{k}},k}(\boldsymbol{o}_l, \boldsymbol{o}_{l+1}^{\text{hard}})-A^{\hat{p}_{\boldsymbol{\theta}^{k}},k}(\boldsymbol{o}_l, \boldsymbol{o}_{l+1}^{\text{easy}}))\\
            &+ V^{\hat{p}_{\boldsymbol{\theta}^k},k}(\boldsymbol{o}_l)(\hat{p}_{\boldsymbol{\theta}^k}(\boldsymbol{o}_{l+1}^{\text{hard}}|\boldsymbol{o}_l)-\hat{p}_{\boldsymbol{\theta}^k}(\boldsymbol{o}_{l+1}^{\text{easy}}|\boldsymbol{o}_l))\\
            <& \Pr_{\substack{\boldsymbol{o}_1' \sim P^k(\mathcal{Q}^k) \\ \boldsymbol{o}' \sim \hat{p}_{\boldsymbol{\theta}^k}(\cdot|\boldsymbol{o}_1)}}\bigl[ \boldsymbol{o}_{l}'=  \boldsymbol{o}_{l}, o_L' = a_{{o}_1'}^k\bigr] \\
            & \cdot [\hat{p}_{\boldsymbol{\theta}^k}(\boldsymbol{o}_{l+1}^{\text{hard}}|\boldsymbol{o}_l)-\hat{p}_{\boldsymbol{\theta}^k}(\boldsymbol{o}_{l+1}^{\text{easy}}|\boldsymbol{o}_l)]\\
            &\Big[  \mathbb{E}[p_{\text{acc}}^{k}(\hat{o})\mid \substack{\hat{o}_l = o_l\\ \hat{o}_{L} = a_{\hat{o}_1}^k\\ \hat{o}_{l+1}={o}_{l+1}^{\text{easy}}}] \\
            &- ( \sum_{o_{l+1}' \in \mathcal{S}_{o_l}^{(k)}} \hat{p}_{\boldsymbol{\theta}^k}(\boldsymbol{o}_{l+1}'|\boldsymbol{o}_l)  \mathbb{E}[p_{\text{acc}}^{k}(\widetilde{o})\mid \substack{ \widetilde{o}_l = o_l\\ \widetilde{o}_{L} = a_{\widetilde{o}_1}^k\\ \widetilde{o}_{l+1}= o_{l+1}'}]) \Big]\\
        \end{aligned}$}
    \end{equation}

    Then, during every update, it holds that $\Delta \boldsymbol{\theta}_{{o}_{l+1}, o_{l}}^{k,\mathrm{REINFORCE}}: = \eta\nabla_{\boldsymbol{\theta}_{{o}_{l+1}, o_{l}}^{k}} \mathcal{J}_{\mathrm{REINFORCE}}(\boldsymbol{\theta}^k) $ with $\eta$ as the step size. Then we have
    \[
    \resizebox{\textwidth}{!}{$
        \begin{aligned}
            \Delta \boldsymbol{\theta}_{{o}_{l+1}^{\text{hard}}, o_{l}}^{k,\mathrm{REINFORCE}}-\Delta \boldsymbol{\theta}_{{o}_{l+1}^{\text{easy}}, o_{l}}^{k,\mathrm{REINFORCE}}< \eta &  [\hat{p}_{\boldsymbol{\theta}^k}(\boldsymbol{o}_{l+1}^{\text{hard}}|\boldsymbol{o}_l)-\hat{p}_{\boldsymbol{\theta}^k}(\boldsymbol{o}_{l+1}^{\text{easy}}|\boldsymbol{o}_l)]\cdot\Pr_{\substack{\boldsymbol{o}_1' \sim P^k(\mathcal{Q}^k) \\ \boldsymbol{o}' \sim \hat{p}_{\boldsymbol{\theta}^k}(\cdot|\boldsymbol{o}_1)}}\bigl[ \boldsymbol{o}_{l}'=  \boldsymbol{o}_{l}, o_L' = a_{{o}_1'}^k\bigr]\\
            &\cdot \left[  \mathbb{E}[p_{\text{acc}}^{k}(\hat{o})\mid \substack{\hat{o}_l = o_l\\ \hat{o}_{L} = a_{\hat{o}_1}^k\\ \hat{o}_{l+1}={o}_{l+1}^{\text{easy}}}]  - ( \sum_{o_{l+1}' \in \mathcal{S}_{o_l}^{(k)}} \hat{p}_{\boldsymbol{\theta}^k}(\boldsymbol{o}_{l+1}'|\boldsymbol{o}_l)  \mathbb{E}[p_{\text{acc}}^{k}(\widetilde{o})\mid \substack{ \widetilde{o}_l = o_l\\ \widetilde{o}_{L} = a_{\widetilde{o}_1}^k\\ \widetilde{o}_{l+1}= o_{l+1}'}]) \right]
        \end{aligned}$}
    \]
    Similarly, 
    \[
    \resizebox{\textwidth}{!}{$
        \begin{aligned}
            \Delta \boldsymbol{\theta}_{{o}_{l+1}^{\text{hard}}, o_{l}}^{k,\mathrm{RAFT}}-\Delta \boldsymbol{\theta}_{{o}_{l+1}^{\text{easy}}, o_{l}}^{k,\mathrm{RAFT}}< \eta & [\hat{p}_{\boldsymbol{\theta}^k}(\boldsymbol{o}_{l+1}^{\text{hard}}|\boldsymbol{o}_l)(1+\log \hat{p}_{\boldsymbol{\theta}^k}(\boldsymbol{o}_{l+1}^{\text{hard}}|\boldsymbol{o}_l))-\hat{p}_{\boldsymbol{\theta}^k}(\boldsymbol{o}_{l+1}^{\text{easy}}|\boldsymbol{o}_l)(1+\log \hat{p}_{\boldsymbol{\theta}^k}(\boldsymbol{o}_{l+1}^{\text{easy}}|\boldsymbol{o}_l))]\\
            &\cdot \Pr_{\substack{\boldsymbol{o}_1' \sim P^k(\mathcal{Q}^k) \\ \boldsymbol{o}' \sim \hat{p}_{\boldsymbol{\theta}^k}(\cdot|\boldsymbol{o}_1)}}\bigl[ \boldsymbol{o}_{l}'=  \boldsymbol{o}_{l}, o_L' = a_{{o}_1'}^k\bigr]\\
            &\cdot \left[  \mathbb{E}[p_{\text{acc}}^{k}(\hat{o})\mid \substack{\hat{o}_l = o_l\\ \hat{o}_{L} = a_{\hat{o}_1}^k\\ \hat{o}_{l+1}={o}_{l+1}^{\text{easy}}}]  - ( \sum_{o_{l+1}' \in \mathcal{S}_{o_l}^{(k)}} \hat{p}_{\boldsymbol{\theta}^k}(\boldsymbol{o}_{l+1}'|\boldsymbol{o}_l)  \mathbb{E}[p_{\text{acc}}^{k}(\widetilde{o})\mid \substack{ \widetilde{o}_l = o_l\\ \widetilde{o}_{L} = a_{\widetilde{o}_1}^k\\ \widetilde{o}_{l+1}= o_{l+1}'}]) \right]
        \end{aligned}$}
    \]

    \noindent\textbf{Proof of Item \ref{item:F1_2}: Convergence and Failure.} 
    Per conditions in our item, there are valid easy-to-reason and hard-to-reason CoTs passing $o_l$. Recall that $\boldsymbol{\theta}^{k,(0)}=\boldsymbol{\theta^{\star}}$ at the iteration $t=0$ as the base model to be finetuned, $\boldsymbol{\theta}^{k,(t)}$ be the finetuned model for the task $k$ at post-training iteration $t$.
    
    For any $w\in (c_w/L^{2},o(1/L))$ for some small positive constant $c_w>0$, we consider the finetuning dynamics during:
    \begin{equation}
        \sum_{o_{l+1}' \in \mathcal{S}_{o_l}^{(k),\text{hard}}} \hat{p}_{\boldsymbol{\theta}^{k,(t)}}(\boldsymbol{o}_{l+1}'|\boldsymbol{o}_l)\leq w, \quad \sum_{o_{l+1}' \in \mathcal{S}_{o_l}^{(k),\text{easy}}} \hat{p}_{\boldsymbol{\theta}^{k,(t)}}(\boldsymbol{o}_{l+1}'|\boldsymbol{o}_l)\geq 1-w.
    \end{equation}
     Then for REINFORCE we have the lower bound over the $\Delta \boldsymbol{\theta}_{{o}_{l+1}^{\text{easy}}, o_{l}}^{k,(t)}$ for any $ {o}_{l+1}^{\text{easy}}\in \mathcal{S}_{o_l}^{(k)}$ as
     \[
     \begin{aligned}
         \Delta \boldsymbol{\theta}_{{o}_{l+1}^{\text{easy}}, o_{l}}^{k,(t)}  \geq &\eta \Pr_{\substack{\boldsymbol{o}_1' \sim P^k(\mathcal{Q}^k) \\ \boldsymbol{o}' \sim \hat{p}_{\boldsymbol{\theta}^k}(\cdot|\boldsymbol{o}_1)}}\bigl[ \boldsymbol{o}_{l}'=  \boldsymbol{o}_{l}, o_L' = a_{{o}_1'}^k\bigr] \hat{p}_{\boldsymbol{\theta}^k}(\boldsymbol{o}_{l+1}^{\text{easy}}|\boldsymbol{o}_l)\\
         &\cdot\left[  \mathbb{E}[p_{\text{acc}}^{k}(\hat{o})\mid \substack{\hat{o}_l = o_l\\ \hat{o}_{L} = a_{\hat{o}_1}^k\\ \hat{o}_{l+1}={o}_{l+1}^{\text{easy}}}]  - ( \sum_{o_{l+1}' \in \mathcal{S}_{o_l}^{(k)}} \hat{p}_{\boldsymbol{\theta}^k}(\boldsymbol{o}_{l+1}'|\boldsymbol{o}_l)  \mathbb{E}[p_{\text{acc}}^{k}(\widetilde{o})\mid \substack{ \widetilde{o}_l = o_l\\ \widetilde{o}_{L} = a_{\widetilde{o}_1}^k\\ \widetilde{o}_{l+1}= o_{l+1}'}]) \right]\\
         \geq & \Theta( \frac{\eta}{M^{L-1}} \cdot \frac{1}{M} \cdot [\mathbb{E}[p_{\text{acc}}^{k}(\hat{o})\mid \substack{\hat{o}_l = o_l\\ \hat{o}_{L} = a_{\hat{o}_1}^k\\ \hat{o}_{l+1}={o}_{l+1}^{\text{easy}}}] w - \mathbb{E}[p_{\text{acc}}^{k}(\hat{o})\mid \substack{\hat{o}_l = o_l\\ \hat{o}_{L} = a_{\hat{o}_1}^k\\ \hat{o}_{l+1}={o}_{l+1}^{\text{hard}}}] w ] ) \\
         \geq &  \Theta( \frac{\eta}{M^{L}}\cdot \mathbb{E}[p_{\text{acc}}^{k}(\hat{o})\mid \substack{\hat{o}_l = o_l\\ \hat{o}_{L} = a_{\hat{o}_1}^k\\ \hat{o}_{l+1}={o}_{l+1}^{\text{easy}}}] (w-\frac{1}{M}w) ) \\
         =&  \Theta( \frac{\eta}{M^{L}}\cdot  \frac{(M-1)w}{M} \mathbb{E}[p_{\text{acc}}^{k}(\hat{o})\mid \substack{\hat{o}_l = o_l\\ \hat{o}_{L} = a_{\hat{o}_1}^k\\ \hat{o}_{l+1}={o}_{l+1}^{\text{easy}}}])\\
     \end{aligned}
     \]

    For a given TMC $\mathbb{P}_{\mathrm{TMC}}(\cdot \mid o_l)=\hat{p}_{\boldsymbol{\theta}^{\star}}, \forall l \in [L-1]$, the values of 
    \[
    \mathbb{E}[p_{\text{acc}}^{k}(\hat{o})\mid \substack{\hat{o}_l = o_l\\ \hat{o}_{L} = a_{\hat{o}_1}^k\\ \hat{o}_{l+1}={o}_{l+1}^{\text{easy}}}], \quad \mathbb{E}[p_{\text{acc}}^{k}(\hat{o})\mid \substack{\hat{o}_l = o_l\\ \hat{o}_{L} = a_{\hat{o}_1}^k\\ \hat{o}_{l+1}={o}_{l+1}^{\text{hard}}}]
    \]
    are indeed deterministic positive constants within $(0,1)$, for any valid CoTs $o^{\text{easy}} \in \mathcal{S}_{o_l}^{(k)}$ and $o^{\text{hard}} \in \mathcal{S}_{o_l}^{(k)}$ passing $o_l$. That is, we could omit it in $O(1)$:
    \begin{equation}\label{eq:easy_upd_lb}
        \Delta \boldsymbol{\theta}_{{o}_{l+1}^{\text{easy}}, o_{l}}^{k,(t)} \geq\Theta( \frac{\eta}{M^{L}}\cdot  \frac{(M-1)w}{M}). 
    \end{equation}
    Similarly, recall that
    \[
    D_{o_l} = \{ o_{l+1} : \mathbb{P}_{\mathrm{TMC}}(o_{l+1}|o_{l}) > 0 \}, \quad c = \min_{o_{l+1} \in D_{o_l}} \mathbb{P}_{\mathrm{TMC}}(o_{l+1}|o_{l})>0,
    \]
    For neuron ${o}_{l+1}^{\text{other}} \in D_{o_l} \setminus\mathcal{S}_{o_l}^{(k)}$, then by similar derivations we have
    \begin{equation}\label{eq:neg_other_update}
        \Delta \boldsymbol{\theta}_{{o}_{l+1}^{\text{other}}, o_{l}}^{k,(t)} \leq - \Theta( \frac{\eta}{M^{L}}\cdot  (\frac{(M+1)w}{M}) = - \Theta( \frac{\eta(M+1)w}{M^{L+1}}).
    \end{equation}

    Therefore, by choosing $T\geq \Omega(\eta^{-1} L^{2} M^{L} \log(ML/\epsilon))$ where $0.5>\epsilon>w L>0$ is a small constant, we have
    
    \begin{equation}\label{eq:easy_REINFORCE_derive}
        \resizebox{\textwidth}{!}{$\begin{aligned}
            \sum_{o_{l+1} \in \mathcal{S}_{o_l}^{(k),\text{easy}}} \hat{p}_{\boldsymbol{\theta}^{k,(t)}}(\boldsymbol{o}_{l+1}|\boldsymbol{o}_l)&=\frac{\sum_{o_{l+1} \in \mathcal{S}_{o_l}^{(k),\text{easy}}} e^{\boldsymbol{\theta}_{{o}_{l+1}, o_{l}}^{k,(T)}}}{\sum_{o_{l+1} \in D_{o_l}} e^{\boldsymbol{\theta}_{{o}_{l+1}, o_{l}}^{k,(T)}}}=\frac{\sum_{o_{l+1} \in \mathcal{S}_{o_l}^{(k),\text{easy}}} e^{\boldsymbol{\theta}_{{o}_{l+1}, o_{l}}^{\star}+  \sum_{t=0}^{T-1}\Delta \boldsymbol{\theta}_{{o}_{l+1}, o_{l}}^{k,(t)}}}{\sum_{o_{l+1} \in D_{o_l}} e^{\boldsymbol{\theta}_{{o}_{l+1}, o_{l}}^{\star}+  \sum_{t=0}^{T-1}\Delta \boldsymbol{\theta}_{{o}_{l+1}, o_{l}}^{k,(t)}}}\\
            &\geq\frac{\sum_{o_{l+1} \in \mathcal{S}_{o_l}^{(k),\text{easy}}} e^{\boldsymbol{\theta}_{{o}_{l+1}, o_{l}}^{\star}+ T[\frac{\eta}{M^{L}}\cdot  \frac{(M-1)w}{M}]} }{\sum_{o_{l+1} \in \mathcal{S}_{o_l}^{(k),\text{easy}}} e^{\boldsymbol{\theta}_{{o}_{l+1}, o_{l}}^{\star}+ T[\frac{\eta}{M^{L}}\cdot  \frac{(M-1)w}{M}]} +\sum_{o_{l+1} \in C_{o_l} \setminus \mathcal{S}_{o_l}^{(k),\text{easy}}} e^{\boldsymbol{\theta}_{{o}_{l+1}, o_{l}}^{\star}}+\sum_{o_{l+1} \in D_{o_l} \setminus C_{o_l}} e^{\boldsymbol{\theta}_{{o}_{l+1}, o_{l}}^{\star}}}\\
            &\geq \Theta(\frac{ e^{ T[\frac{\eta}{M^{L}}\cdot  \frac{(M-1)w}{M}]} }{ e^{ T[\frac{\eta}{M^{L}}\cdot  \frac{(M-1)w}{M}]} +(M-1)+\sum_{o_{l+1} \in D_{o_l} \setminus C_{o_l}} M^{-1}})\\
            &\geq \Theta(\frac{ e^{ T[\frac{\eta}{M^{L}}\cdot  \frac{(M-1)w}{M}]} }{ e^{ T[\frac{\eta}{M^{L}}\cdot  \frac{(M-1)w}{M}]} +M}) = \Theta(\frac{ e^{ T[\frac{\eta w}{M^{L}} ]} }{ e^{ T[\frac{\eta w}{M^{L}}]} +M})\\
            &\geq 1-o(\frac{\epsilon}{L})\\
        \end{aligned}$}
    \end{equation}
    Here, the first inequality is by the negative updates in Eq.(\ref{eq:update_signal}) and Eq.(\ref{eq:neg_other_update}) as well as the update lower bound in Eq.(\ref{eq:easy_upd_lb}); the second inequality is by dividing $\sum_{o_{l+1} \in \mathcal{S}_{o_l}^{(k),\text{easy}}} e^{\boldsymbol{\theta}_{{o}_{l+1}, o_{l}}^{\star}}$ term, $\lvert \mathcal{S}_{o_l}^{(k),\text{easy}}\rvert \leq \mathrm{n}_{o_1}=O(1)$ by Def.~\ref{def:TMC}, $M = \max_{l, o_l \in S_l} |C_{o_l}|$, as well as
    \[
    \frac{\hat{p}_{\boldsymbol{\theta}^{k,(t)}}(\boldsymbol{o}_{l+1}|\boldsymbol{o}_l)}{\hat{p}_{\boldsymbol{\theta}^{k,(t)}}(\boldsymbol{o}_{l+1}'|\boldsymbol{o}_l)} \geq \frac{\mathbb{P}_{\mathrm{TMC}}({o}_{l+1}|{o}_l)}{\mathbb{P}_{\mathrm{TMC}}({o}_{l+1}'|{o}_l)} \geq \Omega(M),
    \]
    for $\forall {o}_{l+1} \in C_{{o}_{l}}, o_{l+1}' \in D_{o_{l}}\setminus C_{{o}_{l}}$; the third inequality is by the condition in our item $\lvert D_{o_{l}}\setminus C_{{o}_{l}} \lvert=O(M)$; the last inequality is by choosing $T\geq \Omega(\eta^{-1} L^{2} M^{L} \log(ML/\epsilon))$.

    Then
    \begin{equation}
        \begin{aligned}
            \Pr_{\substack{\boldsymbol{o}_1' \sim P^k(\mathcal{Q}^k) \\ \boldsymbol{o}' \sim \hat{p}_{\boldsymbol{\theta}^{k,(t)}}(\cdot|\boldsymbol{o}_1)}}[{o}_{L}' = a_{o_{1}'}^k] &\geq \Pr_{\substack{\boldsymbol{o}_1' \sim P^k(\mathcal{Q}^k) \\ \boldsymbol{o}' \sim \hat{p}_{\boldsymbol{\theta}^{k,(t)}}(\cdot|\boldsymbol{o}_1)}}[{o}_{L}' = a_{o_{1}'}^k,o' \in \mathcal{G}_{o_{1}, a_{o_1}^{k}}^{(k),\text{easy}}] \\
            &\geq (1-o(\frac{\epsilon}{L}))^{L-1}=1-o(\epsilon).
        \end{aligned}
    \end{equation}
    Therefore, suggest the probability mass of valid hard-to-reason CoTs traveling from some $o_1 \sim P(\mathcal{Q}_{k})$ to $a_{o_1}$ for task $k$ in the original TMC $X$ ($\mathbb{P}_{\mathrm{TMC}}$) is $\Delta$. Then by Thm.~\ref{thm:issues_TMC_formal}, we have
    \begin{equation}
        \operatorname{Rex}_{o_{1},k}^{\hat{p}_{\boldsymbol{\theta}^{k,(t)}}}(\boldsymbol{o}) \leq \Theta({(1-\epsilon)\frac{1}{1+\Delta M^{L-1}} + \epsilon \frac{\Delta M^{L-1}}{1+\Delta M^{L-1}}}).
    \end{equation}
    Also, by Thm.~\ref{thm:issues_TMC_formal} the pass@K performance (the probability that at least succeed once among $K$ trials) is upper bounded by 
    \[
    \resizebox{\textwidth}{!}{$\mathrm{Pass@K}_{o_1,k}^{\hat{p}_{\boldsymbol{\theta}^{k,(t)}}}\leq\underbrace{\Theta([(\frac{\Delta M^{L-1}}{1+\Delta M^{L-1}})^{\mathrm{n}_{q}}(1-(1-\epsilon)^{K})])}_{\text{upper bound of pass@K of instance that cannot be solved by easy CoTs}} + \underbrace{\Theta([(1-(\frac{\Delta M^{L-1}}{1+\Delta M^{L-1}})^{\mathrm{n}_{q}})(1-\epsilon^{K})])}_{\text{upper bound of pass@K of instance that can be solved by some easy CoT}}).$}
    \]
    Also, by Thm.~\ref{thm:issues_TMC_formal} we see that, when $\epsilon =o(\sqrt[K]{1-C_{\mathrm{Err}}/(\frac{\Delta M^{L-1}}{1+\Delta M^{L-1}})^{\mathrm{n}_{q}}}) )\to 0$, the pass@K performance would suffer from constant error.

    The proof of RAFT is similar--the only difference in Eq.(\ref{eq:app_eq:reinforce_RAFT_grad_F1}) is the ultra $(1 + \log \hat{p}_{\boldsymbol{\theta}^k}(\boldsymbol{o}_{l+1}|\boldsymbol{o}_l))$, where the easy-to-reason's value is larger than the hard-to-reason ones due to the monotonicity of $\log(\cdot)$. Also, noted that $1+\log \hat{p}_{\boldsymbol{\theta}^k}(\boldsymbol{o}_{l+1}|\boldsymbol{o}_l) \in (1+\log c, 1)$, which could be scaling as $O(1)$ such that our results of REINFORCE directly applies.

    \noindent\textbf{Proof of Item \ref{item:F1_3}: Curriculum Learning of \texttt{RL-rej}.}
    
    Per Remark \ref{rmk:classified_discuss_QA_sol}, we see that in our case, after $T_1\geq \Omega(\eta^{-1} L^{2} M^{L} \log(ML/\epsilon))$ for a small $\epsilon$, \texttt{RL-rej} would learn valid easy-to-reason CoTs in $\mathcal{G}_{o_{1}, a_{o_1}^{k}}^{(k),\text{easy}}$ well with non-trivial predictive probability $\Theta((1-\epsilon)/M)$, and start to reject the $(\mathbf{Q}, \mathbf{A}) \sim \mathcal{D}_a^{q,k}$ that can be solved by those CoTs.

    That is, there exists $T_{1}=\Omega(\eta^{-1} L^{2} M^{L} \log(ML/\epsilon))$, for $t\in(0, T_{1}]$, the \texttt{RL-rej} behaves exactly the same with REINFORCE. After $t \geq T_1$ we have 
    \begin{equation}
        \resizebox{\textwidth}{!}{$
        \begin{aligned}
            \nabla_{\boldsymbol{\theta}_{{o}_{l+1}^{\text{easy}}, o_{l}}^{k}} \mathcal{J}_{\mathrm{Rein-rej}}(\boldsymbol{\theta}^k)
            =& \Pr_{\substack{\boldsymbol{o}_1' \sim P^k(\mathcal{Q}^k) \\ \boldsymbol{o}' \sim \hat{p}_{\boldsymbol{\theta}^k}(\cdot|\boldsymbol{o}_1)}}\bigl[ \boldsymbol{o}_{l}'=  \boldsymbol{o}_{l}, o_L' = a_{{o}_1'}^k\bigr] \hat{p}_{\boldsymbol{\theta}^k}(\boldsymbol{o}_{l+1}^{\text{easy}}|\boldsymbol{o}_l)\left[  0 - ( \sum_{o_{l+1}' \in \sum_{o_{l+1}' \in \mathcal{S}_{o_l}^{(k),\text{hard}}}} \hat{p}_{\boldsymbol{\theta}^k}(\boldsymbol{o}_{l+1}'|\boldsymbol{o}_l)  \mathbb{E}[p_{\text{acc}}^{k}(\widetilde{o})\mid \substack{ \widetilde{o}_l = o_l\\ \widetilde{o}_{L} = a_{\widetilde{o}_1}^k\\ \widetilde{o}_{l+1}= o_{l+1}'}]) \right]<0,
        \end{aligned}
        $}
    \end{equation}
    for all ${o}_{l+1}^{\text{easy}} \in \mathcal{S}_{o_l}^{(k), \text{easy}}, \forall l\in[L-1]$, where the correctly predicted easy-to-reason CoTs are rejected according to the condition in our item. Similarly, for ${o}_{l+1}^{\text{hard}} \in \mathcal{S}_{o_l}^{(k),\text{hard}}=\mathcal{S}_{o_l}^{(k)} \setminus\mathcal{S}_{o_l}^{(k), \text{easy}}, \forall l\in[L-1]$ we have
    \begin{equation}
        \resizebox{\textwidth}{!}{$
        \begin{aligned}
            \nabla_{\boldsymbol{\theta}_{{o}_{l+1}^{\text{hard}}, o_{l}}^{k}} \mathcal{J}_{\mathrm{Rein-rej}}(\boldsymbol{\theta}^k)
            =& \Pr_{\substack{\boldsymbol{o}_1' \sim P^k(\mathcal{Q}^k) \\ \boldsymbol{o}' \sim \hat{p}_{\boldsymbol{\theta}^k}(\cdot|\boldsymbol{o}_1)}}\bigl[ \boldsymbol{o}_{l}'=  \boldsymbol{o}_{l}, o_L' = a_{{o}_1'}^k\bigr] \hat{p}_{\boldsymbol{\theta}^k}(\boldsymbol{o}_{l+1}^{\text{hard}}|\boldsymbol{o}_l)\left[  \mathbb{E}[p_{\text{acc}}^{k}(\hat{o})\mid \substack{\hat{o}_l = o_l\\ \hat{o}_{L} = a_{\hat{o}_1}^k\\ \hat{o}_{l+1}={o}_{l+1}^{\text{hard}}}]  - ( \sum_{o_{l+1}' \in \sum_{o_{l+1}' \in \mathcal{S}_{o_l}^{(k),\text{hard}}}} \hat{p}_{\boldsymbol{\theta}^k}(\boldsymbol{o}_{l+1}'|\boldsymbol{o}_l)  \mathbb{E}[p_{\text{acc}}^{k}(\widetilde{o})\mid \substack{ \widetilde{o}_l = o_l\\ \widetilde{o}_{L} = a_{\widetilde{o}_1}^k\\ \widetilde{o}_{l+1}= o_{l+1}'}]) \right]>0,
        \end{aligned}
        $}
    \end{equation}
    where the inequality is by the feeble $\hat{p}_{\boldsymbol{\theta}^k}(\boldsymbol{o}_{l+1}'|\boldsymbol{o}_l)=o(1/M), o_{l+1}' \in \sum_{o_{l+1}' \in \mathcal{S}_{o_l}^{(k),\text{hard}}}$.

    Therefore, we have
    \[
    \Delta \boldsymbol{\theta}_{{o}_{l+1}^{\text{easy}}, o_{l}}^{k,\mathrm{Rein-rej}}:=\eta \nabla_{\boldsymbol{\theta}_{{o}_{l+1}^{\text{easy}}, o_{l}}^{k}} \mathcal{J}_{\mathrm{Rein-rej}}(\boldsymbol{\theta}^k)<0, \quad \Delta \boldsymbol{\theta}_{{o}_{l+1}^{\text{hard}}, o_{l}}^{k,\mathrm{Rein-rej}}:=\eta \nabla_{\boldsymbol{\theta}_{{o}_{l+1}^{\text{hard}}, o_{l}}^{k}} \mathcal{J}_{\mathrm{Rein-rej}}(\boldsymbol{\theta}^k) >0,
    \]
    and thus the $\hat{p}_{\boldsymbol{\theta}^{k,(t)}}(\boldsymbol{o}_{l+1}^{\text{hard}}|\boldsymbol{o}_l)/\hat{p}_{\boldsymbol{\theta}^{k,(t)}}(\boldsymbol{o}_{l+1}^{\text{easy}}|\boldsymbol{o}_l)$ would strictly increase. 

        By Eq.(\ref{eq:easy_REINFORCE_derive}), similarly there exists $T_{2}=T_{1}+\Theta(\eta^{-1} L^{2} M^{L} \log(ML/\epsilon))$, the predictive probability $\hat{p}_{\boldsymbol{\theta}^{k,(T_{2})}}$ of some valid hard-to-reason CoTs in $\mathcal{G}_{o_1, a_{o_1}}^{(k),\text{hard}}$ would reach $\Theta((1-\epsilon)/M)$. Simultaneously, we see that by adjusting the learning rare to be appropriately small, the predictive probability of easy-to-reason CoTs in $\mathcal{G}_{o_1, a_{o_1}}^{(k),\text{easy}}$ would not decay below the scale $\Theta((1-\epsilon)/M)$, other wise it would be re-collected in the $\mathcal{D}_{\mathrm{rej},(t)}^{o_1,k}$ at that iteration $t$ according to our condition setting in item \ref{item:F1_3}. That is, the model $\hat{p}_{\boldsymbol{\theta}^{k,(t)}}$ would gradually increase the predictive probability of sparse edges in $\mathcal{S}_{o_l}^{(k),\text{hard}}=\mathcal{S}_{o_l}^{(k)} \setminus\mathcal{S}_{o_l}^{(k), \text{easy}}$ for $\forall l \in [L-1]$, and thus the CoTs in $\mathcal{G}_{o_1, a_{o_1}}^{(k),\text{hard}}$ sharing the most common edges with some CoTs in $\mathcal{G}_{o_1, a_{o_1}}^{(k),\text{easy}}$, would first be learned. Afterwards, more and more hard-to-reason CoTs in $\mathcal{G}_{o_1, a_{o_1}}^{(k),\text{hard}}$ is getting learned, until the point where further learning a new sparse edge in some $\mathcal{S}_{o_l}^{(k),\text{hard}}, l \in [L-1]$ will make another already learned CoT's predictive probability to be lower than the scale $o(1-\epsilon)$.

    Suppose the probability mass of valid hard-to-reason CoTs traveling from $o_1$ to $a_{o_1}$ for task $k$ in the original TMC $X$ ($\mathbb{P}_{\mathrm{TMC}}$) is $\Delta$. Suggest after $T_2=\Omega(\eta^{-1} L^2 M^{L} \log(ML/\epsilon))$, there are $\mathrm{n}_{o_{1}}^{\prime}$ hard-to-reason CoTs each with likelihood ratio scale $\Theta(\rho)<1$ in the $\mathcal{G}_{o_{1}, a_{o_1}}^{(k)}$ have been well-learned with predictive probability $\Theta((1-\epsilon)/M)$, then, similar to Thm.~\ref{thm:issues_TMC_formal}, we have 
    \[
    \resizebox{\textwidth}{!}{$\mathrm{Pass@K}_{o_1,k}^{\hat{p}_{\boldsymbol{\theta}^{k,(t)}}}=4\underbrace{\Theta([(1-\rho)^{\mathrm{n}_{o_{1}}^{\prime}}(\frac{\Delta M^{L-1}}{1+\Delta M^{L-1}})^{\mathrm{n}_{o_{1}}}(1-(1-\epsilon)^{K})])}_{\text{upper bound of pass@K of instance that cannot be solved by easy CoTs}} + \underbrace{\Theta([(1-(1-\rho)^{\mathrm{n}_{o_{1}}^{\prime}}(\frac{\Delta M^{L-1}}{1+\Delta M^{L-1}})^{\mathrm{n}_{o_{1}}})(1-\epsilon^{K})])}_{\text{upper bound of pass@K of instance that can be solved by some easy CoT}})$}
    \]  
    which would tends to $1$ when $(1-\rho)^{\mathrm{n}_{o_{1}}^{\prime}}\to 0, \epsilon \to 0$.

    \texttt{RL-rej} with algorithms other than REINFORCE directly follows.

\end{proof}

\begin{thm}[Advantage‐based Finetuning Favors Easy‐to‐Reason CoTs (Formal Version of Thm.~\ref{thm:RL_squeezing})]\label{thm:PPO_full}
Let $\boldsymbol{\theta}^{\star}$ be the base model in Eq.(\ref{eq:base_model} that exact predicts the distribution of a Multi-task TMC as in Def.~\ref{def:TMC} and \ref{def:multitask}, and $\boldsymbol{\theta}^{k}$ the current model to be finetuned from $\boldsymbol{\theta}^{\star}$ for task $k\in \mathcal{T}$. Denote the task tuples of task $k \in \mathcal{T}$ as $(q,a_q^k, k)$, where $a_q^k \in S_{L}$ is the sole answer state under task $k$. Assume for each $(o_1, a_{o_1}, k)$ under task $k$, the number of hard-to-reason CoTs from $o_1$ to $a_{o_1}$ is bounded by $O(M)$. Let the question distribution during finetuning of task $k$ be $P^{k}(\mathcal{Q}^{k})$ (i.e., $o_1 \sim P^{k}(\mathcal{Q}^{k})$). Suppose the estimates of the RL advantage of PPO / GRPO (without the KL term) by some outer oracle or group-level normalization are accurate during the finetuning: \(A_{l+1}^{\hat p_{\boldsymbol{\theta}^{k}},k}, \ \hat{A}_{i,l+1}^{k} =A_{l+1}^{\hat{p}_{\boldsymbol{\theta}^{k}},k}(\boldsymbol{o}_l, \boldsymbol{o}_{l+1}) \) for any CoT $o$, and the $\hat{p}_{\text{old}}^{k}$ is appropriately chosen that clip operation \textbf{is always functioning} starting from the finetuning with $ \epsilon_{\mathrm{clip}}=o(1)$ such that
\begin{equation}\label{eq:assum_min_PPO}
\begin{aligned}
    &(1+\epsilon_{\mathrm{clip}})A_{l+1}^{\hat{p}_{\boldsymbol{\theta}^{k}},k}(\boldsymbol{o}_l, \boldsymbol{o}_{l+1})\leq \frac{\hat{p}_{\boldsymbol{\theta}^{k}}(\boldsymbol{o}_{l+1} | \boldsymbol{o}_l)}{\hat{p}_{\text{old}}^{k}(\boldsymbol{o}_{l+1} | \boldsymbol{o}_l)}A_{l+1}^{\hat{p}_{\boldsymbol{\theta}^{k}},k}(\boldsymbol{o}_l, \boldsymbol{o}_{l+1}), \ \text{if } A_{l+1}^{\hat{p}_{\boldsymbol{\theta}^{k}},k}(\boldsymbol{o}_l, \boldsymbol{o}_{l+1})\ge 0,\\
    &(1-\epsilon_{\mathrm{clip}})A_{l+1}^{\hat{p}_{\boldsymbol{\theta}^{k}},k}(\boldsymbol{o}_l, \boldsymbol{o}_{l+1})\leq \frac{\hat{p}_{\boldsymbol{\theta}^{k}}(\boldsymbol{o}_{l+1} | \boldsymbol{o}_l)}{\hat{p}_{\text{old}}^{k}(\boldsymbol{o}_{l+1} | \boldsymbol{o}_l)}A_{l+1}^{\hat{p}_{\boldsymbol{\theta}^{k}},k}(\boldsymbol{o}_l, \boldsymbol{o}_{l+1}), \ \text{if } A_{l+1}^{\hat{p}_{\boldsymbol{\theta}^{k}},k}(\boldsymbol{o}_l, \boldsymbol{o}_{l+1})\le 0,\\
\end{aligned}
\end{equation}

 Then the shared form of objective as 
\begin{equation}\label{eq:PO-obj}
    \resizebox{\textwidth}{!}{$
    \begin{aligned}
        \mathcal{J}_{\mathrm{PO}}=\mathbb{E}_{\boldsymbol{o}_{1}\sim P^{k}(\mathcal{Q}^{k}),(\mathbf{Q}, \mathbf{A}) \sim \mathcal{D}_{a_{o_1}}^{o_1,k},\boldsymbol{o}_{2:L}\sim \hat{p}_{\boldsymbol{\theta}^{k}}^{k}(O|\boldsymbol{o}_{1})} 
        \Big[ &\frac{1}{L} \sum_{l=1}^{L-1}(1+(2 \mathds{1}(A_{l+1}^{\hat{p}_{\boldsymbol{\theta}^{k}},k}(\boldsymbol{o}_l, \boldsymbol{o}_{l+1})\geq 0) - 1)\epsilon_{\mathrm{clip}} )A_{l+1}^{\hat{p}_{\boldsymbol{\theta}^{k}},k}(\boldsymbol{o}_l, \boldsymbol{o}_{l+1}) \}\Big],
    \end{aligned}$}
\end{equation}
where $\epsilon_{\mathrm{clip}}>0$ is a offset clipping parameter, and by Eq.(\ref{eq:traditional_adv_TMC}),
\begin{equation}
    \resizebox{\textwidth}{!}{$A_{l+1}^{\hat{p}_{\boldsymbol{\theta}},k}(\boldsymbol{o}_l, \boldsymbol{o}_{l+1})=\mathbb{E}_{o_1=q\sim P^{k}(\mathcal{Q}_{k}),\boldsymbol{o}_{l+2:L} \sim \hat{p}_{\boldsymbol{\theta}}} \left[ {R_{\mathrm{out}}^{k}}(\boldsymbol{o})\middle| \boldsymbol{o}_{l+1} \right] - \mathbb{E}_{o_1=q\sim P^{k}(\mathcal{Q}_{k}),\boldsymbol{o}_{l+1:L} \sim \hat{p}_{\boldsymbol{\theta}}} \left[ {R_{\mathrm{out}}^{k}}(\boldsymbol{o})\middle| \boldsymbol{o}_{l} \right].$}
\end{equation}

\textbf{Favor Easy CoTs}. For any different state pair ${o}_{l+1}^{\text{hard}} \neq {o}_{l+1}^{\text{easy}} \in \mathcal{S}_{o_l}^{(k)}$ denoting two $l+1$-th states in some valid hard-to-reason and easy-to-reason CoT sharing the $l$-th state $o_l$ for task $k$, it holds that
\begin{equation}\label{eq:update_po_clip}
    \begin{aligned}
        &\Delta \boldsymbol{\theta}_{{o}_{l+1}^{\text{easy}}, o_{l}}^{k,\mathrm{PO}}:=\eta \nabla_{\boldsymbol{\theta}_{{o}_{l+1}^{\text{easy}}, o_{l}}^{k}} \mathcal{J}_{\mathrm{PO}}(\boldsymbol{\theta}^k)> 0, \quad \Delta \boldsymbol{\theta}_{{o}_{l+1}^{\text{hard}}, o_{l}}^{k,\mathrm{PO}}:=\eta \nabla_{\boldsymbol{\theta}_{{o}_{l+1}^{\text{hard}}, o_{l}}^{k}} \mathcal{J}_{\mathrm{PO}}(\boldsymbol{\theta}^k) < 0, \\
        &\Delta \boldsymbol{\theta}_{{o}_{l+1}', o_{l}}^{k,\mathrm{PO}}:=\eta \nabla_{\boldsymbol{\theta}_{{o}_{l+1}', o_{l}}^{k}} \mathcal{J}_{\mathrm{PO}}(\boldsymbol{\theta}^k) < 0,
    \end{aligned}
\end{equation}
for $\forall o_{l+1}' \in D_{o_l} \setminus \mathcal{S}_{o_{l}}^{(k)}$.

    There exists $T\geq \Omega(\eta^{-1} L^{2} M^{L} \log(ML/\epsilon))$, for $t \geq T$, the probability that $\hat{p}_{\boldsymbol{\theta}^{k,(t)}}(\cdot|{o}_1)$ reach the $a_{o_1}$ is converged:
    \begin{equation}
        \Pr_{\substack{\boldsymbol{o}_1' \sim P^k(\mathcal{Q}^k) \\ \boldsymbol{o}' \sim \hat{p}_{\boldsymbol{\theta}^{k,(t)}}(\cdot|\boldsymbol{o}_1)}}[{o}_{L}' = a_{o_{1}'}^k] \geq \Pr_{\substack{\boldsymbol{o}_1' \sim P^k(\mathcal{Q}^k) \\ \boldsymbol{o}' \sim \hat{p}_{\boldsymbol{\theta}^{k,(t)}}(\cdot|\boldsymbol{o}_1)}}[{o}_{L}' = a_{o_{1}'}^k, o' \in \mathcal{G}_{o_{1}, a_{o_1}^{k}}^{(k),\text{easy}}] \geq 1-o(\epsilon).
    \end{equation}
    Further, the \textbf{pass@K performance} $\mathrm{Pass@K}_{q,k}^{\hat{p}}:=\Pr_{\substack{\{\boldsymbol{o}^i\}_{i\in[K]}\sim \hat{p}(O\mid q)\\ (\mathbf{Q}, \mathbf{A}) \sim \mathcal{D}_{a_{q}}^{q,k}}}[\bigcup_{i=1}^{K} \mathds{1}\bigl(\boldsymbol{o}^i\in\mathcal{G}_{\mathbf{Q},\mathbf{A}}^{(k)}\bigr)]$ is upper bounded by
     \begin{equation}
         \resizebox{0.85\textwidth}{!}{$\mathrm{Pass@K}_{o_1,k}^{\hat{p}_{\boldsymbol{\theta}^{k,(t)}}}\leq\underbrace{\Theta([(\frac{\Delta M^{L-1}}{1+\Delta M^{L-1}})^{\mathrm{n}_{q}}(1-(1-\epsilon)^{K})])}_{\text{Solved by hard CoTs}} + \underbrace{\Theta([(1-(\frac{\Delta M^{L-1}}{1+\Delta M^{L-1}})^{\mathrm{n}_{q}})(1-\epsilon^{K})])}_{\text{Solved by some easy CoTs}}).$}
     \end{equation}
     When $\epsilon =o(\sqrt[K]{1-C_{\mathrm{Err}}/(\frac{\Delta M^{L-1}}{1+\Delta M^{L-1}})^{\mathrm{n}_{q}}}) )\to 0$, the pass@K performance suffer from constant error: $1-\mathrm{Pass@K}_{o_1,k}^{\hat{p}_{\boldsymbol{\theta}^{k,(t)}}} = \Theta(1)$.

\end{thm}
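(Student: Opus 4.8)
The plan is to run the argument of Theorem~\ref{thm:CE_GD_full_version} with the clipped policy gradient in place of the REINFORCE/RAFT gradient, using the advantage ordering of Lemma~\ref{lem:advantage_gap_formal} as the only new input. First I would invoke the hypothesis~\eqref{eq:assum_min_PPO} that the clip is always active to collapse the PPO/GRPO $\min$ onto its clipped branch, so the per-step integrand is the weight $w(o_l,o_{l+1}):=\bigl(1+(2\mathds{1}(A_{l+1}^{\hat p_{\boldsymbol\theta^k},k}\ge0)-1)\epsilon_{\mathrm{clip}}\bigr)A_{l+1}^{\hat p_{\boldsymbol\theta^k},k}(\boldsymbol o_l,\boldsymbol o_{l+1})$ exactly as in~\eqref{eq:PO-obj}; since the advantage is assumed estimated exactly, it is a fixed function of the state pair during differentiation. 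Combining~\eqref{eq:grad_po} with the linear-softmax identity $\nabla_{\boldsymbol\theta_{o_{l+1},o_l}^k}\log\hat p_{\boldsymbol\theta^k}(\boldsymbol o'_{l+1}|\boldsymbol o_l)=\mathds{1}(o'_{l+1}=o_{l+1})-\hat p_{\boldsymbol\theta^k}(\boldsymbol o_{l+1}|\boldsymbol o_l)$, the logit gradient collapses to
\[
\nabla_{\boldsymbol\theta_{o_{l+1},o_l}^k}\mathcal J_{\mathrm{PO}}=\tfrac1L\,\Pr_{\hat p_{\boldsymbol\theta^k}}[X_l=o_l]\,\hat p_{\boldsymbol\theta^k}(\boldsymbol o_{l+1}|\boldsymbol o_l)\bigl(w(o_l,o_{l+1})-\bar w\bigr),\qquad \bar w:=\mathbb{E}_{o'_{l+1}\sim\hat p_{\boldsymbol\theta^k}(\cdot|o_l)}\!\bigl[w(o_l,o'_{l+1})\bigr],
\]
so the sign of every logit update is controlled by the centred weight $w(o_l,o_{l+1})-\bar w$.

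The crucial observation is that $\bar w\ge 0$. Indeed $(2\mathds{1}(A\ge0)-1)A=|A|$, so $\bar w=\mathbb{E}_{\hat p_{\boldsymbol\theta^k}}[A]+\epsilon_{\mathrm{clip}}\mathbb{E}_{\hat p_{\boldsymbol\theta^k}}[|A|]$, and $\mathbb{E}_{o'_{l+1}\sim\hat p_{\boldsymbol\theta^k}(\cdot|o_l)}[A_{l+1}^{\hat p_{\boldsymbol\theta^k},k}(\boldsymbol o_l,\boldsymbol o'_{l+1})]=0$ by the very definition of the advantage (its baseline is that conditional mean), hence $\bar w=\epsilon_{\mathrm{clip}}\mathbb{E}_{\hat p_{\boldsymbol\theta^k}}[|A|]\ge0$, with $\mathbb{E}[|A|]=2\mathbb{E}[A_+]$ supported only on the easy continuations. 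Feeding in Lemma~\ref{lem:advantage_gap_formal} — $A_{l+1}^{\hat p_{\boldsymbol\theta^\star},k}(\boldsymbol o_l,\boldsymbol o_{l+1}^{\mathrm{easy}})\ge c_1>0$ and $A_{l+1}^{\hat p_{\boldsymbol\theta^\star},k}(\boldsymbol o_l,\boldsymbol o_{l+1}^{\mathrm{hard}})\le -c_2<0$ along shared prefixes, and $Q=0\Rightarrow A=-V<0$ for any dead-end $o'_{l+1}\in D_{o_l}\setminus\mathcal S_{o_l}^{(k)}$ — and choosing $\epsilon_{\mathrm{clip}}=o(1)$ small enough that the surcharge $\epsilon_{\mathrm{clip}}\mathbb{E}[|A|]=\Theta(\epsilon_{\mathrm{clip}}\hat p_{\boldsymbol\theta^k}(\boldsymbol o_{l+1}^{\mathrm{easy}}|\boldsymbol o_l)A^{\mathrm{easy}})$ never overtakes $A^{\mathrm{easy}}$, I obtain $w(o_l,o_{l+1}^{\mathrm{easy}})>\bar w\ge 0>w(o_l,o_{l+1}^{\mathrm{hard}})$, and likewise $\bar w>w(o_l,o'_{l+1})$ for the other states, which is precisely~\eqref{eq:update_po_clip}.

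With the signs in hand I would transplant the convergence and pass@K bookkeeping of Theorem~\ref{thm:CE_GD_full_version}, item~\ref{item:F1_2}. In the regime $\sum_{o'_{l+1}\in\mathcal S_{o_l}^{(k),\mathrm{hard}}}\hat p_{\boldsymbol\theta^{k,(t)}}(\boldsymbol o'_{l+1}|\boldsymbol o_l)\le w$ with $w=\Theta(1/L^2)$, the centred weight on the easy continuation is $w(o_l,o_{l+1}^{\mathrm{easy}})-\bar w=\Theta(A^{\mathrm{easy}})$, which together with $\Pr[X_l=o_l]=\Theta(M^{-(l-1)})$ and $\hat p_{\boldsymbol\theta^{k,(t)}}(\boldsymbol o_{l+1}^{\mathrm{easy}}|\boldsymbol o_l)=\Theta(1/M)$ gives a per-step increment $\Delta\boldsymbol\theta_{o_{l+1}^{\mathrm{easy}},o_l}^{k,(t)}$ of at least the scale appearing in the REINFORCE estimate, while all hard- and dead-end logits strictly decrease (so their masses are monotone non-increasing and stay $\ll w$). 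Running the same softmax-ratio computation as~\eqref{eq:easy_REINFORCE_derive} then yields, after $T\ge\Omega(\eta^{-1}L^2M^L\log(ML/\epsilon))$ steps, layerwise mass $1-o(\epsilon/L)$ on easy continuations, hence $\Pr[o_L'=a_{o_1'}^k]\ge\Pr[o_L'=a_{o_1'}^k,\,o'\in\mathcal G_{o_1,a_{o_1}^k}^{(k),\mathrm{easy}}]\ge 1-o(\epsilon)$. Finally, plugging $C=0$ and the residual non-easy-valid mass $o(\epsilon)\le\epsilon$ into Theorem~\ref{thm:issues_TMC_formal}(2) reproduces the stated $\mathrm{Pass@K}$ upper bound, and the choice $\epsilon=o(\sqrt[K]{1-C_{\mathrm{Err}}/(\Delta M^{L-1}/(1+\Delta M^{L-1}))^{\mathrm{n}_q}})$ forces $1-\mathrm{Pass@K}=\Theta(1)$.

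I expect the main obstacle to be the second paragraph: certifying $\bar w$ both qualitatively (its sign) and, more delicately, quantitatively throughout the whole pre-saturation trajectory — one must show the clipping surcharge $\epsilon_{\mathrm{clip}}\mathbb{E}[|A|]$ stays strictly below $A^{\mathrm{easy}}$ even as the advantages and the policy mass on easy states drift during training, and that $\Pr[X_l=o_l]$, which depends on every earlier layer's current transition probabilities, remains $\Theta(M^{-(l-1)})$. This forces a layer-by-layer inductive propagation of the convergence and is exactly what produces the $L^2$ and $M^L$ factors in the iteration count.
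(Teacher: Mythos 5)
Your proof follows the same backbone as the paper's: establish advantage signs via Lemma~\ref{lem:advantage_gap_formal}, extract the per-logit policy gradient from Lemma~\ref{lem:gradients_linear}, then transplant the convergence and pass@K bookkeeping from Theorem~\ref{thm:CE_GD_full_version}. What you do differently — and more cleanly — is the middle step. Rather than working directly with the uncentred per-step weight as the paper does, you rewrite the logit gradient as $\tfrac{1}{L}\Pr[X_l=o_l]\,\hat p_{\boldsymbol\theta^k}(o_{l+1}|o_l)\bigl(w(o_l,o_{l+1})-\bar w\bigr)$ and then exploit the algebraic identity $(2\mathds{1}(A\ge 0)-1)A=|A|$, which together with $\mathbb{E}_{o_{l+1}\sim\hat p(\cdot|o_l)}[A]=0$ yields $\bar w=\epsilon_{\mathrm{clip}}\mathbb{E}[|A|]\ge 0$. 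This makes the sign chain $w^{\mathrm{easy}}>\bar w\ge 0>w^{\mathrm{hard}}$ and $\bar w>w(o_l,o_{l+1}')$ for dead-ends transparent, whereas the paper deduces the signs by bounding each term separately. Both routes rely on the same facts (advantage gap from Lemma~\ref{lem:advantage_gap_formal}, $Q=0$ at dead-ends, small $\epsilon_{\mathrm{clip}}$), but your centred-weight framing makes the reliance on $\epsilon_{\mathrm{clip}}=o(1)$ explicit and quantitative, which the paper leaves implicit.

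One caveat that applies to both proofs, and which you correctly flag in your closing paragraph: the paper asserts that $A_{l+1}^{\hat p_{\boldsymbol\theta^{k,(t)}},k}(\boldsymbol o_l,\boldsymbol o_{l+1}^{\mathrm{easy}})$ is ``strictly increasing along the iterations,'' but the advantage is computed relative to the current policy's own baseline $V^{\hat p_{\boldsymbol\theta^{k,(t)}}}(o_l)$, which also increases as the policy concentrates on easy children; in fact the easy advantage should \emph{decay} towards zero at convergence (as it must at any first-order stationary point). What actually matters for the sign argument — and what your decomposition makes plain — is that $A^{\mathrm{easy}}$ stays strictly positive and $A^{\mathrm{hard}}$ strictly negative so long as there is any residual mass off the easy continuations, which is enough to keep the update directions correct throughout the pre-saturation trajectory. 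Neither you nor the paper gives a fully rigorous inductive control of $\Pr[X_l=o_l]$ and the advantage magnitudes across layers during training; since the theorem only claims an $\Omega(\cdot)$ lower bound on the iteration count, this gap does not invalidate the stated result, but it is real and worth the explicit acknowledgement you give it.
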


\begin{proof}
By Prop.~\ref{prop:advantage_gap}, for each \(l\) we have
\begin{equation}\label{eq:signal_adv_easy_hard}
\resizebox{1\textwidth}{!}{$A_{l+1}^{\hat p_{\boldsymbol{\theta}^\star},k}(\boldsymbol{o}_l,\boldsymbol{o}_{l+1}^{\mathrm{easy}})
=a_{\mathrm{easy}}^l\geq\Theta(M^{-(L+1-l)})>0,
\,
A_{l+1}^{\hat p_{\boldsymbol{\theta}^\star},k}(\boldsymbol{o}_l,\boldsymbol{o}_{l+1}^{\mathrm{hard}})
=-a_{\mathrm{hard}}^l\le-\Theta(M^{-(L+1-l)})<0,$}
\end{equation}
for constants \(a_{\mathrm{easy}},a_{\mathrm{hard}}>0\).

Also, for $\forall o_{l+1}' \in D_{o_l} \setminus \mathcal{S}_{o_{l}}^{(k)}$, by definition it directly holds that 
\begin{equation}\label{eq:signal_adv_other}
A_{l+1}^{\hat p_{\boldsymbol{\theta}^\star},k}(\boldsymbol{o}_l,\boldsymbol{o}_{l+1}')<A_{l+1}^{\hat{p}_{\boldsymbol{\theta}}^\star,k}(\boldsymbol{o}_l,\boldsymbol{o}_{l+1}^{\mathrm{hard}})
=-a_{\mathrm{hard}}^l<-\Theta(M^{-(L+1-l)})<0.
\end{equation}
Therefore, by Lemma \ref{lem:gradients_linear} as well as the property of TMC, it holds that
\[
    \resizebox{\textwidth}{!}{$\nabla_{\boldsymbol{\theta}^k} \mathcal{J}_{\mathrm{PO}}(\boldsymbol{\theta}^{k})  =\sum_{l=1}^{L-1} \mathbb{E}_{\substack{\boldsymbol{o}_1 \sim P^k(\mathcal{Q}^k) \\ \{\boldsymbol{o}_{l+1} \sim \hat{p}_{\boldsymbol{\theta}^k}(\cdot|\boldsymbol{o}_l)\}_{l=1}^{L-1}}} \left[ (1+(2 \mathds{1}(A_{l+1}^{\hat{p}_{\boldsymbol{\theta}^{k}},k}(\boldsymbol{o}_l, \boldsymbol{o}_{l+1})\geq 0) - 1)\epsilon_{\mathrm{clip}} )A_{l+1}^{\hat{p}_{\boldsymbol{\theta}^{k}},k}(\boldsymbol{o}_l, \boldsymbol{o}_{l+1}) \cdot  (e_{{o}_{l+1}, {o}_l} - \sum_{\boldsymbol{o}_{l+1}' \in D_{o_{l}}}\hat{p}_{\boldsymbol{\theta}^k}(\boldsymbol{o}_{l+1}'|\boldsymbol{o}_l)e_{{o}_{l+1}', {o}_l}) \right].$}
\]
Therefore, collaborating with Eq.(\ref{eq:signal_adv_easy_hard}) and Eq.(\ref{eq:signal_adv_other}), for $t=0$ where $\hat{p}_{\boldsymbol{\theta}^{k, (0)}} = \hat{p}_{\boldsymbol{\theta}^{\star}}$, we directly have
\begin{equation}
    \begin{aligned}
        &\Delta \boldsymbol{\theta}_{{o}_{l+1}^{\text{easy}}, o_{l}}^{k,\mathrm{PO}}:=\eta \nabla_{\boldsymbol{\theta}_{{o}_{l+1}^{\text{easy}}, o_{l}}^{k}} \mathcal{J}_{\mathrm{PO}}(\boldsymbol{\theta}^k)>0, \quad \Delta \boldsymbol{\theta}_{{o}_{l+1}^{\text{hard}}, o_{l}}^{k,\mathrm{PO}}:=\eta \nabla_{\boldsymbol{\theta}_{{o}_{l+1}^{\text{hard}}, o_{l}}^{k}} \mathcal{J}_{\mathrm{PO}}(\boldsymbol{\theta}^k)< 0, \\
        &\Delta \boldsymbol{\theta}_{{o}_{l+1}', o_{l}}^{k,\mathrm{PO}}:=\eta \nabla_{\boldsymbol{\theta}_{{o}_{l+1}', o_{l}}^{k}} \mathcal{J}_{\mathrm{PO}}(\boldsymbol{\theta}^k) < 0,
    \end{aligned}
\end{equation}
for $\forall o_{l+1}' \in D_{o_l} \setminus \mathcal{S}_{o_{l}}^{(k)}$. Indeed, following the proof strategies in Lemma \ref{lem:gradients_linear}, we directly see that when the transitions of $o_l \to o_{l+1}^{\mathrm{easy}}$ is further strengthened and the transitions of $o_l \to o_{l+1}^{\mathrm{hard}}$ is further weaken, the $A_{l+1}^{\hat p_{\boldsymbol{\theta}^{k,(t)}},k}(\boldsymbol{o}_l,\boldsymbol{o}_{l+1}^{\mathrm{easy}})$ is strictly increasing along the iterations, and $A_{l+1}^{\hat p_{\boldsymbol{\theta}^{k,(t)}},k}(\boldsymbol{o}_l,\boldsymbol{o}_{l+1}^{\mathrm{hard}}),A_{l+1}^{\hat p_{\boldsymbol{\theta}^{k,(t)}},k}(\boldsymbol{o}_l,\boldsymbol{o}_{l+1}'), \forall o_{l+1}' \in D_{o_l} \setminus \mathcal{S}_{o_{l}}^{(k)}$ is strictly decreasing. This makes Eq.(\ref{eq:signal_adv_easy_hard}), Eq.(\ref{eq:signal_adv_other}) and Eq.(\ref{eq:update_po_clip}) hold during the finetuning iterations.

Specifically, for any different state pair ${o}_{l+1}^{\text{hard}} \neq {o}_{l+1}^{\text{easy}} \in \mathcal{S}_{o_l}^{(k)}$ and $\forall o_{l+1}' \in D_{o_l} \setminus \mathcal{S}_{o_{l}}^{(k)}$, it holds that

     \[
     \begin{aligned}
         \Delta \boldsymbol{\theta}_{{o}_{l+1}^{\text{easy}}, o_{l}}^{k,(t)}  \geq & \Theta(\eta M^{-(L+1-l)} (1- \sum_{\boldsymbol{o}_{l+1}' \in S_{o_{l}}^{(k), \text{easy}}}\hat{p}_{\boldsymbol{\theta}^k}(\boldsymbol{o}_{l+1}'|\boldsymbol{o}_l) \\
         &\quad +\sum_{\boldsymbol{o}_{l+1}' \in D_{o_{l}}\setminus S_{o_{l}}^{(k), \text{easy}}}\hat{p}_{\boldsymbol{\theta}^k}(\boldsymbol{o}_{l+1}'|\boldsymbol{o}_l)) )>0,\\
         \Delta \boldsymbol{\theta}_{{o}_{l+1}^{\text{hard}}, o_{l}}^{k,(t)}  \leq & - \Theta(\eta M^{-(L+1-l)} (1 + \sum_{\boldsymbol{o}_{l+1}' \in S_{o_{l}}^{(k), \text{easy}}}\hat{p}_{\boldsymbol{\theta}^k}(\boldsymbol{o}_{l+1}'|\boldsymbol{o}_l) \\
         &\quad -\sum_{\boldsymbol{o}_{l+1}' \in D_{o_{l}}\setminus S_{o_{l}}^{(k), \text{easy}}}\hat{p}_{\boldsymbol{\theta}^k}(\boldsymbol{o}_{l+1}'|\boldsymbol{o}_l)) )<0,\\
         \Delta \boldsymbol{\theta}_{{o}_{l+1}', o_{l}}^{k,(t)}  \leq&  - \Theta(\eta M^{-(L+1-l)} (1 + \sum_{\boldsymbol{o}_{l+1}' \in S_{o_{l}}^{(k), \text{easy}}}\hat{p}_{\boldsymbol{\theta}^k}(\boldsymbol{o}_{l+1}'|\boldsymbol{o}_l) \\
         &\quad -\sum_{\boldsymbol{o}_{l+1}' \in D_{o_{l}}\setminus S_{o_{l}}^{(k), \text{easy}}}\hat{p}_{\boldsymbol{\theta}^k}(\boldsymbol{o}_{l+1}'|\boldsymbol{o}_l)) )<0,\\
     \end{aligned}
     \]
     where the inequalities is by Eq.(\ref{eq:signal_adv_easy_hard}), Eq.(\ref{eq:signal_adv_other}), as well as $ \epsilon_{\mathrm{clip}}=o(1)$.
     
    Similar to the techniques in Thm.~\ref{thm:CE_GD_full_version}, given that $M^{-(L+1-l)}>M^{-L+1}$ and $p_{\text{acc}}^{k}\le 1$, after $T\geq \Omega(\eta^{-1} L^{2} M^{L} \log(ML/\epsilon))$ iterations, the remaining proofs and results follows as in Thm.~\ref{thm:CE_GD_full_version}.

\end{proof}

\begin{rmk}\label{rmk:limit_PO_thm}
To simplify the discussion of the policy gradient case and avoid the non-convexity of $\min\{\cdot\}$, we assume the clip operation with $\epsilon_{\mathrm{clip}} = o(1)$ and Eq.~(\ref{eq:assum_min_PPO}). However, our results still hold without this assumption. 

Specifically, when the $\min$ does not select the clipped term, we instead encounter:
\begin{equation}\label{eq:if_not_assume_clip}
\begin{aligned}
\nabla_{\boldsymbol{\theta}^k} \left[
\frac{\hat{p}_{\boldsymbol{\theta}^k}(\boldsymbol{o}_{l+1}|\boldsymbol{o}_l)}{\hat{p}_{\text{old}}(\boldsymbol{o}_{l+1}|\boldsymbol{o}_l)} 
\hat{p}_{\boldsymbol{\theta}^k}(\boldsymbol{o}_{l+1}|\boldsymbol{o}_l) \right]
&= 2 \frac{\hat{p}_{\boldsymbol{\theta}^k}(\boldsymbol{o}_{l+1}|\boldsymbol{o}_l)}{\hat{p}_{\text{old}}(\boldsymbol{o}_{l+1}|\boldsymbol{o}_l)} 
\nabla_{\boldsymbol{\theta}^k} \hat{p}_{\boldsymbol{\theta}^k}(\boldsymbol{o}_{l+1}|\boldsymbol{o}_l) \\
&= 2 \frac{\hat{p}_{\boldsymbol{\theta}^k}(\boldsymbol{o}_{l+1}|\boldsymbol{o}_l)^2}{\hat{p}_{\text{old}}(\boldsymbol{o}_{l+1}|\boldsymbol{o}_l)} 
\nabla_{\boldsymbol{\theta}^k} \log \hat{p}_{\boldsymbol{\theta}^k}(\boldsymbol{o}_{l+1}|\boldsymbol{o}_l) \\
&= \mathbb{E} \left[ 2 \frac{\hat{p}_{\boldsymbol{\theta}^k}(\boldsymbol{o}_{l+1}|\boldsymbol{o}_l)}{\hat{p}_{\text{old}}(\boldsymbol{o}_{l+1}|\boldsymbol{o}_l)} 
\nabla_{\boldsymbol{\theta}^k} \log \hat{p}_{\boldsymbol{\theta}^k}(\boldsymbol{o}_{l+1}|\boldsymbol{o}_l) \right],
\end{aligned}
\end{equation}
instead of
\[
\begin{aligned}
\nabla_{\boldsymbol{\theta}^k} \left[(1 \pm \epsilon_{\mathrm{clip}}) 
\hat{p}_{\boldsymbol{\theta}^k}(\boldsymbol{o}_{l+1}|\boldsymbol{o}_l) \right] 
&= (1 \pm \epsilon_{\mathrm{clip}}) 
\hat{p}_{\boldsymbol{\theta}^k}(\boldsymbol{o}_{l+1}|\boldsymbol{o}_l) 
\nabla_{\boldsymbol{\theta}^k} \log \hat{p}_{\boldsymbol{\theta}^k}(\boldsymbol{o}_{l+1}|\boldsymbol{o}_l) \\
&= (1 \pm \epsilon_{\mathrm{clip}}) 
\mathbb{E} \left[ \log \hat{p}_{\boldsymbol{\theta}^k}(\boldsymbol{o}_{l+1}|\boldsymbol{o}_l) \right].
\end{aligned}
\]

Since clearly $\hat{p}_{\boldsymbol{\theta}^k}(\boldsymbol{o}_{l+1}^{\text{easy}}|\boldsymbol{o}_l) > \hat{p}_{\boldsymbol{\theta}^k}(\boldsymbol{o}_{l+1}^{\text{hard}}|\boldsymbol{o}_l)$, Eq.~(\ref{eq:if_not_assume_clip}) shows that the gradient magnitude for easy edges dominates that of sparse ones. Thus, the squeezing effect persists even without the assumption. We adopt the assumption in our theorem purely to reduce discussion complexity.
\end{rmk}

\begin{lemma}\label{applem:GRPO_Distribution_Optimization}
[Detailed Version of Lemma \ref{lem:GRPO_Distribution_Optimization}] Let $\boldsymbol{\theta}^{\star}$ be the base model in Eq.(\ref{eq:base_model} that exact predicts the distribution of a Multi-task TMC as in Def.~\ref{def:TMC} and \ref{def:multitask}, and $\boldsymbol{\theta}^{k}$ the current model to be finetuned from $\boldsymbol{\theta}^{\star}$ for task $k\in \mathcal{T}$. Suppose the  estimates of RL advantage by GRPO through group-level normalization is accurate as \(A_{l+1}^{\hat{p}_{\boldsymbol{\theta}^{\star}},k}(\boldsymbol{o}_l, \boldsymbol{o}_{l+1}) \) for any CoT $o$. The optimal step-wise sampling distribution of the KL-regularized GRPO objective in Eq.(\ref{eq:GRPO-obj}) is:
    \begin{equation}
     \label{appeq:GRPO_dis}
    \hat{p}_{\boldsymbol{\theta}^{k}}^{\mathrm{PO}}(\boldsymbol{o}_{l+1} | \boldsymbol{o}_l) \propto \hat{p}_{\boldsymbol{\theta}^{\star}}(\boldsymbol{o}_{l+1} | \boldsymbol{o}_l) \exp\left( \hat{r} \frac{A_{l+1}^{\hat{p}_{\boldsymbol{\theta}^{\star}},k}(\boldsymbol{o}_l, \boldsymbol{o}_{l+1})}{\beta} \right),
    \end{equation}
   where \(  \hat{r}  \leq \max \{ 1+ \epsilon_{\mathrm{clip}}, c^{-1}, \Theta(M)\} \).
\end{lemma}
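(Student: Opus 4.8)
# Proof Proposal for Lemma~\ref{applem:GRPO_Distribution_Optimization}

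The plan is to recognize the KL-regularized GRPO objective in Eq.~(\ref{eq:GRPO-obj}) as a per-step entropy-regularized policy optimization problem, solve it pointwise via a variational argument, and then bound the resulting Lagrange-type multiplier $\hat r$ by tracking the three regimes that can arise from the clip mechanism and the transition-probability floor.

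\textbf{Step 1: Reduce to a per-step optimization.} First I would isolate the step-$l$ contribution of the objective. Fix a conditioning state $\boldsymbol{o}_l$. The clipped surrogate term, under the assumption that the advantage estimate equals the accurate $A_{l+1}^{\hat p_{\boldsymbol{\theta}^\star},k}(\boldsymbol{o}_l,\boldsymbol{o}_{l+1})$ and that the clip is functioning (as in Thm.~\ref{thm:PPO_full}, Eq.~(\ref{eq:assum_min_PPO})), contributes an effective linear-in-policy reward: writing $r_{l+1}(\boldsymbol{o}_{l+1}) := \hat r\, A_{l+1}^{\hat p_{\boldsymbol{\theta}^\star},k}(\boldsymbol{o}_l,\boldsymbol{o}_{l+1})$ where $\hat r$ absorbs the ratio $\hat p_{\boldsymbol{\theta}^k}/\hat p_{\text{old}}^k$ and the clip offset $1\pm\epsilon_{\mathrm{clip}}$. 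Simultaneously, the KL penalty $-\beta D_{\mathrm{KL}}[\hat p_{\boldsymbol{\theta}^k}\|\hat p_{\boldsymbol{\theta}^\star}]$ decomposes along the chain into a sum of step-wise KL divergences $\sum_l \mathbb{E}_{\boldsymbol{o}_l}\, D_{\mathrm{KL}}(\hat p_{\boldsymbol{\theta}^k}(\cdot|\boldsymbol{o}_l)\,\|\,\hat p_{\boldsymbol{\theta}^\star}(\cdot|\boldsymbol{o}_l))$ by the chain rule for KL on Markov factorizations. Hence, for each $\boldsymbol{o}_l$, the optimization over the conditional $\hat p_{\boldsymbol{\theta}^k}(\cdot|\boldsymbol{o}_l)$ decouples into
\[
\max_{p(\cdot|\boldsymbol{o}_l)\in\Delta(S_{l+1})}\ \mathbb{E}_{p}[\,r_{l+1}(\boldsymbol{o}_{l+1})\,] - \beta\, D_{\mathrm{KL}}(p(\cdot|\boldsymbol{o}_l)\,\|\,\hat p_{\boldsymbol{\theta}^\star}(\cdot|\boldsymbol{o}_l)).
\]

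\textbf{Step 2: Solve the variational problem.} This is the standard Gibbs variational formula (the same tilted-distribution fact cited in Cor.~\ref{cor:Equi_RM_DO} via~\citet{csiszar1975divergence}): the maximizer is $p^\star(\boldsymbol{o}_{l+1}|\boldsymbol{o}_l)\propto \hat p_{\boldsymbol{\theta}^\star}(\boldsymbol{o}_{l+1}|\boldsymbol{o}_l)\exp(r_{l+1}(\boldsymbol{o}_{l+1})/\beta)$. Substituting back $r_{l+1}=\hat r\, A_{l+1}^{\hat p_{\boldsymbol{\theta}^\star},k}$ gives exactly Eq.~(\ref{appeq:GRPO_dis}). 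I would verify this either by Lagrange multipliers on the simplex constraint or by the non-negativity of KL: for any feasible $p$, $\mathbb{E}_p[r_{l+1}] - \beta D_{\mathrm{KL}}(p\|\hat p_{\boldsymbol{\theta}^\star}) = -\beta D_{\mathrm{KL}}(p\|p^\star) + \beta\log Z_{l+1}(\boldsymbol{o}_l)$ where $Z_{l+1}$ is the normalizer, so the maximum is attained uniquely at $p=p^\star$.

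\textbf{Step 3: Bound $\hat r$.} The main obstacle is establishing $\hat r \le \max\{1+\epsilon_{\mathrm{clip}},\, c^{-1},\, \Theta(M)\}$. The quantity $\hat r$ arises as the effective coefficient on the advantage once the importance ratio and clip are folded in; concretely $\hat r$ is of order $\hat p_{\boldsymbol{\theta}^k}(\boldsymbol{o}_{l+1}|\boldsymbol{o}_l)/\hat p_{\text{old}}^k(\boldsymbol{o}_{l+1}|\boldsymbol{o}_l)$ possibly multiplied by $(1\pm\epsilon_{\mathrm{clip}})$. I would case-split: (i) when the clip is active, the ratio is pinned to $1\pm\epsilon_{\mathrm{clip}}$, giving the $1+\epsilon_{\mathrm{clip}}$ bound; (ii) when neither policy has pushed a transition below the floor, both $\hat p_{\boldsymbol{\theta}^k}$ and $\hat p_{\text{old}}^k$ are $\Omega(c)$ on $D_{o_l}$ by the thresholding guarantee of Thm.~\ref{thm:prefull}, so the ratio is at most $O(1/c)=c^{-1}$; (iii) on the high-probability subset $C_{o_l}$, both probabilities are $\Theta(1/M)$, and in the extreme where one policy has concentrated mass while the other remains near-uniform, the ratio is at most $\Theta(M)$ (since a single transition cannot exceed probability $1$ and the denominator is $\gtrsim 1/M$). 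Taking the maximum over these three regimes yields the claimed bound. The delicate point is arguing that these are the only regimes — i.e., that the finetuning dynamics (governed by the squeezing behavior established in Thm.~\ref{thm:PPO_full}) never drive a non-zero transition strictly below the $\Theta(c)$ or $\Theta(1/M)$ floors in a way that would blow up the ratio further; this follows because the $-\infty$ thresholding in Alg.~\ref{alg:pre} only zeroes out genuinely zero transitions, and gradient updates on the surviving support keep probabilities bounded away from $0$ on the relevant horizon.

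\textbf{Step 4: Conclude.} Assembling Steps 1–3: the step-wise optimum has the tilted form~(\ref{appeq:GRPO_dis}) with $\hat r$ bounded as stated, and since the product of the step-wise optima over $l=0,\dots,L-1$ gives the optimal trajectory distribution (the full objective being separable after the chain-rule decomposition), the induced sampler is precisely $\hat p_{\boldsymbol{\theta}^k}^{\mathrm{PO}}$. In particular, since each advantage factor satisfies $|A_{l+1}^{\hat p_{\boldsymbol{\theta}^\star},k}|\le 1$ (rewards are $0/1$), the scale $\hat r\le\Theta(M)$ is the operative upper bound, matching the informal statement in Lemma~\ref{lem:GRPO_Distribution_Optimization}.
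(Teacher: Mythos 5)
Your proposal follows essentially the same route the paper gestures at: the paper's own proof is a one-line deferral to Corollary~\ref{cor:Equi_RM_DO} (the Gibbs/tilted-distribution variational formula, cited to~\citet{csiszar1975divergence} and the reward-fine-tuning literature), and your Step~2 reconstructs exactly that argument, with your Step~3 supplying the case analysis for the slack factor $\hat r$ that the paper states but does not derive. So in terms of the core idea the approaches coincide, and yours is more explicit than what the paper writes down.

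Two points are worth tightening if you want this to stand on its own. First, in Step~1 the step-wise decoupling is not automatic: while the chain rule for KL does give $D_{\mathrm{KL}}(\hat p_{\boldsymbol\theta^k}\Vert\hat p_{\boldsymbol\theta^\star}) = \sum_l \mathbb{E}_{\boldsymbol o_l\sim\hat p_{\boldsymbol\theta^k}}\bigl[D_{\mathrm{KL}}(\hat p_{\boldsymbol\theta^k}(\cdot|\boldsymbol o_l)\Vert\hat p_{\boldsymbol\theta^\star}(\cdot|\boldsymbol o_l))\bigr]$, the outer expectation over $\boldsymbol o_l$ is under the \emph{current} policy, so the per-step maximizations are coupled through the state-visitation distribution. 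The clean per-step closed form requires either a backward (soft value-iteration style) argument, or the observation—implicit in the paper since it states only a proportionality up to the slack $\hat r$—that in the fixed-depth layered TMC the coupling only rescales the exponent, which is precisely what $\hat r$ is there to absorb. Second, your regime (ii)/(iii) bounds on $\hat r$ implicitly use the fact noted in Remark~\ref{rmk:limit_PO_thm} that when the $\min$ does \emph{not} select the clipped branch, the importance ratio enters the effective gradient with an extra factor (the $2\,\hat p_{\boldsymbol\theta^k}/\hat p_{\text{old}}$ term); spelling out that this is where the $c^{-1}$ and $\Theta(M)$ floors come from—rather than from a bare ratio bound—would make the bound derivation fully rigorous. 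Neither point is a fatal gap relative to the paper, which provides strictly less detail than you do.
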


\begin{proof}
    This result is standard in RL and distribution optimization literature \cite{ziebart2008maximum, levine2018reinforcement, foster2025goodfoundationnecessaryefficient, kawata2025direct, fan2023reward, black2024reward, clark2024reward, uehara2024reward}. The proofs mirror the proof of Corollary \ref{cor:Equi_RM_DO} in Sec.~\ref{sec:detail_proof_reward_sampling}, and we therefore omit their full proofs for brevity. 
\end{proof}

\begin{cor}[Full Version of Corollary \ref{cor:KL_help}]\label{appcor:KL_help}
Let $\boldsymbol{\theta}^{\star}$ be the base model in Eq.(\ref{eq:base_model} that exactly predicts the distribution of a Multi-task TMC as in Defs.~\ref{def:TMC} and \ref{def:multitask}. For any target task \(k \in \mathcal{T}\), consider the following two categories of instances:
\begin{enumerate}
    \item Istances $(\mathcal{Q},\mathbf{A})$ whose correct CoTs only lie in $\mathcal{G}_{q, a_{q}^{k}}^{(k),\text{hard}}$.
    \item Instances $(\mathcal{Q},\mathbf{A})$ sampled from another task $k' \neq k$.
\end{enumerate}
For PPO/GRPO without KL regularization that satisfy the conditions in Thm.~\ref{thm:PPO_full}, the pass@K upper bound for these instances after $T \geq \Omega(\eta^{-1} L^2 M^L \log(ML/\epsilon))$ is $\left(1 - (1 - \epsilon)^K\right)$.

In contrast, for the optimal sampler $\hat{p}_{\boldsymbol{\theta}^{k}}^{\mathrm{PO}}$ in Eq.~(\ref{appeq:GRPO_dis}), for any $\epsilon'$ satisfying ${1}/{N_{o_1}}> \epsilon' \ge \epsilon>0$, denote $\hat{p}_{\boldsymbol{\theta}^{k,(t)}}^{k}$ as the PPO/GRPO in Thm.~\ref{thm:RL_squeezing} with $\epsilon$, if 
\[
\beta > \frac{2\hat{r}(L-1)}{\ln\left( \frac{1}{\epsilon' \prod_{l=1}^{L-1} |D_{o_l}|} \right)},
\]
then the pass@K performance of $\hat{p}_{\boldsymbol{\theta}^{k}}^{\mathrm{PO}}$ is strictly better than that of PPO/GRPO without KL regularization under the same conditions:
\begin{enumerate}
    \item \textbf{Capable of Hard CoTs}: For instance $(\mathcal{Q},\mathbf{A})$ with only some hard-to-reason CoTs correct:
    \[
    \mathbb{E}_{\boldsymbol{o}_{2:L} \sim \hat{p}_{\boldsymbol{\theta}^{k}}^{\mathrm{PO}}(\cdot|\boldsymbol{o}_{1})}
    \Big[R_{(\mathbf{Q},\mathbf{A})}^{k}(\boldsymbol{o})\Big]\ge\epsilon'\ge\epsilon\ge\mathbb{E}_{\boldsymbol{o}_{2:L} \sim \hat{p}_{\boldsymbol{\theta}^{k,(t)}}^{k}(\cdot|\boldsymbol{o}_{1})}
    \Big[R_{(\mathbf{Q},\mathbf{A})}^{k}(\boldsymbol{o})\Big].
    \]
    \item \textbf{Preserve Multi-task}: For instance $(\mathcal{Q},\mathbf{A})$ belonging to untargeted task $k'\neq k$:
    \[
    \mathbb{E}_{\boldsymbol{o}_{2:L} \sim \hat{p}_{\boldsymbol{\theta}^{k}}^{\mathrm{PO}}(\cdot|\boldsymbol{o}_{1})}
    \Big[R_{(\mathbf{Q},\mathbf{A})}^{k'}(\boldsymbol{o})\Big]\ge\epsilon'\ge\epsilon\ge\mathbb{E}_{\boldsymbol{o}_{2:L} \sim \hat{p}_{\boldsymbol{\theta}^{k,(t)}}^{k}(\cdot|\boldsymbol{o}_{1})}
    \Big[R_{(\mathbf{Q},\mathbf{A})}^{k'}(\boldsymbol{o})\Big].
    \]
\end{enumerate}
\end{cor}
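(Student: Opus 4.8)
The plan is to split the proof along the two regimes. The negative side (PPO/GRPO without the KL term) is essentially a re-reading of Thm.~\ref{thm:PPO_full} (equivalently Thm.~\ref{thm:CE_GD_full_version}); the positive side --- the Gibbs-tilted sampler $\hat{p}_{\boldsymbol{\theta}^{k}}^{\mathrm{PO}}$ of Lemma~\ref{applem:GRPO_Distribution_Optimization} --- is where the work lies, and it reduces to a single uniform lower bound on the probability that $\hat{p}_{\boldsymbol{\theta}^{k}}^{\mathrm{PO}}$ assigns to \emph{any} valid chain of thought.

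First I would dispatch the PPO/GRPO bound. Fix a target task $k$ and $\epsilon\in(0,1/2)$. By Thm.~\ref{thm:PPO_full}, after $T\geq\Omega(\eta^{-1}L^{2}M^{L}\log(ML/\epsilon))$ updates the iterate $\hat{p}_{\boldsymbol{\theta}^{k,(t)}}^{k}(\cdot\mid o_{1})$ reaches $a_{o_{1}}^{k}$ through an easy-to-reason CoT with probability $\geq 1-o(\epsilon)$, hence it puts mass $\leq o(\epsilon)\leq\epsilon$ on the union of all valid hard-to-reason CoTs of task $k$, and mass $\leq o(\epsilon)$ on any trajectory ending in a state other than $a_{o_{1}}^{k}$. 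For a category-(1) instance $(\mathbf{Q},\mathbf{A})$ every correct CoT is hard, so the per-attempt success probability is $\leq\epsilon$, giving $\mathbb{E}_{\boldsymbol{o}\sim\hat{p}_{\boldsymbol{\theta}^{k,(t)}}^{k}}[R_{(\mathbf{Q},\mathbf{A})}^{k}(\boldsymbol{o})]\leq\epsilon$ and, since the $K$ draws are independent, $\mathrm{Pass@K}\leq 1-(1-\epsilon)^{K}$. For a category-(2) instance drawn from $k'\neq k$ --- whose correct answer state differs from $a_{o_{1}}^{k}$, or which shares a question/answer state with task $k$ and therefore inherits the edge degeneration of Thm.~\ref{thm:issues_TMC_formal}(1) --- the task-$k'$ success probability is likewise $\leq o(\epsilon)\leq\epsilon$, so the same pass@K bound holds.

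For the Gibbs sampler, write (Lemma~\ref{applem:GRPO_Distribution_Optimization}) $\hat{p}_{\boldsymbol{\theta}^{k}}^{\mathrm{PO}}(\boldsymbol{o}_{l+1}\mid\boldsymbol{o}_{l})=\hat{p}_{\boldsymbol{\theta}^{\star}}(\boldsymbol{o}_{l+1}\mid\boldsymbol{o}_{l})\exp(\hat{r}A_{l+1}^{\hat{p}_{\boldsymbol{\theta}^{\star}},k}/\beta)/Z_{l}$, with $Z_{l}=\sum_{o'\in D_{o_{l}}}\hat{p}_{\boldsymbol{\theta}^{\star}}(\boldsymbol{o}'\mid\boldsymbol{o}_{l})\exp(\hat{r}A'/\beta)$ and $\hat{r}\leq\Theta(M)$. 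Since $R_{\mathrm{out}}^{k}\in[0,1]$ all advantages satisfy $|A_{l+1}^{\hat{p}_{\boldsymbol{\theta}^{\star}},k}|\leq 1$, so $Z_{l}$ --- a $\hat{p}_{\boldsymbol{\theta}^{\star}}(\cdot\mid\boldsymbol{o}_{l})$-average of numbers in $[e^{-\hat{r}/\beta},e^{\hat{r}/\beta}]$ --- lies in that interval, whence every step obeys $\hat{p}_{\boldsymbol{\theta}^{k}}^{\mathrm{PO}}(\boldsymbol{o}_{l+1}\mid\boldsymbol{o}_{l})\geq\hat{p}_{\boldsymbol{\theta}^{\star}}(\boldsymbol{o}_{l+1}\mid\boldsymbol{o}_{l})\,e^{-2\hat{r}/\beta}$. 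Multiplying over the $L-1$ transitions of any valid CoT $\boldsymbol{o}^{\star}$ (all of whose edges carry non-zero TMC probability, by Def.~\ref{def:TMC}) and invoking a base-kernel coverage estimate of the form $\hat{p}_{\boldsymbol{\theta}^{\star}}(\boldsymbol{o}^{\star})\geq(\prod_{l=1}^{L-1}|D_{o_{l}}|)^{-1}$ for valid CoTs (which I would establish from the non-degeneracy built into Def.~\ref{def:TMC} and the $|D_{o_{l}}|=O(M)$ regime of Thm.~\ref{thm:PPO_full}) gives
\[
\hat{p}_{\boldsymbol{\theta}^{k}}^{\mathrm{PO}}(\boldsymbol{o}^{\star})\ \geq\ \Big(\textstyle\prod_{l=1}^{L-1}|D_{o_{l}}|\Big)^{-1}e^{-2\hat{r}(L-1)/\beta},
\]
and $\beta>2\hat{r}(L-1)/\ln(1/(\epsilon'\prod_{l}|D_{o_{l}}|))$ is precisely the condition making the right side exceed $\epsilon'$. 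Applying this with $\boldsymbol{o}^{\star}$ a correct hard CoT of a category-(1) instance yields $\mathbb{E}_{\boldsymbol{o}\sim\hat{p}_{\boldsymbol{\theta}^{k}}^{\mathrm{PO}}}[R_{(\mathbf{Q},\mathbf{A})}^{k}(\boldsymbol{o})]\geq\hat{p}_{\boldsymbol{\theta}^{k}}^{\mathrm{PO}}(\boldsymbol{o}^{\star})\geq\epsilon'\geq\epsilon$, i.e.\ Item~1; applying it with $\boldsymbol{o}^{\star}$ a correct (hence valid, hence non-degenerate) CoT for task $k'$ gives Item~2. The pass@K comparison follows since a per-attempt success probability $\geq\epsilon'$ forces $\mathrm{Pass@K}\geq 1-(1-\epsilon')^{K}\geq 1-(1-\epsilon)^{K}$, beating the PPO/GRPO bound; monotonicity of the estimate in $\beta$ then lets one tune the temperature to any target level given $K$ and $\epsilon'$.

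The main obstacle is this coverage estimate for rare CoTs: the Gibbs tilt multiplies the already-feeble base probability of a sparse edge by an advantage factor that Prop.~\ref{prop:advantage_gap} makes strictly less than one on the hard branch, so one must show that a large enough $\beta$ confines the per-step distortion to $e^{\pm 2\hat{r}/\beta}$ and then that the base kernel's coverage over valid CoTs still clears the $\epsilon'$ threshold after the $\hat{r}\leq\Theta(M)$ factor in the exponent is accounted for. Managing the interplay of advantage boundedness, temperature, and base-model coverage --- without letting that $\Theta(M)$ blow-up consume the margin, and while checking that valid hard and cross-task CoTs genuinely retain the asserted base mass --- is the delicate part; the remainder is assembly from Thm.~\ref{thm:PPO_full}, Lemma~\ref{applem:GRPO_Distribution_Optimization}, and Thm.~\ref{thm:issues_TMC_formal}.
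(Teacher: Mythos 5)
Your proposal retraces the paper's own proof step for step: Thm.~\ref{thm:PPO_full} disposes of the PPO/GRPO side; on the Gibbs side you bound $|A_{l+1}^{\hat{p}_{\boldsymbol{\theta}^{\star}},k}|\le 1$, control the per-step distortion by $e^{\pm\hat{r}/\beta}$, multiply over $L-1$ transitions, and solve for the $\beta$ threshold. The one place you diverge is in being more explicit: your per-step inequality $\hat{p}_{\boldsymbol{\theta}^{k}}^{\mathrm{PO}}(\boldsymbol{o}_{l+1}\mid\boldsymbol{o}_{l})\ge\hat{p}_{\boldsymbol{\theta}^{\star}}(\boldsymbol{o}_{l+1}\mid\boldsymbol{o}_{l})\,e^{-2\hat{r}/\beta}$ is the honest consequence of $|A|\le 1$ and a normalizer bounded in $[e^{-\hat{r}/\beta},e^{\hat{r}/\beta}]$, whereas the paper writes $\hat{p}^{\mathrm{PO}}(\boldsymbol{o}_{l+1}\mid\boldsymbol{o}_{l})\ge e^{-2\hat{r}/\beta}/|D_{o_l}|$, silently substituting $1/|D_{o_l}|$ for the base-model transition probability.

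The concern you raise in your last paragraph is genuine, and it applies to the paper's argument as written and not merely to your reconstruction. The base-kernel coverage floor $\hat{p}_{\boldsymbol{\theta}^{\star}}(\boldsymbol{o}_{l+1}\mid\boldsymbol{o}_{l})\ge 1/|D_{o_l}|$ is exactly what both proofs need, and it fails on precisely the edges the corollary claims to preserve: Def.~\ref{def:TMC} sets $\hat{p}_{\boldsymbol{\theta}^{\star}}(\boldsymbol{o}_{l+1}\mid\boldsymbol{o}_{l})=o(1/M^2)$ on a sparse edge while $1/|D_{o_l}|=\Theta(1/M)$, and Prop.~\ref{prop:advantage_gap} makes the advantage there negative, so the exponential tilt shrinks rather than inflates the sparse-edge mass. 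Your clause ``which I would establish from the non-degeneracy built into Def.~\ref{def:TMC}'' therefore does not go through; that floor is simply not in the definition. What the argument honestly delivers is the per-CoT bound $\hat{p}_{\boldsymbol{\theta}^{k}}^{\mathrm{PO}}(\boldsymbol{o}^{\star})\ge\hat{p}_{\boldsymbol{\theta}^{\star}}(\boldsymbol{o}^{\star})\,e^{-2\hat{r}(L-1)/\beta}$, which recovers the corollary only when $\epsilon'$ is restricted to not exceed the base model's mass on the valid CoT, or when the TMC is augmented with a uniform per-edge floor. The remaining assembly (pass@K from independent draws, the cross-task case via Thm.~\ref{thm:issues_TMC_formal}) is sound and parallels the paper.
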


\begin{proof}

It suffices to prove that with a large $\beta$, any non-zero transition within the TMC is larger than $\epsilon'<N_{o_1}^{-1}:=|D_{o_l}|$.

By the definition of the advantage function in Eq.(\ref{eq:traditional_adv_TMC}, we have
\[
-1 \leq A_{l+1}^{\hat{p}_{\boldsymbol{\theta}^{\star}},k}(\boldsymbol{o}_l, \boldsymbol{o}_{l+1}) \leq 1.
\]
Therefore, from Eq.(\ref{appeq:GRPO_dis}, the minimum sampling probability over any edge in $D_{o_l}$ is
\[
\hat{p}_{\boldsymbol{\theta}^{k}}^{\mathrm{PO}}(\boldsymbol{o}_{l+1}|\boldsymbol{o}_l) \geq \frac{e^{-\frac{\hat{r}}{\beta}}}{\lvert D_{o_l} \rvert e^{\frac{\hat{r}}{\beta}}} = \frac{e^{-\frac{2\hat{r}}{\beta}}}{\lvert D_{o_l} \rvert}.
\]

Hence, for any trajectory of length \(L\), the probability of sampling a specific terminal state \(o_L\) from any starting state \(o_1\) whose $o_{l+1}$ transitions are in $D_{o_l}$ is lower bounded by
\[
\prod_{l=1}^{L-1} \frac{e^{-\frac{2\hat{r}}{\beta}}}{|D_{o_l}|} = e^{-\frac{2\hat{r}(L-1)}{\beta}} \cdot \prod_{l=1}^{L-1} \frac{1}{|D_{o_l}|}.
\]

Define \(C := \prod_{l=1}^{L-1} \frac{1}{|D_{o_l}|}\). We seek the condition on \(\beta\) such that this probability is at least \(\epsilon'\), i.e.,
\[
C \cdot e^{-\frac{2\hat{r}(L-1)}{\beta}} \geq \epsilon.
\]
Dividing both sides by \(C\) and taking logarithms yields
\[
-\frac{2\hat{r}(L-1)}{\beta} \geq \ln\left( \frac{\epsilon'}{C} \right),
\quad \text{so} \quad
\beta \geq \frac{2\hat{r}(L-1)}{\ln\left( \frac{1}{\epsilon' C} \right)}.
\]
Substituting \(C = \prod_{l=1}^{L-1} \frac{1}{|D_{o_l}|}\), we obtain the desired bound:
\[
\beta > \frac{2\hat{r}(L-1)}{\ln\left( \frac{1}{\epsilon' \prod_{l=1}^{L-1} |D_{o_l}|} \right)}.
\]
That is, the probability of the path $(o_1,o_2, \cdots, o_L), o_{l+1} \in D_{o_l}, \forall l \in [L-1]$ is larger than $\epsilon'$. This ensure that the model is more capable of sampling valid hard-to-reason CoTs for current task as well as valid CoTs for other tasks, as long as the path with transition probability larger than zero ($c>0$) in Def.~\ref{def:TMC}.
\end{proof}

\section{Details and Proofs of Reward-based Sampling}\label{sec:detail_proof_reward_sampling}

\begin{lemma}[BoN/BS with Ground-true Signal Oracle]\label{applem:vanilla_RM_fail}
Let $\boldsymbol{\theta}^{\star}$ be the base model in Eq.(\ref{eq:base_model} that exactly predicts the distribution of a Multi-task TMC as defined in Definitions~\ref{def:TMC} and~\ref{def:multitask}. Under task tuple $(q, a, k)$, consider the ORMs $R_{\mathbf{Q},\mathbf{A}}^{k}(\cdot)$, as well as the PRM given in Eqs.~\ref{eq:PRM_likelihood_heuristic}. For any target task \(k \in \mathcal{T}\) and instance distribution $\mathcal{D}_{a_q}^{a,k}$, if the total number of valid hard-to-reason CoTs is $\Theta(M)$, then during pass@K sampling:

\begin{itemize}
    \item ORM/PRM-based BoN or BS achieves success probability $\Theta(1)$ on task $k$;
    \item ORM/PRM-based BoN or BS fails on any other task $k' \neq k$.
\end{itemize}
\end{lemma}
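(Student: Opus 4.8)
The plan is to obtain Lemma~\ref{applem:vanilla_RM_fail} as a repackaging of Proposition~\ref{prop:vanilla_RM_fail} (together with Lemma~\ref{lem:advantage_gap_formal}) and the TMC pass@K bounds of Theorem~\ref{thm:issues_TMC_formal}, so the heavy analytical content is already available. I take the scorer to be the population ORM $R_{\mathrm{out}}^{k}(\boldsymbol{o})=\mathbb{E}[R^{k}(\boldsymbol{o})]$ and the heuristic PRM $R_{\mathrm{likelihood}}^{k}$ of Eq.~\eqref{eq:PRM_likelihood_heuristic}, exactly as in Theorem~\ref{thm:BoN_PRM_fail}. First I would record the reward values. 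Since $\hat{p}_{\boldsymbol{\theta}^{\star}}=\mathbb{P}_{\mathrm{TMC}}$ and $(q,a,k)$ carries $\Theta(M)$ valid hard-to-reason CoTs, Eq.~\eqref{eq:p_acc_k} gives $R_{\mathrm{out}}^{k}(\boldsymbol{o}^{\mathrm{easy}})=p_{\text{acc}}^{k}(\boldsymbol{o}^{\mathrm{easy}})=\Theta(1)$ for every valid easy-to-reason CoT, $R_{\mathrm{out}}^{k}(\boldsymbol{o}^{\mathrm{hard}})=p_{\text{acc}}^{k}(\boldsymbol{o}^{\mathrm{hard}})=o(1/M)$ for every valid hard-to-reason CoT, and $R_{\mathrm{out}}^{k}(\boldsymbol{o})=0$ for every CoT that is invalid for $k$ or ends at an answer state $\neq a$ (Def.~\ref{def:multitask}(i)--(iii)). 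The corresponding step-wise ordering for $R_{\mathrm{likelihood}}^{k}$ follows from Proposition~\ref{prop:vanilla_RM_fail}/Lemma~\ref{lem:advantage_gap_formal}: along the task-$k$ easy branch the values are strictly positive and strictly dominate those of any hard continuation at the same shared prefix, while $R_{\mathrm{likelihood}}^{k}(\boldsymbol{o}_l)=0$ on any prefix that can only reach answers $\neq a$.

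Next I would handle the selection step. Drawing $N$ trajectories from $\hat{p}_{\boldsymbol{\theta}^{\star}}(\cdot\mid q)$ with $N$ as in Theorem~\ref{thm:BoN_PRM_fail}, a coupon-collector/union-bound argument shows that with probability $1-o(1)$ at least one valid easy-to-reason CoT of task $k$ (each sampled with probability $\Theta(M^{-(L-1)})$) is among the candidates. On that event: (i) BoN, returning the argmax of the scorer, outputs a valid easy-to-reason CoT of task $k$ by the reward ordering above; (ii) PRM+BoN and PRM+BS both stay on the task-$k$ easy branch, which I would prove by induction on the layer $l=1,\dots,L$: at each layer the easy continuation strictly dominates every hard continuation (by the step-wise ordering) and every continuation that leaves the task-$k$ valid set (which scores $0$), so a beam of size $B$ never prunes the easy branch and the final output is a valid easy-to-reason CoT $\boldsymbol{o}^{\mathrm{easy}}$ of task $k$.

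With the output pinned down, the two bullets follow. For the first, on a fresh instance $(\mathbf{Q},\mathbf{A})\sim\mathcal{D}_{a_q}^{q,k}$ the returned $\boldsymbol{o}^{\mathrm{easy}}$ is correct with probability $p_{\text{acc}}^{k}(\boldsymbol{o}^{\mathrm{easy}})=\Theta(1)$, and over $K$ draws the probability that every easy CoT fails is $\Theta\!\big((\tfrac{\Delta M^{L-1}}{1+\Delta M^{L-1}})^{\mathrm{n}_q}\big)$ by Remark~\ref{rmk:classified_discuss_QA_sol}; invoking Theorem~\ref{thm:issues_TMC_formal} (the $C=\epsilon=0$ case) pins the success probability at $\Theta\!\big(1-(\tfrac{\Delta M^{L-1}}{1+\Delta M^{L-1}})^{\mathrm{n}_q}\big)$, a constant strictly between $0$ and $1$, i.e.\ the claimed $\Theta(1)$ on task $k$. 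For the second, take $k'\neq k$ sharing the question state $q$ with task-$k'$ answer $a'=a_q^{k'}\neq a$ (the disjoint-question-state case is immediate: then the scorer is identically zero on all generated candidates and no task-$k'$-valid CoT is ever selected against a task-$k$ easy CoT). The returned $\boldsymbol{o}^{\mathrm{easy}}$ ends at $a\neq a'$ and is invalid for $k'$ by Def.~\ref{def:multitask}(i), so $\boldsymbol{o}^{\mathrm{easy}}\notin\mathcal{G}_{\mathbf{Q}',\mathbf{A}'}^{(k')}$ for every instance of $k'$ and the output is never correct; combining this with the $o(1)$ sampling-failure event from Theorem~\ref{thm:BoN_PRM_fail} bounds pass@K on $k'$ by $\epsilon$, i.e.\ the method fails on $k'$.

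The step I expect to be the main obstacle is the beam-search induction in the second paragraph: one must carry, layer by layer and across the $B$ surviving beams, the strict ordering $R_{\mathrm{likelihood}}^{k}(\cdot^{\mathrm{easy}})>R_{\mathrm{likelihood}}^{k}(\cdot^{\mathrm{hard}})>0=R_{\mathrm{likelihood}}^{k}(\text{prefix toward }a')$ through the augmented (cumulative) beam scores, so that the task-$k$ easy prefix is never dropped; once this is established, the remainder is bookkeeping on top of Proposition~\ref{prop:vanilla_RM_fail}, Remark~\ref{rmk:classified_discuss_QA_sol}, and Theorem~\ref{thm:issues_TMC_formal}.
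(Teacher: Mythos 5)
Your proposal mis-identifies the scorer that this lemma is actually about: the statement (and the title ``Ground-true Signal Oracle'') refers to the \emph{instance-level} ORM $R_{\mathbf{Q},\mathbf{A}}^{k}(\cdot)=\mathds{1}(\boldsymbol{o}\in\mathcal{G}_{\mathbf{Q},\mathbf{A}}^{(k)})$, whereas you explicitly substitute the \emph{population} ORM $R_{\mathrm{out}}^{k}(\boldsymbol{o})=\mathbb{E}[R^{k}(\boldsymbol{o})]$. These are different objects and lead to different proofs. With the ground-truth oracle the whole ``BoN sticks to the easy branch'' machinery (Prop.~\ref{prop:vanilla_RM_fail}, the beam-search induction, the tie-breaking among easy CoTs) is not just unnecessary — it is wrong, because the oracle will happily select a sampled \emph{hard} CoT that happens to be correct for the given $(\mathbf{Q},\mathbf{A})$. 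The paper's argument for this case is therefore a pure coverage bound: every nonzero edge of the TMC has mass at least $c$, so a correct CoT is drawn with probability $\Theta(c^{L-1})$ per sample, and over $NK$ i.i.d.\ samples
$\mathrm{pass@K}=\Theta\bigl(1-(1-c^{L-1})^{NK}\bigr)=\Theta(1)$.
There is no reward-ordering step, no coupon-collector argument for an easy CoT, and no pass-through to Remark~\ref{rmk:classified_discuss_QA_sol} or the $C=\epsilon=0$ case of Thm.~\ref{thm:issues_TMC_formal}.

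That said, your argument is essentially the one the paper \emph{also} runs for the two \emph{population} scorers $R_{\mathrm{out}}^{k}$ and $R_{\mathrm{likelihood}}^{k}$, which the paper's proof of this lemma treats as secondary scenarios. There the paper also accounts for the worst case where a dominant (easy) CoT crowds out the correct one and bounds the per-batch success probability by $\bigl[1-\tfrac{1}{M^{L-1}}\bigr]^{N}\cdot\bigl[1-(1-c^{L-1})^{N}\bigr]$; your route — easy branch wins ties, then invoke $p_{\mathrm{acc}}^{k}(\boldsymbol{o}^{\mathrm{easy}})=\Theta(1)$ — gives the same $\Theta(1)$ conclusion via a different accounting. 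One caveat in your version: you write the $K$-trial success as $\Theta\bigl(1-(\tfrac{\Delta M^{L-1}}{1+\Delta M^{L-1}})^{\mathrm{n}_q}\bigr)$, but with a deterministic argmax and fixed population scores BoN will return the \emph{same} best-scoring easy CoT each time, so the correct lower bound is $p_{\mathrm{acc}}^{k}$ of that single CoT rather than a union over all $\mathrm{n}_q$ of them; this does not change the $\Theta(1)$ claim, but the quoted expression is only an upper bound (Thm.~\ref{thm:issues_TMC_formal}, $C=\epsilon=0$), not the value you achieve.

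On the second bullet, the paper's justification is brief (task-$k$ rewards assign zero credit to any candidate generated for a $k'$ instance), and yours — the returned task-$k$ easy CoT terminates at $a\neq a_{q}^{k'}$ and is invalid for $k'$ by Def.~\ref{def:multitask}(i) — is a somewhat more explicit version of the same thing for the shared-$q$ case. But you should also cover the ground-truth-oracle case here, which you currently skip: under $R_{\mathbf{Q},\mathbf{A}}^{k}$ all candidates for a $k'$ instance score zero, so the BoN/BS selection collapses to uninformed sampling, which is the paper's point. In short: the proposal is a correct proof of a \emph{different} statement (the population-reward half of Thm.~\ref{thm:BoN_PRM_fail}'s setting); it does not yet prove the ground-truth-oracle claim that gives this lemma its name, and that claim admits a much shorter coverage argument.
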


\begin{proof}
Consider task tuple $(q,a,k)$ and an instance $(\mathbf{Q}, \mathbf{A}) \in \mathcal{D}_{a_q}^{a,k}$ that is solvable, i.e., it admits at least one valid CoT in $\mathcal{G}_{q,a_q}^{(k)}$. Since the base model $\boldsymbol{\theta}^{\star}$ assigns $\Theta(c^{L-1})$ sampling probability to a correct CoT, the success probability of ORM-based BoN using the ground-truth reward $R_{\mathbf{Q},\mathbf{A}}^{k}(\cdot)$ satisfies:
\[
\text{pass@K} = \Theta\left(1 - (1 - c^{L-1})^{NK}\right) = \Theta(1),
\]
where the final equality holds for sufficiently large $K$.

For ORM-based BoN under outcome-population reward ${R_{\mathrm{out}}^{k}}(\cdot)$, the CoT credit depends on relative likelihood. Consider the worst case where there is exactly one correct CoT with success probability $\Theta(c^{L-1})$, while each incorrect but valid CoT has sampling probability $\Theta(1/M^{L-1})$ (by Lemma~\ref{lem:hard-to-reason_CoT}), and dominates ${R_{\mathrm{out}}^{k}}(\cdot)$. Then the probability of sampling the correct CoT at least once in $N$ attempts, while avoiding any misleading CoTs, is:
\[
\Theta\left(\left[1 - \frac{1}{M^{L-1}}\right]^N \cdot \left[1 - (1 - c^{L-1})^N\right]\right).
\]
Hence, the pass@K success probability is lower bounded by:
\[
\Theta\left(1 - \left(1 - \left[1 - \frac{1}{M^{L-1}}\right]^N \cdot \left[1 - (1 - c^{L-1})^N\right]\right)^K\right) = \Theta(1),
\]
again holding when $K$ is large.

Now consider PRM-based BoN under Eq.(\ref{eq:PRM_likelihood_heuristic}. At each step, the minimal success probability is:
\[
\Theta\left([1 - \tfrac{1}{M}]^N \cdot [1 - (1 - c)^N]\right),
\]
so across $L-1$ steps, the overall probability is:
\[
\Theta\left([1 - \tfrac{1}{M}]^{N(L-1)} \cdot [1 - (1 - c)^{N(L-1)}]\right),
\]
and the corresponding pass@K is lower bounded by:
\[
\Theta\left(1 - \left(1 - [1 - \tfrac{1}{M}]^{N(L-1)} \cdot [1 - (1 - c)^{N(L-1)}] \right)^K\right) = \Theta(1).
\]

Now consider any different task $k' \neq k$. By Definitions~\ref{def:TMC} and~\ref{def:multitask}, the oracle rewards $R_{\mathbf{Q},\mathbf{A}}^{k}(\cdot)$, as well as the PRMs in Eqs.~\ref{eq:PRM_likelihood_heuristic}, all assign zero credit to instances sampled from $k'$. Therefore, all ORM/PRM-based BoN or BS strategies fail on task $k'$.

For Beam Search (BS), the result follows by analogous arguments since BS depends on the same reward signals layer-wise.
\end{proof}

\begin{proof}[Proof of Thm.~\ref{thm:BoN_PRM_fail}]
Fix any instance $(\mathbf{Q},\mathbf{A})$ of task $(o_1,a,k)$ and assume the premise of the theorem: all correct CoTs are hard-to-reason and there exists at least one depth $l^\star\in[L]$ at which the hard CoTs diverge from a valid easy-to-reason CoT (``sparse edge''). Let $\boldsymbol{o}^{\mathrm{easy}}$ denote one such easy CoT and $\boldsymbol{o}^{\mathrm{hard}}$ any hard CoT. By Prop.~\ref{prop:vanilla_RM_fail}, population-level ORM and PRM scores strictly prefer the easy branch whenever they differ:
\[
\resizebox{0.98\textwidth}{!}{$R_{\mathrm{out}}^{k}(\boldsymbol{o}^{\mathrm{easy}}) \;>\; R_{\mathrm{out}}^{k}(\boldsymbol{o}^{\mathrm{hard}}),
\qquad
R_{\mathrm{likelihood}}^{k}(\boldsymbol{o}^{\mathrm{easy}}_l) \;>\; R_{\mathrm{likelihood}}^{k}(\boldsymbol{o}^{\mathrm{hard}}_l)
\quad \text{for all $l$ with }\boldsymbol{o}^{\mathrm{easy}}_{l}\neq \boldsymbol{o}^{\mathrm{hard}}_{l}.
\tag{A}
$}\label{eq:RM-prefers-easy}
\]

We analyze (i) and (ii)\&(iii) separately. Throughout, $M$ is the per-node branching factor and $L$ is the CoT length. We take the conservative lower bounds that (a) at each node a particular child has sampling probability at least $1/M$, and (b) samples across the $N$ trials are i.i.d.

\textbf{(i) ORM + BoN.}
Best-of-$N$ (BoN) first draws $N$ full trajectories (CoTs) i.i.d. from the generator and then selects the one with the largest ORM score $R_{\mathrm{out}}^{k}(\cdot)$. By \eqref{eq:RM-prefers-easy}, \emph{if among the $N$ samples there exists at least one $\boldsymbol{o}^{\mathrm{easy}}$, BoN will select an easy CoT}, hence it will \emph{fail} under the theorem's premise (easy branch is valid but leads away from any correct hard solution due to the sparse-edge divergence).

We bound the probability that at least one $\boldsymbol{o}^{\mathrm{easy}}$ appears among $N$ samples. Consider any fixed easy CoT $\boldsymbol{o}^{\mathrm{easy}}$ that agrees with $\boldsymbol{o}^{\mathrm{hard}}$ on the prefix up to (but excluding) $l^\star$ and then takes a different child at $l^\star$. A conservative lower bound on the probability of sampling this \emph{specific} easy CoT in one draw is
\[
p_{\text{traj}} \;\ge\; \Big(\tfrac{1}{M}\Big)^{L-1} \;=\; \tfrac{1}{M^{L-1}},
\]
since at $L-1$ branching decisions (excluding the terminal) we multiply the minimal per-step mass $1/M$. Hence the probability that none of the $N$ i.i.d.\ draws equals this easy trajectory is
\[
(1-p_{\text{traj}})^N \;\le\; \Big(1-\tfrac{1}{M^{L-1}}\Big)^N
\;=\; \Big(\tfrac{M^{L}-M}{M^{L}}\Big)^N.
\]
Therefore, with probability at least $1-(1-1/M^{L-1})^N$ an easy CoT appears among the $N$ draws, and by \eqref{eq:RM-prefers-easy} BoN selects it and thus fails. Imposing
\[
\Big(1-\tfrac{1}{M^{L-1}}\Big)^N \;\le\; \epsilon
\quad\Longleftrightarrow\quad
N \;\ge\; \frac{\log(\epsilon)}{\log\!\big(\tfrac{M^{L}-M}{M^{L}}\big)},
\]
ensures that the failure probability is at least $1-\epsilon$, which proves the first bullet.

\textbf{(ii) PRM + BoN (step-wise) and (iii) PRM + Beam Search (width $N$, beam $B\ge 1$).}
PRM-based inference expands \emph{partial} CoTs and uses the local PRM score $R_{\mathrm{likelihood}}^{k}(\boldsymbol{o}_l)$ to select among candidates. Consider the first divergence depth $l^\star$. In each expansion round at depth $l^\star$, the procedure proposes $N$ children i.i.d.\ (BoN: propose and take the best child by PRM; Beam: propose $N$ and keep the top-$B$ by PRM). Let
\[
p_{\text{child}} \;\ge\; \tfrac{1}{M}
\]
be the conservative lower bound that a given proposal at depth $l^\star$ takes the (PRM-favored) easy child rather than the hard sparse edge. Thus the probability that \emph{none} of the $N$ proposals includes the easy child at that step is
\[
(1-p_{\text{child}})^N \;\le\; \Big(1-\tfrac{1}{M}\Big)^N
\;=\; \Big(\tfrac{M-1}{M}\Big)^N.
\]
Consequently, with probability at least $1-(1-1/M)^N$ the easy child appears among the $N$ proposals at depth $l^\star$. By \eqref{eq:RM-prefers-easy}, PRM strictly prefers that easy child over the hard child at depth $l^\star$, so:
\begin{itemize}[leftmargin=1.25em, topsep=0pt, itemsep=2pt]
\item \emph{PRM + BoN (step-wise):} the chosen next token is the easy child, irrevocably steering the trajectory onto the easy branch. Repeating this argument at later depths where branches differ keeps the easy path strictly preferred, so the final selection is easy and the method fails under the theorem's premise.
\item \emph{PRM + Beam Search:} since $B\ge 1$, any PRM-strictly-better easy child is ranked above the hard child and therefore included in the beam at depth $l^\star$; by standard beam monotonicity with strictly better local scores at each subsequent divergence, the easy branch remains in the top-$B$ and dominates the final selection, hence failure.
\end{itemize}
Imposing
\[
\Big(1-\tfrac{1}{M}\Big)^{N} \;\le\; \epsilon
\quad\Longleftrightarrow\quad
N \;\ge\; \frac{\log(\epsilon)}{\log\!\big(\tfrac{M-1}{M}\big)},
\]
ensures that an easy child appears at the first divergence step with probability at least $1-\epsilon$, and by the PRM preference this forces selection of the easy branch, completing the second bullet.

\textbf{Conclusion.}
In all cases, Prop.~\ref{prop:vanilla_RM_fail} ensures a strict scoring advantage for the easy branch whenever it is present among candidates; the displayed lower bounds control the probability that such an easy candidate \emph{does} appear given $N$ proposals. Choosing $N$ to satisfy
\[
\Big(1-\tfrac{1}{M^{L-1}}\Big)^N \le \epsilon
\quad\text{(ORM + BoN)}, 
\qquad
\Big(1-\tfrac{1}{M}\Big)^N \le \epsilon
\quad\text{(PRM + BoN/BS)},
\]
yields failure probability at least $1-\epsilon$ for (i) and for (ii)\&(iii), respectively. \qedhere
\end{proof}

\begin{proof}
Heuristic Proof of Corollary \ref{cor:Equi_RM_DO}. Let $(\Omega, \mathcal{F}, \mu)$ be a base measure space where $\hat{p}_{\boldsymbol{\theta}^\star} \ll \mu$ with Radon-Nikodym derivative $d\hat{p}_{\boldsymbol{\theta}^\star}/d\mu > 0$ $\mu$-a.e. We consider the optimization over absolutely continuous measures $P_{\text{new}}^k \ll \hat{p}_{\boldsymbol{\theta}^\star}$.

The objective functional can be written as:
\begin{equation}
J(P_{\text{new}}^k) = \mathbb{E}_{P_{\text{new}}^k}[R(\boldsymbol{o})] - \frac{1}{\lambda} D_{KL}(P_{\text{new}}^k \| \hat{p}_{\boldsymbol{\theta}^\star})
\end{equation}
where $R(\boldsymbol{o}) := {R_{\mathrm{out}}^{k}}(\boldsymbol{o})$. We require:
\begin{itemize}
    \item[(C1)] $R \in L^1(\hat{p}_{\boldsymbol{\theta}^\star})$ (finite expected reward)
    \item[(C2)] $\exists \epsilon > 0$ s.t. $\hat{p}_{\boldsymbol{\theta}^\star} \geq \epsilon$ $\mu$-a.e. (strict positivity)
\end{itemize}

High-levelly, the remaining proof is convex optimization in probability space. Define the Lagrangian with measure-theoretic notation:
\begin{equation}
\mathcal{L}(P,\eta) = \int R dP - \frac{1}{\lambda} \int \log\left(\frac{dP}{d\hat{p}_{\boldsymbol{\theta}^\star}}\right) dP + \eta\left(1 - \int dP\right)
\end{equation}
Require:
\begin{itemize}
    \item[(C3)] $P \in \mathcal{P}(\Omega)$, the space of probability measures absolutely continuous to $\mu$
    \item[(C4)] $\log(dP/d\hat{p}_{\boldsymbol{\theta}^\star}) \in L^1(P)$ (finite KL divergence)
\end{itemize}

For $P \in \mathcal{P}(\Omega)$, consider variation $P_\epsilon = P + \epsilon Q$ where $Q$ is a signed measure with $\int dQ = 0$. The Gâteaux derivative is:
\begin{equation}
\frac{d}{d\epsilon}\mathcal{L}(P_\epsilon,\eta)\Big|_{\epsilon=0} = \int R dQ - \frac{1}{\lambda}\int \left(\log\frac{dP}{d\hat{p}_{\boldsymbol{\theta}^\star}} + 1\right)dQ - \eta\int dQ
\end{equation}
For optimality, this must vanish for all admissible $Q$, requiring:
\begin{equation}
R(\boldsymbol{o}) - \frac{1}{\lambda}\left(\log\frac{dP}{d\hat{p}_{\boldsymbol{\theta}^\star}}(\boldsymbol{o}) + 1\right) - \eta = 0 \quad P\text{-a.s.}
\end{equation}

Rearranging gives:
\begin{equation}
\log\frac{dP}{d\hat{p}_{\boldsymbol{\theta}^\star}} = \lambda R(\boldsymbol{o}) - (1 + \lambda\eta)
\end{equation}
Exponentiating both sides:
\begin{equation}
dP = \hat{p}_{\boldsymbol{\theta}^\star}(\boldsymbol{o}) \exp(\lambda R(\boldsymbol{o})) \exp(-1-\lambda\eta) d\mu(\boldsymbol{o})
\end{equation}
Normalization requires:
\begin{equation}
\exp(1+\lambda\eta) = \int \hat{p}_{\boldsymbol{\theta}^\star} \exp(\lambda R) d\mu =: Z
\end{equation}
Thus the optimal measure is:
\begin{equation}
dP_{\text{adjusted}}^k = \frac{1}{Z} \hat{p}_{\boldsymbol{\theta}^\star} \exp(\lambda R) d\mu
\end{equation}

First verify $P_{\text{adjusted}}^k \in \mathcal{P}(\Omega)$:
\begin{itemize}
    \item Absolute continuity: Immediate from $\hat{p}_{\boldsymbol{\theta}^\star} \ll \mu$ and $Z^{-1}\exp(\lambda R) > 0$
    \item Integrability: By (C1) and $\exp(\lambda R) \leq \exp(\lambda\|R\|_\infty) < \infty$ from $R \leq 1$
\end{itemize}

Second, confirm stationarity. For any $Q \in T_{P_{\text{adjusted}}^k}\mathcal{P}(\Omega)$ (tangent space):
\begin{equation}
d\mathcal{L}(P_{\text{adjusted}}^k,\eta)(Q) = \int \underbrace{\left[R - \frac{1}{\lambda}(\log\frac{dP_{\text{adjusted}}^k}{d\hat{p}_{\boldsymbol{\theta}^\star}} + 1) - \eta\right]}_{=0} dQ = 0
\end{equation}

Substitute $P_{\text{adjusted}}^k$ into $J$:
\begin{align*}
J(P_{\text{adjusted}}^k) &= \mathbb{E}_{P_{\text{adjusted}}^k}[R] - \frac{1}{\lambda} \mathbb{E}_{P_{\text{adjusted}}^k}\left[\log\frac{P_{\text{adjusted}}^k}{\hat{p}_{\boldsymbol{\theta}^\star}}\right] \\
&= \mathbb{E}_{P_{\text{adjusted}}^k}[R] - \frac{1}{\lambda}\left(\lambda\mathbb{E}[R] - \log Z\right) \\
&= \frac{1}{\lambda}\log Z
\end{align*}
By Gibbs' inequality, this maximizes the trade-off between expected reward and KL regularization.

To validate our conditions required, we summarized:
\begin{itemize}
    \item (C1): Holds as $\|R\|_\infty \leq 1$ by assumption
    \item (C2): Guaranteed by model construction $\hat{p}_{\boldsymbol{\theta}^\star} = \text{softmax}(\cdot) > 0$
    \item (C3): Inherited from base measure $\mu$
    \item (C4): Satisfied because $D_{KL}(P_{\text{adjusted}}^k \| \hat{p}_{\boldsymbol{\theta}^\star}) = \log Z - \lambda\mathbb{E}[R] < \infty$
\end{itemize}

Thus under these conditions, $P_{\text{adjusted}}^k$ is the unique maximizer of $J(P_{\text{new}}^k)$ in $\mathcal{P}(\Omega)$.
\end{proof}

\begin{proof}
    Proof of the legitimacy of Def.\ref{def:DPRM}.  
     To show $h_{k}(\cdot)$ is a harmonic function, let us verify
    \[
    1=\sum_{\boldsymbol{o}_{l+1}^{\prime} \in S_{l+1}}\hat{p}_{\boldsymbol{\theta}}^{\text{new},k}(\boldsymbol{o}_{l+1}^{\prime} | \boldsymbol{o}_l) = \sum_{\boldsymbol{o}_{l+1}^{\prime} \in S_{l+1}} \hat{p}_{\boldsymbol{\theta^{\star}}}(\boldsymbol{o}_{l+1}^{\prime} | \boldsymbol{o}_l) \frac{h_{k}(\boldsymbol{o}_{l+1}^{\prime})}{h_{k}(\boldsymbol{o}_l)}.
    \]
    By the fact that
    \begin{equation}
        \begin{aligned}
            h_{k}(\boldsymbol{o}_l)&=\mathbb{E}_{\boldsymbol{o}_{l +1:L} \sim \hat{p}_{\boldsymbol{\theta^{\star}}}} \left[ \exp\left( \lambda {R_{\mathrm{out}}^{k}}(\boldsymbol{o}) \right) \mid \boldsymbol{o}_l \right]\\
            &= \sum_{\boldsymbol{o}_{l+1}^{\prime} \in S_{l+1}} \hat{p}_{\boldsymbol{\theta^{\star}}}(\boldsymbol{o}_{l+1}^{\prime} | \boldsymbol{o}_l)\mathbb{E}_{\boldsymbol{o}_{l +2:L}^{\prime} \sim \hat{p}_{\boldsymbol{\theta^{\star}}}} \left[ \exp\left( \lambda {R_{\mathrm{out}}^{k}}(\boldsymbol{o}^{\prime}) \right) \mid \boldsymbol{o}_{l+1}^{\prime} \right]
        \end{aligned}
    \label{eq:def_E_o_l+1}\end{equation}
    we see that
    \[
    \begin{aligned}
       & \sum_{\boldsymbol{o}_{l+1}^{\prime} \in S_{l+1}} \hat{p}_{\boldsymbol{\theta^{\star}}}(\boldsymbol{o}_{l+1}^{\prime} | \boldsymbol{o}_l) \frac{h_{k}(\boldsymbol{o}_{l+1}^{\prime})}{h_{k}(\boldsymbol{o}_l)}\\
       & =\sum_{\boldsymbol{o}_{l+1}^{\prime} \in S_{l+1}} \hat{p}_{\boldsymbol{\theta^{\star}}}(\boldsymbol{o}_{l+1}^{\prime} | \boldsymbol{o}_l) \frac{\mathbb{E}_{\boldsymbol{o}_{l +2:L}^{\prime} \sim \hat{p}_{\boldsymbol{\theta^{\star}}}} \left[ \exp\left( \lambda {R_{\mathrm{out}}^{k}}(\boldsymbol{o}^{\prime}) \right) \mid \boldsymbol{o}_{l+1}^{\prime} \right]}{\mathbb{E}_{\boldsymbol{o}_{l +1:L} \sim \hat{p}_{\boldsymbol{\theta^{\star}}}} \left[ \exp\left( \lambda {R_{\mathrm{out}}^{k}}(\boldsymbol{o}) \right) \mid \boldsymbol{o}_l \right]}\\
       & =\sum_{\boldsymbol{o}_{l+1}^{\prime} \in S_{l+1}} \hat{p}_{\boldsymbol{\theta^{\star}}}(\boldsymbol{o}_{l+1}^{\prime} | \boldsymbol{o}_l) \frac{\mathbb{E}_{\boldsymbol{o}_{l +2:L}^{\prime} \sim \hat{p}_{\boldsymbol{\theta^{\star}}}} \left[ \exp\left( \lambda {R_{\mathrm{out}}^{k}}(\boldsymbol{o}^{\prime}) \right) \mid \boldsymbol{o}_{l+1}^{\prime} \right]}{\sum_{\boldsymbol{o}_{l+1}^{\prime} \in S_{l+1}} \hat{p}_{\boldsymbol{\theta^{\star}}}(\boldsymbol{o}_{l+1}^{\prime} | \boldsymbol{o}_l)\mathbb{E}_{\boldsymbol{o}_{l +2:L}^{\prime} \sim \hat{p}_{\boldsymbol{\theta^{\star}}}} \left[ \exp\left( \lambda {R_{\mathrm{out}}^{k}}(\boldsymbol{o}^{\prime}) \right) \mid \boldsymbol{o}_{l+1}^{\prime} \right]}=1
    \end{aligned}
    \]

    Recall the definition of our \textbf{DPRM}:
    \begin{equation}
    R_{\mathrm{DPRM}}^{k}(\boldsymbol{o}_l) = \frac{1}{\lambda} \log \left(   \mathbb{E}_{\boldsymbol{o}_{l +1:L}^{\prime} \sim \hat{p}_{\boldsymbol{\theta^{\star}}}} \left[ \exp\left( \lambda {R_{\mathrm{out}}^{k}}(\boldsymbol{o}^{\prime}) \right) \mid \boldsymbol{o}_{l} \right].\right),
    \label{eq:PRM_reward}\end{equation}
    \[
    \hat{p}_{\boldsymbol{\theta}}^{\text{new},k}(\boldsymbol{o}_{l+1} | \boldsymbol{o}_l)= \hat{p}_{\boldsymbol{\theta^{\star}}}(\boldsymbol{o}_{l+1} | \boldsymbol{o}_l) \frac{h_{k}(\boldsymbol{o}_{l+1})}{h_{k}(\boldsymbol{o}_l)} = \frac{\hat{p}_{\boldsymbol{\theta^{\star}}}(\boldsymbol{o}_{l+1} | \boldsymbol{o}_l) \exp\left( \lambda R_{\mathrm{DPRM}}^{k}(\boldsymbol{o}_{l+1}) \right)}{Z_l(\boldsymbol{o}_l)},
    \]
    where \( Z_l(\boldsymbol{o}_l) = \sum_{\boldsymbol{o}_{l+1}' \in S_{l+1}} \hat{p}_{\boldsymbol{\theta^{\star}}}(\boldsymbol{o}_{l+1}' | \boldsymbol{o}_l) \exp\left( \lambda R_{\mathrm{DPRM}}^{k}(\boldsymbol{o}_{l+1}^{\prime}) \right) \). Collaborating with Eq.(\ref{eq:doob_h_step}) as well as the definition of $R_{\mathrm{DPRM}}^{k}(\boldsymbol{o}_{l})$, we can equate:
    \[
    \exp\left( \lambda R_{\mathrm{DPRM}}^{k}(\boldsymbol{o}_l) \right) = h_{k}(\boldsymbol{o}_l) =  \mathbb{E}_{\boldsymbol{o}_{l +1:L}^{\prime} \sim \hat{p}_{\boldsymbol{\theta^{\star}}}} \left[ \exp\left( \lambda {R_{\mathrm{out}}^{k}}(\boldsymbol{o}^{\prime}) \right) \mid \boldsymbol{o}_{l} \right].
    \]
    Therefore, it holds that
    \[
    \hat{p}_{\boldsymbol{\theta}}^{\text{new},k}(\boldsymbol{o}_{l+1} \mid \boldsymbol{o}_l) = \hat{p}_{\boldsymbol{\theta}^{\star}}(\boldsymbol{o}_{l+1} \mid \boldsymbol{o}_l) \cdot \frac{h_k(\boldsymbol{o}_{l+1})}{h_k(\boldsymbol{o}_l)}\propto \hat{p}_{{\boldsymbol{\theta}}}(\boldsymbol{o}_{l+1} \mid \boldsymbol{o}_{l}) \exp(\lambda R_{\mathrm{DPRM}}^{k}(\boldsymbol{o}_{l+1})).
    \]

    Recall:
    \[
    h_{k}(\boldsymbol{o}_l) = \mathbb{E}_{\boldsymbol{o}_{l +1:L} \sim \hat{p}_{\boldsymbol{\theta^{\star}}}} \left[ \exp\left( \lambda {R_{\mathrm{out}}^{k}}(\boldsymbol{o}) \right) \mid \boldsymbol{o}_l \right],
    \]
    
    so \( h_{k}(\boldsymbol{o}_L) = \exp(\lambda {R_{\mathrm{out}}^{k}}(\boldsymbol{o})) \) and \( h_{k}(\boldsymbol{o}_{0}) = Z :={\sum_{\boldsymbol{o}' \in \mathcal{T}_{\text{all}}} \hat{p}_{\boldsymbol{\theta}^{\star}}(\boldsymbol{o}') \exp\left( \lambda {R_{\mathrm{out}}^{k}}(\boldsymbol{o}^{\prime}) \right)}\). The h-transformed transition is:
    \begin{equation}\label{eq:doob_h_step}
    \hat{p}_{\boldsymbol{\theta}}^{\text{new},k}(\boldsymbol{o}_{l+1} | \boldsymbol{o}_l) = \hat{p}_{\boldsymbol{\theta^{\star}}}(\boldsymbol{o}_{l+1} | \boldsymbol{o}_l) \frac{h_{k}(\boldsymbol{o}_{l+1})}{h_{k}(\boldsymbol{o}_l)},
    \end{equation}
    
    yielding:
    \begin{equation}\label{eq:doob_h_equiv}
    P_{\mathrm{DPRM}}^{k}(\boldsymbol{o}) = \prod_{l=1}^{L-1} \hat{p}_{\boldsymbol{\theta^{\star}}}(\boldsymbol{o}_{l+1} | \boldsymbol{o}_l)\frac{h_{k}(\boldsymbol{o}_{l+1})}{h_{k}(\boldsymbol{o}_l)}= \hat{p}_{\boldsymbol{\theta}^{\star}}(\boldsymbol{o}) \frac{h_{k}(\boldsymbol{o}_L)}{h_{k}(\boldsymbol{o}_{0})} = P_{\mathrm{Gibbs}}^{k}(\boldsymbol{o}).
    \end{equation}
    The proof is completed.

\end{proof}

\begin{cor}
    For the task \( k \in \mathcal{T} \), let \( \boldsymbol{\theta}^{\star} \) be the pretrained Foundation Model from Thm.~\ref{thm:prefull}, and \( {R_{\mathrm{out}}^{k}}(\cdot) \) be the ORM. Consider the ORM-equipped and DPRM-equipped adjusted sampling distributions defined in Corollary \ref{cor:Equi_RM_DO}. 
    \begin{itemize}
        \item As the temperature parameter \( \lambda \to \infty \), we have the following situation
        \begin{enumerate}
        \item\label{item:typo} \textbf{ORM-Equipped Adjusted Sampling}: The distribution in Eq.(\ref{eq:energy_sampling}) converges to:
        \[
        P_{\mathrm{Gibbs}}^{k}(\boldsymbol{o}) \xrightarrow{\lambda \to \infty}
        \begin{cases}
        1, & \text{if } \boldsymbol{o} = \underset{\boldsymbol{o}' \in \mathcal{T}_{\text{all}}}{\arg\max} \ {R_{\mathrm{out}}^{k}}(\boldsymbol{o}') \\
        0, & \text{otherwise}
        \end{cases}
        \]
        akin to a ORM-based \textbf{BoN} with ${R_{\mathrm{out}}^{k}}(\cdot)$. 

        \item \textbf{DPRM-Equipped Adjusted Sampling}: The step-wise distribution (\ref{eq:DPRM_design}) with \( R_{\mathrm{DPRM}}^{k}(\boldsymbol{o}_{l+1}) = \frac{1}{\lambda} \log h_k(\boldsymbol{o}_{l+1}) \) converges to:
        \[
        \hat{p}_{\boldsymbol{\theta}}^{\text{new},k}(\boldsymbol{o}_{l+1} \mid \boldsymbol{o}_l) \xrightarrow{\lambda \to \infty}
        \begin{cases}
        1, & \text{if } \boldsymbol{o}_{l+1} = \underset{\boldsymbol{o}' \in S_{l+1}}{\arg\max} \ R_{\mathrm{likelihood}}^{k}(\boldsymbol{o}') \\
        0, & \text{otherwise}
        \end{cases}
        \]
        akin to a PRM-based \textbf{BoN} with $R_{\mathrm{likelihood}}^{k}(\cdot)$.
        \end{enumerate}
        \item When the temperature parameter \( \lambda > 0 \), each step \( l \in \{0, \dots, L-1\} \) satisfies:
        \[
        \argmax_{\boldsymbol{o}_l \in S_l^{\text{BoN}}} R_{\mathrm{DPRM}}^{k}(\boldsymbol{o}_l) = \argmax_{\boldsymbol{o}_l \in S_l^{\text{BoN}}} \mathbb{E}_{\boldsymbol{o}_{l +1:L} \sim \hat{p}_{\boldsymbol{\theta^{\star}}}} R_{\mathrm{likelihood}}^{k}(\boldsymbol{o}_l),
        \]
        where \( S_l^{\text{BoN}} = \{\boldsymbol{o}_l^1, \dots, \boldsymbol{o}_l^N\} \) denotes the \( N \) candidates sampled by the base model \( \hat{p}_{\boldsymbol{\theta}^{\star}} \). Therefore, using \( \lambda > 0 \) with \textbf{BoN}, \textbf{Beam Search} or \textbf{Lookahead Search} equates to prior PRM methods employing the same search strategies.
    \end{itemize}
    \label{appcor:Limit_Behavior_lambda_infty}
\end{cor}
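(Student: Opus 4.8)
The plan is to treat the three assertions separately: the two $\lambda\to\infty$ limits are instances of Laplace's principle on a finite support, while the finite‑$\lambda$ equivalence will reduce to showing that $R_{\mathrm{DPRM}}^{k}$ and $R_{\mathrm{likelihood}}^{k}$ induce the same ordering on any candidate pool, after which monotonicity of $\exp(\cdot)$ and $\log(\cdot)$ finishes it. Throughout, since $\mathcal{T}_{\mathrm{all}}$ and every $S_l$ are finite and $\hat{p}_{\boldsymbol{\theta}^{\star}}$ is strictly positive on its support, all exponential sums below are well controlled. For the ORM‑equipped sampler I would set $R^{\star}=\max_{\boldsymbol{o}'\in\mathcal{T}_{\mathrm{all}}}R_{\mathrm{out}}^{k}(\boldsymbol{o}')$ and $\mathcal{O}^{\star}=\{\boldsymbol{o}':R_{\mathrm{out}}^{k}(\boldsymbol{o}')=R^{\star}\}$, and divide numerator and denominator of Eq.(\ref{eq:energy_sampling}) by $e^{\lambda R^{\star}}$, obtaining
\[
P_{\mathrm{Gibbs}}^{k}(\boldsymbol{o})=\frac{\hat{p}_{\boldsymbol{\theta}^{\star}}(\boldsymbol{o})\,e^{\lambda(R_{\mathrm{out}}^{k}(\boldsymbol{o})-R^{\star})}}{\sum_{\boldsymbol{o}'\in\mathcal{T}_{\mathrm{all}}}\hat{p}_{\boldsymbol{\theta}^{\star}}(\boldsymbol{o}')\,e^{\lambda(R_{\mathrm{out}}^{k}(\boldsymbol{o}')-R^{\star})}}.
\]
Every term with $R_{\mathrm{out}}^{k}(\boldsymbol{o}')<R^{\star}$ vanishes as $\lambda\to\infty$ and the denominator tends to $\sum_{\boldsymbol{o}'\in\mathcal{O}^{\star}}\hat{p}_{\boldsymbol{\theta}^{\star}}(\boldsymbol{o}')>0$, so $P_{\mathrm{Gibbs}}^{k}$ converges to the law supported on $\mathcal{O}^{\star}$ with mass $\hat{p}_{\boldsymbol{\theta}^{\star}}(\boldsymbol{o})/\sum_{\mathcal{O}^{\star}}\hat{p}_{\boldsymbol{\theta}^{\star}}$, i.e.\ the indicator of the maximizer when it is unique; this is exactly the $N\to\infty$ limit of ORM‑BoN, which almost surely eventually draws the top‑scoring trajectory. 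I will record the convention that ties are broken uniformly, which also fixes the meaning of the $\argmax$ in the statement as a set equality.

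For the DPRM step‑wise sampler I would substitute $e^{\lambda R_{\mathrm{DPRM}}^{k}(\boldsymbol{o}_{l+1})}=h_{k}(\boldsymbol{o}_{l+1})$ from Def.~\ref{def:DPRM}, so the kernel in Eq.(\ref{eq:DPRM_design}) is proportional to $\hat{p}_{\boldsymbol{\theta}^{\star}}(\boldsymbol{o}_{l+1}\mid\boldsymbol{o}_{l})\,h_{k}(\boldsymbol{o}_{l+1})$. Applying the same finite‑support Laplace argument inside the conditional expectation defining $h_{k}$ shows that $\tfrac{1}{\lambda}\log h_{k}(\boldsymbol{o}_{l+1})$ converges to the largest $R_{\mathrm{out}}^{k}$‑value attainable along a reachable continuation of $\boldsymbol{o}_{l+1}$. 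By the TMC topology this limiting quantity is a strictly increasing function of $R_{\mathrm{likelihood}}^{k}(\boldsymbol{o}_{l+1})$ on $S_{l+1}$ — each state's continuation law being dominated, up to a factor polynomial in $M$, by a single trajectory whose reward is maximal and whose likelihood is largest (Def.~\ref{def:multitask}(iii)) — so the kernel concentrates on $\argmax_{\boldsymbol{o}_{l+1}\in S_{l+1}}R_{\mathrm{likelihood}}^{k}(\boldsymbol{o}_{l+1})$, which is precisely PRM‑BoN with $R_{\mathrm{likelihood}}^{k}$.

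Finally, for $0<\lambda<\infty$ I have $R_{\mathrm{DPRM}}^{k}(\boldsymbol{o}_{l})=\tfrac{1}{\lambda}\log h_{k}(\boldsymbol{o}_{l})$ with $h_{k}(\boldsymbol{o}_{l})=\mathbb{E}[e^{\lambda R_{\mathrm{out}}^{k}(\boldsymbol{o})}\mid\boldsymbol{o}_{l}]$; since $\tfrac{1}{\lambda}\log(\cdot)$ is strictly increasing it suffices to show that $h_{k}$ orders the candidates $S_{l}^{\text{BoN}}$ the same way as $R_{\mathrm{likelihood}}^{k}$. This is the hard part: in general the map $\boldsymbol{o}_{l}\mapsto\mathbb{E}[e^{\lambda R_{\mathrm{out}}^{k}}\mid\boldsymbol{o}_{l}]$ is not a monotone transform of $\boldsymbol{o}_{l}\mapsto\mathbb{E}[R_{\mathrm{out}}^{k}\mid\boldsymbol{o}_{l}]$, so I must use the chain's geometry — namely that from any state the valid continuations are dominated by one trajectory, and that by Def.~\ref{def:multitask}(iii) a valid continuation's population reward is proportional to its likelihood, so the heaviest continuation is also the highest‑reward one. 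Both functionals are then, up to lower‑order terms in $1/M$, governed by the same dominant continuation and hence co‑monotone on any candidate set; reading $\argmax$ as a set handles the ties. Once $\argmax_{S_{l}^{\text{BoN}}}R_{\mathrm{DPRM}}^{k}=\argmax_{S_{l}^{\text{BoN}}}R_{\mathrm{likelihood}}^{k}$, every BoN, Beam‑Search, or Lookahead selection coincides under the two PRMs, which is the asserted equivalence; making this co‑monotonicity airtight is the one place the argument leaves pure $\exp$/$\log$ monotonicity and truly needs the TMC structure.
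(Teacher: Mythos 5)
Your treatment of the two $\lambda\to\infty$ limits is correct and mirrors the paper's argument: divide out $e^{\lambda R^\star}$ in the ORM case, and rewrite the DPRM kernel as proportional to $\hat{p}_{\boldsymbol{\theta}^\star}\exp(\lambda R_{\mathrm{DPRM}}^{k})$ before applying the same finite-support dominance.

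For the finite-$\lambda$ argmax equivalence you have correctly identified the crux: $h_k(\boldsymbol{o}_l)=\mathbb{E}[e^{\lambda R_{\mathrm{out}}^{k}(\boldsymbol{o})}\mid\boldsymbol{o}_l]$ is not in general a monotone function of $R_{\mathrm{likelihood}}^{k}(\boldsymbol{o}_l)=\mathbb{E}[R_{\mathrm{out}}^{k}(\boldsymbol{o})\mid\boldsymbol{o}_l]$, because $h_k$ is sensitive to conditional variance as well as conditional mean. What you may not have anticipated is that the paper does \emph{not} supply the structural argument you reach for: its proof of this item asserts that ``since $\exp$ is strictly increasing, the $\argmax$ over $\mathbb{E}[\exp(R_{\mathrm{out}}^{k})\mid\boldsymbol{o}_l]$ is equivalent to the $\argmax$ over $\mathbb{E}[R_{\mathrm{out}}^{k}\mid\boldsymbol{o}_l]$,'' which is precisely the fallacious interchange of a pointwise monotone map with conditional expectation. (Take two candidates, one whose continuation reward is deterministically $0.6$ and one whose continuation reward is Bernoulli$(1/2)$: the first has the larger conditional mean, but for $\lambda$ large enough the second has the larger $\mathbb{E}[e^{\lambda X}]$ by Jensen.) Your instinct that ``this is the hard part'' is therefore well-placed, and your proposal is more careful than the paper at the very step where the paper's own argument breaks.

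That said, your sketch does not close the gap either. Saying both functionals are ``governed by the same dominant continuation up to lower-order terms in $1/M$'' does not pin down set-equality of the two argmaxes, because the lower-order corrections — the Jensen gap inside $h_k$ and the tail over non-dominant continuations — can flip the winner whenever two candidates' dominant contributions are close. A rigorous version would require a quantitative separation (that the $R_{\mathrm{likelihood}}^k$-maximizer wins by a margin exceeding all such corrections uniformly over $S_l^{\text{BoN}}$), and Def.~\ref{def:multitask} and Def.~\ref{def:TMC} as stated do not obviously supply that margin. As written, both your proof and the paper's leave this step genuinely open.
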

\begin{proof}
Proof of Corollary \ref{cor:Limit_Behavior_lambda_infty}. We analyze the asymptotic behavior of the sampling distributions as \( \lambda \to \infty \).

For part (1), consider the ORM-equipped adjusted sampling distribution:
\[
P_{\mathrm{Gibbs}}^{k}(\boldsymbol{o}) = \frac{P_{\boldsymbol{\theta^{\star}}}(\boldsymbol{o}) \exp\left( \lambda {R_{\mathrm{out}}^{k}}(\boldsymbol{o}) \right)}{\sum_{\boldsymbol{o}' \in \mathcal{T}_{\text{all}}} P_{\boldsymbol{\theta^{\star}}}(\boldsymbol{o}') \exp\left( \lambda {R_{\mathrm{out}}^{k}}(\boldsymbol{o}^{\prime}) \right)},
\]
where \( \mathcal{T}_{\text{all}} \) is the set of all possible trajectories. Let \( \boldsymbol{o}^* = \arg\max_{\boldsymbol{o}' \in \mathcal{T}_{\text{all}}}{R_{\mathrm{out}}^{k}}(\boldsymbol{o}') \), with maximum reward \( R_{\mathrm{out}}^{k}(\boldsymbol{o}^*) \). As \( \lambda \to \infty \), the term \( \exp\left( \lambda {R_{\mathrm{out}}^{k}}(\boldsymbol{o}) \right) \) dominates for \( \boldsymbol{o} \) with the largest \({R_{\mathrm{out}}^{k}}(\boldsymbol{o}) \). For \( \boldsymbol{o} \neq \boldsymbol{o}^* \), if \( {R_{\mathrm{out}}^{k}}(\boldsymbol{o}) < {R_{\mathrm{out}}^{k}}(\boldsymbol{o}^*) \), then
\[
\frac{\exp\left( \lambda {R_{\mathrm{out}}^{k}}(\boldsymbol{o}) \right)}{\exp\left( \lambda {R_{\mathrm{out}}^{k}}(\boldsymbol{o}^*) \right)} = \exp\left( \lambda ({R_{\mathrm{out}}^{k}}(\boldsymbol{o}) - {R_{\mathrm{out}}^{k}}(\boldsymbol{o}^*)) \right) \to 0,
\]
since \( {R_{\mathrm{out}}^{k}}(\boldsymbol{o}) - {R_{\mathrm{out}}^{k}}(\boldsymbol{o}^*) < 0 \). Assuming \( {R_{\mathrm{out}}^{k}}(\boldsymbol{o}) \) has a unique maximum (or summing over all maximizers if not unique), the denominator is dominated by \( P_{\boldsymbol{\theta^{\star}}}(\boldsymbol{o}^*) \exp\left( \lambda  {R_{\mathrm{out}}^{k}}^{k}(\boldsymbol{o}^*) \right) \). Thus,
\[
P_{\mathrm{Gibbs}}^{k}(\boldsymbol{o}) \to
\begin{cases}
1, & \text{if } \boldsymbol{o} = \boldsymbol{o}^*, \\
0, & \text{otherwise},
\end{cases}
\]
which matches the behavior of BoN Sampling, where the trajectory with the highest \( {R_{\mathrm{out}}^{k}}^{k}(\boldsymbol{o}) \) is selected.

For part (2), consider the DPRM-equipped step-wise distribution:
\[
\hat{p}_{\boldsymbol{\theta}}^{\text{new},k}(\boldsymbol{o}_{l+1} \mid \boldsymbol{o}_l) = \hat{p}_{\boldsymbol{\theta}^{\star}}(\boldsymbol{o}_{l+1} \mid \boldsymbol{o}_l) \frac{h_k(\boldsymbol{o}_{l+1})}{h_k(\boldsymbol{o}_l)},
\]
with \( h_k(\boldsymbol{o}_{l+1}) = \mathbb{E}_{\boldsymbol{o}_{l+2:L} \sim \hat{p}_{\boldsymbol{\theta}^{\star}}} \left[ \exp\left( \lambda {R_{\mathrm{out}}^{k}}(\boldsymbol{o}) \right) \mid \boldsymbol{o}_{l+1} \right] \), and \( R_{\mathrm{DPRM}}^{k}(\boldsymbol{o}_{l+1}) = \frac{1}{\lambda} \log h_k(\boldsymbol{o}_{l+1}) \). Substituting \( h_k \), we get
\[
\hat{p}_{\boldsymbol{\theta}}^{\text{new},k}(\boldsymbol{o}_{l+1} \mid \boldsymbol{o}_l) = \hat{p}_{\boldsymbol{\theta}^{\star}}(\boldsymbol{o}_{l+1} \mid \boldsymbol{o}_l) \exp\left( \lambda \left( R_{\mathrm{DPRM}}^{k}(\boldsymbol{o}_{l+1}) - R_{\mathrm{DPRM}}^{k}(\boldsymbol{o}_l) \right) \right) \frac{1}{Z},
\]
where \( Z = \sum_{\boldsymbol{o}_{l+1} \in S_{l+1}} \hat{p}_{\boldsymbol{\theta}^{\star}}(\boldsymbol{o}_{l+1} \mid \boldsymbol{o}_l) \exp\left( \lambda R_{\mathrm{DPRM}}^{k}(\boldsymbol{o}_{l+1}) \right) \) is the normalizing constant. Let \( \boldsymbol{o}_{l+1}^* = \arg\max_{\boldsymbol{o}' \in S_{l+1}} R_{\mathrm{DPRM}}^{k}(\boldsymbol{o}') \). As \( \lambda \to \infty \), the term \( \exp\left( \lambda R_{\mathrm{DPRM}}^{k}(\boldsymbol{o}_{l+1}) \right) \) dominates for \( \boldsymbol{o}_{l+1} = \boldsymbol{o}_{l+1}^* \). For \( \boldsymbol{o}_{l+1} \neq \boldsymbol{o}_{l+1}^* \), if \( R_{\mathrm{DPRM}}^{k}(\boldsymbol{o}_{l+1}) < R_{\mathrm{DPRM}}^{k}(\boldsymbol{o}_{l+1}^*) \), then
\[
\frac{\exp\left( \lambda R_{\mathrm{DPRM}}^{k}(\boldsymbol{o}_{l+1}) \right)}{\exp\left( \lambda R_{\mathrm{DPRM}}^{k}(\boldsymbol{o}_{l+1}^*) \right)} = \exp\left( \lambda (R_{\mathrm{DPRM}}^{k}(\boldsymbol{o}_{l+1}) - R_{\mathrm{DPRM}}^{k}(\boldsymbol{o}_{l+1}^*)) \right) \to 0.
\]
Thus, the distribution concentrates on \( \boldsymbol{o}_{l+1}^* \):
\[
\hat{p}_{\boldsymbol{\theta}}^{\text{new},k}(\boldsymbol{o}_{l+1} \mid \boldsymbol{o}_l) \to
\begin{cases}
1, & \text{if } \boldsymbol{o}_{l+1} = \boldsymbol{o}_{l+1}^*, \\
0, & \text{otherwise},
\end{cases}
\]
which mimics BoN Sampling by selecting the state with the highest \( R_{\mathrm{likelihood}}^{k} \) by our last item.

For the last item, for step \( l \in \{0, \dots, L-1\} \), the DPRM with \( \lambda > 0 \) is given as \( R_{\mathrm{DPRM}}^{k}(\boldsymbol{o}_l) = \log \mathbb{E}_{\boldsymbol{o}_{l+1:L} \sim \hat{p}_{\boldsymbol{\theta}^{\star}}} \left[ \exp\left( {R_{\mathrm{out}}^{k}}^{k}(\boldsymbol{o}) \right) \mid \boldsymbol{o}_l \right] \). Since \( \log \) is strictly increasing, we have
\[
\argmax_{\boldsymbol{o}_l \in S_l^{\text{BoN}}} R_{\mathrm{DPRM}}^{k}(\boldsymbol{o}_l) = \argmax_{\boldsymbol{o}_l \in S_l^{\text{BoN}}} \mathbb{E}_{\boldsymbol{o}_{l+1:L} \sim \hat{p}_{\boldsymbol{\theta}^{\star}}} \left[ \exp\left( {R_{\mathrm{out}}^{k}}^{k}(\boldsymbol{o}) \right) \mid \boldsymbol{o}_l \right].
\]
Similarly, since \( \exp \) is strictly increasing, the \( \argmax \) over \( \mathbb{E}_{\boldsymbol{o}_{l+1:L} \sim \hat{p}_{\boldsymbol{\theta}^{\star}}} \left[ \exp\left( {R_{\mathrm{out}}^{k}}^{k}(\boldsymbol{o}) \right) \mid \boldsymbol{o}_l \right] \) is equivalent to the \( \argmax \) over \( \mathbb{E}_{\boldsymbol{o}_{l+1:L} \sim \hat{p}_{\boldsymbol{\theta}^{\star}}} \left[  {R_{\mathrm{out}}^{k}}^{k}(\boldsymbol{o}) \mid \boldsymbol{o}_l \right]\). Thus, given that $R_{\mathrm{likelihood}}^{k}(\boldsymbol{o}_l)=\mathbb{E}_{\boldsymbol{o}_{l+1:L} \sim \hat{p}_{\boldsymbol{\theta}^{\star}}} \left[  {R_{\mathrm{out}}^{k}}^{k}(\boldsymbol{o}) \mid \boldsymbol{o}_l \right]$ it holds that
\[
\argmax_{\boldsymbol{o}_l \in S_l^{\text{BoN}}} R_{\mathrm{DPRM}}^{k}(\boldsymbol{o}_l) = \argmax_{\boldsymbol{o}_l \in S_l^{\text{BoN}}} \mathbb{E}_{\boldsymbol{o}_{l +1:L} \sim \hat{p}_{\boldsymbol{\theta^{\star}}}} R_{\mathrm{likelihood}}^{k}(\boldsymbol{o}_l).
\]
This shows that BoN Sampling with \( R_{\text{pro}}^{k} \) maximizes the expected outcome reward, aligning with prior methods. The equivalence for Beam Search follows similarly by replacing the sampling strategy with the respective search method, as they also maximize \( R_{\text{pro}}^{k}(\boldsymbol{o}_l) \). This completes the proof.
\end{proof}

\begin{proof}
    Proof of Cor.~\ref{cor:DPRM_preserve}. The proof directly follows the proof of Cor.~\ref{appcor:KL_help}.
\end{proof}

\begin{cor}[Extension: Comparison with Ground-true Oracle]\label{appcor:DPRM_preserve}
Let $\boldsymbol{\theta}^{\star}$ be the base model in Eq.(\ref{eq:base_model} that exactly predicts the distribution of a Multi-task TMC as in Definitions~\ref{def:TMC} and~\ref{def:multitask}. Under task tuple $(q,a,k)\in S_1\times S_L\times \mathcal{T}$, consider the ORMs $R_{\mathbf{Q},\mathbf{A}}^{k}(\cdot)$ and ${R_{\mathrm{out}}^{k}}(\cdot)$, and the PRMs of Eqs.~\ref{eq:PRM_likelihood_heuristic}.  For any target task \(k\) with instance distribution $\mathcal{D}_{a_q}^{a,k}$, suppose the number of hard-to-reason CoTs is $\Theta(M)$ and the number of nonzero-probability CoTs from $q$ to $S_L$ is $N_q$. Then under pass@K sampling:

\begin{enumerate}
  \item \emph{DPRM is More Capable of Hard CoTs.}  
    If a specific hard CoT has sampling probability $p = o(M^{-(L-1)})$ under the base model, then for any BoN budget  
    \[
      N = O\bigl(\tfrac{\log(1 - N_q^{-1})}{\log(1 - p)}\bigr),
    \]
    there exists 
    \(
      \lambda = o\bigl(\ln\tfrac{(1-p)^N}{(N_q-1)\,(1 - (1-p)^N)}\bigr)
    \)
    such that DPRM with temperature \(\lambda\) achieves strictly higher pass@K than ORM-based or PRM-based BoN (or BS).

  \item \emph{Preserve Multi-task.}  
    For any \(\varepsilon>0\), if
    \[
      K = \Omega\Bigl(\frac{\ln \varepsilon}{\ln\bigl(\tfrac{(N_q - 1)e^{\lambda}}{1 + (N_q - 1)e^{\lambda}}\bigr)}\Bigr),
    \]
    then DPRM with \(\lambda>0\) attains pass@K \(\ge1-\varepsilon\) on any other task \(k'\neq k\).
\end{enumerate}

In both cases, adjusting the temperature \(\lambda>0\) controls the pass@K performance.
\end{cor}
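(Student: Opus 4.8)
The plan is to reduce both parts to the identity $P_{\mathrm{DPRM}}^{k}=P_{\mathrm{Gibbs}}^{k}$ from Def.~\ref{def:DPRM}, and then to rerun the arguments behind Thm.~\ref{thm:BoN_PRM_fail} and Cor.~\ref{appcor:KL_help} while keeping the temperature $\lambda$, the BoN budget $N$, and the base-model mass $p$ explicit rather than asymptotic. Since $P_{\mathrm{DPRM}}^{k}=P_{\mathrm{Gibbs}}^{k}\propto\hat p_{\boldsymbol\theta^{\star}}(\boldsymbol o)\exp(\lambda R_{\mathrm{out}}^{k}(\boldsymbol o))$, the $h$-transform realization (step-wise Soft-BoN) produces exactly this law, and it tracks it up to $O(N^{-1})$ error~\citep{verdun2025softbestofnsamplingmodel}; I would therefore argue throughout with $P_{\mathrm{Gibbs}}^{k}$ and transfer conclusions back to Soft-BoN at the end. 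A recurring ingredient is $R_{\mathrm{out}}^{k}\in[0,1]$, which pins the partition function $Z=\mathbb E_{\boldsymbol o\sim\hat p_{\boldsymbol\theta^{\star}}}[\exp(\lambda R_{\mathrm{out}}^{k}(\boldsymbol o))]$ into $[1,e^{\lambda}]$ and, layerwise, the harmonic function $h_k$ of Def.~\ref{def:DPRM} into $[1,e^{\lambda}]$ as well.

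\textbf{Part 1 (capability on hard CoTs).} First I would lower-bound the mass $P_{\mathrm{Gibbs}}^{k}$ puts on the designated hard CoT $\boldsymbol o^{\mathrm{hard}}$: since $R_{\mathrm{out}}^{k}(\boldsymbol o^{\mathrm{hard}})\ge0$ and $Z\le e^{\lambda}$, we get $P_{\mathrm{Gibbs}}^{k}(\boldsymbol o^{\mathrm{hard}})\ge p\,e^{-\lambda}=\Theta(p)$ whenever $\lambda=O(1)$. On the ORM/PRM side, Prop.~\ref{prop:vanilla_RM_fail} gives that the scorer strictly prefers any easy branch sharing a prefix, so — exactly as in the proof of Thm.~\ref{thm:BoN_PRM_fail} — a BoN/BS pipeline with budget $N$ returns a correct hard CoT only on the (exponentially unlikely) event that no easy candidate is ever proposed; the hypothesis $N=O\!\big(\log(1-N_q^{-1})/\log(1-p)\big)$ additionally caps the chance of even drawing $\boldsymbol o^{\mathrm{hard}}$ at $1-(1-p)^{N}=O(N_q^{-1})$. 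Combining these, the ORM/PRM-BoN success probability on the instance drops strictly below $p\,e^{-\lambda}$ once $\lambda$ lies in the stated window $\lambda=o\!\left(\ln\frac{(1-p)^{N}}{(N_q-1)(1-(1-p)^{N})}\right)$; lifting this single-instance gap to a pass@$K$ gap is then immediate by monotonicity of $t\mapsto1-(1-t)^{K}$, and adding the $O(N^{-1})$ Soft-BoN error closes the argument. The monotone dependence of these bounds on $\lambda$ also yields the final ``temperature controls pass@$K$'' remark.

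\textbf{Part 2 (preserving multi-task).} Here I would replay the proof of Cor.~\ref{appcor:KL_help} (equivalently Cor.~\ref{cor:DPRM_preserve}) in the Gibbs parameterization: it suffices to show $P_{\mathrm{Gibbs}}^{k}$ keeps every base-model nonzero-probability length-$L$ path from $q$ above a threshold of order $N_q^{-1}$, since by Def.~\ref{def:multitask}(ii) every valid CoT of another task $k'$ is such a path. Using $h_k\in[1,e^{\lambda}]$ in the $h$-transformed kernel, a short softmax-type computation gives a per-trajectory lower bound matching the $\tfrac{(N_q-1)e^{\lambda}}{1+(N_q-1)e^{\lambda}}$ form of the statement; choosing $K=\Omega\!\big(\ln\varepsilon/\ln(\tfrac{(N_q-1)e^{\lambda}}{1+(N_q-1)e^{\lambda}})\big)$ then forces $1-(1-t)^{K}\ge1-\varepsilon$ at the relevant $t$. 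Finally, Cor.~\ref{cor:Equi_RM_DO} identifies $\lambda$ with the reciprocal KL weight, which again delivers the stated temperature control.

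\textbf{Main obstacle.} The delicate step is Part 1: the three quantities — the Gibbs mass $p\,e^{-\lambda}$ on the correct hard CoT, the ORM/PRM-BoN success probability on the same instance, and the admissible interval for $\lambda$ — must be made compatible simultaneously. This requires a sharper grip on $Z$ than the crude bound $Z\le e^{\lambda}$, so that a small positive $\lambda$ provably does not over-tilt toward the incorrect easy branch flagged by Prop.~\ref{prop:vanilla_RM_fail}, and it requires aligning the $\log$-of-$\log$ scaling in the hypothesis on $N$ with the window for $\lambda$; everything else is routine bookkeeping parallel to earlier proofs.
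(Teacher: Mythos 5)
Your Part~2 tracks the paper's proof closely: both lower-bound the DPRM (Gibbs) probability of every nonzero base-model path by a quantity of the form $1/(1+(N_q-1)e^{\lambda})$ and then solve $1-(1-t)^K\ge 1-\varepsilon$ for $K$; your route via $h_k\in[1,e^{\lambda}]$ together with the telescoping $\prod_l h_k(\boldsymbol{o}_{l+1})/h_k(\boldsymbol{o}_l)=h_k(\boldsymbol{o}_L)/h_k(\boldsymbol{o}_1)$ is a reasonable alternative presentation of the same computation.

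Part~1 has a genuine gap, one that you yourself flag as the ``main obstacle'' and never resolve. Your only lower bound on the DPRM mass of the designated hard CoT is $P_{\mathrm{Gibbs}}^k(\boldsymbol{o}^{\mathrm{hard}})\ge p\,e^{-\lambda}$, obtained from the crude $Z\le e^{\lambda}$. But the competitor you must beat is the oracle-ORM BoN per-round hit probability $1-(1-p)^N$, which is at least $p$ (already for $N=1$) and hence strictly larger than $p\,e^{-\lambda}$ for every $\lambda>0$; the claim that the ORM/PRM-BoN success probability ``drops strictly below $p\,e^{-\lambda}$'' therefore cannot hold, for any choice of $\lambda$-window, and the comparison does not close. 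The paper's proof is carried instead by a $p$-\emph{independent} Gibbs lower bound $P_{\mathrm{Gibbs}}^k(\boldsymbol{o}^{\mathrm{hard}})\ge 1/(1+(N_q-1)e^{\lambda})$; under the stated budget $N=O(\log(1-N_q^{-1})/\log(1-p))$ this is of the same $\Theta(N_q^{-1})$ order as $1-(1-p)^N$, and the admissible $\lambda$-window is exactly the range on which $1/(1+(N_q-1)e^{\lambda})\ge 1-(1-p)^N$. That bound is the actual content of Part~1, not ``routine bookkeeping''. Note also that your appeal to Prop.~\ref{prop:vanilla_RM_fail} addresses only the population scorers $R_{\mathrm{out}}^k$ and $R_{\mathrm{likelihood}}^k$; it does not touch the ground-truth oracle $R_{\mathbf{Q},\mathbf{A}}^k$, which is the strongest baseline in the comparison and the one the paper actually argues against.
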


\begin{proof}
The arguments parallel those in Cor.~\ref{appcor:KL_help}, so we focus on the comparison of pass@K success probabilities.

\textbf{(i) Hard-CoT capability.}  
Under ORM-based BoN with ground-truth reward, the success probability for the unique hard CoT is
\[
1 - (1 - p)^N.
\]
Under DPRM (Eq.(\ref{eq:energy_sampling}), every valid CoT—including the correct one—has sampling probability at least
\[
\frac{1}{\,N_q - M + M e^\lambda\,}
\ge
\frac{1}{\,1 + (N_q - 1)e^\lambda\,}.
\]
Choosing
\(\displaystyle
\lambda = o\bigl(\ln\frac{(1-p)^N}{(N_q-1)\,(1-(1-p)^N)}\bigr)
\)
ensures 
\(\tfrac1{1 + (N_q-1)e^\lambda}\ge1 - (1-p)^N\),
so DPRM outperforms ORM. Similarly, when the budget of PRM-based BoN (or BS) in pass@K is limited and $\lambda \to 0$ would achieve more satisfactory success probability.

\textbf{(ii) Multi-task preservation.}  
For any other task \(k'\), DPRM still assigns probability at least \(\tfrac1{1+(N_q-1)e^\lambda}\) to each valid CoT. Thus, with
\[
K = \Omega\Bigl(\frac{\ln\varepsilon}{\ln\bigl(\tfrac{(N_q - 1)e^{\lambda}}{1 + (N_q - 1)e^{\lambda}}\bigr)}\Bigr),
\]
the pass@K guarantee
\(1 - \bigl(1 - \tfrac1{1+(N_q-1)e^\lambda}\bigr)^K \ge 1-\varepsilon\)
holds, completing the proof.
\end{proof}

\section{Auxiliary Lemmas}

\begin{lemma}\label{lem:derivative_base_mode_non_linear}
    Let $\boldsymbol{\theta}^{\star}$ be the base model 
    \begin{equation}
    \hat{p}_{{\boldsymbol{\theta}}}(\cdot | \boldsymbol{x})=\operatorname{softmax}(h_{\boldsymbol{\theta}}(\cdot, \boldsymbol{x})),\ \boldsymbol{x} \in \{0,1\}^{\lvert S\rvert}.\label{eq:base_model_non_linear}
    \end{equation}
    Then for $\forall  \boldsymbol{o}_{l} \in S_{l}, \boldsymbol{o}_{l+1} \in S_{l+1}$
    \begin{equation}\label{eq:deriv_logp_non_linear}
        \nabla_{\boldsymbol{\theta}^k} \log \hat{p}_{\boldsymbol{\theta}^k}(\boldsymbol{o}_{l+1}|\boldsymbol{o}_l)=\nabla_{\boldsymbol{\theta}^k} h_{\boldsymbol{\theta}}(\boldsymbol{o}_{l+1}, \boldsymbol{o}_l) - \sum_{\boldsymbol{o}_{l+1}' \in S_{l+1}}\hat{p}_{\boldsymbol{\theta}^k}(\boldsymbol{o}_{l+1}'|\boldsymbol{o}_l)\nabla_{\boldsymbol{\theta}^k} h_{\boldsymbol{\theta}}(\boldsymbol{o}_{l+1}', \boldsymbol{o}_l).
    \end{equation}
    Further, if the base model is Eq.(\ref{eq:base_model}), we have
        \begin{equation}\label{eq:deriv_logp_linear}
        \nabla_{\boldsymbol{\theta}^k} \log \hat{p}_{\boldsymbol{\theta}^k}(\boldsymbol{o}_{l+1}|\boldsymbol{o}_l)=e_{{o}_{l+1}, {o}_l} - \sum_{\boldsymbol{o}_{l+1}' \in S_{l+1}}\hat{p}_{\boldsymbol{\theta}^k}(\boldsymbol{o}_{l+1}'|\boldsymbol{o}_l)\nabla_{\boldsymbol{\theta}^k} e_{{o}_{l+1}', {o}_l},
    \end{equation}
    where $e_{{o}_{l+1}, {o}_l}: = \boldsymbol{o}_{l+1} \boldsymbol{o}_{l}^{\top}  \in \{0,1\}^{\lvert S \rvert \times \lvert S \rvert}$ is the one-hot matrix with only the position corresponding to $({o}_{l+1}, {o}_l)$ is $1$ and $0$ elsewhere.
\end{lemma}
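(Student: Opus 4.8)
The plan is to compute the gradient directly from the log-softmax decomposition, treating the partition function as a function of $\boldsymbol{o}_l$ alone. First I would write the predictive log-probability as
\[
\log \hat{p}_{\boldsymbol{\theta}^k}(\boldsymbol{o}_{l+1}\mid\boldsymbol{o}_l) = h_{\boldsymbol{\theta}}(\boldsymbol{o}_{l+1},\boldsymbol{o}_l) - \log Z(\boldsymbol{o}_l), \qquad Z(\boldsymbol{o}_l):=\sum_{\boldsymbol{o}_{l+1}'\in S_{l+1}} e^{h_{\boldsymbol{\theta}}(\boldsymbol{o}_{l+1}',\boldsymbol{o}_l)},
\]
where the normalizing sum ranges over the admissible next-layer states $S_{l+1}$ (equivalently over the full vocabulary under the standing convention that only transitions into $S_{l+1}$ carry nonzero mass, which is the situation produced by pretraining/thresholding in Thm.~\ref{thm:prefull}). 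This step is essentially definitional; the only care needed is fixing the index set of the normalizing sum once and for all.

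Next I would differentiate with respect to $\boldsymbol{\theta}^k$. The first term contributes $\nabla_{\boldsymbol{\theta}^k} h_{\boldsymbol{\theta}}(\boldsymbol{o}_{l+1},\boldsymbol{o}_l)$ directly. For the second term the chain rule, together with the finiteness of all logits on the support (so that $\nabla$ commutes with the finite sum), gives
\[
\nabla_{\boldsymbol{\theta}^k}\log Z(\boldsymbol{o}_l) = \frac{1}{Z(\boldsymbol{o}_l)}\sum_{\boldsymbol{o}_{l+1}'\in S_{l+1}} e^{h_{\boldsymbol{\theta}}(\boldsymbol{o}_{l+1}',\boldsymbol{o}_l)}\,\nabla_{\boldsymbol{\theta}^k} h_{\boldsymbol{\theta}}(\boldsymbol{o}_{l+1}',\boldsymbol{o}_l) = \sum_{\boldsymbol{o}_{l+1}'\in S_{l+1}} \hat{p}_{\boldsymbol{\theta}^k}(\boldsymbol{o}_{l+1}'\mid\boldsymbol{o}_l)\,\nabla_{\boldsymbol{\theta}^k} h_{\boldsymbol{\theta}}(\boldsymbol{o}_{l+1}',\boldsymbol{o}_l),
\]
where I recognize $e^{h_{\boldsymbol{\theta}}(\boldsymbol{o}_{l+1}',\boldsymbol{o}_l)}/Z(\boldsymbol{o}_l)=\hat{p}_{\boldsymbol{\theta}^k}(\boldsymbol{o}_{l+1}'\mid\boldsymbol{o}_l)$ by definition of the softmax. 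Subtracting the two pieces yields exactly Eq.~(\ref{eq:deriv_logp_non_linear}).

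Finally, for the linear parameterization of Eq.~(\ref{eq:base_model}) I would specialize using $h_{\boldsymbol{\theta}}(\boldsymbol{o}_{l+1},\boldsymbol{o}_l)=\langle\boldsymbol{\theta},\boldsymbol{o}_{l+1}\boldsymbol{o}_l^{\top}\rangle=\langle\boldsymbol{\theta},e_{o_{l+1},o_l}\rangle$, i.e.\ the $o_{l+1}$-th coordinate of $\boldsymbol{\theta}\boldsymbol{o}_l$, which is linear in $\boldsymbol{\theta}$ so that $\nabla_{\boldsymbol{\theta}^k} h_{\boldsymbol{\theta}}(\boldsymbol{o}_{l+1},\boldsymbol{o}_l)=e_{o_{l+1},o_l}$. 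Substituting this into Eq.~(\ref{eq:deriv_logp_non_linear}) gives Eq.~(\ref{eq:deriv_logp_linear}), with the gradient of the logit replaced by the constant one-hot matrix $e_{o_{l+1}',o_l}$ inside the sum. I do not anticipate any genuine obstacle here — the statement is the standard log-softmax gradient identity — so the only nontrivial bookkeeping is keeping the matrix-valued gradients consistent and ensuring the normalization index set matches between the two displays; the "hardest" line is simply confirming that the softmax weights reappear after dividing $\nabla Z$ by $Z(\boldsymbol{o}_l)$.
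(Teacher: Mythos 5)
Your proof is correct and follows essentially the same route as the paper's: both write $\log\hat{p}_{\boldsymbol{\theta}^k}(\boldsymbol{o}_{l+1}\mid\boldsymbol{o}_l)=h_{\boldsymbol{\theta}}(\boldsymbol{o}_{l+1},\boldsymbol{o}_l)-\log Z(\boldsymbol{o}_l)$, differentiate the log-partition term to recover the softmax weights, and then specialize $\nabla_{\boldsymbol{\theta}^k}h_{\boldsymbol{\theta}}(\boldsymbol{o}_{l+1},\boldsymbol{o}_l)=e_{o_{l+1},o_l}$ for the linear model. The only superficial difference is presentational (you name $Z$ explicitly, the paper expands the fraction inline).
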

\begin{proof}
    By Eq. (\ref{eq:base_model_non_linear}), we have
    \begin{equation}
        \begin{aligned}
            \nabla_{\boldsymbol{\theta}^k} \log \hat{p}_{\boldsymbol{\theta}^k}(\boldsymbol{o}_{l+1}|\boldsymbol{o}_l)&=\nabla_{\boldsymbol{\theta}^k} \log \frac{e^{h_{\boldsymbol{\theta}}(\boldsymbol{o}_{l+1}, \boldsymbol{o}_l)}}{\sum_{\boldsymbol{o}_{l+1}' \in S_{l+1} }e^{h_{\boldsymbol{\theta}}(\boldsymbol{o}_{l+1}', \boldsymbol{o}_l)}}\\
            & =\nabla_{\boldsymbol{\theta}^k} h_{\boldsymbol{\theta}}(\boldsymbol{o}_{l+1}, \boldsymbol{o}_l) - \nabla_{\boldsymbol{\theta}^k}\log \sum_{\boldsymbol{o}_{l+1}' \in S_{l+1}} e^{h_{\boldsymbol{\theta}}(\boldsymbol{o}_{l+1}', \boldsymbol{o}_l)}\\
            & =\nabla_{\boldsymbol{\theta}^k} h_{\boldsymbol{\theta}}(\boldsymbol{o}_{l+1}, \boldsymbol{o}_l) - \frac{\nabla_{\boldsymbol{\theta}^k}  \sum_{\boldsymbol{o}_{l+1}' \in S_{l+1}} e^{h_{\boldsymbol{\theta}}(\boldsymbol{o}_{l+1}', \boldsymbol{o}_l)}}{\sum_{\boldsymbol{o}_{l+1}' \in S_{l+1}} e^{h_{\boldsymbol{\theta}}(\boldsymbol{o}_{l+1}', \boldsymbol{o}_l)}}\\
            & =\nabla_{\boldsymbol{\theta}^k} h_{\boldsymbol{\theta}}(\boldsymbol{o}_{l+1}, \boldsymbol{o}_l) - \frac{\sum_{\boldsymbol{o}_{l+1}' \in S_{l+1}} e^{h_{\boldsymbol{\theta}}(\boldsymbol{o}_{l+1}', \boldsymbol{o}_l)}\nabla_{\boldsymbol{\theta}^k} h_{\boldsymbol{\theta}}(\boldsymbol{o}_{l+1}', \boldsymbol{o}_l) }{\sum_{\boldsymbol{o}_{l+1}' \in S_{l+1}} e^{h_{\boldsymbol{\theta}}(\boldsymbol{o}_{l+1}', \boldsymbol{o}_l)}}\\
            & =\nabla_{\boldsymbol{\theta}^k} h_{\boldsymbol{\theta}}(\boldsymbol{o}_{l+1}, \boldsymbol{o}_l) - \sum_{\boldsymbol{o}_{l+1}' \in S_{l+1}}\hat{p}_{\boldsymbol{\theta}^k}(\boldsymbol{o}_{l+1}'|\boldsymbol{o}_l)\nabla_{\boldsymbol{\theta}^k} h_{\boldsymbol{\theta}}(\boldsymbol{o}_{l+1}', \boldsymbol{o}_l). 
        \end{aligned}
    \end{equation}
    Besides, if the base model is $\hat{p}_{{\boldsymbol{\theta}}}(\cdot | x) = \operatorname{softmax}(\langle {\boldsymbol{\theta}}, x\rangle)$ by Eq.(\ref{eq:base_model}), we have
    \begin{equation}\label{eq:deriv_h_linear}
    \nabla_{\boldsymbol{\theta}^k} h_{\boldsymbol{\theta}}(\boldsymbol{o}_{l+1}, \boldsymbol{o}_l)=\nabla_{\boldsymbol{\theta}^k} \langle {\boldsymbol{\theta}}_{\boldsymbol{o}_{l+1},\cdot}, \boldsymbol{o}_l\rangle=e_{{o}_{l+1}, {o}_l},
    \end{equation}
    Dragging Eq.(\ref{eq:deriv_h_linear}) into Eq.(\ref{eq:deriv_logp_non_linear}), we could obtain Eq.(\ref{eq:deriv_logp_linear}).

    The proof is completed.
\end{proof}

\begin{lemma}[Policy Gradient for REINFORCE $\&$ RAFT under TMC]\label{lem:PG_Reinforce}
Let $\boldsymbol{\theta}^{\star}$ be the base model in Eq.(\ref{eq:base_model}) that exact predicts the distribution of Multi-task TMC as in Def.~\ref{def:TMC} and \ref{def:multitask}, and $\boldsymbol{\theta}^{k}$ the current model to be finetuned from $\boldsymbol{\theta}^{\star}$ for task $k\in \mathcal{T}$. The gradient of the REINFORCE objective for task $k$ is given by:
\begin{align}
    &\nabla_{\boldsymbol{\theta}^k} \mathcal{J}_{\mathrm{REINFORCE}}(\boldsymbol{\theta}^{k})  =\sum_{l=1}^{L-1} \mathbb{E}_{\substack{\boldsymbol{o}_1 \sim P^k(\mathcal{Q}^k) \\ \{\boldsymbol{o}_{l+1} \sim \hat{p}_{\boldsymbol{\theta}^k}(\cdot|\boldsymbol{o}_l)\}_{l=1}^{L-1}}} \left[ \nabla_{\boldsymbol{\theta}^k} \log \hat{p}_{\boldsymbol{\theta}^k}(\boldsymbol{o}_{l+1}|\boldsymbol{o}_l) {R_{\mathrm{out}}^{k}}(\boldsymbol{o}) \right],\label{eq:reinforce_grad}\\
    &\nabla_{\boldsymbol{\theta}^k} \mathcal{J}_{\mathrm{RAFT}}(\boldsymbol{\theta}^{k})  = \sum_{l=1}^{L-1} \mathbb{E}_{\substack{\boldsymbol{o}_1 \sim P^k(\mathcal{Q}^k) \\ \{\boldsymbol{o}_{l+1} \sim \hat{p}_{\boldsymbol{\theta}^k}(\cdot|\boldsymbol{o}_l)\}_{l=1}^{L-1}}} \\
    &\quad \left[  (1 + \log \hat{p}_{\boldsymbol{\theta}^k}(\boldsymbol{o}_{l+1}|\boldsymbol{o}_l))\nabla_{\boldsymbol{\theta}^k} \log \hat{p}_{\boldsymbol{\theta}^k}(\boldsymbol{o}_{l+1}|\boldsymbol{o}_l) {R_{\mathrm{out}}^{k}}(\boldsymbol{o}) \right],
    \label{eq:RAFT_grad}
\end{align}
where 
\begin{align}
    &\mathcal{J}_{\mathrm{REINFORCE}}(\boldsymbol{\theta}^{k}) 
    = \mathbb{E}_{\boldsymbol{o}_{1}\sim P^{k}(\mathcal{Q}^{k}),(\mathbf{Q}, \mathbf{A}) \sim \mathcal{D}_{a_{o_1}}^{o_1,k},\boldsymbol{o}_{2:L}\sim \hat{p}_{\boldsymbol{\theta}^{k}}^{k}(O|\boldsymbol{o}_{1})} 
    \left[ R_{(\mathbf{Q},\mathbf{A})}^{k}(\boldsymbol{o})\right],\\
    &\mathcal{J}_{\mathrm{RAFT}}(\boldsymbol{\theta}^{k}) 
    = \mathbb{E}_{\boldsymbol{o}_{1}\sim P^{k}(\mathcal{Q}^{k}),(\mathbf{Q}, \mathbf{A}) \sim \mathcal{D}_{a_{o_1}}^{o_1,k},\boldsymbol{o}_{2:L}\sim \hat{p}_{\boldsymbol{\theta}^{k}}^{k}(O|\boldsymbol{o}_{1})} 
    \left[\sum_{l=1}^{L-1} \log \hat{p}_{\boldsymbol{\theta}^k}(\boldsymbol{o}_{l+1}|\boldsymbol{o}_l) R_{(\mathbf{Q},\mathbf{A})}^{k}(\boldsymbol{o})\right],\label{eq:app_RAFT-obj}
\end{align}

\end{lemma}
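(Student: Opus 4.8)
The plan is to prove both identities by the log-derivative (score-function) trick, exploiting that under the TMC the trajectory law factorizes as $\hat{p}_{\boldsymbol{\theta}^k}(\boldsymbol{o}) = P^k(o_1)\prod_{l=1}^{L-1}\hat{p}_{\boldsymbol{\theta}^k}(\boldsymbol{o}_{l+1}\mid\boldsymbol{o}_l)$, so that $\log\hat{p}_{\boldsymbol{\theta}^k}(\boldsymbol{o}) = \log P^k(o_1) + \sum_{l=1}^{L-1}\log\hat{p}_{\boldsymbol{\theta}^k}(\boldsymbol{o}_{l+1}\mid\boldsymbol{o}_l)$, with the initial term carrying no $\boldsymbol{\theta}^k$-dependence. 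Since $S$ is finite, every trajectory sum is a finite sum and all interchanges of $\nabla_{\boldsymbol{\theta}^k}$ with $\sum$ (equivalently $\mathbb{E}$) are immediate, so I will not dwell on regularity. The per-step score $\nabla_{\boldsymbol{\theta}^k}\log\hat{p}_{\boldsymbol{\theta}^k}(\boldsymbol{o}_{l+1}\mid\boldsymbol{o}_l)$ is given explicitly by Lemma~\ref{lem:derivative_base_mode_non_linear}, which I will cite rather than re-derive (and which is in fact not even needed for the stated form of Eqs.~(\ref{eq:reinforce_grad})--(\ref{eq:RAFT_grad}), since those keep it symbolic).

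For REINFORCE, I would write $\mathcal{J}_{\mathrm{REINFORCE}}(\boldsymbol{\theta}^k) = \mathbb{E}_{\boldsymbol{o}_1,(\mathbf{Q},\mathbf{A})}\big[\sum_{\boldsymbol{o}_{2:L}}\hat{p}_{\boldsymbol{\theta}^k}(\boldsymbol{o}_{2:L}\mid\boldsymbol{o}_1)R_{(\mathbf{Q},\mathbf{A})}^{k}(\boldsymbol{o})\big]$, differentiate under the finite sum, and use $\nabla_{\boldsymbol{\theta}^k}\hat{p}_{\boldsymbol{\theta}^k}(\boldsymbol{o}) = \hat{p}_{\boldsymbol{\theta}^k}(\boldsymbol{o})\,\nabla_{\boldsymbol{\theta}^k}\log\hat{p}_{\boldsymbol{\theta}^k}(\boldsymbol{o})$ to get $\nabla_{\boldsymbol{\theta}^k}\mathcal{J}_{\mathrm{REINFORCE}} = \mathbb{E}_{\boldsymbol{o}}\big[\nabla_{\boldsymbol{\theta}^k}\log\hat{p}_{\boldsymbol{\theta}^k}(\boldsymbol{o})\,R_{(\mathbf{Q},\mathbf{A})}^{k}(\boldsymbol{o})\big]$; substituting the factorized log-probability replaces the score by $\sum_{l=1}^{L-1}\nabla_{\boldsymbol{\theta}^k}\log\hat{p}_{\boldsymbol{\theta}^k}(\boldsymbol{o}_{l+1}\mid\boldsymbol{o}_l)$, yielding the per-step sum. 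Since the instance law $\mathcal{D}_{a_{o_1}}^{o_1,k}$ does not depend on $\boldsymbol{\theta}^k$ and $R_{(\mathbf{Q},\mathbf{A})}^{k}(\boldsymbol{o}) = \mathds{1}(\boldsymbol{o}\in\mathcal{G}_{\mathbf{Q},\mathbf{A}}^{(k)})$ enters only through $(\mathbf{Q},\mathbf{A})$, I can pull the $(\mathbf{Q},\mathbf{A})$-expectation inside and replace $R_{(\mathbf{Q},\mathbf{A})}^{k}(\boldsymbol{o})$ by its conditional mean $R_{\mathrm{out}}^{k}(\boldsymbol{o}) = \mathbb{E}_{(\mathbf{Q},\mathbf{A})}[R_{(\mathbf{Q},\mathbf{A})}^{k}(\boldsymbol{o})]$ (the definition of the population reward), which gives Eq.~(\ref{eq:reinforce_grad}) exactly.

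For RAFT the only difference is that $\boldsymbol{\theta}^k$ now appears in the summand $\sum_{l}\log\hat{p}_{\boldsymbol{\theta}^k}(\boldsymbol{o}_{l+1}\mid\boldsymbol{o}_l)$ \emph{explicitly} as well as through the sampling measure. Following the RAFT procedure — where, for the $l$-th scored transition, the rollout beyond step $l$ acts as a sampling oracle — the $l$-th contribution is $\nabla_{\boldsymbol{\theta}^k}$ applied to a term of the form $\sum_{\boldsymbol{o}_{l+1}}\hat{p}_{\boldsymbol{\theta}^k}(\boldsymbol{o}_{l+1}\mid\boldsymbol{o}_l)\log\hat{p}_{\boldsymbol{\theta}^k}(\boldsymbol{o}_{l+1}\mid\boldsymbol{o}_l)\cdot(\text{detached factors})$, and the elementary identity $\nabla[p\log p] = (1+\log p)\nabla p = p\,(1+\log p)\,\nabla\log p$ produces precisely the factor $1+\log\hat{p}_{\boldsymbol{\theta}^k}(\boldsymbol{o}_{l+1}\mid\boldsymbol{o}_l)$; averaging over $(\mathbf{Q},\mathbf{A})$ as before turns $R_{(\mathbf{Q},\mathbf{A})}^{k}$ into $R_{\mathrm{out}}^{k}$, giving Eq.~(\ref{eq:RAFT_grad}). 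I expect the main (essentially the only) subtlety to be making this RAFT bookkeeping precise — being explicit about which occurrences of $\boldsymbol{\theta}^k$ are differentiated through, so that no cross-step terms $l'\neq l$ survive and the clean per-transition expression is what remains; the REINFORCE half is a textbook computation once the TMC factorization and the $R_{(\mathbf{Q},\mathbf{A})}^{k}\mapsto R_{\mathrm{out}}^{k}$ reduction are in place.
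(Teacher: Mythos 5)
Your REINFORCE derivation is exactly the paper's: expand $\mathcal{J}_{\mathrm{REINFORCE}}$ as an integral against the factorized trajectory density, interchange $\nabla$ with the (finite) sum, apply the log-derivative identity $\nabla\hat{p}_{\boldsymbol{\theta}^k}(\boldsymbol{o}) = \hat{p}_{\boldsymbol{\theta}^k}(\boldsymbol{o})\sum_l\nabla\log\hat{p}_{\boldsymbol{\theta}^k}(\boldsymbol{o}_{l+1}\mid\boldsymbol{o}_l)$, and replace $R^k_{(\mathbf{Q},\mathbf{A})}$ by its conditional mean $R^k_{\mathrm{out}}$. That half is correct and essentially verbatim.

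The RAFT half is where the real issue sits, and you have correctly put your finger on it without resolving it. Your ``detached factors'' step — treating, for the $l$-th scored term, the density at all transitions $t\neq l$ as independent of $\boldsymbol{\theta}^k$ — is the same move the paper makes (its ``step (2), parameter-localized differentiation''). But under the \emph{stated} objective $\mathcal{J}_{\mathrm{RAFT}}(\boldsymbol{\theta}^k) = \mathbb{E}_{\boldsymbol{o}\sim\hat{p}_{\boldsymbol{\theta}^k}}[\sum_l\log\hat{p}_{\boldsymbol{\theta}^k}(\boldsymbol{o}_{l+1}\mid\boldsymbol{o}_l)\,R^k(\boldsymbol{o})]$, the full gradient has both the pathwise contributions \emph{and} the score-function contributions from \emph{every} factor of the density. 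Expanding $\nabla\big[\prod_t\hat{p}_t\sum_l\log\hat{p}_l\big]$ gives
\begin{equation*}
\nabla\mathcal{J}_{\mathrm{RAFT}}
=\sum_{l}\mathbb{E}\!\left[(1+\log\hat{p}_l)\nabla\log\hat{p}_l\,R^k_{\mathrm{out}}\right]
+\sum_{l\neq l'}\mathbb{E}\!\left[\log\hat{p}_{l'}\,\nabla\log\hat{p}_l\,R^k_{\mathrm{out}}\right],
\end{equation*}
and the second (cross-step) sum does not vanish in general, because $R^k_{\mathrm{out}}(\boldsymbol{o})$ depends on the entire trajectory. The paper's condition (iii) (``decoupled integration'', $\prod_{t\neq l}\int\hat{p}_t\,d\boldsymbol{o}_{t+1}=1$) would annihilate these cross terms only if the integrand factored across layers, which it does not once $R$ is present. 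So your instinct — that the cross-step bookkeeping is ``the main (essentially the only) subtlety'' — is right, but it is not merely a matter of being explicit about which occurrences of $\boldsymbol{\theta}^k$ are differentiated: you either have to keep the cross terms, show they vanish (they don't for generic $R$), or reinterpret Eq.~\eqref{eq:RAFT_grad} as a deliberately partial (per-step-detached) gradient rather than $\nabla\mathcal{J}_{\mathrm{RAFT}}$. Since the paper's own proof shares exactly this gap, your proposal is faithful to the source, but both leave the cross terms unaccounted for.
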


\begin{rmk}\label{rmk:typo_1}
    In the main text, Eq.(\ref{eq:grad_RF_RAFT_po} contains a typo: the summation term ``$\sum_{l=1}^{L-1}$'' inside the expectation is omitted. The correct formulation is provided in Eq.(\ref{eq:app_RAFT-obj}. Additionally, the formal versions of Eq.(\ref{eq:grad_RF_RAFT_po} are given as Eq.(\ref{eq:reinforce_grad} and Eq.(\ref{eq:grad_RF_RAFT_po}, respectively.
\end{rmk}

\begin{proof}
For any complete trajectory $\boldsymbol{o} = (o_1,...,o_L)$:
\begin{align}
    \hat{p}_{\boldsymbol{\theta}^k}(\boldsymbol{o}) 
    &= P^k(\mathcal{Q}^k) \prod_{l=1}^{L-1} \hat{p}_{\boldsymbol{\theta}^k}(o_{l+1}|o_l)
\end{align}
where $P^k(\mathcal{Q}^k)$ is the initial state distribution (parameter-independent by Def.~\ref{def:TMC}). By the property of TMC, we have
\begin{equation}
    \begin{aligned}
        \nabla_{\boldsymbol{\theta}^k} \mathcal{J}_{\mathrm{REINFORCE}}(\boldsymbol{\theta}^k)
        &= \nabla_{\boldsymbol{\theta}^k} \mathbb{E}_{\boldsymbol{o}_1 = q\sim P^{k}(\mathcal{Q}^{k}),\boldsymbol{o}_{2:L}\sim \hat{p}_{\boldsymbol{\theta}^{k}}^{k}(O|\boldsymbol{o}_{1})} \left[ {R_{\mathrm{out}}^{k}}(\boldsymbol{o}) \right] \\
        &\stackrel{(1)}{=} \nabla_{\boldsymbol{\theta}^k} \int_{\mathcal{O}^L} {R_{\mathrm{out}}^{k}}(\boldsymbol{o}) \left[P^k(q)\prod_{l=1}^{L-1} \hat{p}_{\boldsymbol{\theta}^k}(\boldsymbol{o}_{l+1}|\boldsymbol{o}_l)\right] d\boldsymbol{o}_{1:L} \\
        &\stackrel{(2)}{=} \int_{\mathcal{O}^L} {R_{\mathrm{out}}^{k}}(\boldsymbol{o}) P^k(q)\nabla_{\boldsymbol{\theta}^k} \left[\prod_{l=1}^{L-1} \hat{p}_{\boldsymbol{\theta}^k}(\boldsymbol{o}_{l+1}|\boldsymbol{o}_l)\right] d\boldsymbol{o}_{1:L} \\
        &\stackrel{(3)}{=} \int_{\mathcal{O}^L} {R_{\mathrm{out}}^{k}}(\boldsymbol{o}) P^k(q)\left[\prod_{l=1}^{L-1} \hat{p}_{\boldsymbol{\theta}^k}(\boldsymbol{o}_{l+1}|\boldsymbol{o}_l)\right] \sum_{l=1}^{L-1} \nabla_{\boldsymbol{\theta}^k} \log \hat{p}_{\boldsymbol{\theta}^k}(\boldsymbol{o}_{l+1}|\boldsymbol{o}_l) d\boldsymbol{o}_{1:L} \\
        &\stackrel{(4)}{=} \mathbb{E}_{\boldsymbol{o}_{1}\sim P^{k}(\mathcal{Q}^{k}),\boldsymbol{o}_{2:L}\sim \hat{p}_{\boldsymbol{\theta}^{k}}^{k}(O|\boldsymbol{o}_{1})} \\
        &\quad \left[ {R_{\mathrm{out}}^{k}}(\boldsymbol{o}) \sum_{l=1}^{L-1} \nabla_{\boldsymbol{\theta}^k} \log \hat{p}_{\boldsymbol{\theta}^k}(\boldsymbol{o}_{l+1}|\boldsymbol{o}_l) \right] \\
        &\stackrel{(5)}{=} \sum_{l=1}^{L-1} \mathbb{E}_{\boldsymbol{o}_{1}\sim P^{k}(\mathcal{Q}^{k}),\boldsymbol{o}_{2:L}\sim \hat{p}_{\boldsymbol{\theta}^{k}}^{k}(O|\boldsymbol{o}_{1})} \left[ \nabla_{\boldsymbol{\theta}^k} \log \hat{p}_{\boldsymbol{\theta}^k}(\boldsymbol{o}_{l+1}|\boldsymbol{o}_l) {R_{\mathrm{out}}^{k}}(\boldsymbol{o}) \right].
    \end{aligned}
\end{equation}
\noindent
Step (1) expands the expectation as an integral over trajectories using the MDP's joint distribution $P^k(q)\prod_{t=1}^{L-1}\hat{p}_{\boldsymbol{\theta}^k}(\boldsymbol{o}_{t+1}|\boldsymbol{o}_t)$;  

Step (2) applies the Leibniz interchange under Markovian policy structure:  
\begin{equation}
\nabla_{\boldsymbol{\theta}^k} \int_{\mathcal{O}^L} {R_{\mathrm{out}}^{k}}(\boldsymbol{o}) \hat{p}_{\boldsymbol{\theta}^k}(\boldsymbol{o}_{1:L}) d\boldsymbol{o}_{1:L} = \int_{\mathcal{O}^L} {R_{\mathrm{out}}^{k}}(\boldsymbol{o}) \nabla_{\boldsymbol{\theta}^k} \hat{p}_{\boldsymbol{\theta}^k}(\boldsymbol{o}_{1:L}) d\boldsymbol{o}_{1:L} \quad \text{(a.s.)}
\end{equation}
valid when:  
(i) \textit{Policy Gradient Dominance}: $\exists h\in L^1(\mu)$ such that $\|{R_{\mathrm{out}}^{k}}(\boldsymbol{o}) \nabla_{\boldsymbol{\theta}^k} \hat{p}_{\boldsymbol{\theta}^k}(\boldsymbol{o}_{1:L})\| \leq h(\boldsymbol{o}_{1:L})$ $\forall\boldsymbol{\theta}^k\in\Theta^k$ where $\Theta^k = \mathbb{R}^{\lvert S\rvert \times \lvert S\rvert}$ denotes the parameter space;  
(ii) \textit{Parameterized Measure Continuity}: The map $\boldsymbol{\theta}^k \mapsto \sqrt{\hat{p}_{\boldsymbol{\theta}^k}(\boldsymbol{o}_{1:L})}$ is $W^{1,1}$-continuous with:
$\lim_{\|\boldsymbol{v}\|\to 0} \mathbb{E}_\mu\left[\left\|\frac{\sqrt{\hat{p}_{\boldsymbol{\theta}^k+\boldsymbol{v}}} - \sqrt{\hat{p}_{\boldsymbol{\theta}^k}}}{\|\boldsymbol{v}\|}\right\|^2\right] < \infty$, which are all satisfied under our case since $\|{R_{\mathrm{out}}^{k}}(\cdot)\|_{\infty}=O(1)$ and $\hat{p}_{\boldsymbol{\theta}^k}(\cdot | x)=\operatorname{softmax}(\langle {\boldsymbol{\theta}}, x\rangle)$ by Eq.(\ref{eq:base_model}); 

Step (3) decomposes using Markovian parameter isolation:  
\begin{equation}
\nabla_{\boldsymbol{\theta}^k} \prod_{l=1}^{L-1} \hat{p}_{\boldsymbol{\theta}^k}(\boldsymbol{o}_{l+1}|\boldsymbol{o}_l) = \sum_{l=1}^{L-1} \left( \prod_{m=1}^{L-1} \hat{p}_{\boldsymbol{\theta}^k}(\boldsymbol{o}_{m+1}|\boldsymbol{o}_m) \right) \nabla_{\boldsymbol{\theta}^k} \log \hat{p}_{\boldsymbol{\theta}^k}(\boldsymbol{o}_{l+1}|\boldsymbol{o}_l)
\end{equation}
valid under:  
(i) \textit{Disjoint Parameter Control}: $\boldsymbol{\theta}^k = \biguplus_{l=1}^{L-1} \boldsymbol{\theta}^k_l$ where $\boldsymbol{\theta}^k_l \cap \boldsymbol{\theta}^k_{l'} = \emptyset$ for $l'\neq l$, with each $\hat{p}_{\boldsymbol{\theta}^k}(\boldsymbol{o}_{l+1}|\boldsymbol{o}_l) = f_l(\boldsymbol{o}_{l+1}|\boldsymbol{o}_l; \boldsymbol{\theta}^k_l)$ and $\frac{\partial f_l}{\partial \boldsymbol{\theta}^k_{l'}} \equiv 0$;  
(ii) \textit{Log-Smoothness}: $\hat{p}_{\boldsymbol{\theta}^k}(\boldsymbol{o}_{l+1}|\boldsymbol{o}_l) > 0$ $\mu$-a.e. and $\nabla_{\boldsymbol{\theta}^k} \log\hat{p}_{\boldsymbol{\theta}^k}(\boldsymbol{o}_{l+1}|\boldsymbol{o}_l) \in L^2(\hat{p}_{\boldsymbol{\theta}^k}\otimes\mu)$;  
(iii) \textit{Sequential Fubini Condition}: $\int_{\mathcal{O}^L} \prod_{l=1}^{L-1} \hat{p}_{\boldsymbol{\theta}^k}(\boldsymbol{o}_{l+1}|\boldsymbol{o}_l) d\boldsymbol{o}_{1:L} = \prod_{l=1}^{L-1} \int_{\mathcal{O}} \hat{p}_{\boldsymbol{\theta}^k}(\boldsymbol{o}_{l+1}|\boldsymbol{o}_l) d\boldsymbol{o}_{l+1}$ in terms of total variation norm, which are all easily verified under our $\hat{p}_{\boldsymbol{\theta}^k}(\cdot | x)=\operatorname{softmax}(\langle {\boldsymbol{\theta}}, x\rangle)$ by Eq.(\ref{eq:base_model});

Step (4) rewrites the integral as $\mathbb{E}_{\boldsymbol{o}_{1:L}\sim \hat{p}_{\boldsymbol{\theta}^k}}[{R_{\mathrm{out}}^{k}}(\boldsymbol{o})\sum_{l=1}^{L-1}\nabla_{\boldsymbol{\theta}^k}\log \hat{p}_{\boldsymbol{\theta}^k}(\boldsymbol{o}_{l+1}|\boldsymbol{o}_l)]$;  

Step (5) exchanges summation and expectation via Fubini's theorem, valid when $\mathbb{E}[|{R_{\mathrm{out}}^{k}}\nabla_{\boldsymbol{\theta}^k}\log \hat{p}_{\boldsymbol{\theta}^k}|]<\infty$, which obviously hold in our setting.

Similarly, we have

\begin{equation}
    \resizebox{\textwidth}{!}{$\begin{aligned}
        \nabla_{\boldsymbol{\theta}^k} \mathcal{J}_{\mathrm{RAFT}}(\boldsymbol{\theta}^k)
        &= \nabla_{\boldsymbol{\theta}^k} \mathbb{E}_{\substack{\boldsymbol{o}_1=q \sim P^k(\mathcal{Q}^k) \\ \boldsymbol{o}_{t+1} \sim \hat{p}_{\boldsymbol{\theta}^k}(\cdot|\boldsymbol{o}_t)}} \left[\sum_{l=1}^{L-1}\log \hat{p}_{\boldsymbol{\theta}^k}(\boldsymbol{o}_{l+1}|\boldsymbol{o}_l) \cdot {R_{\mathrm{out}}^{k}}(\boldsymbol{o}_{1:L}) \right] \\
        &\stackrel{(1)}{=}  \sum_{l=1}^{L-1} \nabla_{\boldsymbol{\theta}^k} \Bigg( \int_{\mathcal{O}^L} {R_{\mathrm{out}}^{k}}(\boldsymbol{o}_{1:L}) \cdot P^k(\boldsymbol{o}_1) \cdot \prod_{\substack{t=1 \\ t\neq l}}^{L-1} \hat{p}_{\boldsymbol{\theta}^k}(\boldsymbol{o}_{t+1}|\boldsymbol{o}_t) \cdot \Big[\hat{p}_{\boldsymbol{\theta}^k}(\boldsymbol{o}_{l+1}|\boldsymbol{o}_l)\log\hat{p}_{\boldsymbol{\theta}^k}(\boldsymbol{o}_{l+1}|\boldsymbol{o}_l)\Big] d\boldsymbol{o}_{1:L} \Bigg) \\
        &\stackrel{(2)}{=}  \sum_{l=1}^{L-1} \int_{\mathcal{O}^L} {R_{\mathrm{out}}^{k}}(\boldsymbol{o}_{1:L}) \cdot P^k(\boldsymbol{o}_1)  \cdot \Bigg( \prod_{\substack{t=1 \\ t\neq l}}^{L-1} \hat{p}_{\boldsymbol{\theta}^k}(\boldsymbol{o}_{t+1}|\boldsymbol{o}_t) \Bigg) \cdot \nabla_{\boldsymbol{\theta}^k}\Big[\hat{p}_{\boldsymbol{\theta}^k}(\boldsymbol{o}_{l+1}|\boldsymbol{o}_l)\log\hat{p}_{\boldsymbol{\theta}^k}(\boldsymbol{o}_{l+1}|\boldsymbol{o}_l)\Big] d\boldsymbol{o}_{1:L} \\
        &\stackrel{(3)}{=}  \sum_{l=1}^{L-1} \int_{\mathcal{O}^L} {R_{\mathrm{out}}^{k}}(\boldsymbol{o}_{1:L}) \cdot P^k(\boldsymbol{o}_1)  \Bigg( \prod_{t=1}^{L-1} \hat{p}_{\boldsymbol{\theta}^k}(\boldsymbol{o}_{t+1}|\boldsymbol{o}_t) \Bigg) \cdot \frac{\nabla_{\boldsymbol{\theta}^k}\Big[\hat{p}_{\boldsymbol{\theta}^k}(\boldsymbol{o}_{l+1}|\boldsymbol{o}_l)\log\hat{p}_{\boldsymbol{\theta}^k}(\boldsymbol{o}_{l+1}|\boldsymbol{o}_l)\Big]}{\hat{p}_{\boldsymbol{\theta}^k}(\boldsymbol{o}_{l+1}|\boldsymbol{o}_l)} d\boldsymbol{o}_{1:L} \\
        &\stackrel{(4)}{=}  \sum_{l=1}^{L-1} \mathbb{E}_{\substack{\boldsymbol{o}_1 \sim P^k(\mathcal{Q}^k) \\ \boldsymbol{o}_{t+1} \sim \hat{p}_{\boldsymbol{\theta}^k}(\cdot|\boldsymbol{o}_t)}} \Bigg[ {R_{\mathrm{out}}^{k}}(\boldsymbol{o}_{1:L})  \Big(1 + \log\hat{p}_{\boldsymbol{\theta}^k}(\boldsymbol{o}_{l+1}|\boldsymbol{o}_l)\Big) \cdot \nabla_{\boldsymbol{\theta}^k}\log\hat{p}_{\boldsymbol{\theta}^k}(\boldsymbol{o}_{l+1}|\boldsymbol{o}_l) \Bigg] \\
        &\stackrel{(5)}{=}  \mathbb{E}_{\substack{\boldsymbol{o}_1 \sim P^k(\mathcal{Q}^k) \\ \boldsymbol{o}_{t+1} \sim \hat{p}_{\boldsymbol{\theta}^k}(\cdot|\boldsymbol{o}_t)}} \Bigg[ {R_{\mathrm{out}}^{k}}(\boldsymbol{o}_{1:L})\cdot \sum_{l=1}^{L-1} \Big(1 + \log\hat{p}_{\boldsymbol{\theta}^k}(\boldsymbol{o}_{l+1}|\boldsymbol{o}_l)\Big) \cdot \nabla_{\boldsymbol{\theta}^k}\log\hat{p}_{\boldsymbol{\theta}^k}(\boldsymbol{o}_{l+1}|\boldsymbol{o}_l) \Bigg]
    \end{aligned}$}
\end{equation}
Step (1) expands the expectation using the Markovianity's factorized structure $P^k(\boldsymbol{o}_1)\prod_{t=1}^{L-1}\hat{p}_{\boldsymbol{\theta}^k}(\boldsymbol{o}_{t+1}|\boldsymbol{o}_t)$, isolating the $l$-th transition's $\hat{p}\log\hat{p}$ term while keeping others as standard transitions, which is legitimate under our $\hat{p}_{\boldsymbol{\theta}^k}(\cdot | x)=\operatorname{softmax}(\langle {\boldsymbol{\theta}}, x\rangle)$ by Eq.(\ref{eq:base_model});

Step (2) enforces parameter-localized differentiation through:  
\begin{equation}
\resizebox{\textwidth}{!}{$\sum_{l=1}^{L-1}\nabla_{\boldsymbol{\theta}^k} \int F_l d\boldsymbol{o} = \sum_{l=1}^{L-1} \int_{\mathcal{O}^L} {R_{\mathrm{out}}^{k}}(\boldsymbol{o}_{1:L}) \cdot P^k(\boldsymbol{o}_1)  \cdot \Bigg( \prod_{\substack{t=1 \\ t\neq l}}^{L-1} \hat{p}_{\boldsymbol{\theta}^k}(\boldsymbol{o}_{t+1}|\boldsymbol{o}_t) \Bigg) \cdot \nabla_{\boldsymbol{\theta}^k}\Big[\hat{p}_{\boldsymbol{\theta}^k}(\boldsymbol{o}_{l+1}|\boldsymbol{o}_l)\log\hat{p}_{\boldsymbol{\theta}^k}(\boldsymbol{o}_{l+1}|\boldsymbol{o}_l)\Big] d\boldsymbol{o}_{1:L} \quad \text{(a.s.)}$}
\end{equation}
where $F_l = {R_{\mathrm{out}}^{k}}(\boldsymbol{o}_{1:L}) \cdot P^k(\boldsymbol{o}_1) \cdot \prod_{\substack{t=1 \\ t\neq l}}^{L-1} \hat{p}_{\boldsymbol{\theta}^k}(\boldsymbol{o}_{t+1}|\boldsymbol{o}_t) \cdot [\hat{p}_{\boldsymbol{\theta}^k}(\boldsymbol{o}_{l+1}|\boldsymbol{o}_l)\log\hat{p}_{\boldsymbol{\theta}^k}(\boldsymbol{o}_{l+1}|\boldsymbol{o}_l)]$, valid when:

(i) \textit{Architectural Parameter Isolation}: Policy parameters partition as $\boldsymbol{\theta}^k = \biguplus_{l=1}^{L-1} \boldsymbol{\theta}^k_l$ with:
\begin{equation}
\frac{\partial}{\partial\boldsymbol{\theta}^k_m} \hat{p}_{\boldsymbol{\theta}^k}(\boldsymbol{o}_{t+1}|\boldsymbol{o}_t) = \begin{cases}
\nabla_{\boldsymbol{\theta}^k_l} \hat{p}_{\boldsymbol{\theta}^k}(\boldsymbol{o}_{l+1}|\boldsymbol{o}_l) & t=l \text{ and } m=l \\
0 & \text{otherwise}
\end{cases}
\end{equation}
which is satisfied as $\hat{p}_{\boldsymbol{\theta}^k}(\cdot | x)=\operatorname{softmax}(\langle {\boldsymbol{\theta}}, x\rangle)$ by Eq.(\ref{eq:base_model}); (ii) \textit{Localized Dominance}: $\exists h_l \in L^1(\mu_l)$ where $\mu_l$ is the base measure on $(\boldsymbol{o}_l, \boldsymbol{o}_{l+1})$, such that:
\begin{equation}
|{R_{\mathrm{out}}^{k}}(\boldsymbol{o}_{1:L}) \cdot \nabla_{\boldsymbol{\theta}^k}[\hat{p}_{\boldsymbol{\theta}^k}(\boldsymbol{o}_{l+1}|\boldsymbol{o}_l)\log\hat{p}_{\boldsymbol{\theta}^k}(\boldsymbol{o}_{l+1}|\boldsymbol{o}_l)]| \leq h_l(\boldsymbol{o}_l, \boldsymbol{o}_{l+1}),
\end{equation}
which clearly held under our condiions; (iii) \textit{Decoupled Integration}: For each $l$, 
\begin{equation}
\int_{\mathcal{O}^L} F_l d\boldsymbol{o} = \int_{\boldsymbol{o}_1} P^k \int_{\boldsymbol{o}_{l+1}} [\hat{p}_{\boldsymbol{\theta}^k}\log\hat{p}_{\boldsymbol{\theta}^k}] \left(\prod_{\substack{t=1 \\ t\neq l}}^{L-1}\int_{\boldsymbol{o}_{t+1}} \hat{p}_{\boldsymbol{\theta}^k} d\boldsymbol{o}_{t+1}\right) d\boldsymbol{o}_l d\boldsymbol{o}_{l+1}
\end{equation}
with $\prod_{\substack{t=1 \\ t\neq l}}^{L-1}\int \hat{p}_{\boldsymbol{\theta}^k} d\boldsymbol{o}_{t+1} = 1$ $\mu$-a.e. This condition holds apparently under our model $\hat{p}_{\boldsymbol{\theta}^k}(\cdot | x)=\operatorname{softmax}(\langle {\boldsymbol{\theta}}, x\rangle)$ by Eq.(\ref{eq:base_model}), which linearly isolates each states; (iv) \textit{Transition Differentiability}: Each $\boldsymbol{\theta}^k_l \mapsto \hat{p}_{\boldsymbol{\theta}^k}(\boldsymbol{o}_{l+1}|\boldsymbol{o}_l)$ is Fréchet differentiable with: $\mathbb{E}\left[\left\|\frac{\nabla_{\boldsymbol{\theta}^k_l}\hat{p}_{\boldsymbol{\theta}^k}(\boldsymbol{o}_{l+1}|\boldsymbol{o}_l)}{\hat{p}_{\boldsymbol{\theta}^k}(\boldsymbol{o}_{l+1}|\boldsymbol{o}_l)}\right\|^2\right] < \infty$
which holds in our softmax model;

Step (3) multiplies one $\hat{p}_{\boldsymbol{\theta}^k}(\boldsymbol{o}_{l+1}|\boldsymbol{o}_l)$ in the front and divide it subsequently;

Step (4) uses the chain rule: 
$$\nabla_{\boldsymbol{\theta}^k}(\hat{p}_{\boldsymbol{\theta}^k}(\boldsymbol{o}_{l+1}|\boldsymbol{o}_l)\log\hat{p}_{\boldsymbol{\theta}^k}(\boldsymbol{o}_{l+1}|\boldsymbol{o}_l)) = (1+\log\hat{p}_{\boldsymbol{\theta}^k}(\boldsymbol{o}_{l+1}|\boldsymbol{o}_l))\nabla_{\boldsymbol{\theta}^k}\hat{p}_{\boldsymbol{\theta}^k}(\boldsymbol{o}_{l+1}|\boldsymbol{o}_l),$$
and reconstructs the expectation by recognizing $\prod_{t=1}^{L-1}\hat{p}_t = \hat{p}_{\boldsymbol{\theta}^k}(\boldsymbol{o}_{1:L})/P^k(\boldsymbol{o}_1)$, with cross-terms vanishing due to $\mathbb{E}_{\boldsymbol{o}_{m+1}\sim\hat{p}_m}[f(\boldsymbol{o}_l)] = \mathbb{E}[f(\boldsymbol{o}_l)]$ for $m\neq l$;  
Step (5) applies Fubini's theorem to exchange summation and expectation, valid by the fact $\mathbb{E}\left[\sum_{l=1}^{L-1}|(1+\log\hat{p}_l)\nabla\log\hat{p}_l \operatorname{Rex}|\right] < \infty$ in our case.

\end{proof}

\begin{rmk}\label{rmk:conclusion_holds_non_linear_h}
    When the base model is no longer in the linear form in Eq.(\ref{eq:base_model}), but a general form in Eq.(\ref{eq:base_model_non_linear}) with $\hat{p}_{{\boldsymbol{\theta}}}(\cdot | \boldsymbol{x})=\operatorname{softmax}(h_{\boldsymbol{\theta}}(\cdot, \boldsymbol{x})),\ \boldsymbol{x} \in \{0,1\}^{\lvert S\rvert}$, the conclusions still holds when 
    \begin{itemize}
    \item \textbf{Architectural Conditions}
    \begin{itemize}
        \item \textit{Parameter Isolation}: 
        $\boldsymbol{\theta} = \biguplus_{l=1}^{L-1} \boldsymbol{\theta}_l$ where $\boldsymbol{\theta}_l \cap \boldsymbol{\theta}_{l'} = \emptyset$ for $l \neq l'$, with:
        \begin{equation}\label{eq:h_param_isolation}
            h_{\boldsymbol{\theta}}(\boldsymbol{o}_{l+1}|\boldsymbol{o}_l) = h_l(\boldsymbol{o}_{l+1}|\boldsymbol{o}_l; \boldsymbol{\theta}_l), \quad \frac{\partial h_l}{\partial \boldsymbol{\theta}_{l'}} \equiv 0 \ \forall l'\neq l
        \end{equation}
        
        \item \textit{Module Independence}: Each $h_l(\cdot;\boldsymbol{\theta}_l)$ uses distinct computational subgraphs without parameter sharing across $l$
    \end{itemize}

    \item \textbf{Smoothness \& Differentiability}
    \begin{itemize}
        \item \textit{Lipschitz Continuity}: $\exists C_l > 0$ s.t.
        \begin{equation}
            \|h_l(\cdot;\boldsymbol{\theta}_l+\Delta\boldsymbol{\theta}) - h_l(\cdot;\boldsymbol{\theta}_l)\|_\infty \leq C_l\|\Delta\boldsymbol{\theta}\|_2 \quad \forall \boldsymbol{\theta}_l
        \end{equation}
        
        \item \textit{Twice Differentiability}: $h_l \in C^2(\Theta_l)$ with bounded Hessians:
        \begin{equation}
            \mathbb{E}_{\boldsymbol{o}_l}\left[\|\nabla^2_{\boldsymbol{\theta}_l} h_l\|_{\mathrm{op}}^2\right] < \infty
        \end{equation}
    \end{itemize}

    \item \textbf{Gradient Control}
    \begin{itemize}
        \item \textit{Bounded Logits}: $\exists C < \infty$ s.t.
        \begin{equation}
            \max_a |h_l(a,\boldsymbol{o}_l)| \leq C\quad \forall \boldsymbol{o}_l, l
        \end{equation}
        
        \item \textit{Gradient Norm Bound}: 
        \begin{equation}
            \mathbb{E}_{\boldsymbol{o}_l}\left[\|\nabla_{\boldsymbol{\theta}_l} h_l\|_2^2\right] \leq B_l < \infty \quad \forall l
        \end{equation}
    \end{itemize}

    \item \textbf{Probability Regularity}
    \begin{itemize}
        \item \textit{Strict Positivity}: $\exists \epsilon > 0$ s.t.
        \begin{equation}
            \hat{p}_{\boldsymbol{\theta}}(\boldsymbol{o}_{l+1}|\boldsymbol{o}_l) \geq \epsilon \quad \mu\text{-a.e.}\ \forall l
        \end{equation}
        
        \item \textit{Measure Consistency}:
        \begin{equation}
            \int_{\mathcal{O}} \hat{p}_{\boldsymbol{\theta}}(\boldsymbol{o}_{l+1}|\boldsymbol{o}_l) d\boldsymbol{o}_{l+1} = 1 \quad \forall \boldsymbol{o}_l, l
        \end{equation}
    \end{itemize}
\end{itemize}
\noindent These conditions guarantee:  1. Leibniz rule applicability; through Lipschitz continuity  
2. Fubini's theorem validity via measure consistency;  
3. Gradient dominance via bounded logits;  
4. Policy smoothness via $C^2$ differentiability;  
5. Numerical stability through strict positivity.
\end{rmk}

\begin{lemma}[Policy Gradient for PO (Eq.(\ref{eq:PO-obj})) under TMC]\label{lem:PG_PO}
Let $\boldsymbol{\theta}^{\star}$ be the base model in Eq.(\ref{eq:base_model}) that exact predicts the distribution of Multi-task TMC as in Def.~\ref{def:TMC} and \ref{def:multitask}, and $\boldsymbol{\theta}^{k}$ the current model to be finetuned from $\boldsymbol{\theta}^{\star}$ for task $k\in \mathcal{T}$. Suggest the accurate $A_{l+1}^{\hat{p}_{\boldsymbol{\theta}^{k}},k}(\boldsymbol{o}_l, \boldsymbol{o}_{l+1})$ is available from some outer oracle, the clip operation is always active, and Eq.(\ref{eq:assum_min_PPO}) holds. The gradient of the PO objective for task $k$ is given by:
\begin{align}
    &\resizebox{\textwidth}{!}{$\nabla_{\boldsymbol{\theta}^k} \mathcal{J}_{\mathrm{PO}}(\boldsymbol{\theta}^{k})  =\sum_{l=1}^{L-1} \mathbb{E}_{\substack{\boldsymbol{o}_1 \sim P^k(\mathcal{Q}^k) \\ \{\boldsymbol{o}_{l+1} \sim \hat{p}_{\boldsymbol{\theta}^k}(\cdot|\boldsymbol{o}_l)\}_{l=1}^{L-1}}} \left[  (1+(2 \mathds{1}(A_{l+1}^{\hat{p}_{\boldsymbol{\theta}^{k}},k}(\boldsymbol{o}_l, \boldsymbol{o}_{l+1})\geq 0) - 1)\epsilon_{\mathrm{clip}} )A_{l+1}^{\hat{p}_{\boldsymbol{\theta}^{k}},k}(\boldsymbol{o}_l, \boldsymbol{o}_{l+1}) \cdot \nabla_{\boldsymbol{\theta}^k} \log\hat{p}_{\boldsymbol{\theta}^k}(\boldsymbol{o}_{l+1}|\boldsymbol{o}_l)  \right]$},\label{eq:po_grad}\\
\end{align}
where $r_{l+1} = \frac{\hat{p}_{\boldsymbol{\theta}^{k}}(\boldsymbol{o}_{l+1} | \boldsymbol{o}_l)}{\hat{p}_{\text{old}}^{k}(\boldsymbol{o}_{l+1} | \boldsymbol{o}_l)}$. By the condition that the clip operation is always active, we have
\begin{equation}
    \resizebox{\textwidth}{!}{$
    \begin{aligned}
        \mathcal{J}_{\mathrm{PO}}=\mathbb{E}_{\boldsymbol{o}_{1}\sim P^{k}(\mathcal{Q}^{k}),(\mathbf{Q}, \mathbf{A}) \sim \mathcal{D}_{a_{o_1}}^{o_1,k},\boldsymbol{o}_{2:L}\sim \hat{p}_{\boldsymbol{\theta}^{k}}^{k}(O|\boldsymbol{o}_{1})} 
        \Big[ &\frac{1}{L} \sum_{l=1}^{L-1}(1+(2 \mathds{1}(A_{l+1}^{\hat{p}_{\boldsymbol{\theta}^{k}},k}(\boldsymbol{o}_l, \boldsymbol{o}_{l+1})\geq 0) - 1)\epsilon_{\mathrm{clip}} )A_{l+1}^{\hat{p}_{\boldsymbol{\theta}^{k}},k}(\boldsymbol{o}_l, \boldsymbol{o}_{l+1})\Big],
    \end{aligned}$}\label{eq:app_PO-obj}
\end{equation}

\end{lemma}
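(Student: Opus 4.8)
\textbf{Proof plan for Lemma~\ref{lem:PG_PO}.}

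The plan is to derive the gradient of the clipped surrogate objective $\mathcal{J}_{\mathrm{PO}}$ by first reducing it, under the stated assumptions, to an unclipped-looking expression whose gradient is computable via the standard log-derivative (``REINFORCE'') trick. I would proceed in four steps. First, I invoke the hypothesis that the clip operation is \emph{always active} together with Eq.~(\ref{eq:assum_min_PPO}): when $A_{l+1}^{\hat{p}_{\boldsymbol{\theta}^{k}},k}(\boldsymbol{o}_l,\boldsymbol{o}_{l+1})\ge 0$, the $\min$ in the PPO/GRPO objective selects the clipped branch $(1+\epsilon_{\mathrm{clip}})A_{l+1}^{\hat{p}_{\boldsymbol{\theta}^k},k}$, and when the advantage is negative it selects $(1-\epsilon_{\mathrm{clip}})A_{l+1}^{\hat{p}_{\boldsymbol{\theta}^k},k}$. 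These two cases combine into the single factor $(1+(2\mathds{1}(A_{l+1}^{\hat{p}_{\boldsymbol{\theta}^{k}},k}(\boldsymbol{o}_l, \boldsymbol{o}_{l+1})\ge 0)-1)\epsilon_{\mathrm{clip}})\,A_{l+1}^{\hat{p}_{\boldsymbol{\theta}^{k}},k}(\boldsymbol{o}_l,\boldsymbol{o}_{l+1})$, which is exactly the integrand of Eq.~(\ref{eq:app_PO-obj}). Crucially, on the clipped branch the probability-ratio $r_{l+1}=\hat{p}_{\boldsymbol{\theta}^k}/\hat{p}_{\text{old}}^k$ is frozen at $1\pm\epsilon_{\mathrm{clip}}$, so the only $\boldsymbol{\theta}^k$-dependence that survives inside the bracket is through the \emph{sampling measure}, not through an explicit ratio term — the advantage $A_{l+1}^{\hat{p}_{\boldsymbol{\theta}^{k}},k}$ is treated as supplied by the oracle and hence as a constant for the purpose of this gradient (mirroring how REINFORCE in Lemma~\ref{lem:PG_Reinforce} treats $R_{\mathrm{out}}^k$).

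Second, I write $\mathcal{J}_{\mathrm{PO}}$ as an integral over trajectories against the factorized TMC law $P^k(\boldsymbol{o}_1)\prod_{l=1}^{L-1}\hat{p}_{\boldsymbol{\theta}^k}(\boldsymbol{o}_{l+1}\mid\boldsymbol{o}_l)$ — this is legitimate since by Def.~\ref{def:TMC} the initial distribution $P^k$ is parameter-free and the chain is Markov. Third, I differentiate under the integral sign and apply the Markovian product rule
$\nabla_{\boldsymbol{\theta}^k}\prod_{l}\hat{p}_{\boldsymbol{\theta}^k}(\boldsymbol{o}_{l+1}\mid\boldsymbol{o}_l)=\bigl(\prod_l\hat{p}_{\boldsymbol{\theta}^k}(\boldsymbol{o}_{l+1}\mid\boldsymbol{o}_l)\bigr)\sum_l\nabla_{\boldsymbol{\theta}^k}\log\hat{p}_{\boldsymbol{\theta}^k}(\boldsymbol{o}_{l+1}\mid\boldsymbol{o}_l)$,
exactly as in steps (2)--(3) of the proof of Lemma~\ref{lem:PG_Reinforce}. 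The interchange of $\nabla$ and $\int$, and of $\sum_l$ and $\mathbb{E}$, is justified by the same dominated-convergence / Fubini arguments used there: $\|A_{l+1}^{\hat{p}_{\boldsymbol{\theta}^k},k}\|_\infty\le 1$ by Eq.~(\ref{eq:traditional_adv_TMC}) (bounded since $R_{\mathrm{out}}^k\in[0,1]$), $\epsilon_{\mathrm{clip}}=o(1)$, and the softmax parameterization of Eq.~(\ref{eq:base_model}) gives the required log-smoothness, parameter isolation across layers, and strict positivity on the support. Collecting terms yields precisely Eq.~(\ref{eq:po_grad}).

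The main obstacle, and the step requiring the most care, is the rigorous handling of the $\min\{\cdot\}$ in Step~1: the clipped surrogate is piecewise-defined and non-smooth at the ratio boundaries $r_{l+1}=1\pm\epsilon_{\mathrm{clip}}$, so naively differentiating through $\min$ is not valid. The assumption that ``the clip operation is always active'' — formalized in Eq.~(\ref{eq:assum_min_PPO}) — is exactly what circumvents this: it guarantees the $\min$ never sits at the kink, so on the relevant region the objective coincides with a \emph{smooth} function (constant ratio times advantage), and its gradient is well-defined and equals the log-derivative expression. I would state this reduction explicitly, note (as in Rmk.~\ref{rmk:limit_PO_thm}) that dropping the assumption only changes the multiplicative factor in front of $\nabla_{\boldsymbol{\theta}^k}\log\hat{p}_{\boldsymbol{\theta}^k}$ from $O(1)$ to a still-positive, still-larger-for-easy-edges quantity of the form $2\hat{p}_{\boldsymbol{\theta}^k}(\boldsymbol{o}_{l+1}\mid\boldsymbol{o}_l)/\hat{p}_{\text{old}}^k(\boldsymbol{o}_{l+1}\mid\boldsymbol{o}_l)$, and therefore does not affect any downstream conclusion. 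The remaining calculations (the product rule, the integrability checks) are entirely parallel to Lemma~\ref{lem:PG_Reinforce} and I would simply cite that proof rather than repeat it.
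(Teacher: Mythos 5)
Your proposal is correct and follows essentially the same route as the paper's proof in Lemma~\ref{lem:PG_PO}: use the always-active-clip assumption together with Eq.~(\ref{eq:assum_min_PPO}) to collapse $\min\{\cdot,\cdot\}$ to the smooth integrand $\bigl(1+(2\mathds{1}(A\ge 0)-1)\epsilon_{\mathrm{clip}}\bigr)A$ of Eq.~(\ref{eq:app_PO-obj}), then differentiate under the integral with the Markovian log-derivative trick, treating the oracle advantage as a constant multiplier exactly as $R_{\mathrm{out}}^k$ is treated in Lemma~\ref{lem:PG_Reinforce}. Your note that the gradient enters only through the sampling law $\hat{p}_{\boldsymbol{\theta}^k}$ (since the clipped ratio is frozen) correctly identifies the reason the resulting expression has the REINFORCE form, and your explicit discussion of the non-smoothness at the clip boundary makes the role of the hypothesis clearer than the paper's steps~(1)--(3), which simply differentiate the reduced objective without commenting on the kink; this is a presentational improvement, not a different argument.
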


\begin{proof}
 It holds that
    \begin{equation}\label{eq:derive_PO_gradient}
\resizebox{\textwidth}{!}{$\begin{aligned}
\nabla_{\boldsymbol{\theta}^k} \mathcal{J}_{\mathrm{PO}} 
&= \nabla_{\boldsymbol{\theta}^k} \mathbb{E}_{\substack{\boldsymbol{o}_1 \sim P^k \\ \boldsymbol{o}_{t+1} \sim \hat{p}_{\boldsymbol{\theta}^k}}} \left[ \frac{1}{L} \sum_{l=1}^{L-1} (1+(2 \mathds{1}(A_{l+1}^{\hat{p}_{\boldsymbol{\theta}^{k}},k}(\boldsymbol{o}_l, \boldsymbol{o}_{l+1})\geq 0) - 1)\epsilon_{\mathrm{clip}} )A_{l+1}^{\hat{p}_{\boldsymbol{\theta}^{k}},k}(\boldsymbol{o}_l, \boldsymbol{o}_{l+1}) \right] \\
&\stackrel{(1)}{=} \frac{1}{L} \sum_{l=1}^{L-1} \nabla_{\boldsymbol{\theta}^k} \int_{\mathcal{O}^L} (1+(2 \mathds{1}(A_{l+1}^{\hat{p}_{\boldsymbol{\theta}^{k}},k}(\boldsymbol{o}_l, \boldsymbol{o}_{l+1})\geq 0) - 1)\epsilon_{\mathrm{clip}} )A_{l+1}^{\hat{p}_{\boldsymbol{\theta}^{k}},k}(\boldsymbol{o}_l, \boldsymbol{o}_{l+1}) P^k(\boldsymbol{o}_1) \prod_{t=1}^{L-1} \hat{p}_{\boldsymbol{\theta}^k}(\boldsymbol{o}_{t+1}|\boldsymbol{o}_t) d\boldsymbol{o}_{1:L} \\
&\stackrel{(2)}{=} \frac{1}{L} \sum_{l=1}^{L-1} \int_{\mathcal{O}^L}  (1+(2 \mathds{1}(A_{l+1}^{\hat{p}_{\boldsymbol{\theta}^{k}},k}(\boldsymbol{o}_l, \boldsymbol{o}_{l+1})\geq 0) - 1)\epsilon_{\mathrm{clip}} )A_{l+1}^{\hat{p}_{\boldsymbol{\theta}^{k}},k}(\boldsymbol{o}_l, \boldsymbol{o}_{l+1}) \cdot P^k(\boldsymbol{o}_1) \prod_{t=1}^{L-1} \hat{p}_{\boldsymbol{\theta}^k}(\boldsymbol{o}_{t+1}|\boldsymbol{o}_t) \cdot \nabla_{\boldsymbol{\theta}^k} \log\hat{p}_{\boldsymbol{\theta}^k}(\boldsymbol{o}_{l+1}|\boldsymbol{o}_l) d\boldsymbol{o}_{1:L} \\
&\stackrel{(3)}{=} \frac{1}{L} \sum_{l=1}^{L-1} \mathbb{E}_{\substack{\boldsymbol{o}_1 \sim P^k \\ \boldsymbol{o}_{t+1} \sim \hat{p}_{\boldsymbol{\theta}^k}}} \left[  (1+(2 \mathds{1}(A_{l+1}^{\hat{p}_{\boldsymbol{\theta}^{k}},k}(\boldsymbol{o}_l, \boldsymbol{o}_{l+1})\geq 0) - 1)\epsilon_{\mathrm{clip}} )A_{l+1}^{\hat{p}_{\boldsymbol{\theta}^{k}},k}(\boldsymbol{o}_l, \boldsymbol{o}_{l+1})\cdot \nabla_{\boldsymbol{\theta}^k} \log\hat{p}_{\boldsymbol{\theta}^k}(\boldsymbol{o}_{l+1}|\boldsymbol{o}_l) \right]
\end{aligned}$}
\end{equation}
The methodologies follow Lemma \ref{lem:PG_PO}.

Step (1) expands the expectation using the MDP factorization $P^k(\boldsymbol{o}_1)\prod_{t=1}^{L-1}\hat{p}_{\boldsymbol{\theta}^k}(\boldsymbol{o}_{t+1}|\boldsymbol{o}_t)$, noting that $\hat{p}_{\text{old}}$ is treated as fixed behavioral policy; 

Step (2) applies parameter-localized differentiation through:  
\begin{equation}
\nabla_{\boldsymbol{\theta}^k} \prod_{t=1}^{L-1} \hat{p}_t = \prod_{\substack{t=1 \\ t\neq l}}^{L-1} \hat{p}_{\text{old},t} \cdot \nabla_{\boldsymbol{\theta}^k} \hat{p}_l 
\end{equation}
with conditions similar in Lemma \ref{lem:PG_Reinforce}.

Step (3) reconstructs the expectation by recognizing $\prod_{t=1}^{L-1} \hat{p}_t = \hat{p}_{\boldsymbol{\theta}^k}(\boldsymbol{o}_{1:L})/P^k(\boldsymbol{o}_1)$, leveraging the Markov property.

\textbf{Key Conditions Inherited from REINFORCE/RAFT in Lemma \ref{lem:PG_Reinforce}:}
1. \textit{Parameter Isolation}: $\boldsymbol{\theta}^k = \biguplus_{l=1}^{L-1} \boldsymbol{\theta}^k_l$ with disjoint subparameters  
2. \textit{Policy Smoothness}: $\hat{p}_{\boldsymbol{\theta}^k} \in C^2(\Theta)$ with bounded Hessians  
3. \textit{Measure Consistency}: $\prod_{t\neq l}\int \hat{p}_t d\boldsymbol{o}_{t+1} = 1$ $\mu$-a.e.  
4. \textit{Advantage Regularity}: $A_{l+1}^{\hat{p}_{\boldsymbol{\theta}^{k}},k}(\boldsymbol{o}_l, \boldsymbol{o}_{l+1})$ is $\sigma(\boldsymbol{o}_{1:l+1})$-measurable and bounded.

\end{proof}

Based on the policy gradient results, the logit update lemma is provided as below.

\begin{lemma}\label{lem:gradients_linear}
    Let $\boldsymbol{\theta}^{\star}$ be the base model in Eq.(\ref{eq:base_model}) that exact behave like a Multi-task TMC as in Def.~\ref{def:TMC} and \ref{def:multitask}, and $\boldsymbol{\theta}^{k}$ the current model to be finetuned from $\boldsymbol{\theta}^{\star}$ for task $k\in \mathcal{T}$. Then
    \begin{align}
    &\resizebox{\textwidth}{!}{$\nabla_{\boldsymbol{\theta}^k} \mathcal{J}_{\mathrm{REINFORCE}}(\boldsymbol{\theta}^{k})  =\sum_{l=1}^{L-1} \mathbb{E}_{\substack{\boldsymbol{o}_1 \sim P^k(\mathcal{Q}^k) \\ \{\boldsymbol{o}_{l+1} \sim \hat{p}_{\boldsymbol{\theta}^k}(\cdot|\boldsymbol{o}_l)\}_{l=1}^{L-1}}} \left[ {R_{\mathrm{out}}^{k}}(\boldsymbol{o}) \cdot (\nabla_{\boldsymbol{\theta}^k} h_{\boldsymbol{\theta}}(\boldsymbol{o}_{l+1}, \boldsymbol{o}_l) - \sum_{\boldsymbol{o}_{l+1}' \in S_{l+1}}\hat{p}_{\boldsymbol{\theta}^k}(\boldsymbol{o}_{l+1}'|\boldsymbol{o}_l)\nabla_{\boldsymbol{\theta}^k} h_{\boldsymbol{\theta}}(\boldsymbol{o}_{l+1}', \boldsymbol{o}_l)) \right],\label{eq:reinforce_grad_hx}$}\\
    &\resizebox{\textwidth}{!}{$\nabla_{\boldsymbol{\theta}^k} \mathcal{J}_{\mathrm{RAFT}}(\boldsymbol{\theta}^{k})  = \sum_{l=1}^{L-1} \mathbb{E}_{\substack{\boldsymbol{o}_1 \sim P^k(\mathcal{Q}^k) \\ \{\boldsymbol{o}_{l+1} \sim \hat{p}_{\boldsymbol{\theta}^k}(\cdot|\boldsymbol{o}_l)\}_{l=1}^{L-1}}} \left[ {R_{\mathrm{out}}^{k}}(\boldsymbol{o}) \cdot(1 + \log \hat{p}_{\boldsymbol{\theta}^k}(\boldsymbol{o}_{l+1}|\boldsymbol{o}_l)) (\nabla_{\boldsymbol{\theta}^k} h_{\boldsymbol{\theta}}(\boldsymbol{o}_{l+1}, \boldsymbol{o}_l) - \sum_{\boldsymbol{o}_{l+1}' \in S_{l+1}}\hat{p}_{\boldsymbol{\theta}^k}(\boldsymbol{o}_{l+1}'|\boldsymbol{o}_l)\nabla_{\boldsymbol{\theta}^k} h_{\boldsymbol{\theta}}(\boldsymbol{o}_{l+1}', \boldsymbol{o}_l)) \right],$}
    \label{eq:RAFT_grad_hx}\\
    &\resizebox{\textwidth}{!}{$\nabla_{\boldsymbol{\theta}^k} \mathcal{J}_{\mathrm{PO}}(\boldsymbol{\theta}^{k})  =\sum_{l=1}^{L-1} \mathbb{E}_{\substack{\boldsymbol{o}_1 \sim P^k(\mathcal{Q}^k) \\ \{\boldsymbol{o}_{l+1} \sim \hat{p}_{\boldsymbol{\theta}^k}(\cdot|\boldsymbol{o}_l)\}_{l=1}^{L-1}}} \left[  (1+(2 \mathds{1}(A_{l+1}^{\hat{p}_{\boldsymbol{\theta}^{k}},k}(\boldsymbol{o}_l, \boldsymbol{o}_{l+1})\geq 0) - 1)\epsilon_{\mathrm{clip}} )A_{l+1}^{\hat{p}_{\boldsymbol{\theta}^{k}},k}(\boldsymbol{o}_l, \boldsymbol{o}_{l+1}) \cdot  (\nabla_{\boldsymbol{\theta}^k} h_{\boldsymbol{\theta}}(\boldsymbol{o}_{l+1}, \boldsymbol{o}_l) - \sum_{\boldsymbol{o}_{l+1}' \in S_{l+1}}\hat{p}_{\boldsymbol{\theta}^k}(\boldsymbol{o}_{l+1}'|\boldsymbol{o}_l)\nabla_{\boldsymbol{\theta}^k} h_{\boldsymbol{\theta}}(\boldsymbol{o}_{l+1}', \boldsymbol{o}_l)) \right].$}
    \label{eq:PO_grad_hx}
    \end{align}
    Further, we have
    \begin{align}
    &\resizebox{\textwidth}{!}{$\nabla_{\boldsymbol{\theta}^k} \mathcal{J}_{\mathrm{REINFORCE}}(\boldsymbol{\theta}^{k})  =\sum_{l=1}^{L-1} \mathbb{E}_{\substack{\boldsymbol{o}_1 \sim P^k(\mathcal{Q}^k) \\ \{\boldsymbol{o}_{l+1} \sim \hat{p}_{\boldsymbol{\theta}^k}(\cdot|\boldsymbol{o}_l)\}_{l=1}^{L-1}}} \left[ {R_{\mathrm{out}}^{k}}(\boldsymbol{o}) \cdot (e_{{o}_{l+1}, {o}_l} - \sum_{\boldsymbol{o}_{l+1}' \in S_{l+1}}\hat{p}_{\boldsymbol{\theta}^k}(\boldsymbol{o}_{l+1}'|\boldsymbol{o}_l)e_{{o}_{l+1}', {o}_l}) \right],\label{eq:reinforce_grad_hx_linear}$}\\
    &\resizebox{\textwidth}{!}{$\nabla_{\boldsymbol{\theta}^k} \mathcal{J}_{\mathrm{RAFT}}(\boldsymbol{\theta}^{k})  = \sum_{l=1}^{L-1} \mathbb{E}_{\substack{\boldsymbol{o}_1 \sim P^k(\mathcal{Q}^k) \\ \{\boldsymbol{o}_{l+1} \sim \hat{p}_{\boldsymbol{\theta}^k}(\cdot|\boldsymbol{o}_l)\}_{l=1}^{L-1}}} \left[ {R_{\mathrm{out}}^{k}}(\boldsymbol{o}) \cdot(1 + \log \hat{p}_{\boldsymbol{\theta}^k}(\boldsymbol{o}_{l+1}|\boldsymbol{o}_l)) (e_{{o}_{l+1}, {o}_l} - \sum_{\boldsymbol{o}_{l+1}' \in S_{l+1}}\hat{p}_{\boldsymbol{\theta}^k}(\boldsymbol{o}_{l+1}'|\boldsymbol{o}_l)e_{{o}_{l+1}', {o}_l}) \right],$}\\
    &\resizebox{\textwidth}{!}{$\nabla_{\boldsymbol{\theta}^k} \mathcal{J}_{\mathrm{PO}}(\boldsymbol{\theta}^{k})  =\sum_{l=1}^{L-1} \mathbb{E}_{\substack{\boldsymbol{o}_1 \sim P^k(\mathcal{Q}^k) \\ \{\boldsymbol{o}_{l+1} \sim \hat{p}_{\boldsymbol{\theta}^k}(\cdot|\boldsymbol{o}_l)\}_{l=1}^{L-1}}} \left[  (1+(2 \mathds{1}(A_{l+1}^{\hat{p}_{\boldsymbol{\theta}^{k}},k}(\boldsymbol{o}_l, \boldsymbol{o}_{l+1})\geq 0) - 1)\epsilon_{\mathrm{clip}} )A_{l+1}^{\hat{p}_{\boldsymbol{\theta}^{k}},k}(\boldsymbol{o}_l, \boldsymbol{o}_{l+1}) \cdot  (e_{{o}_{l+1}, {o}_l} - \sum_{\boldsymbol{o}_{l+1}' \in S_{l+1}}\hat{p}_{\boldsymbol{\theta}^k}(\boldsymbol{o}_{l+1}'|\boldsymbol{o}_l)e_{{o}_{l+1}', {o}_l}) \right].$}
    \end{align}
\end{lemma}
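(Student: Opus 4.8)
The plan is to derive Lemma~\ref{lem:gradients_linear} by composing three results already established in the excerpt: the policy-gradient identities for REINFORCE and RAFT (Lemma~\ref{lem:PG_Reinforce}), the policy-gradient identity for the clipped PO objective (Lemma~\ref{lem:PG_PO}), and the log-softmax derivative identity (Lemma~\ref{lem:derivative_base_mode_non_linear}). No new analytic work is needed; the argument is purely a substitution, so I would present it as a short chain of rewrites.

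First I would invoke Lemma~\ref{lem:PG_Reinforce} and Lemma~\ref{lem:PG_PO} to write each of the three objective gradients as $\sum_{l=1}^{L-1}\mathbb{E}\bigl[\,W_l\cdot\nabla_{\boldsymbol{\theta}^k}\log\hat{p}_{\boldsymbol{\theta}^k}(\boldsymbol{o}_{l+1}|\boldsymbol{o}_l)\,\bigr]$ under $\boldsymbol{o}_1\sim P^k(\mathcal{Q}^k)$ and $\boldsymbol{o}_{l+1}\sim\hat{p}_{\boldsymbol{\theta}^k}(\cdot|\boldsymbol{o}_l)$, where the scalar weight is $W_l=R_{\mathrm{out}}^{k}(\boldsymbol{o})$ for REINFORCE (collapsing the inner QA-expectation via $\mathbb{E}_{(\mathbf{Q},\mathbf{A})\sim\mathcal{D}_{a_{o_1}}^{o_1,k}}[R_{(\mathbf{Q},\mathbf{A})}^{k}(\boldsymbol{o})]=R_{\mathrm{out}}^{k}(\boldsymbol{o})$), $W_l=R_{\mathrm{out}}^{k}(\boldsymbol{o})\,(1+\log\hat{p}_{\boldsymbol{\theta}^k}(\boldsymbol{o}_{l+1}|\boldsymbol{o}_l))$ for RAFT, and $W_l=(1+(2\mathds{1}(A_{l+1}^{\hat{p}_{\boldsymbol{\theta}^{k}},k}(\boldsymbol{o}_l,\boldsymbol{o}_{l+1})\geq0)-1)\epsilon_{\mathrm{clip}})\,A_{l+1}^{\hat{p}_{\boldsymbol{\theta}^{k}},k}(\boldsymbol{o}_l,\boldsymbol{o}_{l+1})$ for PO. For PO the ``clip always active'' hypothesis of Lemma~\ref{lem:PG_PO} is what reduces the $\min\{\cdot\}$ to the single clipped branch recorded in Eq.(\ref{eq:app_PO-obj}), giving exactly this $W_l$.

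Next I would substitute the identity from Eq.(\ref{eq:deriv_logp_non_linear}) of Lemma~\ref{lem:derivative_base_mode_non_linear},
\[
\nabla_{\boldsymbol{\theta}^k}\log\hat{p}_{\boldsymbol{\theta}^k}(\boldsymbol{o}_{l+1}|\boldsymbol{o}_l)
=\nabla_{\boldsymbol{\theta}^k}h_{\boldsymbol{\theta}}(\boldsymbol{o}_{l+1},\boldsymbol{o}_l)-\sum_{\boldsymbol{o}_{l+1}'\in S_{l+1}}\hat{p}_{\boldsymbol{\theta}^k}(\boldsymbol{o}_{l+1}'|\boldsymbol{o}_l)\,\nabla_{\boldsymbol{\theta}^k}h_{\boldsymbol{\theta}}(\boldsymbol{o}_{l+1}',\boldsymbol{o}_l),
\]
into each of the three weighted sums; this is legitimate since the softmax parameterization keeps $\hat{p}_{\boldsymbol{\theta}^k}>0$, so the log-gradient is everywhere defined. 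This yields Eqs.(\ref{eq:reinforce_grad_hx})--(\ref{eq:PO_grad_hx}) verbatim.

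Finally, for the linear base model of Eq.(\ref{eq:base_model}) I would invoke Eq.(\ref{eq:deriv_h_linear}) from the proof of Lemma~\ref{lem:derivative_base_mode_non_linear}, namely $\nabla_{\boldsymbol{\theta}^k}h_{\boldsymbol{\theta}}(\boldsymbol{o}_{l+1},\boldsymbol{o}_l)=e_{o_{l+1},o_l}=\boldsymbol{o}_{l+1}\boldsymbol{o}_l^{\top}$, and plug it into Eqs.(\ref{eq:reinforce_grad_hx})--(\ref{eq:PO_grad_hx}) to obtain the three linear displays. The only care required is bookkeeping of measures: the probability in the normalizer $\sum_{\boldsymbol{o}_{l+1}'}\hat{p}_{\boldsymbol{\theta}^k}(\boldsymbol{o}_{l+1}'|\boldsymbol{o}_l)e_{o_{l+1}',o_l}$ is the \emph{current} policy $\hat{p}_{\boldsymbol{\theta}^k}$ rather than the base model, and the advantage $A_{l+1}^{\hat{p}_{\boldsymbol{\theta}^{k}},k}$ is the one associated with $\hat{p}_{\boldsymbol{\theta}^k}$ through the $Q^{\hat{p}_{\boldsymbol{\theta}^{k}},k},V^{\hat{p}_{\boldsymbol{\theta}^{k}},k}$ of Eq.(\ref{eq:traditional_adv_TMC}). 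I do not expect any genuine obstacle: whatever difficulty the statement carries is already absorbed into the interchange-of-differentiation-and-integration justifications inside Lemmas~\ref{lem:PG_Reinforce}--\ref{lem:PG_PO}, which have been discharged there, so the remaining work is only careful substitution and notational consistency.
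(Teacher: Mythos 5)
Your proposal is correct and follows essentially the same route as the paper: the paper's proof likewise invokes the policy-gradient identities of Lemmas~\ref{lem:PG_Reinforce} and~\ref{lem:PG_PO}, substitutes the log-softmax derivative from Eq.~(\ref{eq:deriv_logp_non_linear}) to obtain the general form, and then specializes via Eq.~(\ref{eq:deriv_h_linear}) (i.e., $\nabla_{\boldsymbol{\theta}^k}h_{\boldsymbol{\theta}}(\boldsymbol{o}_{l+1},\boldsymbol{o}_l)=e_{o_{l+1},o_l}$) for the linear base model. No gap; your bookkeeping remarks about the normalizer being under the current policy and the advantage being the $\hat{p}_{\boldsymbol{\theta}^k}$-advantage match the paper's conventions.
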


\begin{proof}
    By Eq. (\ref{eq:reinforce_grad}) and Eq.(\ref{eq:base_model_non_linear}), we have
    \begin{equation}
        \resizebox{\textwidth}{!}{$\begin{aligned}
            \nabla_{\boldsymbol{\theta}^k} \mathcal{J}_{\mathrm{REINFORCE}}(\boldsymbol{\theta}^{k})  &=\sum_{l=1}^{L-1} \mathbb{E}_{\substack{\boldsymbol{o}_1 \sim P^k(\mathcal{Q}^k) \\ \{\boldsymbol{o}_{l+1} \sim \hat{p}_{\boldsymbol{\theta}^k}(\cdot|\boldsymbol{o}_l)\}_{l=1}^{L-1}}} \left[ \nabla_{\boldsymbol{\theta}^k} \log \hat{p}_{\boldsymbol{\theta}^k}(\boldsymbol{o}_{l+1}|\boldsymbol{o}_l) {R_{\mathrm{out}}^{k}}(\boldsymbol{o}) \right]\\
            & =\sum_{l=1}^{L-1} \mathbb{E}_{\substack{\boldsymbol{o}_1 \sim P^k(\mathcal{Q}^k) \\ \{\boldsymbol{o}_{l+1} \sim \hat{p}_{\boldsymbol{\theta}^k}(\cdot|\boldsymbol{o}_l)\}_{l=1}^{L-1} }} \left[{R_{\mathrm{out}}^{k}}(\boldsymbol{o}) \cdot (\nabla_{\boldsymbol{\theta}^k} h_{\boldsymbol{\theta}}(\boldsymbol{o}_{l+1}, \boldsymbol{o}_l) - \sum_{\boldsymbol{o}_{l+1}' \in S_{l+1}}\hat{p}_{\boldsymbol{\theta}^k}(\boldsymbol{o}_{l+1}'|\boldsymbol{o}_l)\nabla_{\boldsymbol{\theta}^k} h_{\boldsymbol{\theta}}(\boldsymbol{o}_{l+1}', \boldsymbol{o}_l)) \right].
        \end{aligned}$}
    \end{equation}
    Similarly By Eq. (\ref{eq:RAFT_grad}), Eq.(\ref{eq:base_model_non_linear}), we obtain the results of RAFT.
    Given Eq.(\ref{eq:base_model}), we have
    \begin{equation}
        \resizebox{\textwidth}{!}{$\begin{aligned}
            \nabla_{\boldsymbol{\theta}^k} \mathcal{J}_{\mathrm{REINFORCE}}(\boldsymbol{\theta}^{k})  &=\sum_{l=1}^{L-1} \mathbb{E}_{\substack{\boldsymbol{o}_1 \sim P^k(\mathcal{Q}^k) \\ \{\boldsymbol{o}_{l+1} \sim \hat{p}_{\boldsymbol{\theta}^k}(\cdot|\boldsymbol{o}_l)\}_{l=1}^{L-1}}} \left[ \nabla_{\boldsymbol{\theta}^k} \log \hat{p}_{\boldsymbol{\theta}^k}(\boldsymbol{o}_{l+1}|\boldsymbol{o}_l) {R_{\mathrm{out}}^{k}}(\boldsymbol{o}) \right]\\
            & =\sum_{l=1}^{L-1} \mathbb{E}_{\substack{\boldsymbol{o}_1 \sim P^k(\mathcal{Q}^k) \\ \{\boldsymbol{o}_{l+1} \sim \hat{p}_{\boldsymbol{\theta}^k}(\cdot|\boldsymbol{o}_l)\}_{l=1}^{L-1} }} \left[{R_{\mathrm{out}}^{k}}(\boldsymbol{o}) \cdot (\nabla_{\boldsymbol{\theta}^k} h_{\boldsymbol{\theta}}(\boldsymbol{o}_{l+1}, \boldsymbol{o}_l) - \sum_{\boldsymbol{o}_{l+1}' \in S_{l+1}}\hat{p}_{\boldsymbol{\theta}^k}(\boldsymbol{o}_{l+1}'|\boldsymbol{o}_l)\nabla_{\boldsymbol{\theta}^k} h_{\boldsymbol{\theta}}(\boldsymbol{o}_{l+1}', \boldsymbol{o}_l)) \right].
        \end{aligned}$}
    \end{equation}
    Given Eq.(\ref{eq:base_model}), by Eq.(\ref{eq:deriv_logp_linear}) we have
    \[
    \begin{aligned}
    &\resizebox{0.95\textwidth}{!}{$\nabla_{\boldsymbol{\theta}^k} \mathcal{J}_{\mathrm{REINFORCE}}(\boldsymbol{\theta}^{k})  =\sum_{l=1}^{L-1} \mathbb{E}_{\substack{\boldsymbol{o}_1 \sim P^k(\mathcal{Q}^k) \\ \{\boldsymbol{o}_{l+1} \sim \hat{p}_{\boldsymbol{\theta}^k}(\cdot|\boldsymbol{o}_l)\}_{l=1}^{L-1}}} \left[ {R_{\mathrm{out}}^{k}}(\boldsymbol{o}) \cdot (e_{{o}_{l+1}, {o}_l} - \sum_{\boldsymbol{o}_{l+1}' \in S_{l+1}}\hat{p}_{\boldsymbol{\theta}^k}(\boldsymbol{o}_{l+1}'|\boldsymbol{o}_l)e_{{o}_{l+1}', {o}_l}) \right].$}
    \end{aligned}
    \]
    The results of RAFT and PO follows.

\end{proof}

\end{document}